\title{MAST: Model-Agnostic Sparsified Training}
\author{Yury Demidovich \quad Grigory Malinovsky \quad Egor Shulgin \quad Peter Richt\'{a}rik \\
King Abdullah University of Science and Technology (KAUST), Saudi Arabia\thanks{\tiny Contacts: \texttt{\{yury.demidovich, grigorii.malinovskii, egor.shulgin, peter.richtarik\}@kaust.edu.sa}}  \\
}
\newcommand{\tf}{f_{\mathcal{D}}}
\renewcommand{\shift}{{\color{purple}v}}
\newcommand{\mS}{{\bf S}}
\theoremstyle{plain}
\theoremstyle{definition}
\theoremstyle{remark}
\definecolor{darkscarlet}{rgb}{0.34, 0.01, 0.1}
\definecolor{yaleblue}{rgb}{0.06, 0.3, 0.57}
\definecolor{darkpowderblue}{rgb}{0.0, 0.2, 0.6}
\definecolor{light-gray}{gray}{0.95}
\definecolor{slategray}{rgb}{0.44, 0.5, 0.56}
\definecolor{royalblue(traditional)}{rgb}{0.0, 0.14, 0.4}
\definecolor{coolblack}{rgb}{0.0, 0.18, 0.39}
\definecolor{darkpowderblue}{rgb}{0.0, 0.2, 0.6}
\renewcommand*{\backrefalt}[4]{%
	\ifcase #1 \footnotesize{(Not cited.)}%
	\or        \footnotesize{(Cited on page~#2)}%
	\else      \footnotesize{(Cited on pages~#2)}%
	\fi}
\begin{document}

\maketitle

\begin{abstract}
 We introduce a novel optimization problem formulation that departs from the conventional way of minimizing machine learning model loss as a black-box function. Unlike traditional formulations, the proposed approach explicitly incorporates an initially pre-trained model and random sketch operators, allowing for sparsification of both the model and gradient during training. We establish the insightful properties of the proposed objective function and highlight its connections to the standard formulation. Furthermore, we present several variants of the Stochastic Gradient Descent (\SGD) method adapted to the new problem formulation, including \SGD\text{ }with general sampling, a distributed version, and \SGD\text{ }with variance reduction techniques. We achieve tighter convergence rates and relax assumptions, bridging the gap between theoretical principles and practical applications, covering several important techniques such as Dropout and Sparse training. This work presents promising opportunities to enhance the theoretical understanding of model training through a sparsification-aware optimization approach.
\end{abstract}

\section{Introduction}\label{sec:introduction}

Efficient optimization methods have played an essential role in the advancement of modern Machine Learning (ML), given that many supervised problems ultimately reduce to the task of minimizing an abstract loss function --- a process that can be formally expressed as:
\begin{equation} \label{eq:main}
	\min_{x\in \R^d} f(x),
\end{equation}
where $f:\R^d\to \R$ is a loss function of a model with parameters/weights $x \in \R^d$. The problem~\eqref{eq:main} has been comprehensively analyzed within the domain of optimization literature. Within the ML community, substantial attention has been directed towards studying this problem, particularly in the context of the Stochastic Gradient Descent (\SGD) method \citep{robbins1951stochastic}. \SGD\text{ }stands as a foundational and highly effective optimization algorithm within the realm of ML \citep{bottou2018optimization}.
The pervasive use of \SGD\text{ } in the field attests to its versatility and success in training a diverse array of models \citep{goodfellow2016deep}. Notably, contemporary deep learning practices owe a substantial debt to \SGD, as it is the cornerstone for many state-of-the-art training techniques \citep{sun2020optimization}.

While problem \eqref{eq:main} has been of primary interest in mainstream optimization research, this approach is not always best suited for representing recent ML techniques, such as sparse/quantized training \citep{wu2016quantized, hoefler2021sparsity},
resource-constrained distributed learning \citep{caldas2018expanding, wang2018edge}, model personalization \citep{smith2017federated, hanzely2020federated, mansour2020three}, and meta-learning \citep{schmidhuber1987evolutionary, finn2017model}.
Although various attempts have been made to analyze some of these settings \citep{khaled2019gradient, lin2019dynamic, mohtashami2022masked}, there is still no satisfactory optimization theory that can explain the success of these techniques in deep learning. Previous works analyze variants of \SGD\text{ }trying to solve problem~\eqref{eq:main}, which often results in vacuous convergence bounds and/or overly restrictive assumptions on the class of functions and algorithms involved \citep{shulgin2023towards}. We assert that these issues arise due to mismatches between the method used and the problem formulation being solved.

In this work, to address the issues mentioned above, we propose a new optimization problem formulation called Model-Agnostic Sparsified Training (MAST):
\begin{equation}\label{eq:pretrained_compressed_problem}
	\min_{x\in \R^d} \left[ \tf(x) \eqdef \Exp{f_{\mS}(x)}\right],
\end{equation}
where $f_{\mS}(x)\eqdef f(\shift +\mS (x-\shift))$ for $\shift\in \R^d$ (e.g., a pre-trained model, possibly compressed), $\mS \in \R^{d \times d}$ is a random matrix (i.e., a \textbf{sketch}) sampled from distribution $\cD$.

When $\shift = 0$, the function takes the following form: $f_{\mS}(x) = f(\mS x)$. This scenario may be considered a process in which the architecture requires training from scratch, with quantization as a central consideration. Such an approach involves a substantial increase in training time and the necessity of hyperparameter tuning to achieve effectiveness.
In terms of theory, formulation \eqref{eq:pretrained_compressed_problem} can be viewed as a nested (stochastic) composition optimization \citep{bertsekas1977approximation, poljak1979bertsekas, ermoliev1988stochastic}.
Moreover, our formulation is partially related to the setting of splitting methods \citep{condat2023proximal}.
However, due to their generality, other setups do not consider the problem instance's peculiarities and focus on other applications. Conversely, when $\shift$ is non-zero and pre-trained weights are utilized, the newly formulated approach can be interpreted as acquiring a \say{meta-model} $x$. Solving problem \eqref{eq:pretrained_compressed_problem} then ensures that the sketched model $\shift + \mS(x - \shift)$ exhibits strong performance on average.
This interpretation shares many similarities with Model-Agnostic Meta-Learning \citep{finn2017model}.

We argue that this framework may be better suited for modeling various practical ML techniques, as discussed below. Furthermore, our proposed framework facilitates thorough theoretical analysis and holds potential independent interest for a broader audience.
While our analysis and algorithms work for a quite general class of sketches, we focus on applications that are relevant for sparse $\mS$.

\subsection{Motivating examples}

\textbf{Dropout} \citep{hanson1990stochastic, hinton2012improving, frazier2018dropout} is a regularization technique initially introduced to prevent overfitting in neural networks by dropping some of the model's units (activations) during training. Later, this approach was generalized to incorporate Gaussian noise \citep{wang2013fast, srivastava2014dropout} (instead of Bernoulli masks) and zeroing the weights via DropConnect \citep{wan2013regularization} or entire layers \citep{fan2019reducing}. It was also observed \citep{srivastava2014dropout} that training with Dropout induces sparsity in the activations. In addition, using the Dropout-like technique \citep{gomez2019learning, lejeune2021flip} can make the resulting network more amenable to subsequent sparsification (via pruning) before deployment. Modern DL has seen a huge increase in the size of models \citep{villalobos2022machine}. This has resulted in growing energy and computational costs, necessitating optimizations of neural networks' training pipeline \citep{yang2017method}. Among others, pruning and sparsification were proposed as effective techniques due to the overparametrization properties of large models \citep{chang2021provable}.

\textbf{Sparse training} algorithms \citep{mocanu2016topological, guo2016dynamic, mocanu2018scalable}, in particular, suggest working with a smaller subnetwork during every optimization step. This increases efficiency by reducing the model size (via compression), which naturally brings memory and computation acceleration benefits to the inference stage. Moreover, sparse training has recently been shown to speed up optimization by leveraging sparse backpropagation \citep{nikdan2023sparseprop}.

\textbf{On-device learning} also creates a need for sparse or submodel computations due to the memory, energy, and computational constraints of edge devices. In settings like cross-device federated learning \citep{konecny2017federated, mcmahan2017communication, kairouz2021advances}, models are trained in a distributed way across a population of heterogeneous nodes, such as mobile phones.
The heterogeneity of the clients' hardware makes it necessary to adapt the (potentially large) server model to the needs of low-tier devices \citep{caldas2018expanding, bouacida2021adaptive, horvath2021fjord}.

\textbf{Contributions.}
The main results of this work include:

$\bullet$ A rigorous formalization of a new optimization formulation, as shown in Equation \eqref{eq:pretrained_compressed_problem}, which can encompass various important practical settings as special cases, such as Dropout and Sparse training.

$\bullet$ In-depth theoretical characterization of the proposed problem's properties, highlighting its connections to the standard formulation in Equation \eqref{eq:main}. Notably, our problem is efficiently solvable with practical methods.

$\bullet$ The development of optimization algorithms that naturally emerge from the formulation in Equation \eqref{eq:pretrained_compressed_problem}, along with insightful convergence analyses in non-convex and (strongly) convex settings.

$\bullet$ The generalization of the problem and methods to the distributed scenario, expanding the range of applications even further, including scenarios like IST and Federated Learning.

$\bullet$ An experimental study of the proposed algorithms and MAST properties in the context of machine learning, highlighting the advantages of the proposed approach.

\textbf{Paper organization.} We introduce the basic formalism and discuss sketches in Section \ref{sec:sketches}. The properties of the suggested formulation \eqref{eq:pretrained_compressed_problem} are analyzed in Section \ref{sec:properties}. Section \ref{sec:convergence} contains convergence results for full-batch, stochastic, and variance-reduced methods for solving problem \eqref{eq:pretrained_compressed_problem}. Extensions to the distributed case are presented in Section \ref{sec:distributed}. Section~\ref{sec:experiments} describes some of our experimental results.
The last Section~\ref{sec:conclusion} concludes the paper and outlines potential directions of future work. All proofs are provided in the Appendix.

\section{Sketches}
\label{sec:sketches}
(Stochastic) gradient-based methods are mostly used to solve problem \eqref{eq:main}. Applying this paradigm to our formulation \eqref{eq:pretrained_compressed_problem} requires computing the gradient of $\tf = \Exp{f_{\mS}}$. In the case of the general distribution $\mS \sim \cD$, such computation may not be possible due to expectation $\mathbb{E}$. Therefore, practical algorithms typically rely on gradient estimates. An elegant property of the MAST problem \eqref{eq:pretrained_compressed_problem} is that the gradient estimator takes the form
\begin{equation} \label{eq:grad_est}
	\nabla f_{\mS}(x) = \mS^\top \nabla f(\shift + \mS(x - \shift)),
\end{equation}
due to the chain rule, as matrix $\mS$ is independent of $x$. Sketches/matrices $\mS$ are random and sampled from distribution $\cD$.
Note that estimator \eqref{eq:grad_est} sketches both the model $x$ and the gradient of $f$.
Next, we explore some of the sketches' important properties and give practical examples.
\begin{assumption} \label{ass:C}
	The sketching matrix $\mS$ satisfies:
	\begin{equation} \label{eq:sketch_def}
		\Exp{\mS } = \mI, \qquad \text{and} \qquad \Exp{\mS^\top \mS} \; \text{is finite},
	\end{equation}
	where $\mI$ is the identity matrix. Note that $\mS^\top \nabla f(x)$ is an unbiased estimator of the gradient $\nabla f(x)$.
\end{assumption}
Denote $L_{\mS}\eqdef \lambda_{\max}(\mS^\top \mS)$, $\mu_{\mS}\eqdef \lambda_{\min}(\mS^\top \mS)$, and $L_{\mathcal{D}} \eqdef \lambda_{\max}(\Exp{\mS^\top \mS})$, $ \mu_{\mathcal{D}} \eqdef \lambda_{\min}(\Exp{\mS^\top \mS})$,  where $\lambda_{\max}$ and $\lambda_{\min}$ represent the largest and smallest eigenvalues.
Clearly, $L_{\mS}\geq \mu_{\mS}\geq 0$ and $L_{\mathcal{D}} \geq \mu_{\mathcal{D}} \geq 0$.
If Assumption~\ref{ass:C} is satisfied, then $\Exp{\mS^\top \mS} \succeq \mI$, which means that $\mu_{\mathcal{D}} \geq 1$ and
\[ \norm{x}^2 \leq \mu_{\mathcal{D}} \norm{x}^2  \leq \Exp{\norm{\mS x}^2} \leq L_{\mathcal{D}} \norm{x}^2.\]

\subsection{Diagonal sketches}

Let $c_1, c_2, \dots, c_d$ be a collection of random variables and define a matrix with $c_i$-s on the diagonal
\begin{equation} \label{eq:diag_sketch}
	\mS = \Diag (c_1, c_2, \dots, c_d),
\end{equation}
which satisfies Assumption \ref{ass:C} when $\Exp{c_i} = 1$ and $\Exp{c_i^2}$ is finite for every $i$.

The following example illustrates how our framework can be used to model Dropout.
\begin{example}
	The \textbf{independent Bernoulli sparsification} operator is defined as a diagonal sketch  \eqref{eq:diag_sketch}, where every $c_i$ is an (independent) scaled Bernoulli random variable:
	\begin{equation} \label{eq:ind_sparse}
		c_i =
		\left\{\begin{array}{ll}
			1/p_i, & \text{with probability }  p_i \\
			0, & \text{with probability }  1-p_i
		\end{array} , \right.
	\end{equation}
	for $p_i \in (0, 1]$ and $i \in [d] \eqdef \{1, \dots, d\}$.
\end{example}

It can be shown that for independent Bernoulli sparsifiers,
\begin{equation}
	L_{\mathcal{D}} = \max_i p_i^{-1} \eqdef p^{-1}_{\min}, \quad \mu_{\mathcal{D}} = \min p_i^{-1} \eqdef p^{-1}_{\max}.
\end{equation}
Notice that when $p_i \equiv p$, $i\in[d]$, and $\shift = 0$, gradient estimator \eqref{eq:grad_est} results in a sparse update as $\mS^\top \nabla f(\mS x)$ drops out $d (1-p)$ (on average) components of model weights and the gradient. The difference from the Dropout described by \cite{hinton2012improving} is that they do not use scaling $1/p_i$ in equation \eqref{eq:ind_sparse} during training to ensure unbiasedness. Experimental comparison is presented in Appendix \ref{apdx:dropout_exp}.

Next, we show another practical example of a random sketch often used for reducing communication costs in distributed learning \citep{konecny2017federated, wangni2018gradient, stich2018sparsified}.
\begin{example}
	\textbf{Random $K$ sparsification} (in short, \rnd{\text{$K$}} for $K \in [d]$) operator is defined by
	\begin{equation} \label{eq:rand-k}
		\mS_{\rnd{\text{$K$}}} \eqdef \frac{d}{K} \sum\limits_{i\in S} e_i e_i^\top,
	\end{equation}
	where $e_1, \dots, e_d \in \R^d$ are standard unit basis vectors, and $S$ is a random subset of $[d]$ sampled from the uniform distribution over all subsets of $[d]$ with cardinality $K$.
\end{example}

$\rnd{\text{$K$}}$ belongs to the class of diagonal sketches \eqref{eq:diag_sketch}. $\mS x$ preserves $K$ non-zero coordinates out of total $d$ coordinates. Since $\mathbb{E}\left[\mS^\top\mS\right] = \mI \cdot \sfrac{d}{K}$, this sketch satisfies $L_{\mathcal{D}} = \mu_{\mathcal{D}} = d/K$. This example is suitable for modeling fixed budget sparse training when the proportion ($K/d$) of network parameters being updated remains constant during optimization \citep{mocanu2018scalable, evci2020rigging}.

\section{Problem properties} \label{sec:properties}
We show that the proposed formulation \eqref{eq:pretrained_compressed_problem} inherits the smoothness and convexity properties of the original problem \eqref{eq:main}. Let us introduce the most standard assumptions in the optimization field.
\begin{assumption} \label{ass:l_f_smooth}
	Function $f$ is differentiable and $L_f$-\textbf{smooth}, i.e., there is $L_f>0$ such that
	\[f(x+h) \leq f(x) + \ve{\nabla f(x)}{h} + \frac{L_f}{2}\norm{h}^2 \quad  \forall x, h \in \R^d.\]
	We also require $f$ to be lower bounded by $f^{\inf} \in \mathbb{R}$.
\end{assumption}
\begin{assumption} \label{ass:mu_f-convex}
	Function $f$ is differentiable and $\mu_f$-\textbf{strongly convex}, i.e., there is $\mu_f>0$ such that
	\[f(x+h) \geq f(x) + \ve{\nabla f(x)}{h} + \frac{\mu_f}{2}\norm{h}^2 \quad  \forall x, h \in \R^d.\]
\end{assumption}

Next, we show how the choice of the sketch $\mS$ affects the smoothness parameters of $f_{\mS}$ and $\tf$.

\begin{lemma}[Consequences of $L_f$-smoothness] \label{lemma:smoothness}
	If $f$ is $L_f$-smooth, then
	\begin{itemize}
		\item[(i)] $f_{\mS}$ is $L_{f_\mS}$-smooth with $L_{f_\mS} \leq L_{\mS} L_f $.

		\item[(ii)] $\tf$ is $L_{\tf}$-smooth with $L_{\tf}\leq L_{\mathcal{D}} L_f $.

		\item[(iii)] $\tf(x) \leq f(x) + \frac{(L_{\mathcal{D}}-1) L_f}{2} \norm{x-\shift}^2 \quad \forall x\in \R^d.$
	\end{itemize}
\end{lemma}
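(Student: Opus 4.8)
The plan is to prove the three items in order, since (ii) follows from (i) by taking expectations and (iii) is a direct consequence of $L_f$-smoothness applied at the shifted point. For item (i), I would start from the definition $f_{\mS}(x) = f(\shift + \mS(x-\shift))$ and use the gradient formula \eqref{eq:grad_est}, namely $\nabla f_{\mS}(x) = \mS^\top \nabla f(\shift + \mS(x-\shift))$. To establish $L_{f_{\mS}}$-smoothness, the cleanest route is the descent-lemma form of the definition in Assumption~\ref{ass:l_f_smooth}: for $x, h \in \R^d$,
\[
f_{\mS}(x+h) = f\bigl(\shift + \mS(x-\shift) + \mS h\bigr) \leq f_{\mS}(x) + \ve{\nabla f(\shift+\mS(x-\shift))}{\mS h} + \frac{L_f}{2}\norm{\mS h}^2.
\]
The middle term equals $\ve{\mS^\top \nabla f(\shift+\mS(x-\shift))}{h} = \ve{\nabla f_{\mS}(x)}{h}$, and the last term is bounded by $\frac{L_f}{2}\lambda_{\max}(\mS^\top\mS)\norm{h}^2 = \frac{L_{\mS} L_f}{2}\norm{h}^2$, giving (i).

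For item (ii), I would take $\Exp{\cdot}$ of the inequality just derived. Linearity of expectation and the fact that $\mS$ (hence $\nabla f_{\mS}(x)$ and $\mS^\top\mS$) is independent of $h$ yield
\[
\tf(x+h) \leq \tf(x) + \ve{\Exp{\nabla f_{\mS}(x)}}{h} + \frac{L_f}{2}\Exp{\norm{\mS h}^2}.
\]
Here $\Exp{\nabla f_{\mS}(x)} = \nabla \tf(x)$ (differentiation under the expectation, which is justified by the finiteness of $\Exp{\mS^\top\mS}$ and smoothness of $f$), and $\Exp{\norm{\mS h}^2} = h^\top \Exp{\mS^\top\mS} h \leq L_{\mathcal{D}}\norm{h}^2$ by the definition of $L_{\mathcal{D}}$. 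This gives the descent inequality with constant $L_{\mathcal{D}} L_f$, i.e. (ii).

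For item (iii), I would apply $L_f$-smoothness of $f$ directly with base point $x$ and increment $t_{\mS}(x) = (\mS-\mI)(x-\shift)$, using the representation $f_{\mS}(x) = f(x + t_{\mS}(x))$ noted before the statement:
\[
f_{\mS}(x) \leq f(x) + \ve{\nabla f(x)}{t_{\mS}(x)} + \frac{L_f}{2}\norm{t_{\mS}(x)}^2.
\]
Taking expectations and using $\Exp{t_{\mS}(x)} = 0$ kills the linear term, so $\tf(x) \leq f(x) + \frac{L_f}{2}\Exp{\norm{(\mS-\mI)(x-\shift)}^2}$. It remains to bound $\Exp{\norm{(\mS-\mI)(x-\shift)}^2} = (x-\shift)^\top \Exp{(\mS-\mI)^\top(\mS-\mI)}(x-\shift)$; expanding and using $\Exp{\mS} = \mI$ shows $\Exp{(\mS-\mI)^\top(\mS-\mI)} = \Exp{\mS^\top\mS} - \mI \preceq (L_{\mathcal{D}}-1)\mI$, which yields (iii).

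The only genuinely delicate point is the interchange $\Exp{\nabla f_{\mS}(x)} = \nabla \tf(x)$ used in (ii); everything else is bookkeeping with the descent lemma and eigenvalue bounds. I expect the paper handles this either by invoking a standard dominated-convergence argument (valid because $\norm{\nabla f_{\mS}(x)} \le \norm{\mS^\top\mS}^{1/2}(\norm{\nabla f(\shift)} + L_f\norm{\mS}\,\norm{x-\shift})$, which is integrable under the finite-second-moment hypothesis on $\mS$) or by simply taking it as part of the standing setup. If one wants to avoid differentiation under the integral entirely, (ii) can alternatively be obtained by first proving the two-point/co-coercivity characterization of smoothness for each $f_{\mS}$ and averaging, but the descent-lemma route above is the most economical.
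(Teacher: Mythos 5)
Your proposal is correct and follows essentially the same route as the paper: the descent-lemma form of smoothness applied at the shifted point for (i), taking expectations and using $\Exp{\mS^\top\mS}\preceq L_{\cD}\mI$ for (ii), and expanding around $x$ with increment $(\mS-\mI)(x-\shift)$ and $\Exp{\mS}=\mI$ for (iii). The interchange $\Exp{\nabla f_{\mS}(x)}=\nabla\tf(x)$ that you flag is indeed the only delicate point, and the paper handles it exactly as you anticipate, by invoking continuous differentiability to justify differentiation under the expectation.
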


In particular, property~($iii$) in Lemma~\ref{lemma:smoothness} demonstrates that the gap between the sketched loss $\tf$ and the original function $f$ depends on the model weights and the smoothness parameter of function $f$.

\begin{lemma}[Consequence of Convexity] \label{lem:convex-only}
	If $f$ is convex, then $\tf$ is convex and $\tf(x)\geq f(x)$.
\end{lemma}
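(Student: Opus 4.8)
The plan is to prove the two claims of Lemma~\ref{lem:convex-only} separately, both times exploiting that $f_{\mS}(x) = f(\shift + \mS(x-\shift))$ is the composition of the affine map $x \mapsto \shift + \mS(x-\shift)$ with the convex function $f$, and then passing to the expectation.

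First, I would show convexity of $\tf$. Fix a realization of $\mS$. The map $x \mapsto \shift + \mS(x-\shift)$ is affine, so its composition with the convex function $f$ is convex; hence $f_{\mS}$ is convex for every $\mS$. Since $\tf = \Exp{f_{\mS}}$ is a (nonnegative-weighted) expectation, i.e.\ a ``mixture'', of convex functions, it is convex as well: for $x, y \in \R^d$ and $t \in [0,1]$, apply convexity of $f_{\mS}$ pointwise inside the expectation and use linearity and monotonicity of $\Exp{\cdot}$. One mild technical point is integrability, which is covered by Assumption~\ref{ass:C} together with $L_f$-smoothness giving the quadratic bound in Lemma~\ref{lemma:smoothness}(iii) (alternatively one can just argue with finite values); I would not dwell on this.

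Second, for the inequality $\tf(x) \ge f(x)$, the key is Jensen's inequality applied to the convex function $f$. Recall from Section~\ref{sec:formalization} that $f_{\mS}(x) = f(x + t_{\mS}(x))$ with $t_{\mS}(x) = (\mS - \mI)(x-\shift)$, and that under Assumption~\ref{ass:C} we have $\Exp{t_{\mS}(x)} = 0$, so $\Exp{x + t_{\mS}(x)} = x$. Then
\begin{equation*}
\tf(x) = \Exp{f\bigl(x + t_{\mS}(x)\bigr)} \;\ge\; f\bigl(\Exp{x + t_{\mS}(x)}\bigr) = f(x),
\end{equation*}
where the inequality is Jensen's inequality for the convex function $f$ and the $\R^d$-valued random vector $x + t_{\mS}(x)$. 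This completes the proof.

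There is no real obstacle here; the only thing to be a little careful about is the applicability of Jensen's inequality for a vector-valued random variable, which requires $f$ convex (given) and the random vector to be integrable — again guaranteed by Assumption~\ref{ass:C} (finiteness of $\Exp{\mS^\top\mS}$ controls $\Exp{\norm{\mS x}^2}$, as noted in the excerpt). If one prefers an elementary route avoiding a general statement of Jensen, one can instead use the subgradient (here, gradient, since $f$ is differentiable and convex) inequality $f(z) \ge f(x) + \ve{\nabla f(x)}{z - x}$ at $z = x + t_{\mS}(x)$, take expectations, and note that the linear term vanishes because $\Exp{t_{\mS}(x)} = 0$; this yields the same conclusion and meshes naturally with the smoothness-based arguments used elsewhere in the paper.
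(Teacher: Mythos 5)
Your proof is correct and matches the paper's argument: the paper proves convexity by the same composition-with-affine-map/expectation-of-convex-functions reasoning (explicitly noted as its "alternative proof"), and establishes $\tf(x)\geq f(x)$ via both the subgradient inequality and Jensen's inequality, exactly the two routes you describe. Nothing is missing.
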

It shows that the convexity of $f$ is preserved, and the \say{sketched} loss is always greater than the original loss.
Moreover, Lemma \ref{lem:convex-only} (along with other results in this section) offers a huge advantage of the proposed problem formulation over the sparsification-promoting alternatives based on $\ell_0$-norm regularization \citep{louizos2018learning, peste2021ac}, that make the problem hard to solve.

\begin{lemma}[Consequences of $\mu_f$-convexity] \label{lemma:consequences_strongly_convex}
	If $f$ is $\mu_f$-convex, then
	\begin{itemize}
		\item[(i)] $f_{\mS}$ is $\mu_{f_\mS}$-convex with $\mu_{f_\mS} \geq \mu_{\mS} \mu_f $.

		\item[(ii)] $\tf$ is $\mu_{\tf}$-convex with $\mu_{\tf} \geq \mu_{\mathcal{D}} \mu_f $.

		\item[(iii)] $\tf(x) \geq f(x) + \frac{ (\mu_{\mathcal{D}}-1) \mu_f}{2} \norm{x-\shift}^2 \quad  \forall x\in \R^d.$
	\end{itemize}
\end{lemma}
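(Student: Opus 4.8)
The plan is to mirror, almost verbatim, the structure one would use for the smoothness lemma (Lemma~\ref{lemma:smoothness}), exploiting the fact that strong convexity is the ``reverse'' inequality to smoothness. For part $(i)$, I would start from the definition $f_\mS(x) = f(\shift + \mS(x-\shift))$ and compute $f_\mS(x+h) = f(\shift + \mS(x-\shift) + \mS h)$. Applying the $\mu_f$-strong convexity of $f$ at the point $\shift + \mS(x-\shift)$ with increment $\mS h$ gives
\[
f_\mS(x+h) \ge f_\mS(x) + \ve{\nabla f(\shift + \mS(x-\shift))}{\mS h} + \frac{\mu_f}{2}\norm{\mS h}^2.
\]
Recognizing $\mS^\top \nabla f(\shift + \mS(x-\shift)) = \nabla f_\mS(x)$ from the chain rule \eqref{eq:grad_est}, the middle term is exactly $\ve{\nabla f_\mS(x)}{h}$. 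For the quadratic term, I bound $\norm{\mS h}^2 = \ve{\mS^\top\mS h}{h} \ge \mu_\mS \norm{h}^2$ by definition of $\mu_\mS = \lambda_{\min}(\mS^\top\mS)$. This yields $\mu_{f_\mS} \ge \mu_\mS \mu_f$.

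For parts $(ii)$ and $(iii)$ I would take expectations over $\mS\sim\cD$. Taking $\Exp{\cdot}$ of the part-$(i)$ inequality, linearity gives $\tf(x+h)\ge \tf(x) + \ve{\nabla \tf(x)}{h} + \frac{\mu_f}{2}\Exp{\norm{\mS h}^2}$, where I use that $\Exp{\nabla f_\mS(x)} = \nabla\tf(x)$ (differentiating under the expectation, justified by the smoothness/finiteness hypotheses). Then $\Exp{\norm{\mS h}^2} = \ve{\Exp{\mS^\top\mS}h}{h}\ge \mu_\cD\norm{h}^2$ gives $(ii)$. For $(iii)$, instead of bounding $\Exp{\norm{\mS h}^2}$ from below by $\mu_\cD\norm{h}^2$, I would apply the part-$(i)$ strong convexity inequality at the specific point $x$ with increment $h = \shift - x$, so that $\shift + \mS(x-\shift) + \mS h = \shift$, i.e. $f_\mS(\shift) = f(\shift)$. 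This turns the inequality into a lower bound on $f(\shift)$ in terms of $f_\mS(x)$; rearranging and taking expectations, using $\Exp{\mS^\top\mS}\succeq \mI$ so that $\Exp{\norm{\mS(x-\shift)}^2}\ge \mu_\cD\norm{x-\shift}^2$, and substituting $\tf(x) = \Exp{f_\mS(x)}$, produces the claimed bound $\tf(x)\ge f(x) + \frac{(\mu_\cD-1)\mu_f}{2}\norm{x-\shift}^2$. (One should double-check whether $(iii)$ follows most cleanly this way or, alternatively, by combining $(ii)$-type strong convexity of $\tf$ around $\shift$ with $\mu_f$-convexity of $f$ around $\shift$ and subtracting; I would present whichever is shorter.)

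The computations here are genuinely routine — the chain rule for the gradient is already recorded in \eqref{eq:grad_est}, and the eigenvalue bounds are immediate. The only point requiring a little care is the \emph{interchange of expectation and differentiation} needed to assert $\nabla\tf = \Exp{\nabla f_\mS}$; this is where the finiteness of $\Exp{\mS^\top\mS}$ in Assumption~\ref{ass:C} together with $L_f$-smoothness of $f$ does the work (a dominated-convergence argument), and it is presumably already established when Lemma~\ref{lemma:smoothness} is proved, so I would simply cite it. I therefore expect no real obstacle; the ``hard part,'' such as it is, is just being careful about which base point and increment to plug into the strong convexity inequality for part $(iii)$ so that the cross term telescopes correctly.
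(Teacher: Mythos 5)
Parts (i) and (ii) of your proposal are correct and coincide exactly with the paper's proof: apply the strong convexity inequality of $f$ at the base point $\shift+\mS(x-\shift)$ with increment $\mS h$, identify $\mS^\top\nabla f(\shift+\mS(x-\shift))=\nabla f_{\mS}(x)$, lower-bound $\norm{\mS h}^2=\ve{\mS^\top\mS h}{h}$ by $\mu_{\mS}\norm{h}^2$ (resp.\ $\mu_{\cD}\norm{h}^2$ after taking expectations). The interchange of expectation and differentiation is indeed handled the same way as in Lemma~\ref{lemma:smoothness}.

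Part (iii), however, has a genuine gap: your choice of base point and increment produces an inequality pointing the wrong way and involving the wrong function value. Taking $h=\shift-x$ in the part-(i) inequality gives
\[
f(\shift)=f_{\mS}(x+h)\;\geq\; f_{\mS}(x)+\ve{\nabla f_{\mS}(x)}{\shift-x}+\frac{\mu_f}{2}\norm{\mS(x-\shift)}^2,
\]
which, after rearranging and taking expectations, is an \emph{upper} bound on $\tf(x)$ in terms of $f(\shift)$ — whereas (iii) asserts a \emph{lower} bound on $\tf(x)$ in terms of $f(x)$. Your parenthetical fallback (strong convexity of $\tf$ around $\shift$ combined with convexity of $f$ around $\shift$) also fails: to pass from $f(\shift)+\ve{\nabla f(\shift)}{x-\shift}$ back to $f(x)$ you would need an \emph{upper} bound on $f(x)$, i.e.\ smoothness rather than strong convexity, and even then the resulting constant $\mu_{\cD}\mu_f-L_f$ is weaker than the claimed $(\mu_{\cD}-1)\mu_f$. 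The correct choice, which the paper makes, is to apply the strong convexity inequality of $f$ at the base point $x$ with the zero-mean increment $t_{\mS}(x)=(\mS-\mI)(x-\shift)$, so that $x+t_{\mS}(x)=\shift+\mS(x-\shift)$ and hence the left-hand side is exactly $f_{\mS}(x)$. Taking expectations then kills the cross term by Assumption~\ref{ass:C}, and the quadratic term becomes $\frac{\mu_f}{2}(x-\shift)^\top\left(\Exp{\mS^\top\mS}-\mI\right)(x-\shift)\geq\frac{(\mu_{\cD}-1)\mu_f}{2}\norm{x-\shift}^2$, which is precisely the claim.
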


As a consequence, we get the following result for the condition number of the proposed problem:
\begin{equation}
	\kappa_{\tf} \eqdef \frac{L_{\tf}}{\mu_{\tf}} \leq
	\frac{L_{\mathcal{D}} L_f}{\mu_{\mathcal{D}} \mu_f} =
	\frac{L_{\mathcal{D}}}{\mu_{\mathcal{D}}} \kappa_f.
\end{equation}

Therefore, $\kappa_{\tf} \leq \kappa_f \cdot L_{\cD}$
as $\mu_{\mathcal{D}} \geq 1$. Thus, the resulting condition number may increase, which indicates that $\tf$ may be harder to optimize, which agrees with the intuition that compressed training is harder \citep{evci2019difficulty}. In addition, for independent Bernoulli sparsifiers~\eqref{eq:ind_sparse},
\begin{equation}
    \kappa_{\mathcal{D}} \eqdef L_{\mathcal{D}}/ \mu_{\mathcal{D}} = p_{\max}/p_{\min},
\end{equation}
which shows that the upper bound on the ratio $\kappa_{\tf} / \kappa_f $ can be made as large as possible by choosing a small enough $p_{\min}$. At the same time, $\kappa_{\cD} = 1$ for classical Dropout: $p_i \equiv p$, $i\in[d],$ indicating that training with Dropout may be no harder than optimizing the original model.

\textbf{Relation between $f$ and $\tf$ minima.} Let $\cX^\star$ be the solutions to problem~\ref{eq:main}, and $\cX^\star_{\mathcal{D}}$ the solutions of the new MAST problem~\eqref{eq:pretrained_compressed_problem}.
We now show that a solution $x_\cD^\star\in \cX_\cD^\star$ of \eqref{eq:pretrained_compressed_problem} is an approximate solution of the original problem \eqref{eq:main}.
\begin{theorem} \label{thm:approx} Let Assumptions~\ref{ass:l_f_smooth}~and~\ref{ass:mu_f-convex} hold, and let $x_\cD^\star \in \cX_\cD^\star$ and $x^\star \in \cX^\star$. Then
	\begin{align*}
		f(x^\star) \leq f(x_\cD^\star) \leq f(x^\star) + \frac{(L_{\mathcal{D}}-1)L_f}{2}\norm{x^\star-\shift}^2
		 - \frac{(\mu_{\mathcal{D}}-1)\mu_f}{2}\norm{x_\cD^\star-\shift}^2;
	\end{align*}
	\begin{align*}
		f(x^\star) +
		\frac{(\mu_{\mathcal{D}}-1)\mu_f}{2}\norm{x_\cD^\star-\shift}^2
		\leq \tf(x_\cD^\star)
		\leq f(x^\star)
		+ \frac{(L_{\mathcal{D}}-1)L_f}{2}\norm{x^\star-\shift}^2.
	\end{align*}
\end{theorem}

Consider \rnd{\text{$K$}} as a sketch.
If $K = (1 - \varepsilon) d$ for some $\varepsilon \in [0, 1)$, which corresponds to dropping roughly an $\varepsilon$ share of coordinates, then $L_{\mathcal{D}}-1 = \mu_{\mathcal{D}}-1 = \nicefrac{d}{K}-1 = \nicefrac{\varepsilon}{(1-\varepsilon)}.$ Theorem \ref{thm:approx} then states that
$$
	f(x^\star) \leq f(x_\cD^\star) \leq f(x^\star) + \frac{\varepsilon}{2(1-\varepsilon)} \left(L_f \norm{x^\star-\shift}^2 - \mu_f \norm{x_\cD^\star-\shift}^2\right);
$$
$$
	f(x^\star) +
	\frac{\varepsilon\mu_f}{2(1-\varepsilon)}\norm{x_\cD^\star-\shift}^2
	\leq \tf(x_\cD^\star)\leq f(x^\star)
	+ \frac{\varepsilon L_f}{2(1-\varepsilon)}\norm{x^\star-\shift}^2.
$$
If $\varepsilon$ is small (a ``light'' sparsification), or if the pre-trained model $\shift$ is close to $x^\star$, then  $x_\cD^\star$ will have a small loss, comparable to the loss of the optimal model~$x^\star;$ $x_\cD^\star$ will be close to the pre-trained model $\shift;$ MAST loss will be small comparable to the loss of the optimal uncompressed model~$x^\star.$

\section{Individual node setting} \label{sec:convergence}

In this section, we discuss the properties of the \SGD-type algorithm applied to problem \eqref{eq:pretrained_compressed_problem}
\begin{equation}
	\label{eq:GD}
	x^{t+1} = x^t - \gamma \nabla f_{\mS^{t}} (x^t) = x^t - \gamma \left(\mS^{t}\right)^\top \nabla f(y^t)
\end{equation}
for $y^t = \shift + \mS^t (x^t - \shift)$, where $\mS^t$ is sampled from $\cD$.
One advantage of the proposed formulation \eqref{eq:pretrained_compressed_problem} is that it naturally gives rise to the described method, which generalizes standard (Stochastic) Gradient Descent.
As noted before, due to the properties of the gradient estimator \eqref{eq:grad_est}, recursion \eqref{eq:GD} defines an Algorithm \ref{alg:SGD} (I) that sketches both the model and the gradient of $f$.

Let us introduce a notation frequently used in our convergence results. This quantity is determined by the spectral properties of the sketches being used.
\begin{equation}
	L_{\mS}^{\max} \eqdef \sup_{\mS} L_{\mS} = \sup_{\mS} \lambda_{\max}\left(\mS^{\top}\mS\right),
\end{equation}
where $\sup_{\mS} L_{\mS}$ represents the tightest constant such that $L_{\mS} \leq \sup_{\mS} L_{\mS}$ almost surely.
For independent random sparsification sketches \eqref{eq:ind_sparse}:
$L_{\mS}^{\max} = 1 / p^2_{\min}$. If $\mS$ is \rnd{\text{$K$}} \eqref{eq:rand-k} then $L_{\mS}^{\max} = d^2/K^2$. Our convergence analysis relies on the following Lemma:

\begin{lemma} \label{lemma:ac}
	Assume that $f$ is $L_f$-smooth \eqref{ass:l_f_smooth} and $\mS$ satisfies Assumption \ref{ass:C}. Then we have that $\forall x\in\mathbb{R}^d$
	\begin{equation*}
		\begin{split}
			\Exp{\left\|\nabla f_{\mS}(x)\right\|^2} & \leq 2L_f L_{\mS}^{\max} \left(\tf(x) - f^{\inf}\right),
		\end{split}
	\end{equation*}
	where the expectation is taken with respect to $\mS.$
\end{lemma}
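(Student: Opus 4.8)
The key identity to exploit is the gradient estimator formula \eqref{eq:grad_est}, namely $\nabla f_{\mS}(x) = \mS^\top \nabla f(y)$ where $y = \shift + \mS(x-\shift)$. Squaring and using $\|\mS^\top v\|^2 \le \lambda_{\max}(\mS \mS^\top)\|v\|^2 = \lambda_{\max}(\mS^\top \mS)\|v\|^2 = L_{\mS}\|v\|^2$ (the two products have the same nonzero spectrum), I get the pointwise bound
\[
\|\nabla f_{\mS}(x)\|^2 \;\le\; L_{\mS}\,\|\nabla f(y)\|^2 \;\le\; L_{\mS}^{\max}\,\|\nabla f(y)\|^2,
\]
where the last step uses the almost-sure bound defining $L_{\mS}^{\max}$. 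This reduces the problem to controlling $\|\nabla f(y)\|^2$ by a function-value gap.

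Next I would invoke the standard consequence of $L_f$-smoothness together with lower boundedness of $f$: for any $L_f$-smooth function bounded below by $f^{\inf}$, one has $\|\nabla f(z)\|^2 \le 2 L_f \big(f(z) - f^{\inf}\big)$ for all $z$. Applying this at $z = y = \shift + \mS(x-\shift)$ gives
\[
\|\nabla f_{\mS}(x)\|^2 \;\le\; 2 L_f L_{\mS}^{\max}\,\big(f(\shift + \mS(x-\shift)) - f^{\inf}\big) \;=\; 2 L_f L_{\mS}^{\max}\,\big(f_{\mS}(x) - f^{\inf}\big),
\]
by the very definition $f_{\mS}(x) = f(\shift + \mS(x-\shift))$. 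Now take expectation over $\mS \sim \cD$: the left side becomes $\Exp{\|\nabla f_{\mS}(x)\|^2}$, and on the right $\Exp{f_{\mS}(x)} = \tf(x)$ by \eqref{eq:pretrained_compressed_problem}, while $f^{\inf}$ is constant. This yields exactly $\Exp{\|\nabla f_{\mS}(x)\|^2} \le 2 L_f L_{\mS}^{\max}(\tf(x) - f^{\inf})$, as claimed.

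The only mild subtlety — and the one place to be careful rather than an actual obstacle — is the interplay of the two bounds on $L_{\mS}$: I pull $L_{\mS}^{\max}$ out as an almost-sure upper bound \emph{before} taking expectation, which is legitimate since $L_{\mS} \le L_{\mS}^{\max}$ holds a.s.; one could alternatively keep $L_{\mS}$ inside and bound $\Exp{L_{\mS}\|\nabla f(y)\|^2}$, but that requires no independence between $L_{\mS}$ and $\|\nabla f(y)\|^2$ (there is none, both depend on $\mS$), so the clean route is the a.s. bound. I should also make sure the auxiliary inequality $\|\nabla f(z)\|^2 \le 2 L_f(f(z)-f^{\inf})$ is stated as a cited standard fact (it follows by minimizing the descent-lemma upper bound $f(z - \tfrac{1}{L_f}\nabla f(z)) \le f(z) - \tfrac{1}{2L_f}\|\nabla f(z)\|^2$ over the implied point and using $f \ge f^{\inf}$). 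No convexity is needed anywhere, consistent with the hypotheses of the lemma.
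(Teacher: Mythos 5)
Your proposal is correct and follows essentially the same route as the paper's proof: bound $\|\mS^\top \nabla f(y)\|^2$ by $\lambda_{\max}(\mS\mS^\top)\|\nabla f(y)\|^2 \le L_{\mS}^{\max}\|\nabla f(y)\|^2$ (using that $\mS\mS^\top$ and $\mS^\top\mS$ share nonzero eigenvalues), apply the standard smoothness inequality $\|\nabla f(z)\|^2 \le 2L_f(f(z)-f^{\inf})$ at $z = \shift+\mS(x-\shift)$, and take expectation so that $\Exp{f_{\mS}(x)} = \tf(x)$. The paper proves the auxiliary inequality as a separate lemma via the same descent-lemma argument you sketch, so nothing is missing.
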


It generalizes a standard property of smooth functions often used in the non-convex analysis of \SGD\text{ }\citep{khaled2020better}. Now, we are ready to present our first convergence result.
\begin{theorem}
	\label{thm:str_conv}
	Assume that $f$ is $L_f$-smooth (\ref{ass:l_f_smooth}), $\mu_f$-strongly convex \eqref{ass:mu_f-convex}, and $\mS$ satisfies Assumption \ref{ass:C}.
	Then, for stepsize $\gamma \leq 1 / (L_fL_{\mS}^{\max})$, the iterates of Algorithm \ref{alg:SGD} (I)
	satisfy
	\begin{equation} \label{eq:SGD_cvx_res}
			\Exp{\left\|x^T - x_\cD^\star\right\|^2}\leq
			\left(1 - \gamma \mu_{\mathcal{D}} \mu_f\right)^T \sqn{x^0 - x_\cD^\star} + \frac{2\gamma L_fL_{\mS}^{\max}}{\mu_f\mu_{\mathcal{D}}} \left(\tf^{\inf} - f^{\inf}\right).
	\end{equation}
\end{theorem}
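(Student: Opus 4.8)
The plan is to run the standard single-step \SGD{} analysis for a strongly convex objective, carried out on $\tf$, feeding it the two sketch-aware ingredients already available: the strong convexity estimate $\mu_{\tf}\ge\mu_{\cD}\mu_f$ (Lemma~\ref{lemma:consequences_strongly_convex}(ii)) and the expected-smoothness-type bound of Lemma~\ref{lemma:ac}. Write $x^\star\eqdef x_\cD^\star$, the (unique) minimizer of $\tf$, and $\tf^{\inf}\eqdef\tf(x^\star)$.

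First I would expand the squared distance using the update \eqref{eq:GD}:
\[
\sqn{x^{t+1}-x^\star}=\sqn{x^t-x^\star}-2\gamma\,\ve{\nabla f_{\mS^t}(x^t)}{x^t-x^\star}+\gamma^2\sqn{\nabla f_{\mS^t}(x^t)},
\]
and take the expectation conditioned on $x^t$ (over $\mS^t\sim\cD$). Since the estimator is unbiased for $\nabla\tf$ — by the chain rule $\nabla f_{\mS}(x)=\mS^\top\nabla f(\shift+\mS(x-\shift))$ and $\nabla\tf(x)=\Exp{\nabla f_{\mS}(x)}$, with differentiation under the expectation justified by $L_f$-smoothness and finiteness of $\Exp{\mS^\top\mS}$ — the cross term turns into $-2\gamma\,\ve{\nabla\tf(x^t)}{x^t-x^\star}$. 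I would then lower bound this inner product by $\mu_{\tf}$-strong convexity of $\tf$ together with $\mu_{\tf}\ge\mu_{\cD}\mu_f$:
\[
\ve{\nabla\tf(x^t)}{x^t-x^\star}\ \ge\ \big(\tf(x^t)-\tf^{\inf}\big)+\tfrac{\mu_{\cD}\mu_f}{2}\sqn{x^t-x^\star},
\]
and upper bound the second-moment term by Lemma~\ref{lemma:ac}: $\ExpSub{\mS^t}{\sqn{\nabla f_{\mS^t}(x^t)}}\le 2L_fL_{\mS}^{\max}\big(\tf(x^t)-f^{\inf}\big)$.

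The key step is then a cancellation: split the baseline as $\tf(x^t)-f^{\inf}=\big(\tf(x^t)-\tf^{\inf}\big)+\big(\tf^{\inf}-f^{\inf}\big)$, noting $\tf^{\inf}-f^{\inf}\ge 0$ because $\tf\ge f$ pointwise (Lemma~\ref{lem:convex-only}). After collecting terms, the coefficient of the nonnegative quantity $\tf(x^t)-\tf^{\inf}$ is $-2\gamma+2\gamma^2 L_fL_{\mS}^{\max}=-2\gamma\big(1-\gamma L_fL_{\mS}^{\max}\big)\le 0$ exactly under $\gamma\le 1/(L_fL_{\mS}^{\max})$, so that term can be discarded, leaving the one-step contraction
\[
\ExpSub{t}{\sqn{x^{t+1}-x^\star}}\ \le\ (1-\gamma\mu_{\cD}\mu_f)\,\sqn{x^t-x^\star}+2\gamma^2 L_fL_{\mS}^{\max}\big(\tf^{\inf}-f^{\inf}\big).
\]
I would also record that $1-\gamma\mu_{\cD}\mu_f\in[0,1)$: since $\mu_f\le L_f$ and $\mu_{\cD}\le L_{\cD}\le L_{\mS}^{\max}$ (the last step by convexity of $\lambda_{\max}$ and Jensen), one has $\gamma\mu_{\cD}\mu_f\le\gamma L_fL_{\mS}^{\max}\le 1$. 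Finally I would take total expectations and unroll from $t=0$ to $T-1$, bounding the accumulated noise by $\sum_{k=0}^{T-1}(1-\gamma\mu_{\cD}\mu_f)^k\le 1/(\gamma\mu_{\cD}\mu_f)$; this reproduces \eqref{eq:SGD_cvx_res} exactly.

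The one genuinely non-routine point is the cancellation step: because Lemma~\ref{lemma:ac} is anchored at $f^{\inf}$ rather than at $\tf^{\inf}$, the stochastic-gradient term cannot be absorbed directly by $\tf(x^t)-\tf^{\inf}$, and keeping the $f^{\inf}$ baseline is precisely what leaves behind the non-vanishing neighborhood term proportional to $\tf^{\inf}-f^{\inf}$ — the fingerprint of sketching shifting the optimum, which disappears in the interpolation-type regime $\tf^{\inf}=f^{\inf}$. Everything else is the usual strongly convex \SGD{} bookkeeping.
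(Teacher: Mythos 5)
Your proof is correct and follows essentially the same route as the paper's: expand the squared distance, use unbiasedness and $\mu_{\cD}\mu_f$-strong convexity of $\tf$ for the cross term, bound the second moment via Lemma~\ref{lemma:ac} with the split $\tf(x^t)-f^{\inf}=(\tf(x^t)-\tf^{\inf})+(\tf^{\inf}-f^{\inf})$, discard the nonpositive coefficient of $\tf(x^t)-\tf^{\inf}$ under the stepsize condition, and unroll the resulting contraction. The only addition beyond the paper's argument is your explicit check that the contraction factor lies in $[0,1)$, which is a harmless (and correct) extra verification.
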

\begin{algorithm}[h]
	\begin{algorithmic}[1]
		\caption{Double Sketched $\mathsf{(S)GD}$} \label{alg:SGD}
		\STATE \textbf{Parameters:} learning rate $\gamma > 0$; distribution $\mathcal{D}$; initial model and shift $x^0, \shift \in \R^d$.
		\FOR{$t = 0, 1, 2 \ldots$}
		\STATE Sample a sketch: $\mS^t \sim \cD$
		\STATE Form a gradient estimator: \\
		$g^t =
		\left\{\begin{array}{l}
			\nabla f_{\mS^t} (x^t) \hspace{20pt} \rhd\text{exact (I)} \\
			g_{\mS^t}\left(x^t\right) \hspace{27pt} \rhd\text{(stochastic) inexact (II)} \\
		\end{array} \right.$
		\STATE Perform a gradient-type step: $x^{t+1} = x^t - \gamma g^t$
		\ENDFOR
	\end{algorithmic}
\end{algorithm}
This theorem establishes a linear convergence rate with a constant stepsize up to a neighborhood of the MAST problem~\eqref{eq:pretrained_compressed_problem} solution. Our result is similar to the convergence of \SGD\text{ }\citep{gower2019sgd} for standard formulation \eqref{eq:main} with two differences, which are discussed below.

\textbf{1.} Both terms of the upper bound \eqref{eq:SGD_cvx_res} depend not only on the smoothness and convexity parameters of the original function $f$, but also on the \textit{spectral properties} of sketches $\mS$. Thus, for independent Bernoulli sparsification sketches \eqref{eq:ind_sparse} with $p_i \equiv p$ (or \rnd{\text{$K$}}), the linear convergence term deteriorates and the neighborhood size is increased by $1/p^2$ ($d^2/K^2$ respectively). Therefore, we conclude that higher sparsity makes optimization harder.

\textbf{2.} Interestingly, the neighborhood size of \eqref{eq:SGD_cvx_res} depends on the difference between the minima of $\tf$ and $f$ in contrast to the variance of stochastic gradients at the optimum typical for \SGD~\citep{gower2019sgd}. Thus, the method may even linearly converge to the exact solution when $\tf^{\inf} = f^{\inf}$, which we refer to as the \textit{interpolation} condition. This condition may naturally hold when the original and sketched models are sufficiently overparametrized (allowing minimization of the loss to zero). Notable examples when similar phenomena have been observed in practice are training with Dropout \citep{srivastava2014dropout} and the \say{lottery ticket hypothesis} \citep{frankle2018lottery}.

Next, we provide results in the non-convex setting.
\begin{theorem} \label{thm:sgd_nonconvex}
	Assume that $f$ is $L_f$-smooth \eqref{ass:l_f_smooth} and $\mS$ satisfies Assumption \ref{ass:C}. Then, for the stepsize $\gamma\leq 1/(L_f\sqrt{L_{\mathcal{D}}L_{\mS}^{\max} T})$,
	the iterates of Algorithm \ref{alg:SGD} (I) satisfy
	\begin{equation*} \label{eq:SGD_noncvx_res}
			\min \limits_{0\leq t < T} \Exp{\left\|\nabla \tf(x^t)\right\|^2} \leq \frac{3 \br{\tf(x^0) - \tf^{\inf}}}{\gamma T}
			 + \gamma L_f^2 L_{\mathcal{D}} L_{\mS}^{\max} \br{\tf^{\inf} - f^{\inf}}.
	\end{equation*}
\end{theorem}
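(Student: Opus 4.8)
The plan is to run the textbook descent-lemma argument for non-convex $\mathsf{SGD}$, but adapted to the surrogate $\tf$ and the sketched estimator \eqref{eq:grad_est}, and to absorb the characteristic ``$(1+c)$-inflation'' produced by Lemma~\ref{lemma:ac} via a \emph{minimum-plus-nonnegativity} device rather than by naive telescoping.

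First I would apply $L_{\tf}$-smoothness of $\tf$ (which is finite, with $L_{\tf}\le L_{\mathcal{D}}L_f$, by Lemma~\ref{lemma:smoothness}(ii)) along one step of \eqref{eq:GD}. Since $x^{t+1}-x^t=-\gamma\nabla f_{\mS^t}(x^t)$, the descent inequality reads $\tf(x^{t+1})\le \tf(x^t)-\gamma\lin{\nabla\tf(x^t),\nabla f_{\mS^t}(x^t)}+\frac{L_{\tf}\gamma^2}{2}\sqn{\nabla f_{\mS^t}(x^t)}$. Taking the conditional expectation given $x^t$, the cross term collapses to $-\gamma\sqn{\nabla\tf(x^t)}$ because $\nabla f_{\mS}(x)$ is an unbiased estimator of $\nabla\tf(x)$ (this is just $\nabla\tf=\Exp{\nabla f_{\mS}}$, legitimate under the finiteness clause of Assumption~\ref{ass:C}), while the second-moment term is controlled by Lemma~\ref{lemma:ac}. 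This produces the clean one-step bound
\[
\Exp{\tf(x^{t+1})\mid x^t}\;\le\;\tf(x^t)-\gamma\sqn{\nabla\tf(x^t)}+c\br{\tf(x^t)-f^{\inf}},\qquad c\eqdef L_{\tf}L_fL_{\mS}^{\max}\gamma^2 .
\]

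Next I would write $\tf(x^t)-f^{\inf}=(\tf(x^t)-\tf^{\inf})+(\tf^{\inf}-f^{\inf})$, subtract $\tf^{\inf}$, and take total expectation. With $\delta^t\eqdef\Exp{\tf(x^t)}-\tf^{\inf}\ge 0$ and $D\eqdef\tf^{\inf}-f^{\inf}\ge 0$ (note $\tf\ge f^{\inf}$ pointwise, hence $D\ge 0$), this is the affine recursion $\delta^{t+1}\le(1+c)\delta^t-\gamma\Exp{\sqn{\nabla\tf(x^t)}}+cD$. The crux — the step I expect to be the main obstacle — is to extract $\min_t\Exp{\sqn{\nabla\tf(x^t)}}$ without paying a multiplicative price for the $(1+c)$ factor, which is exactly what ordinary telescoping would cost and what would inflate the second error term. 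I would instead set $g_\star\eqdef\min_{0\le t<T}\Exp{\sqn{\nabla\tf(x^t)}}$, replace $\Exp{\sqn{\nabla\tf(x^t)}}$ by the smaller quantity $g_\star$, and unroll the linear recursion $\delta^{t+1}\le(1+c)\delta^t+(cD-\gamma g_\star)$ over $t=0,\dots,T-1$ using $\sum_{j=0}^{T-1}(1+c)^j=\frac{(1+c)^T-1}{c}$. Since $\delta^T\ge 0$ and $\frac{(1+c)^T-1}{c}>0$, rearranging for $g_\star$ yields
\[
\gamma g_\star\;\le\; cD+\frac{c(1+c)^T}{(1+c)^T-1}\,\delta^0 .
\]

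It then remains to simplify the two constants using the stepsize restriction $\gamma\le 1/(L_f\sqrt{L_{\mathcal{D}}L_{\mS}^{\max}T})$ together with $L_{\tf}\le L_{\mathcal{D}}L_f$. Indeed $cT=L_{\tf}L_fL_{\mS}^{\max}\gamma^2T\le L_{\tf}/(L_fL_{\mathcal{D}})\le 1$, so $(1+c)^T\le e^{cT}\le e<3$, while Bernoulli's inequality gives $(1+c)^T-1\ge cT$; hence $\frac{c(1+c)^T}{(1+c)^T-1}\le\frac{e}{T}<\frac{3}{T}$. For the other term, $c\le L_{\mathcal{D}}L_f^2L_{\mS}^{\max}\gamma^2$, so $\frac{cD}{\gamma}\le\gamma L_f^2L_{\mathcal{D}}L_{\mS}^{\max}D$ (here $D\ge 0$ is used). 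Dividing the last display by $\gamma$ and inserting these estimates, together with $\delta^0=\tf(x^0)-\tf^{\inf}$, gives precisely
\[
\min_{0\le t<T}\Exp{\sqn{\nabla\tf(x^t)}}\;\le\;\frac{3\br{\tf(x^0)-\tf^{\inf}}}{\gamma T}+\gamma L_f^2L_{\mathcal{D}}L_{\mS}^{\max}\br{\tf^{\inf}-f^{\inf}},
\]
which is the claim. Apart from the $g_\star$-substitution the remaining work is bookkeeping; the only other points deserving care are the expectation–gradient interchange (immediate from $\tf=\Exp{f_{\mS}}$ under Assumption~\ref{ass:C}) and the observation that the $T$-dependent stepsize is dictated exactly by the requirement $cT\le 1$.
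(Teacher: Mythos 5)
Your proposal is correct and follows essentially the same route as the paper: the one-step descent inequality for $\tf$ with modulus $L_{\tf}\le L_{\mathcal{D}}L_f$, the second-moment bound from Lemma~\ref{lemma:ac} split as $(\tf(x^t)-\tf^{\inf})+(\tf^{\inf}-f^{\inf})$, and the same stepsize condition forcing $cT\le 1$ so that $(1+c)^T\le e<3$. Your ``replace each gradient term by the minimum and unroll'' device is algebraically the same computation as the paper's weighted-telescoping Lemma~\ref{lemma:noncvx_weighted} (weights $w_t\propto(1+c)^{-t}$), just presented as a linear recursion with non-negativity of $\delta^T$, so no substantive difference remains.
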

This theorem shows an $\cO(1/\sqrt{T})$ convergence rate for reaching a stationary point. Our result shares similarities with the theory of \SGD~\citep{khaled2020better} for problem~\eqref{eq:main}, with the main difference that the rate depends on the distribution on sketches as $\sqrt{L_{\mathcal{D}} L_{\mS}^{\max}}$.
Moreover, the second term depends on the difference between the minima of $\tf$ and $f$, as in the strongly convex case. However, the first term depends on the gap between the initialization and the lower bound of the loss function, which is more common in non-convex settings \citep{khaled2020better}.

\begin{corollary}[Informal] \label{cor:sgd_nonconvex}
	Fix $\varepsilon > 0$ and denote
	$\delta^t \eqdef \Exp{\tf(x^t) - \tf^{\inf}},  r^t \eqdef \Exp{\left\|\nabla \tf(x^t)\right\|^2}.$
	Then, for the stepsize	$
		\gamma = \min\left\lbrace \frac{1}{\sqrt{DT}}, \frac{\varepsilon^2}{2D\left(\tf^{\inf} - f^{\inf}\right)} \right\rbrace,
	$
	where $D \eqdef L_f \sqrt{L_{\mathcal{D}}L_{\mS}^{\max} }$, Algorithm \ref{alg:SGD} (I) needs
	$
		T\geq \frac{12 \delta^0 D}{\varepsilon^4} \max\left\lbrace 3\delta^0, \tf^{\inf} - f^{\inf}\right\rbrace
	$
	iterations to reach a stationary point $\min_{0 \leq t < T} r^t \leq \varepsilon^2$, which is order $\cO(\varepsilon^{-4})$ optimal \citep{ghadimi2013stochastic, drori2020complexity}.
\end{corollary}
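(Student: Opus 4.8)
The plan is to use Theorem~\ref{thm:sgd_nonconvex} as a black box and run the standard stepsize-balancing argument that converts a bound of the shape $\cO(1/(\gamma T)) + \cO(\gamma)$ into an iteration complexity. Write $D \eqdef L_f\sqrt{L_{\mathcal{D}}L_{\mS}^{\max}}$ and $\sigma \eqdef \tf^{\inf} - f^{\inf}\geq 0$, and recall $\delta^0 = \tf(x^0) - \tf^{\inf}$ (deterministic, since $x^0$ is). Then Theorem~\ref{thm:sgd_nonconvex} says exactly $\min_{0\leq t<T} r^t \leq \frac{3\delta^0}{\gamma T} + \gamma D^2 \sigma$ for every admissible stepsize $\gamma\leq \frac{1}{D\sqrt{T}}$. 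The two summands push $\gamma$ in opposite directions, so I would take $\gamma = \min\bigl\{\tfrac{1}{D\sqrt{T}},\ \tfrac{\varepsilon^2}{2D^2\sigma}\bigr\}$: the first entry guarantees that the stepsize restriction of the theorem is satisfied (so the theorem may be invoked), while the second is chosen so that the $\gamma$-proportional term is at most $\varepsilon^2/2$.

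Next I would split into two cases according to which argument of the $\min$ is active. If $\gamma = \tfrac{\varepsilon^2}{2D^2\sigma}$, then $\gamma D^2\sigma = \varepsilon^2/2$, so it remains to enforce $\tfrac{3\delta^0}{\gamma T} = \tfrac{6\delta^0 D^2\sigma}{\varepsilon^2 T}\leq \varepsilon^2/2$, i.e. $T\geq \tfrac{12\,\delta^0 D^2\sigma}{\varepsilon^4}$. If instead $\gamma = \tfrac{1}{D\sqrt{T}}$ (which, being the smaller term, already forces $\gamma D^2\sigma = \tfrac{D\sigma}{\sqrt{T}}\leq \varepsilon^2/2$ by the case hypothesis), then $\tfrac{3\delta^0}{\gamma T} = \tfrac{3\delta^0 D}{\sqrt{T}}$, which is $\leq \varepsilon^2/2$ as soon as $T\geq \tfrac{36(\delta^0)^2 D^2}{\varepsilon^4}$. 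A single threshold dominates both branches: for $T\geq \tfrac{12\,\delta^0 D^2}{\varepsilon^4}\max\{3\delta^0,\sigma\}$ the case-specific requirement is met whichever case occurs, so $\min_{0\leq t<T} r^t\leq \varepsilon^2$, which is the claimed bound (up to the informal identification of numerical constants and powers of $D$). Since $D$, $\delta^0$ and $\sigma$ are $\varepsilon$-independent, the count is $\cO(\varepsilon^{-4})$, matching the lower bounds of \citet{ghadimi2013stochastic, drori2020complexity}.

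I do not expect any genuine obstacle: this is a routine case analysis on the defining minimum of $\gamma$. The only points needing care are (a) verifying that the proposed $\gamma$ always respects the admissibility constraint $\gamma\leq 1/(D\sqrt{T})$ inherited from Theorem~\ref{thm:sgd_nonconvex}, so that the theorem legitimately applies; and (b) the bookkeeping of constants so that one clean expression for $T$ dominates both branches — in particular checking that when the first branch is active one indeed has $T\leq 4D^2\sigma^2/\varepsilon^4$, which is compatible with $T\geq 12\delta^0 D^2\sigma/\varepsilon^4$ precisely in the regime $\sigma\geq 3\delta^0$, and symmetrically for the second branch. A fully formal version would additionally record that $r^t$ is exactly the quantity bounded in Theorem~\ref{thm:sgd_nonconvex} and that $\delta^0$ is deterministic.
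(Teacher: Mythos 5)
Your proposal is correct and follows essentially the same route as the paper's proof: invoke the non-convex rate $\min_t r^t \leq \frac{3\delta^0}{\gamma T} + \gamma D^2(\tf^{\inf}-f^{\inf})$, note that the second entry of the $\min$ defining $\gamma$ caps the $\gamma$-proportional term at $\varepsilon^2/2$, and then do a two-case analysis on which entry of the $\min$ is active to see that the single threshold $T\geq \frac{12\delta^0 D^2}{\varepsilon^4}\max\{3\delta^0,\tf^{\inf}-f^{\inf}\}$ forces the first term below $\varepsilon^2/2$ in either case (matching the appendix's formal version, where the powers of $D$ are as you use them rather than as in the informal statement). The extra compatibility check you flag in point (b) is unnecessary, since the combined threshold dominates both case-specific requirements regardless of which branch is active.
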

In the Appendix, we also provide a general convex analysis.

\subsection{(Stochastic) inexact gradient}
Algorithm \ref{alg:SGD}\,(I) is probably the simplest approach for solving the MAST problem~\eqref{eq:pretrained_compressed_problem}. Analyzing this algorithm isolates and highlights the unique properties of the proposed problem formulation. Algorithm \ref{alg:SGD}\,(I) requires exact (sketched) gradient computations at every iteration, which may not be feasible/efficient for modern ML applications. Hence, we consider the following generalization:
\begin{equation} \label{eq:abc_sgd}
	x^{t+1} = x^t - \gamma g_{\mS^t}(x^t),
\end{equation}
where $g_{\mS^t}(x^t)$ is a gradient estimator satisfying
\begin{align} \label{eq:unbiased_abc}
	\Exp{g_{\mS}(x)} &= \nabla f_{\mS}(x),\\
	\label{eq:abc}
	\Exp{\left\|g_{\mS}(x)\right\|^2} &\leq 2A\left(f_{\mS}(x) - f^{\inf}_{\mS}\right) + B\left\|\nabla f_{\mS}(x)\right\|^2 + C,
\end{align}
for $\forall x \in\mathbb{R}^d$ and some constants $A, B, C \geq 0$.

The first condition \eqref{eq:unbiased_abc} is an unbiasedness assumption standard for analyzing \SGD-type methods. The second (so-called \say{ABC}) inequality \eqref{eq:abc} is one of the most general assumptions covering bounded stochastic gradient variance, subsampling/minibatching of data, and gradient compression \citep{khaled2020better, demidovich2023guide}. Note that the expectation in \eqref{eq:unbiased_abc} and \eqref{eq:abc} is taken with respect only to the randomness of the stochastic gradient estimator and not the sketch $\mS$.

Algorithm \ref{alg:SGD}\,(II) describes the resulting method in detail. We state the convergence result for it.

\begin{theorem} \label{thm:abc_ncvx}
	Assume that $f$ is $L_f$-smooth \eqref{ass:l_f_smooth}, $\mS$ satisfies Assumption \ref{ass:C}, and the gradient estimator $g(x)$ satisfies conditions~\ref{eq:unbiased_abc},~\ref{eq:abc}.
	Denote $D_{A, B} = A + B L_f L_{\mS}^{\max}$ then, for stepsize $\gamma \leq 1/\sqrt{L_f L_{\mathcal{D}} D_{A, B} T}$, the iterates of Algorithm~\ref{alg:SGD} (II) satisfy
	\begin{equation*} \label{eq:ABC_noncvx_res}
			\min_{0 \leq t < T} \Exp{\left\|\nabla \tf(x^t)\right\|^2} \leq \frac{3 \br{\tf(x^0) - \tf^{\inf}}}{\gamma T} + \frac{\gamma L_f L_{\mathcal{D}}}{2}
			\Bigl\{C + 2 D_{A, B} \left(\tf^{\inf} - f^{\inf}\right)\Bigr\}.
	\end{equation*}
\end{theorem}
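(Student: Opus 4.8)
The plan is to run the standard descent-lemma argument for \SGD, but applied to the sketched objective $\tf$ (which is $L_{\tf}$-smooth with $L_{\tf} \le L_{\cD} L_f$ by Lemma~\ref{lemma:smoothness}(ii)), using the update $x^{t+1} = x^t - \gamma g_{\mS^t}(x^t)$ and carefully tracking two layers of randomness: the sketch $\mS^t \sim \cD$ and the inner estimator $g_{\mS^t}(x^t)$. First I would apply $L_{\tf}$-smoothness between $x^t$ and $x^{t+1}$, then take the inner expectation (over the estimator randomness only, conditioning on $\mS^t$ and $x^t$) and use \eqref{eq:unbiased_abc} to get $\Exp{g_{\mS^t}(x^t) \mid \mS^t, x^t} = \nabla f_{\mS^t}(x^t)$; then take the outer expectation over $\mS^t$, which turns $\Exp{\nabla f_{\mS^t}(x^t)}$ into $\nabla \tf(x^t)$ in the linear term. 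The cross term thus contributes $-\gamma \norm{\nabla \tf(x^t)}^2$ (before the outer expectation is combined with the quadratic term), so I need the quadratic term to not fully cancel this.

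The quadratic term is $\tfrac{\gamma^2 L_{\tf}}{2}\Exp{\norm{g_{\mS^t}(x^t)}^2}$, and here I bound it in two stages. By the ABC inequality \eqref{eq:abc}, the inner expectation gives $\Exp{\norm{g_{\mS}(x)}^2 \mid \mS} \le 2A(f_{\mS}(x) - f^{\inf}_{\mS}) + B\norm{\nabla f_{\mS}(x)}^2 + C$. Now I would take the outer expectation over $\mS$: the first piece becomes $2A\,\Exp{f_{\mS}(x) - f^{\inf}_{\mS}}$, which I bound by $2A(\tf(x) - f^{\inf})$ using $\Exp{f_{\mS}(x)} = \tf(x)$ and $f^{\inf}_{\mS} \ge f^{\inf}$ (so $\Exp{f^{\inf}_{\mS}} \ge f^{\inf}$); the middle piece $B\,\Exp{\norm{\nabla f_{\mS}(x)}^2}$ I bound via Lemma~\ref{lemma:ac} by $2B L_f L_{\mS}^{\max}(\tf(x) - f^{\inf})$. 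Combining, $\Exp{\norm{g_{\mS}(x)}^2} \le 2(A + B L_f L_{\mS}^{\max})(\tf(x) - f^{\inf}) + C = 2 D_{A,B}(\tf(x) - f^{\inf}) + C$. Writing $\tf(x^t) - f^{\inf} = (\tf(x^t) - \tf^{\inf}) + (\tf^{\inf} - f^{\inf})$ splits this into a part that telescopes and a constant part.

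After assembling, I have
\[
\Exp{\tf(x^{t+1})} \le \Exp{\tf(x^t)} - \gamma \Exp{\norm{\nabla \tf(x^t)}^2} + \gamma^2 L_f L_{\cD} D_{A,B}\,\Exp{\tf(x^t) - \tf^{\inf}} + \tfrac{\gamma^2 L_f L_{\cD}}{2}\bigl(C + 2 D_{A,B}(\tf^{\inf} - f^{\inf})\bigr),
\]
using $L_{\tf} \le L_{\cD} L_f$. The term $\gamma^2 L_f L_{\cD} D_{A,B}\,\Exp{\tf(x^t) - \tf^{\inf}}$ is the nuisance: it is not a gradient-norm term, so I cannot simply telescope the standard way. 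The trick (as in \citet{khaled2020better}) is that under the stepsize restriction $\gamma \le 1/\sqrt{L_f L_{\cD} D_{A,B} T}$ one has $\gamma^2 L_f L_{\cD} D_{A,B} \le \gamma/\sqrt{T} \cdot (\text{something} \le 1)$; more precisely I would keep a $\tfrac{1}{3}$ fudge factor so that, after summing $t = 0,\dots,T-1$ and bounding $\sum_t \Exp{\tf(x^t) - \tf^{\inf}}$ by $T$ times the running quantity, the coefficients balance to leave $\tfrac{\gamma}{3}\sum_t \Exp{\norm{\nabla \tf(x^t)}^2}$ on the left (this is exactly where the constant $3$ in the bound comes from). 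Rearranging, dividing by $\gamma T/3$, and lower-bounding the left side by the minimum over $t$ yields the claimed inequality. The main obstacle is handling the $\Exp{\tf(x^t) - \tf^{\inf}}$ term cleanly: one must either iterate the recursion or, more simply, sum first and use $\tf(x^t) - \tf^{\inf} \ge 0$ together with the chosen $\gamma$ to absorb it — care is needed to get the constants ($3$ in the first term, $\tfrac12$ in the second) exactly right, and to make sure the stepsize bound as stated (with $L_{\mathcal{D}}$ rather than $L_{\mathcal{D}}^2$) actually suffices, which it does because the problematic term carries only one factor of $L_{\cD}$.
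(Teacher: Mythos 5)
Your proposal is correct and follows essentially the same route as the paper: descent via the $L_{\cD}L_f$-smoothness of $\tf$, unbiasedness plus the ABC bound and Lemma~\ref{lemma:ac} to get $\Exp{\|g_{\mS}(x)\|^2}\le 2D_{A,B}(\tf(x)-f^{\inf})+C$, and then the recursion $\gamma r^t\le(1+\gamma^2 L_fL_{\cD}D_{A,B})\delta^t-\delta^{t+1}+\gamma^2 M_2$. One caveat: the ``sum first and absorb'' shortcut you mention does not work as stated (summing leaves an uncontrolled $\gamma^2 M_1\sum_t\delta^t$ term), so you must take your other option --- iterate the recursion, or equivalently use the paper's weighted-sum Lemma~\ref{lemma:noncvx_weighted} --- which produces the factor $(1+\gamma^2 L_fL_{\cD}D_{A,B})^T\le e\le 3$ and hence the constant $3$.
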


Similarly to Theorem \ref{thm:sgd_nonconvex}, this result establishes an $\cO(1/\sqrt{T})$ convergence rate. However, the upper bound in \eqref{eq:ABC_noncvx_res} is affected by constants $A, B, C$ due to the inexactness of the gradient estimator. The case of $A = C = 0, B = 1$ sharply recovers our previous Theorem \ref{thm:sgd_nonconvex}. When $B = 1, C = \sigma^2$, we obtain convergence of \SGD\text{ } with bounded (by $\sigma^2$) variance of stochastic gradients. Moreover, when the loss is represented as a finite sum
\begin{equation} \label{eq:sketch_fin_sum}
	\tf (x) = \frac{1}{n} \sum\limits_{i=1}^n \Exp{f_{i}(\shift + \mS(x - \shift))},
\end{equation}
where each $f_i$ is $L_{f_i}$-smooth and lower-bounded by $f_i^{\inf}$, then $A = \max_i L_{f_i}, C = 2A \br{\frac{1}{n} \sum_{i=1}^n f^{\inf} - f_i^{\inf}}$ if losses $f_i$ are sampled uniformly at every iteration.
Finally, our result \eqref{eq:ABC_noncvx_res} guarantees optimal $\cO(\varepsilon^{-4})$ complexity in a similar manner to Corollary \ref{cor:sgd_nonconvex}.

We direct the reader to the Appendix for the convex and strongly convex results.

\subsection{Discussion of related works}

\textbf{Compressed (sparse) model training.}
To our knowledge, the first work that analyzed convergence of (full batch) Gradient Descent with compressed iterates (model updates) is the work of \citet{khaled2019gradient}. They considered general unbiased compressors and, in the strongly convex setting, showed linear convergence to the irreducible neighborhood, depending on the norm of the model at the optimum $\|x^\star\|^2$. In addition, their analysis requires the variance of the compressor ($d/K$ for \rnd{\text{$K$}}) to be lower than the inverse condition number of the problem $\mu_f/L_f$, which basically means that the compressor has to be close to the identity mapping in practical settings. These results were extended using a modified method to distributed training with compressed model updates \citep{chraibi2019distributed, shulgin2022shifted}. \citet{lin2019dynamic} consider dynamic pruning with feedback inspired by the Error Feedback mechanism \citep{Seide2014, Alistarh-EF-NIPS2018, stich2019error}. Their result is similar to \citep{khaled2019gradient}, as the method also converges only to the irreducible neighborhood, the size of which is proportional to the norm of model weights. However, in \citep{lin2019dynamic}, the norm of stochastic gradients is required to be uniformly upper-bounded, narrowing the class of losses. The partial \SGD\text{ }method proposed in \citep{mohtashami2022masked} allows general perturbations of the model weights where the gradient (additionally sparsified) is computed. Unfortunately, their analysis \citep{wang2022progfed} was recently shown to be vacuous \citep{szlendak2023understanding}.

\textbf{Dropout convergence analysis.}
Despite the wide empirical success of Dropout, there is limited theoretical understanding of its behavior and success. A few recent works \citep{mianjy2020convergence} suggest convergence analysis of this technique. However, these attempts typically focus on a certain class of models, such as shallow linear Neural Networks (NNs) \citep{senen2022asymptotic} or deep NNs with ReLU activations \citep{senen2020almost}.
Moreover, \citet{liao2022on} analyzed overparameterized single-hidden layer perceptron with a regression loss in the context of Dropout.
In contrast, our approach is model-agnostic and requires only mild assumptions like the smoothness of the loss \eqref{ass:l_f_smooth} (and convexity \eqref{ass:mu_f-convex} for some of the results).

\begin{algorithm}[tb]
	\begin{algorithmic}[1]
		\caption{Distributed Double Sketched GD} \label{alg:distributed_GD}
		\STATE \textbf{Parameters:} learning rate $\gamma > 0$; sketch distributions $\mathcal{D}_1, \ldots, \mathcal{D}_M$; initial model and shift $x^0, \shift \in \R^d$
		\FOR{$t = 0, 1, 2 \ldots$}
		\STATE Sample sketches: $\mS_i^t \sim \cD_i$
		\STATE
		Compute $y_i^t = \shift + \mS_i^t
		(x^t - \shift)$ for $i \in [M]$ and broadcast to corresponding nodes
		\FOR{$i = 1, \ldots, M$ in parallel}
		\STATE Compute local gradient: $\nabla f_i(y_i^t)$
		\STATE Send gradient $(\mS_i^t)^\top \nabla f_i(y_i^t)$ to the server
		\ENDFOR
		\STATE Aggregate messages and make a gradient-type step: $x^{t+1} = x^t - \frac{\gamma}{M} \sum_{i=1}^M (\mS_i^t)^\top \nabla f_i(y_i^t)$
		\ENDFOR
	\end{algorithmic}
\end{algorithm}

\section{Distributed setting} \label{sec:distributed}

Consider $f$ being a finite sum over a number of participants, i.e., in the distributed setup:
\begin{equation} \label{eq:fin_sketch_sum}
	\min\limits_{x\in \R^d} \frac{1}{M}\sum\limits_{i=1}^M f_{i, \cD_i} (x),
\end{equation}
where $f_{i, \cD_i} (x) \eqdef \Exp{f_{i, \mS_i}(x)} = \Exp{f_i(\shift + \mS_i (x-\shift))}$.
This setting is more general than problem \eqref{eq:fin_sum} as every node $i$ has its own distribution of sketches $\mS_i \sim \cD_i$. Every machine performs local computations with a model of different size, which is crucial for scenarios with heterogeneous computing hardware.
The shift model $\shift$ is shared across all $f_{i, \mathcal{D}_i}$. We solve~\eqref{eq:fin_sketch_sum} with the method
\begin{equation} \label{alg:finsum_noncvx}
	x^{t+1} = x^t - \frac{\gamma}{M} \sum_{i=1}^{M}\left(\mS_i^t\right)^{\top} \nabla f_i(y^t_i),
\end{equation}
where $y^t_i = \shift + \mS_i^t (x^t - \shift)$.
Algorithm \ref{alg:distributed_GD} describes the proposed approach in detail. Local gradients can be computed for sketched (sparse) model weights, which decreases the computational load on the computing nodes. Moreover, the local gradients are sketched as well, which brings communication efficiency in the case of sparsifiers $\mS_i$.

Recursion \eqref{alg:finsum_noncvx} is closely related to the distributed Independent Subnetwork Training (IST) framework \citep{yuan2022distributed}. At every iteration of IST, a large model $x^t$ is decomposed into submodels $\mS^t_i x^t$ for independent computations (e.g., local training), which are then aggregated on the server to update the whole model.
IST efficiently combines model and data parallelism, allowing the training of huge models that cannot fit onto a single device. IST was shown to be very successful for a range of DL applications \citep{dun2022resist, wolfe2021gist}. \citet{shulgin2023towards} analyzed the convergence of IST for a quadratic model decomposed with permutation sketches \citep{szlendak2022permutation}, which satisfy Assumption~\ref{ass:C}. They also showed that naively applying IST to standard distributed optimization problems (\eqref{eq:fin_sketch_sum} for $\mS_i \equiv \mI$) results in a biased method and may not converge.

Resource-constrained Federated Learning (FL) \citep{kairouz2021advances, konecny2017federated, mcmahan2017communication} is another important practical scenario covered by Algorithm \ref{alg:distributed_GD}. In cross-device FL, local computations are typically performed by edge devices (e.g., \, mobile phones), which have limited memory, computational power, and energy \citep{caldas2018expanding}. Thus, this forces practitioners to rely on smaller (potentially less capable) models or use techniques such as Dropout in distributed setting \citep{alam2022fedrolex, bouacida2021adaptive, charles2022federated, chen2022fedobd, diao2020heterofl, horvath2021fjord, jiang2022model, qiu2022zerofl, wen2022federated, yang2022partial, dun2023efficient}. Despite extensive experimental studies of this problem setting, the principled theoretical understanding remains minimal. Our work can be considered the first rigorous analysis in the most general setting without restrictive assumptions.
\begin{theorem} \label{thm:distributed_ncvx}
	Assume that each $f_i$ is $L_{f_i}$-smooth \eqref{ass:l_f_smooth} and sketches $\mS_i$ satisfy Assumption \ref{ass:C}. Let $D_{\max} \eqdef \max_i \br{L_{f_i}^2 L_{\cD_i} L_{\mS_i}^{\max}}$.
	Then, for $\gamma \leq 1 / \sqrt{D_{\mathrm{\max}} T}$, the iterates of Algorithm~\ref{alg:distributed_GD} satisfy
	\begin{equation*}
			\min_{0 \leq t < T} \Exp{\left\|\nabla \tf(x^t)\right\|^2} \leq
			\frac{3 \br{\tf(x^0) - \tf^{\inf}}}{\gamma T} + \gamma D_{\max} \biggl(\tf^{\inf} - \frac{1}{M} \sum_{i=1}^{M} f^{\inf}_i \biggr).
	\end{equation*}
\end{theorem}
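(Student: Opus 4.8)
The plan is to mirror the single-node argument behind Theorem~\ref{thm:sgd_nonconvex}, now accounting for heterogeneous nodes. Write $g^t \eqdef \frac{1}{M}\sum_{i=1}^M (\mS_i^t)^\top\nabla f_i(y_i^t) = \frac{1}{M}\sum_{i=1}^M\nabla f_{i,\mS_i^t}(x^t)$, so the update of Algorithm~\ref{alg:distributed_GD} is $x^{t+1}=x^t-\gamma g^t$; all expectations below are conditional on $x^t$ and taken over the mutually independent sketches $\mS_1^t,\dots,\mS_M^t$. Since each $f_i$ is $L_{f_i}$-smooth and $\mS_i$ obeys Assumption~\ref{ass:C}, Lemma~\ref{lemma:smoothness}(ii) applied node-by-node shows $f_{i,\cD_i}$ is $L_{\cD_i}L_{f_i}$-smooth, hence $\tf=\frac1M\sum_i f_{i,\cD_i}$ is $L_{\tf}$-smooth with $L_{\tf}\le\frac1M\sum_i L_{\cD_i}L_{f_i}\le\max_i L_{\cD_i}L_{f_i}$. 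The starting inequality is the descent lemma $\Exp{\tf(x^{t+1})}\le\tf(x^t)-\gamma\Exp{\ve{\nabla\tf(x^t)}{g^t}}+\frac{L_{\tf}\gamma^2}{2}\Exp{\norm{g^t}^2}$.

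Two facts turn this into a usable recursion. \emph{Unbiasedness}: independence of the $\mS_i^t$ and $\Exp{\mS_i}=\mI$ give $\Exp{(\mS_i^t)^\top\nabla f_i(y_i^t)}=\nabla f_{i,\cD_i}(x^t)$, so $\Exp{g^t}=\nabla\tf(x^t)$ and the cross term equals $\norm{\nabla\tf(x^t)}^2$. \emph{Second-moment bound (a distributed analogue of Lemma~\ref{lemma:ac})}: convexity of $\norm{\cdot}^2$ gives $\norm{g^t}^2\le\frac1M\sum_i\norm{\nabla f_{i,\mS_i^t}(x^t)}^2$, and Lemma~\ref{lemma:ac} applied to each $f_i$ with distribution $\cD_i$ gives $\Exp{\norm{\nabla f_{i,\mS_i^t}(x^t)}^2}\le 2L_{f_i}L_{\mS_i}^{\max}\br{f_{i,\cD_i}(x^t)-f_i^{\inf}}$; averaging, $\Exp{\norm{g^t}^2}\le 2\max_i\!\br{L_{f_i}L_{\mS_i}^{\max}}\br{\tf(x^t)-\frac1M\sum_i f_i^{\inf}}$. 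Substituting both and bounding $L_{\tf}\cdot\max_i(L_{f_i}L_{\mS_i}^{\max})$ by $D_{\max}=\max_i(L_{f_i}^2 L_{\cD_i}L_{\mS_i}^{\max})$ yields the one-step recursion $\Exp{\tf(x^{t+1})}\le\tf(x^t)-\gamma\norm{\nabla\tf(x^t)}^2+\gamma^2 D_{\max}\br{\tf(x^t)-\frac1M\sum_i f_i^{\inf}}$.

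Unrolling this is the only genuinely nontrivial part, and it follows the non-convex \SGD\ recipe. Set $\delta^t\eqdef\Exp{\tf(x^t)-\tf^{\inf}}$ and $\bar\Delta\eqdef\tf^{\inf}-\frac1M\sum_i f_i^{\inf}\ge 0$ (nonnegative since each $f_{i,\cD_i}\ge f_i^{\inf}$). First, dropping the gradient-norm term, the recursion gives $\delta^{t+1}\le(1+\gamma^2 D_{\max})\delta^t+\gamma^2 D_{\max}\bar\Delta$; unrolling and using $\gamma^2 D_{\max}T\le 1$ (from $\gamma\le1/\sqrt{D_{\max}T}$) together with $(1+\gamma^2 D_{\max})^t\le e$ yields a uniform bound $\delta^t\le 3\delta^0+2\bar\Delta$ for $0\le t\le T$. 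Second, telescope the recursion over $t=0,\dots,T-1$, plug in this uniform bound for the accumulated term $\gamma^2 D_{\max}\sum_t\br{\tf(x^t)-\frac1M\sum_i f_i^{\inf}}$, use $\gamma^2 D_{\max}T\le1$ once more to keep the $\delta^0$-part $\cO(\delta^0)$ while keeping the $\bar\Delta$-part as $\gamma^2 D_{\max}T\,\bar\Delta$; dividing by $\gamma T$ and bounding the minimum by the average produces $\min_{0\le t<T}\Exp{\norm{\nabla\tf(x^t)}^2}\le\frac{3(\tf(x^0)-\tf^{\inf})}{\gamma T}+\gamma D_{\max}\bar\Delta$ once the constants are tracked.

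The main obstacle is careful bookkeeping rather than a new idea: (i) keeping the heterogeneous constants $L_{\cD_i}L_{f_i}$ and $L_{f_i}L_{\mS_i}^{\max}$ organized so that they collapse into the single quantity $D_{\max}$, and making sure the aggregated estimator only picks up the ``diagonal'' second-moment contributions, which is exactly where the independence of the $\mS_i^t$ across nodes is used; and (ii) the two-stage unrolling — with no convexity or Polyak--\L{}ojasiewicz inequality available, the term $\tf(x^t)-\frac1M\sum_i f_i^{\inf}$ on the right cannot be traded for a decrease of $\norm{\nabla\tf(x^t)}^2$, so the $\cO(1/\sqrt T)$ rate rests entirely on the stepsize being small enough that the geometric factor $(1+\gamma^2 D_{\max})^T$ stays bounded.
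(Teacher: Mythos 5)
Your overall architecture --- a descent lemma for $\tf$ with constant $L_{\tf}$, unbiasedness of the aggregated estimator, Jensen's inequality $\norm{g^t}^2\leq\frac{1}{M}\sum_i\norm{\nabla f_{i,\mS_i^t}(x^t)}^2$, Lemma~\ref{lemma:ac} applied node by node, and the weighted unrolling --- is a legitimate route and differs from the paper's, which instead expands each $f_{i,\cD_i}$ separately via $L_{f_i}$-smoothness of $f_i$, averages over $i$, and only then uses Jensen on the gradient term $\frac{1}{M}\sum_i\norm{\nabla f_{i,\cD_i}(x^t)}^2\geq\norm{\nabla\tf(x^t)}^2$ (the unrolling step is the same weighted-sum lemma, Lemma~\ref{lemma:noncvx_weighted}).

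There is, however, a concrete gap in your constant accounting. Your quadratic term is $\frac{L_{\tf}\gamma^2}{2}\Exp{\norm{g^t}^2}$ with $L_{\tf}\leq\max_i\br{L_{\cD_i}L_{f_i}}$, while your second-moment bound carries $\max_j\br{L_{f_j}L_{\mS_j}^{\max}}$. The resulting product $\max_i\br{L_{\cD_i}L_{f_i}}\cdot\max_j\br{L_{f_j}L_{\mS_j}^{\max}}$ is in general \emph{strictly larger} than $D_{\max}=\max_i\br{L_{f_i}^2L_{\cD_i}L_{\mS_i}^{\max}}$, because the two maxima need not be attained at the same node: with $M=2$, $(L_{f_1},L_{\cD_1},L_{\mS_1}^{\max})=(10,1,1)$ and $(L_{f_2},L_{\cD_2},L_{\mS_2}^{\max})=(1,4,16)$ one gets $10\cdot16=160$ versus $D_{\max}=100$. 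So the step ``bounding $L_{\tf}\cdot\max_i(L_{f_i}L_{\mS_i}^{\max})$ by $D_{\max}$'' is false, and your argument only proves the theorem with this larger product in place of $D_{\max}$. The paper arrives at $D_{\max}$ precisely because its per-node expansion keeps $L_{f_i}L_{\cD_i}$ and $L_{f_i}L_{\mS_i}^{\max}$ paired at the \emph{same} index $i$ before taking the maximum; any fix of your proof needs to preserve that pairing. Two minor remarks: the Jensen step for $\norm{g^t}^2$ uses only convexity, not independence of the sketches across nodes (independence plays no role beyond $\Exp{\mS_i}=\mI$ for unbiasedness), and your two-stage unrolling (uniform bound on $\delta^t$, then telescoping) yields somewhat worse numerical constants than the stated $3$ and $1$, which the weighted-sum lemma delivers directly.
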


This result resembles our previous Theorem \ref{thm:sgd_nonconvex} in the single-node setting. Namely, Algorithm~\ref{alg:distributed_GD}
reaches a stationary point at rate $\cO(1/\sqrt{T})$. However, due to the distributed setup, convergence depends on $D_{\max}$ expressed as the \textit{maximum} product of local smoothness and constants related to the sketches' properties. Thus, clients with more aggressive sparsification may slow down the method, given the same local smoothness constant $L_{f_i}$. Yet, \say{easier} local problems (with smaller $L_{f_i}$) can allow the use of \say{harsher} sparsifiers (with larger $L_{\cD_i} L_{\mS_i}^{\max}$) without negatively affecting the convergence.

\subsection{Discussion of related works}

A notable distinction between our result and the theory of methods like Distributed Compressed Gradient Descent \citep{khirirat2018distributed} lies in the second convergence term of Theorem \ref{thm:distributed_ncvx}. Instead of relying on the variance of local gradients at the optimum, given by $\delta^2 = \frac{1}{n} \sum_{i=1}^n \|\nabla f_i (x^\star)\|^2$, our result depends on the average difference between the lower bounds of the global and local losses: $\tf^{\inf} - f^{\inf}_i$. This term measures heterogeneity within a distributed setting  \citep{khaled2020tighter}.
Furthermore, our findings may provide a better explanation of the empirical efficacy of distributed methods. Namely, $\delta^2$ is less likely to be equal to zero, unlike our term, which can be very small when models are over-parameterized, allowing local losses to be minimized to zero.

\citet{yuan2022distributed} analyzed convergence of Independent Subnetwork Training in their original work using the framework of \citet{khaled2019gradient}. Their analysis was performed in the single-node setting and required additional assumptions on the gradient estimator, which were recently shown to be problematic \citep{shulgin2023towards}.
In a federated setting, \citet{zhou2022convergence} suggested a method that combines model pruning with local compressed Gradient Descent steps. They provided non-convex convergence analysis relying on bounded stochastic gradient assumption, which results in \say{pathological} bounds \citep{khaled2020tighter} for a heterogeneous distributed case.

\begin{wrapfigure}{r}{0.5\textwidth}
\vspace{-5pt}
\centering
\includegraphics[width=\linewidth]{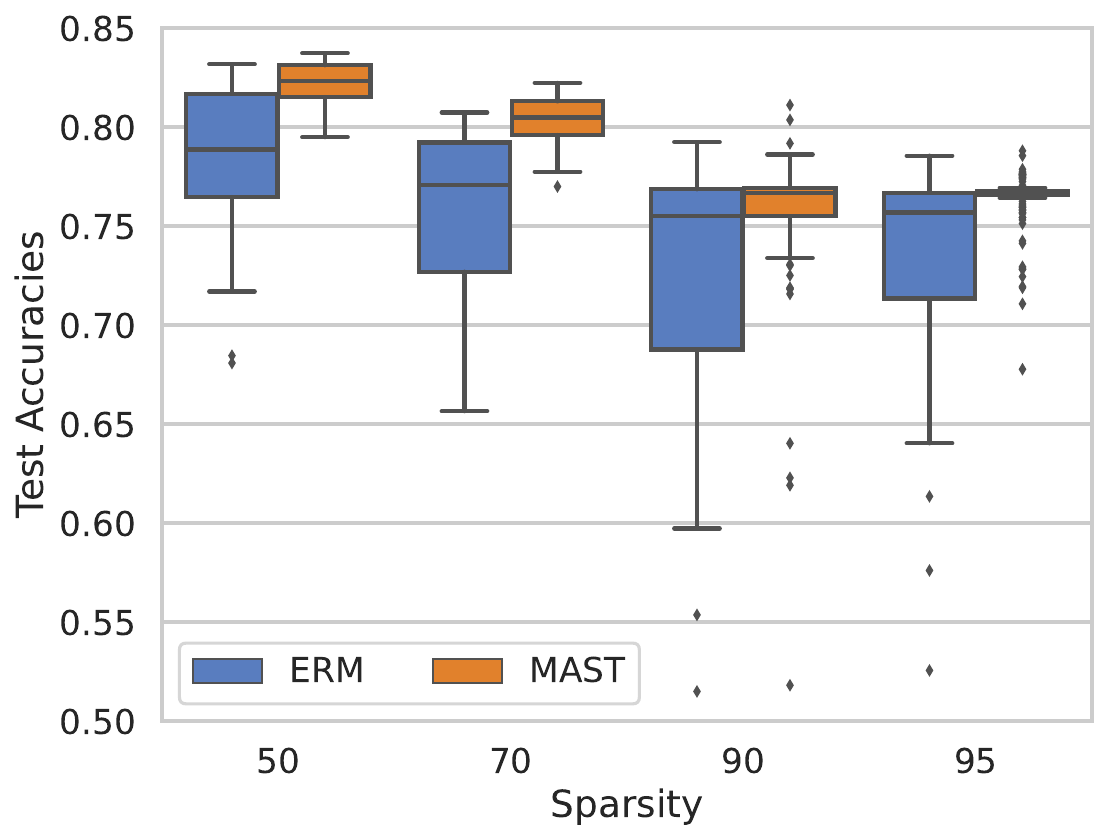}
\caption{Test accuracies distributions of sparsified solutions for the ERM formulation \eqref{eq:main} and MAST problem \eqref{eq:pretrained_compressed_problem}. ``Sparsity'' corresponds to the percentage of zeroed weights.}
\label{fig:test_acc_box}
\vspace{-5pt}
\end{wrapfigure}

\section{Experiments} \label{sec:experiments}

To empirically validate our theoretical framework and its implications, we focus on carefully controlled settings that satisfy the assumptions of our work. Specifically, we consider an $\ell_2$-regularized logistic regression optimization problem with the \textit{a5a} dataset from the LibSVM repository~\citep{chang2011libsvm}.
See Appendix \ref{apdx:experiments} for further details and Appendix \ref{sec:add_experiments} for more results on other methods and sketches.

In Figure \ref{fig:test_acc_box}, we compare the test accuracy of sparsified solutions for the standard (ERM) problem \eqref{eq:main} and introduced MAST formulation \eqref{eq:pretrained_compressed_problem}.
Visualization is performed using the \href{https://seaborn.pydata.org/archive/0.11/generated/seaborn.boxplot.html#seaborn.boxplot}{boxplot} method from Seaborn (version 0.11.0) library \citep{Waskom2021} with default parameters.
For ERM, we find the exact (up to machine precision) optimum, which is subsequently used for the accuracy evaluation. For the MAST optimization problem, we run $\mathsf{DSGD}$ with exact sketched gradient $\nabla \tf$ for every sparsity level.
After the ERM and MAST models ($x^T$) are obtained, we apply partition sketches \eqref{eq:perm_sketch} to model weights and evaluate the test accuracy of the sparsified solutions ($\mS x^T$).

Figure \ref{fig:test_acc_box} reveals that models obtained using the MAST approach exhibit greater robustness to random pruning compared to their ERM counterparts given the same sparsity. Moreover, the ERM model suffers from greater accuracy variability, while the median test accuracies of the MAST models are markedly higher.
Increasing the sparsity leads to the degradation of the performance of both approaches.

\textbf{Neural network results.}
Next we present a subset of our distributed deep learning results (full details are provided in Appendix \ref{apdx:nn_exp}). Our experimental setup closely follows that of \citet{liao2022on}, which is based on the ResNet-50 model \citep{he2016deep}.
We study the Algorithm \ref{alg:distributed_GD} with Bernoulli sketches \eqref{eq:ind_sparse} and $p_i \equiv p$ for the standard (ERM) loss \eqref{eq:fin_sketch_sum} with $\mS_i \equiv \mI$.

\begin{figure*}[h]
\centering
\includegraphics[width=\linewidth]{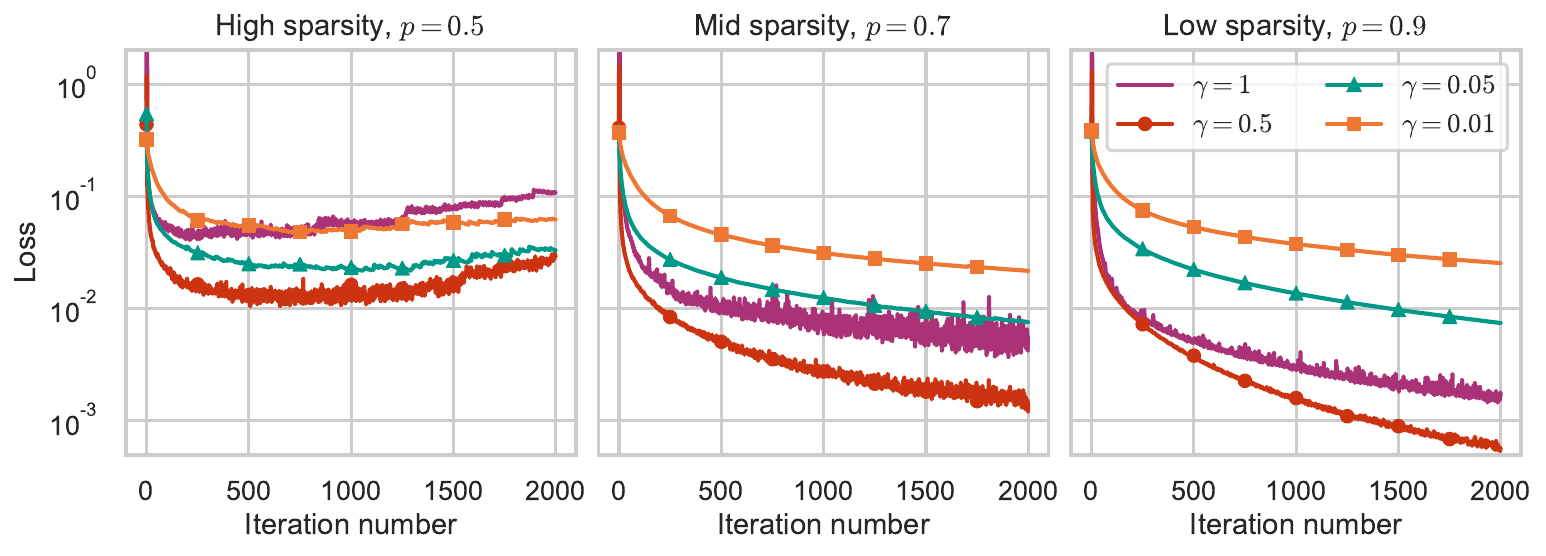}
\caption{Performance of Algorithm \ref{alg:distributed_GD} with Bernoulli sketches \eqref{eq:ind_sparse} on standard loss \eqref{eq:fin_sketch_sum} (for $\mS_i \equiv \mI$)}
\label{fig:lr_p_main}
\end{figure*}

Figure \ref{fig:lr_p_main} illustrates the impact of sparsity level ($p$) and step size ($\gamma$) on the method's performance. Across all sparsity levels, we observe an optimal \say{sweet spot} ($\gamma=0.5$) for the step size, beyond which increasing $\gamma$ results in slower convergence. Crucially, a nuanced interplay of $\gamma$ with sparsity level exists. Namely, at $\gamma=1$, convergence slows down for $p=0.9$, while for $p=0.7$, performance degrades due to high variance, eventually being outperformed by a smaller step size.

Notably, high sparsity ($p=0.5$) leads to a quick loss stagnation even with a small step size $\gamma = 0.01$ in contrast to $p \in\{0.7, 0.9\}$.
Remarkably, the left plot in Figure \ref{fig:lr_p_main} illustrates that an excessively large step size may even lead to divergence of the method.
This can indicate that high sparsity significantly alters the minimized loss, confirming that Sparse/Dropout training indeed optimizes a formulation distinct from standard ERM.
In general, larger step sizes and more aggressive sparsification (lower $p$) result in increased loss variance, aligning with our theoretical predictions from Sections \ref{sec:sketches} and \ref{sec:convergence}.

One of the key practical insights derived from our theoretical analysis is that the step size $\gamma$ (learning rate) must be decreased for sparse optimization and training with Dropout. Our results demonstrate that this insight applies not only to convex models (Figure \ref{fig:mast_loss_perm}) but also to a broader range of neural networks.

\section{Conclusions and future work} \label{sec:conclusion}

This work introduced a novel theoretical framework for sketched model learning. We rigorously formalized a new optimization paradigm that captures practical scenarios like Dropout and Sparse training.
Efficient optimization algorithms tailored to the proposed formulation were developed and analyzed in multiple settings. We expanded this methodology to distributed environments, encompassing areas such as IST and Federated Learning, underscoring its broad applicability.

In future research, it would be interesting to expand the class of linear matrix sketches to encompass other compression techniques, particularly those exhibiting conic variance property (contractive compressors). Such an extension might offer insights into (magnitude-based) pruning methods and quantized training. Nevertheless, a potential challenge to be considered is the non-differentiability of such compression techniques.

\section*{Acknowledgements}

We would like to thank anonymous reviewers for their helpful comments and suggestions to improve the manuscript.

The work was supported by funding from King Abdullah University of Science and Technology (KAUST): i) KAUST Baseline Research Scheme, ii) Center of Excellence for Generative AI, under award number 5940, iii) SDAIA-KAUST Center of Excellence in Artificial Intelligence and Data Science.

\bibliography{bibliography}
\bibliographystyle{iclr2025_conference}
\clearpage
\appendix
\tableofcontents

\clearpage
\section{Basic facts}
For all $a, b \in \mathbb{R}^d$ and $\alpha>0, p \in(0,1]$ the following relations hold:
\begin{align}
\label{eq:quadratic} 2\langle a, b\rangle & =\|a\|^2+\|b\|^2-\|a-b\|^2 \\
\label{eq:yung-1}\|a+b\|^2 & \leq(1+\alpha)\|a\|^2+\left(1+\alpha^{-1}\right)\|b\|^2 \\
\label{eq:yung-2}-\|a-b\|^2 & \leq-\frac{1}{1+\alpha}\|a\|^2+\frac{1}{\alpha}\|b\|^2, \\
\label{eq:contract}(1-p)\left(1+\frac{p}{2}\right) & \leq 1-\frac{p}{2}, \quad p\geq 0 .
\end{align}

\begin{lemma}[Lemma 1 from \citep{mishchenko2020random}]
\label{lemma:RR}
     Let $X_1, \ldots, X_n \in \mathbb{R}^d$ be fixed vectors, $\overline{X} \stackrel{\text { def }}{=} \frac{1}{n} \sum_{i=1}^n X_i$ be their average. Fix any $k \in\{1, \ldots, n\}$, let $X_{\pi_1}, \ldots X_{\pi_k}$ be sampled uniformly without replacement from $\left\{X_1, \ldots, X_n\right\}$ and $\overline{X}_\pi$ be their average. Then, the sample average and variance are given by
$$
\mathbb{E}\left[\overline{X}_\pi\right]=\overline{X}
$$
$$
\mathbb{E}\left[\left\|\overline{X}_\pi-\overline{X}\right\|^2\right]=\frac{n-k}{k(n-1)} \frac{1}{n} \sum_{i=1}^n\left\|X_i-\overline{X}\right\|^2
$$
\end{lemma}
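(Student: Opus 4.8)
The plan is to treat the two identities separately. For the mean, I would write $\overline{X}_\pi = \frac1k\sum_{j=1}^k X_{\pi_j}$ and use that, by exchangeability of sampling without replacement, each index $\pi_j$ is marginally uniform on $\{1,\dots,n\}$; hence $\mathbb{E}[X_{\pi_j}] = \overline{X}$ for every $j$, and linearity of expectation gives $\mathbb{E}[\overline{X}_\pi] = \overline{X}$.

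For the variance, the first move is a reduction to the centered case: replacing each $X_i$ by $X_i - \overline{X}$ changes neither the random vector $\overline{X}_\pi - \overline{X}$ nor the quantity $\frac1n\sum_i\|X_i-\overline{X}\|^2$, so I may assume $\sum_i X_i = 0$ and must show $\mathbb{E}[\|\overline{X}_\pi\|^2] = \frac{n-k}{k(n-1)}\cdot\frac1n\sum_i\|X_i\|^2$. Then I expand $\|\overline{X}_\pi\|^2 = \frac1{k^2}\bigl(\sum_{j}\|X_{\pi_j}\|^2 + \sum_{j\neq l}\langle X_{\pi_j},X_{\pi_l}\rangle\bigr)$ and take expectations term by term: the diagonal contributes $\frac kn\sum_i\|X_i\|^2$ since each $\pi_j$ is uniform, and for the $k(k-1)$ off-diagonal pairs I use that $(\pi_j,\pi_l)$ is uniform over the $n(n-1)$ ordered pairs of distinct indices, so their total expectation is $\frac{k(k-1)}{n(n-1)}\sum_{a\neq b}\langle X_a,X_b\rangle = -\frac{k(k-1)}{n(n-1)}\sum_i\|X_i\|^2$, the last step because $\sum_{a\neq b}\langle X_a,X_b\rangle = \|\sum_a X_a\|^2 - \sum_a\|X_a\|^2 = -\sum_a\|X_a\|^2$ under the centering. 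Combining the two contributions and simplifying $\frac1{k^2}\bigl(\frac kn - \frac{k(k-1)}{n(n-1)}\bigr) = \frac{n-k}{kn(n-1)}$ yields the claim; undoing the centering restores $\|X_i - \overline{X}\|^2$ on the right-hand side.

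I expect no genuine obstacle: this is the standard finite-population (sampling-without-replacement) variance identity, proved by a second-moment bookkeeping computation. The only point needing care is the off-diagonal term — using exchangeability to reduce $\mathbb{E}[\langle X_{\pi_j},X_{\pi_l}\rangle]$ to an average over all ordered distinct pairs and then applying the centering identity $\sum_{a\neq b}\langle X_a,X_b\rangle = \|\sum_a X_a\|^2 - \sum_a\|X_a\|^2$. The boundary cases warrant only a one-line remark: when $k=n$ the sample is the whole collection and both sides vanish, and the degenerate case $n=1$ is excluded by the $\max\{1,\cdot\}$ conventions used wherever the lemma is applied.
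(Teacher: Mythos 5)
Your proof is correct. Note that the paper does not actually prove this lemma: it is imported verbatim by citation from \cite{mishchenko2020random}, so there is no in-paper argument to compare against. Your derivation is the standard one for the finite-population variance identity, and every step checks out: the marginal uniformity of each $\pi_j$ gives the mean; the reduction to the centered case is legitimate since both $\overline{X}_\pi-\overline{X}$ and $\frac1n\sum_i\|X_i-\overline{X}\|^2$ are invariant under the shift; the diagonal term contributes $\frac{k}{n}\sum_i\|X_i\|^2$; the $k(k-1)$ off-diagonal ordered pairs are uniform over the $n(n-1)$ ordered distinct index pairs, and the centering identity $\sum_{a\neq b}\langle X_a,X_b\rangle=-\sum_a\|X_a\|^2$ closes the computation, with $\frac1{k^2}\bigl(\frac{k}{n}-\frac{k(k-1)}{n(n-1)}\bigr)=\frac{n-k}{kn(n-1)}$ as claimed. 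Your remark on the degenerate $n=1$ case is also apt, since the paper guards against it with the $\max\{1,N-1\}$ factor wherever the lemma is invoked.
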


\begin{lemma} (Lemma 5 from \citep{ef21}).
\label{lemma:peter}
Let $a, b>0$. If $0 \leq \gamma \leq \frac{1}{\sqrt{a}+b}$, then $a \gamma^2+b \gamma \leq 1$. The bound is tight up to the factor of 2 since $\frac{1}{\sqrt{a}+b} \leq \min \left\{\frac{1}{\sqrt{a}}, \frac{1}{b}\right\} \leq \frac{2}{\sqrt{a}+b}$.
\end{lemma}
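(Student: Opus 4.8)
The plan is to exploit that the left-hand side $g(\gamma) \eqdef a\gamma^2 + b\gamma$ is strictly increasing in $\gamma$ on $[0,\infty)$ whenever $a,b>0$, since $g'(\gamma) = 2a\gamma + b > 0$ there. Consequently, to establish $g(\gamma)\leq 1$ for every admissible $\gamma \in [0, 1/(\sqrt{a}+b)]$, it suffices to verify the inequality at the right endpoint $\gamma = 1/(\sqrt{a}+b)$; all smaller stepsizes then follow automatically by monotonicity, so no case analysis on $\gamma$ is needed.

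First I would substitute $\gamma = 1/(\sqrt{a}+b)$ and compute
\[
a\gamma^2 + b\gamma = \frac{a}{(\sqrt{a}+b)^2} + \frac{b}{\sqrt{a}+b} = \frac{a + b(\sqrt{a}+b)}{(\sqrt{a}+b)^2} = \frac{a + b\sqrt{a} + b^2}{a + 2b\sqrt{a} + b^2},
\]
where the last denominator is just the expansion $(\sqrt{a}+b)^2 = a + 2b\sqrt{a} + b^2$. The numerator and denominator differ only in the coefficient of the cross term $b\sqrt{a}$, so the fraction is at most $1$ precisely because $b\sqrt{a}\geq 0$ (as $a,b>0$). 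This gives $g(1/(\sqrt{a}+b))\leq 1$ and, together with monotonicity, completes the main claim.

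For the tightness remark, I would verify the two sandwiching inequalities separately, using $\min\{1/\sqrt{a},\,1/b\} = 1/\max\{\sqrt{a},b\}$. The lower bound $1/(\sqrt{a}+b)\leq 1/\max\{\sqrt{a},b\}$ is immediate from $\max\{\sqrt{a},b\}\leq \sqrt{a}+b$. The upper bound $1/\max\{\sqrt{a},b\}\leq 2/(\sqrt{a}+b)$ rearranges to $\sqrt{a}+b\leq 2\max\{\sqrt{a},b\}$, which holds because each of $\sqrt{a}$ and $b$ is bounded above by their maximum, so their sum is at most twice the maximum.

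I expect no genuine obstacle here: the statement is an elementary one-variable inequality, and the only care needed is the clean endpoint evaluation and the observation that dropping the nonnegative cross term $b\sqrt{a}$ is exactly what drives the bound below $1$. The factor-of-two tightness is purely a comparison of the sharp stepsize $1/(\sqrt{a}+b)$ against the naive choice $\min\{1/\sqrt{a},\,1/b\}$ one would obtain by forcing $a\gamma^2\leq 1$ and $b\gamma\leq 1$ separately (which would only yield $a\gamma^2+b\gamma\leq 2$), confirming that the clean bound loses at most a constant factor.
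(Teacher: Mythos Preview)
Your argument is correct: the monotonicity of $g(\gamma)=a\gamma^2+b\gamma$ reduces the claim to the endpoint, the explicit fraction $(a+b\sqrt{a}+b^2)/(a+2b\sqrt{a}+b^2)\leq 1$ is immediate, and the sandwich $\frac{1}{\sqrt{a}+b}\leq \frac{1}{\max\{\sqrt{a},b\}}\leq \frac{2}{\sqrt{a}+b}$ is verified cleanly. The paper does not supply its own proof of this lemma---it is quoted verbatim from \citet{ef21}---so there is nothing to compare against; your write-up would serve as a self-contained replacement.
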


\begin{proposition}\label{proposition:eigenvalues}
   Nonzero eigenvalues of $\;\mS\mS^{\top}$ and $\;\mS^{\top}\mS$ coincide.
\end{proposition}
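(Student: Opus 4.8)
The plan is to prove that $\mS\mS^{\top}$ and $\mS^{\top}\mS$ have the same nonzero eigenvalues by a standard similarity/eigenvector-transport argument. First I would take a nonzero eigenvalue $\lambda$ of $\mS^{\top}\mS$ with eigenvector $v \neq 0$, so $\mS^{\top}\mS v = \lambda v$. I then left-multiply by $\mS$ to obtain $\mS\mS^{\top}(\mS v) = \lambda (\mS v)$, which shows $\lambda$ is an eigenvalue of $\mS\mS^{\top}$ provided the candidate eigenvector $\mS v$ is nonzero. The one thing that needs checking is exactly that non-degeneracy: if $\mS v = 0$, then $\mS^{\top}\mS v = 0$, forcing $\lambda = 0$, contradicting $\lambda \neq 0$. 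Hence $\mS v \neq 0$ and $\lambda$ is a genuine nonzero eigenvalue of $\mS\mS^{\top}$.

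By symmetry of the construction (swapping the roles of $\mS$ and $\mS^{\top}$, using that $(\mS^{\top})^{\top} = \mS$), the reverse inclusion holds: every nonzero eigenvalue of $\mS\mS^{\top}$ is a nonzero eigenvalue of $\mS^{\top}\mS$. Combining the two inclusions gives equality of the sets of nonzero eigenvalues. I would state this cleanly as: $\lambda \neq 0$ is an eigenvalue of $\mS^{\top}\mS$ if and only if it is an eigenvalue of $\mS\mS^{\top}$, with the map $v \mapsto \mS v$ (respectively $u \mapsto \mS^{\top} u$) transporting eigenvectors between the two eigenspaces.

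There is no real obstacle here — the argument is elementary linear algebra. The only subtlety worth flagging, if one wanted the stronger statement that the nonzero eigenvalues coincide \emph{with multiplicities}, is that the transport map $v \mapsto \mS v$ restricted to the $\lambda$-eigenspace of $\mS^{\top}\mS$ is injective (again because $\mS v = 0 \Rightarrow \lambda v = 0$), and similarly in the other direction, so the two eigenspaces have the same dimension; but the proposition as stated only claims the nonzero eigenvalues coincide as a set, so the short argument above suffices. An alternative one-line proof would invoke the fact that $\mS^{\top}\mS$ and $\mS\mS^{\top}$ have the same nonzero singular values of $\mS$ squared, but the direct eigenvector argument is more self-contained and is what I would write.
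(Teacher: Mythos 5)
Your proof is correct and follows essentially the same route as the paper's: transport an eigenvector $v$ of $\mS^{\top}\mS$ to $\mS v$ for $\mS\mS^{\top}$. You are in fact slightly more careful than the paper, which omits both the check that $\mS v \neq 0$ and the explicit reverse inclusion; these are worth keeping.
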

\begin{proof}
    Indeed, suppose $\lambda\neq 0$ is an eigenvalue of $\mS^{\top}\mS$ with an eigenvector $v\in\mathbb{R}^d,$ then $\lambda$ is an eigenvalue of $\mS\mS^{\top}$ with an eigenvector $\mS v.$
\end{proof}

\begin{lemma}\label{lemma:smoothness_bregman}
    Suppose that $f(x)$ is $L_f$-smooth, differentiable, and bounded from below by $f^{\inf}.$ Then
    \begin{equation}\label{eq:smoothness_bregman}
        \norm{\nabla f(x)}^2\leq 2L_f\left(f(x)-f^{\inf}\right),\quad\forall x\in\mathbb{R}^d.
    \end{equation}
\end{lemma}
\begin{proof}
    Let $x^{+} = x - \frac{1}{L_f}\nabla f(x),$ then using the $L_f$-smoothness of $f$, we obtain
	\begin{equation*}
		f(x^{+})\leq f(x) + \langle\nabla f(x), x^{+} - x\rangle + \frac{L_f}{2}\left\|x^{+}-x\right\|^2.
	\end{equation*}
	Since $f^{\inf}\leq f(x^{+})$ and the definition of $x^{+}$ we have,
	\begin{equation*}
		f^{\inf}\leq f(x^{+})\leq f(x) - \frac{1}{L_f}\left\|\nabla f(x)\right\|^2 + \frac{1}{2L_f}\left\|\nabla f(x)\right\|^2=f(x) - \frac{1}{2L_f}\left\|\nabla f(x)\right\|^2.
	\end{equation*}
	Rearrangement of the terms provides the claimed result.
\end{proof}
\clearpage
\section{Auxiliary facts about functions $\tf(x)$ and $f_{\mS}(x)$}

For a differentiable function $f:\mathbb{R}^d\to\mathbb{R}$ and $x,y\in\mathbb{R}^d$ Bregman divergence associated with $f$ is $D_f(x,y)\eqdef f(x)-f(y)-\langle\nabla f(y),x-y\rangle.$
\begin{lemma}[Bregman divergence]
If $f$ is continuously differentiable, then $D_{\tf}(x,y) = \Exp{D_{f_{\mS}}(x,y)}.$
\end{lemma}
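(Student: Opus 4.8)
The statement to prove is that $D_{\tf}(x,y) = \Exp{D_{f_{\mS}}(x,y)}$, where $\tf = \Exp{f_{\mS}}$. The plan is to expand the right-hand side directly using the definition of Bregman divergence and linearity of expectation. Writing
\[
\Exp{D_{f_{\mS}}(x,y)} = \Exp{f_{\mS}(x) - f_{\mS}(y) - \ve{\nabla f_{\mS}(y)}{x-y}},
\]
I would split this into three expectations: $\Exp{f_{\mS}(x)}$, $\Exp{f_{\mS}(y)}$, and $\Exp{\ve{\nabla f_{\mS}(y)}{x-y}}$. The first two are, by the very definition in \eqref{eq:pretrained_compressed_problem}, equal to $\tf(x)$ and $\tf(y)$ respectively.

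The one point that deserves care is the third term: I need $\Exp{\ve{\nabla f_{\mS}(y)}{x-y}} = \ve{\nabla \tf(y)}{x-y}$, i.e.\ that the gradient of the expectation equals the expectation of the gradient, $\nabla \tf(y) = \Exp{\nabla f_{\mS}(y)}$. This is the step where one must invoke a differentiation-under-the-expectation argument (dominated convergence / Leibniz rule), which is legitimate here because $f$ is continuously differentiable and, under Assumption~\ref{ass:C} together with the smoothness setup, $\nabla f_{\mS}(y) = \mS^\top \nabla f(\shift + \mS(y-\shift))$ is integrable with the requisite local domination. Once $\nabla \tf(y) = \Exp{\nabla f_{\mS}(y)}$ is in hand, pulling the deterministic vector $x-y$ out of the expectation gives $\Exp{\ve{\nabla f_{\mS}(y)}{x-y}} = \ve{\Exp{\nabla f_{\mS}(y)}}{x-y} = \ve{\nabla \tf(y)}{x-y}$.

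Combining the three pieces yields
\[
\Exp{D_{f_{\mS}}(x,y)} = \tf(x) - \tf(y) - \ve{\nabla \tf(y)}{x-y} = D_{\tf}(x,y),
\]
which is exactly the claim. The main (and essentially only) obstacle is justifying the interchange of gradient and expectation; everything else is bookkeeping with the definition of $D$ and linearity of $\Exp{\cdot}$. Since the paper already works throughout under the assumption that $\tf$ is differentiable with $\nabla \tf = \Exp{\nabla f_{\mS}}$ implicit in the gradient-estimator discussion around \eqref{eq:grad_est}, I expect the authors' proof to be a two-line computation that simply cites this fact rather than reproving the measure-theoretic interchange.
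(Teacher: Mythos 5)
Your proposal is correct and matches the paper's argument, which is exactly the two ingredients you identify: interchange of differentiation and expectation (justified by continuous differentiability of $f$) plus linearity of expectation. The paper states this in two sentences without the explicit three-term decomposition, so your write-up is simply a more detailed version of the same proof.
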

\begin{proof}
    Since $f$ is continuously differentiable, we can interchange integration and differentiation. The result follows from the linearity of expectation.
\end{proof}

\subsection{Consequences of $L_f$-smoothness}
Recall the $L_f$-smoothness assumption.\\
\noindent\textbf{Assumption 2.} Function $f$ is differentiable and $L_f$-\textbf{smooth}, i.e., there is $L_f>0$ such that $\forall x, h \in \R^d$
	\[f(x+h) \leq f(x) + \ve{\nabla f(x)}{h} + \frac{L_f}{2}\norm{h}^2.\]
We also require $f$ to be lower bounded by $f^{\inf} \in \mathbb{R}$.
\begin{lemma}[Consequences of $L_f$-smoothness]\label{lemma:smoothness_appendix} If $f$ is $L_f$-smooth, then
	\begin{itemize}
		\item[(i)] $f_{\mS}$ is $L_{f_\mS}$-smooth with $L_{f_\mS} \leq L_{\mS} L_f.$ That is,
		\[ f_{\mS}(x+h) \leq f_{\mS}(x) + \ve{\nabla f_{\mS}(x)}{h} + \frac{L_{\mS} L_f }{2} \norm{h}^2, \quad \forall x, h \in \R^{d}. \]

		\item[(ii)]
		$\tf$ is $L_{\tf}$-smooth with $L_{\tf}\leq L_{\mathcal{D}} L_f.$ That is,
		\[\tf(x+h) \leq \tf(x) + \ve{\nabla \tf(x)}{h} + \frac{L_{\mathcal{D}} L_f }{2} \norm{h}^2, \quad \forall x, h \in \R^{d}. \]

		\item[(iii)]
		\begin{equation}\label{eq:L_f-smooth-iii_appendix}
			\tf(x) \leq f(x) + \frac{(L_{\mathcal{D}}-1) L_f}{2} \norm{x-\shift}^2,\quad \forall x\in \R^d.
		\end{equation}

	\end{itemize}

\end{lemma}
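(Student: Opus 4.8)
The plan is to prove the three claims in order, obtaining (ii) from (i) by averaging over $\mS$, and proving (iii) directly from the $L_f$-smoothness inequality together with $\Exp{\mS} = \mI$.

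For (i), I would start from $f_{\mS}(x) = f(\shift + \mS(x-\shift))$ and set $u \eqdef \shift + \mS(x-\shift)$, so that $f_{\mS}(x+h) = f(u + \mS h)$. Applying the $L_f$-smoothness inequality to $f$ at the point $u$ with increment $\mS h$ gives $f(u+\mS h) \le f(u) + \ve{\nabla f(u)}{\mS h} + \frac{L_f}{2}\norm{\mS h}^2$. The cross term rewrites as $\ve{\mS^\top \nabla f(u)}{h} = \ve{\nabla f_{\mS}(x)}{h}$ by the chain rule \eqref{eq:grad_est}, and the quadratic term is bounded via $\norm{\mS h}^2 = h^\top \mS^\top \mS h \le \lambda_{\max}(\mS^\top\mS)\norm{h}^2 = L_{\mS}\norm{h}^2$. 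Collecting terms yields the stated inequality with constant $L_{\mS} L_f$.

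For (ii), rather than taking expectation of the final inequality of (i) — which would leave the random quantity $L_{\mS}$ — I would take expectation one step earlier, in $f(u+\mS h) \le f(u) + \ve{\mS^\top\nabla f(u)}{h} + \frac{L_f}{2}\norm{\mS h}^2$. The left side becomes $\tf(x+h)$ and $\Exp{f(u)} = \tf(x)$; the cross term becomes $\ve{\Exp{\nabla f_{\mS}(x)}}{h} = \ve{\nabla\tf(x)}{h}$, using that expectation and gradient commute (continuous differentiability, as recorded in the Bregman-divergence lemma above); and $\Exp{\norm{\mS h}^2} = h^\top \Exp{\mS^\top\mS} h \le L_{\mathcal{D}}\norm{h}^2$. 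This gives $L_{\tf} \le L_{\mathcal{D}} L_f$. For (iii), I would apply $L_f$-smoothness to $f$ at the base point $x$ with increment $t_{\mS}(x) = (\mS-\mI)(x-\shift)$, so that $x + t_{\mS}(x) = \shift + \mS(x-\shift)$ and $f_{\mS}(x) = f\big(x+t_{\mS}(x)\big) \le f(x) + \ve{\nabla f(x)}{t_{\mS}(x)} + \frac{L_f}{2}\norm{t_{\mS}(x)}^2$. Taking expectation, the linear term vanishes since $\Exp{t_{\mS}(x)} = 0$ under Assumption~\ref{ass:C}, and it remains to bound $\Exp{\norm{t_{\mS}(x)}^2}$: writing $z = x - \shift$ and expanding the square, $\Exp{\norm{(\mS-\mI)z}^2} = \Exp{\norm{\mS z}^2} - 2\ve{\Exp{\mS}z}{z} + \norm{z}^2 = \Exp{\norm{\mS z}^2} - \norm{z}^2 \le (L_{\mathcal{D}}-1)\norm{z}^2$. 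Substituting gives the claim.

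I do not anticipate a serious obstacle here; the only points requiring care are the order of operations in (ii) — the expectation must be taken before bounding $\norm{\mS h}^2$ — and the justification of interchanging expectation and differentiation, which is precisely the content of the earlier Bregman-divergence lemma. Everything else is a direct use of the smoothness inequality combined with $\Exp{\mS} = \mI$ and the definitions of $L_{\mS}$ and $L_{\mathcal{D}}$.
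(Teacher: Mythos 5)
Your proposal is correct and follows essentially the same route as the paper: part (i) by applying $L_f$-smoothness at $\shift+\mS(x-\shift)$ with increment $\mS h$ and bounding $h^\top\mS^\top\mS h$ by $L_{\mS}\norm{h}^2$; part (ii) by taking the expectation of the intermediate inequality before bounding the quadratic term, so that $\Exp{\mS^\top\mS}$ rather than $L_{\mS}$ appears; and part (iii) by expanding around $x$ with increment $(\mS-\mI)(x-\shift)$, killing the linear term via $\Exp{\mS}=\mI$, and reducing the quadratic term to $(x-\shift)^\top(\Exp{\mS^\top\mS}-\mI)(x-\shift)$. No gaps.
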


\begin{proof}
	\begin{itemize}
		\item[(i)]  For any $x, h\in \R^d$, we have
		\begin{eqnarray*}
			f_{\mS}(x+h) & = & f(\shift + \mS (x + h -\shift))  \\
			& = & f(\shift +\mS (x  - \shift) + \mS h)  \\
			& \stackrel{\mathrm{Asn.}~\ref{ass:l_f_smooth}}{\leq}& f( \shift +\mS (x  - \shift) )  + \ve{\nabla f(\shift +\mS ( x  -\shift ) )}{\mS h} + \frac{L_f}{2}\norm{\mS h}^2 \\
			&= &  f( \shift + \mS (x  -\shift) )  + \ve{\mS^\top \nabla f(\shift +\mS (x  - \shift) )}{h} + \frac{L_f}{2} \ve{ \mS^\top \mS h}{h}  \\
			&\leq & f_{\mS}(x) + \ve{\nabla f_{\mS}(x)}{h} +  \frac{L_{\mS} L_f }{2} \norm{h}^2.
		\end{eqnarray*}

		\item[(ii)] For any $x, h\in \R^d$, we have
		\begin{eqnarray*}
			\tf(x+h) & = &\Exp{ f(\shift +\mS (x + h - \shift)) } \\
			& = &\Exp{ f(\shift +\mS (x  -\shift) + \mS h) } \\
			& \stackrel{\mathrm{Asn.}~\ref{ass:l_f_smooth}}{\leq} & \Exp{ f(\shift +\mS (x  -\shift) )  + \ve{\nabla f(\shift +\mS (x  -\shift) )}{\mS h} + \frac{L_f}{2}\norm{\mS h}^2} \\
			&= & \Exp{ f(\shift +\mS (x - \shift) ) }  + \ve{\Exp{\mS^\top \nabla f(\shift +\mS (x  - \shift) )}}{h}\\
                &+ &\frac{L_f}{2}\Exp{\norm{\mS h}^2} \\
			&=& \tf(x) + \ve{\nabla \tf(x)}{h} + \frac{L_f}{2} \ve{\Exp{ \mS^\top \mS} h}{h} \\
			& \leq & \tf(x) + \ve{\nabla \tf(x)}{h} + \frac{L_{\mathcal{D}} L_f }{2} \norm{h}^2.
		\end{eqnarray*}

		\item[(iii)]
		For any $x\in \R^d$, we have
		\begin{eqnarray*} \tf(x) &=& \Exp{ f(\shift +\mS (x-\shift)) }\\
			&\stackrel{\mathrm{Asn.}~\ref{ass:l_f_smooth}}{\leq}  & \Exp{ f(x) + \ve{ \nabla f(x) }{  \mS( x - \shift) - (x-\shift) } + \frac{L_f}{2}\norm{\mS (x-\shift) - (x-\shift)}^2 } \\
			& = & f(x) +  \ve{ \nabla f(x) }{ \Exp{ \mS (x-\shift) - (x-\shift) } }\\
                & + &\frac{L_f}{2} \Exp{\norm{\mS (x-\shift) - (x-\shift)}^2} \\
			& \overset{\eqref{eq:sketch_def}}{=} & f(x) +  \ve{ \nabla f(x) }{ 0 }  + \frac{L_f}{2} (x-\shift)^\top \left(\Exp{\mS^\top \mS} -\mI \right) (x-\shift) \\
			& = & f(x) + \frac{ (L_{\mathcal{D}}-1) L_f}{2} \norm{x-\shift}^2.
		\end{eqnarray*}

	\end{itemize}
\end{proof}
\subsection{Consequences of convexity}

We do not assume differentiability of $f$ here. Recall that function $f$ is convex if, for all $x,y\in\mathbb{R}^d$ and $\alpha\in[0,1],$ we have that $f(\alpha x + (1-\alpha)y) \leq \alpha f(x) + (1-\alpha)f(y).$
\begin{lemma}\label{lem:convex-only_appendix} If $f$ is convex, then $\tf$ is convex and $\tf(x)\geq f(x)$ for all $x\in \R^d$.
\end{lemma}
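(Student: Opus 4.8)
The statement to prove is Lemma \ref{lem:convex-only_appendix}: if $f$ is convex, then $\tf$ is convex and $\tf(x) \ge f(x)$ for all $x$. The plan is to handle the two claims separately, both by exploiting that $\tf = \Exp{f_{\mS}}$ is an average (expectation) of the functions $x \mapsto f_{\mS}(x) = f(\shift + \mS(x-\shift))$ and that each of these is an affine reparametrization of a convex function.

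For convexity of $\tf$, first I would observe that for each fixed realization of $\mS$, the map $x \mapsto \shift + \mS(x - \shift)$ is affine in $x$, so $f_{\mS} = f \circ (\text{affine map})$ is convex whenever $f$ is (this is the standard fact that convexity is preserved under affine precomposition; one checks it directly from the definition $f(\alpha x + (1-\alpha)y) \le \alpha f(x) + (1-\alpha) f(y)$ by noting the affine map sends a convex combination of $x,y$ to the convex combination of their images). Then $\tf(\alpha x + (1-\alpha) y) = \Exp{f_{\mS}(\alpha x + (1-\alpha) y)} \le \Exp{\alpha f_{\mS}(x) + (1-\alpha) f_{\mS}(y)} = \alpha \tf(x) + (1-\alpha)\tf(y)$, using monotonicity and linearity of expectation; the expectation is well-defined because of Assumption \ref{ass:C} (finiteness of $\Exp{\mS^\top \mS}$ controls the growth). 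Hence $\tf$ is convex.

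For the inequality $\tf(x) \ge f(x)$, the natural route is Jensen's inequality: since $f$ is convex and $\Exp{\shift + \mS(x - \shift)} = \shift + \Exp{\mS}(x-\shift) = \shift + \mI(x - \shift) = x$ by the first part of \eqref{eq:sketch_def}, we get
\[
    f(x) = f\!\left(\Exp{\shift + \mS(x-\shift)}\right) \le \Exp{f(\shift + \mS(x - \shift))} = \Exp{f_{\mS}(x)} = \tf(x).
\]
This is exactly Jensen applied to the (vector-valued) random variable $\shift + \mS(x-\shift)$ whose mean is $x$.

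**Main obstacle.** There is no deep obstacle here; the only thing requiring a little care is integrability/measurability — making sure $\Exp{f_{\mS}(x)}$ is well-defined (finite, or at least that Jensen's inequality applies) given only that $f$ is convex (hence possibly not differentiable, and a priori only bounded below once we also invoke Assumption \ref{ass:l_f_smooth}, though here we do not). Since the lemma is stated without smoothness, I would either note that $\tf(x)$ may be $+\infty$ in which case the inequality is trivial, or implicitly assume the expectation is finite as is done throughout the paper; the convexity argument and the Jensen step both go through verbatim in either reading. The rest is a one-line application of the definition of convexity and of \eqref{eq:sketch_def}.
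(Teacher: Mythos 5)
Your proof is correct and matches the paper's own argument: the paper proves convexity by the same direct expansion (and explicitly notes your ``affine precomposition plus expectation of convex functions'' route as an alternative), and proves $\tf(x)\geq f(x)$ via a subgradient inequality combined with $\Exp{\mS}=\mI$, which is just Jensen's inequality unrolled --- indeed the paper lists Jensen as an alternative proof verbatim. Your remark about integrability is a reasonable extra caution that the paper glosses over.
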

\begin{proof}

	\begin{itemize}
		\item[(i)] Let $x, y\in \R^d$ and $\alpha \in [0,1]$. Then
		\begin{eqnarray*} \tf(\alpha x + (1-\alpha)y) &\overset{\eqref{eq:pretrained_compressed_problem}}{=}& \Exp{ f(\shift + \mS (\alpha x + (1-\alpha)y - \shift)) } \\
			&=&\Exp{ f( \alpha \left( \shift + \mS (x-\shift) \right) + (1-\alpha) \left( \shift + \mS (y - \shift)\right) ) } \\
			&\leq &\Exp{ \alpha f( \shift + \mS (x-\shift)) + (1-\alpha) f( \shift + \mS (y - \shift)) } \\
			&= & \alpha \Exp{ f(\shift +  \mS (x-\shift))} + (1-\alpha) \Exp{ f( \shift + \mS (y - \shift)) } \\
			&\overset{\eqref{eq:pretrained_compressed_problem}}{=} & \alpha \tf(x) + (1-\alpha) \tf(y).
		\end{eqnarray*}
  Alternative proof: Each $f_{\mS}$ is obviously convex, and expectation of convex functions is a convex function.
		\item[(ii)]
		Fix $x\in \R^d$ and let $g \in \partial f(x)$ be a subgradient of $f$ at $x$. Then
		\begin{eqnarray*} \tf(x) &\overset{\eqref{eq:pretrained_compressed_problem}}{=}& \Exp{ f( \shift +\mS (x-\shift)) }\\
			&\geq & \Exp{ f(x) + \ve{ g }{  \mS (x-\shift) - (x-\shift) } } \\
			& = & f(x) +  \ve{ g }{ \Exp{ \mS (x-\shift) - (x-\shift) } } \\
			& \overset{\eqref{eq:sketch_def}}{=} & f(x) +  \ve{ g }{ 0 }  \\
			& = & f(x).
		\end{eqnarray*}
    Alternative proof: Using Jensen's inequality, $f(\shift + \mS(x-\shift)) = \Exp{\shift + \mS(x-\shift)} \geq f(\Exp{\shift + \mS(x-\shift)}) = f(x)$.
	\end{itemize}
\end{proof}
\subsection{Consequences of $\mu_f$-convexity}
Recall the $\mu_f$-strong convexity (or, for simplicity, $\mu_f$-convexity) assumption.\\
\noindent\textbf{Assumption~3.} Function $f$ is differentiable and $\mu_f$-\textbf{strongly convex}, i.e., there is $\mu_f>0$ such that $\forall x, h \in \R^d$
    \[f(x+h) \geq f(x) + \ve{\nabla f(x)}{h} + \frac{\mu_f}{2}\norm{h}^2.\]

\begin{lemma}[Consequences of $\mu_f$-convexity]\label{lemma:consequences_strongly_convex_appendix} If $f$ is $\mu_f$-convex, then
	\begin{itemize}
		\item[(i)] $f_{\mS}$ is $\mu_{f_\mS}$-convex with $\mu_{f_\mS} \geq \mu_{\mS} \mu_f.$ That is,
		\[ f_{\mS}(x+h) \geq f_{\mS}(x) + \ve{\nabla f_{\mS}(x)}{h} + \frac{\mu_{\mS} \mu_f }{2} \norm{h}^2, \quad \forall x, h \in \R^{d}. \]

		\item[(ii)]
		$\tf$ is $\mu_{\tf}$-convex with $\mu_{\tf} \geq \mu_{\mathcal{D}} \mu_f.$ That is,
		\[\tf(x+h) \geq \tf(x) + \ve{\nabla \tf(x)}{h} + \frac{\mu_{\mathcal{D}} \mu_f }{2} \norm{h}^2, \quad \forall x, h \in \R^{d}. \]

		\item[(iii)]
		\begin{equation}\label{eq:mu_f-convex-iii_appendix}\tf(x) \geq f(x) + \frac{ (\mu_{\mathcal{D}}-1) \mu_f}{2} \norm{x-\shift}^2,\quad \forall x\in \R^d.\end{equation}
	\end{itemize}

\end{lemma}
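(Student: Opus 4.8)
The plan is to mirror exactly the structure of Lemma~\ref{lemma:smoothness_appendix}, using the strong convexity inequality in place of the smoothness inequality. The three parts are proved by straightforward substitution into the definition $f_{\mS}(x) = f(\shift + \mS(x-\shift))$ together with Assumption~\ref{ass:C}, so no new ideas are needed beyond what already appears in the $L_f$-smoothness case.

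For part~(i), I would fix $x, h \in \R^d$ and write $f_{\mS}(x+h) = f\bigl(\shift + \mS(x-\shift) + \mS h\bigr)$. Applying the $\mu_f$-strong convexity lower bound (Assumption~\ref{ass:mu_f-convex}) at the point $\shift + \mS(x-\shift)$ with increment $\mS h$ gives
\[
f_{\mS}(x+h) \geq f_{\mS}(x) + \ve{\nabla f(\shift + \mS(x-\shift))}{\mS h} + \frac{\mu_f}{2}\norm{\mS h}^2.
\]
Then I rewrite $\ve{\nabla f(\cdot)}{\mS h} = \ve{\mS^\top \nabla f(\cdot)}{h} = \ve{\nabla f_{\mS}(x)}{h}$ by \eqref{eq:grad_est}, and bound $\norm{\mS h}^2 = \ve{\mS^\top \mS h}{h} \geq \mu_{\mS}\norm{h}^2$ by the definition of $\mu_{\mS} = \lambda_{\min}(\mS^\top \mS)$. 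For part~(ii), I would do the same computation starting from $\tf(x+h) = \Exp{f(\shift + \mS(x-\shift) + \mS h)}$, take expectation through the strong convexity inequality, use $\Exp{\mS^\top \nabla f(\cdot)} = \nabla \tf(x)$ (valid since $f$ is continuously differentiable, as already noted in the Bregman divergence lemma) and $\ve{\Exp{\mS^\top \mS} h}{h} \geq \mu_{\mathcal{D}}\norm{h}^2$. For part~(iii), I would expand $\tf(x) = \Exp{f(\shift + \mS(x-\shift))}$, apply strong convexity of $f$ at the point $x$ with increment $\mS(x-\shift) - (x-\shift)$, take expectation, use $\Exp{\mS(x-\shift) - (x-\shift)} = 0$ from \eqref{eq:sketch_def} to kill the linear term, and finally evaluate $\Exp{\norm{(\mS-\mI)(x-\shift)}^2} = (x-\shift)^\top\bigl(\Exp{\mS^\top\mS} - \mI\bigr)(x-\shift) \geq (\mu_{\mathcal{D}}-1)\norm{x-\shift}^2$.

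There is essentially no obstacle here — the only point requiring minor care is that in part~(iii) one must verify $\Exp{(\mS-\mI)^\top(\mS-\mI)} = \Exp{\mS^\top\mS} - \mI$, which uses $\Exp{\mS} = \Exp{\mS^\top} = \mI$ from Assumption~\ref{ass:C} so that the cross terms $\Exp{\mS} + \Exp{\mS^\top}$ contribute $-2\mI + \mI$. This is the exact dual of the smoothness computation in Lemma~\ref{lemma:smoothness_appendix}(iii) with inequalities reversed, so I would simply replicate that argument.
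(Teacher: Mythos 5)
Your proposal is correct and follows essentially the same route as the paper's proof: apply the strong-convexity inequality at the sketched point, pull $\mS^\top$ onto the gradient, bound the quadratic form by $\mu_{\mS}$ (resp.\ $\mu_{\mathcal{D}}$ after taking expectation), and for part (iii) use $\Exp{\mS}=\mI$ to kill the linear term and reduce the variance term to $(x-\shift)^\top(\Exp{\mS^\top\mS}-\mI)(x-\shift)$. No gaps.
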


\begin{proof}
	\begin{itemize}
		\item[(i)]  For any $x, h\in \R^d$, we have
		\begin{eqnarray*}
			f_{\mS}(x+h) & = & f(\shift +\mS (x + h -\shift))  \\
			& = & f(\shift +\mS (x  -\shift) + \mS h)  \\
			& \stackrel{\mathrm{Asn.}~\ref{ass:mu_f-convex}}{\geq} & f(\shift +\mS (x  -\shift) )  + \ve{\nabla f(\shift +\mS (x  -\shift) )}{\mS h} + \frac{\mu_f}{2}\norm{\mS h}^2 \\
			&= &  f(\shift +\mS (x  -\shift) )  + \ve{\mS^\top \nabla f(\shift +\mS (x  -\shift) )}{h} + \frac{\mu_f}{2} \ve{ \mS^\top \mS h}{h}  \\
			&\geq & f_{\mS}(x) + \ve{\nabla f_{\mS}(x)}{h} +  \frac{\mu_{\mS} \mu_f }{2} \norm{h}^2.
		\end{eqnarray*}

		\item[(ii)] For any $x, h\in \R^d$, we have
		\begin{eqnarray*}
			\tf(x+h) & = &\Exp{ f(\shift +\mS (x + h -\shift)) } \\
			& = &\Exp{ f(\shift +\mS (x  -\shift) + \mS h) } \\
			& \stackrel{\mathrm{Asn.}~\ref{ass:mu_f-convex}}{\geq} & \Exp{ f(\shift +\mS (x  -\shift) )  + \ve{\nabla f(\shift +\mS (x  -\shift) )}{\mS h} + \frac{\mu_f}{2}\norm{\mS h}^2} \\
			&= & \Exp{ f(\shift +\mS (x  -\shift) )}  + \ve{\Exp{\mS^\top \nabla f(\shift +\mS (x  -\shift) )}}{h}\\
                &+ & \frac{\mu_f}{2}\Exp{\norm{\mS h}^2} \\
			&=& \tf(x) + \ve{\nabla \tf(x)}{h} + \frac{\mu_f}{2} \ve{\Exp{ \mS^\top \mS} h}{h} \\
			& \geq & \tf(x) + \ve{\nabla \tf(x)}{h} + \frac{\mu_{\mathcal{D}} \mu_f }{2} \norm{h}^2.
		\end{eqnarray*}

		\item[(iii)]
		For any $x\in \R^d$, we have
		\begin{eqnarray*} \tf(x) & = & \Exp{ f(\shift +\mS (x-\shift)) }\\
			& \stackrel{\mathrm{Asn.}~\ref{ass:mu_f-convex}}{\geq}  & \Exp{ f(x) + \ve{ \nabla f(x) }{  \mS( x - \shift) - (x-\shift) } + \frac{\mu_f}{2}\norm{\mS (x-\shift) - (x-\shift)}^2 } \\
			& = & f(x) +  \ve{ \nabla f(x) }{ \Exp{ \mS (x-\shift) - (x-\shift) } }\\
                & + & \frac{\mu_f}{2} \Exp{\norm{\mS (x-\shift) - (x-\shift)}^2} \\
			& \overset{\eqref{eq:sketch_def}}{=} & f(x) +  \ve{ \nabla f(x) }{ 0 }  + \frac{\mu_f}{2} (x-\shift)^\top \left(\Exp{\mS^\top \mS} -\mI \right) (x-\shift) \\
			& = & f(x) + \frac{ (\mu_{\mathcal{D}}-1) \mu_f}{2} \norm{x-\shift}^2.
		\end{eqnarray*}
	\end{itemize}
\end{proof}
\clearpage
\section{Relation between minima of $f$ and $\tf$.}
\begin{theorem} \label{thm:approx_appendix} Let Assumptions~\ref{ass:l_f_smooth}~and~\ref{ass:mu_f-convex} hold, and let $x_\cD^\star\in \tilde{\cX}$ and $x^\star \in \cX^\star$. Then
\begin{equation*}
    f(x^\star) \leq f(x_\cD^\star) \leq f(x^\star)  + \frac{(L_{\mathcal{D}}-1)L_f}{2}\norm{x^\star-\shift}^2 - \frac{(\mu_{\mathcal{D}}-1)\mu_f}{2}\norm{x_\cD^\star-\shift}^2,
\end{equation*}
\begin{equation*}
    f(x^\star) + \frac{(\mu_{\mathcal{D}}-1)\mu_f}{2}\norm{x_\cD^\star-\shift}^2 \leq \tf(x_\cD^\star) \leq f(x^\star)  + \frac{(L_{\mathcal{D}}-1)L_f}{2}\norm{x^\star-\shift}^2.
\end{equation*}
\end{theorem}
\begin{proof} To obtain the result, combine inequalities \eqref{eq:L_f-smooth-iii_appendix} and \eqref{eq:mu_f-convex-iii_appendix}:
\begin{equation*}
\begin{split}
    f(x_\cD^\star) + \frac{(\mu_{\mathcal{D}}-1)\mu_f}{2}\norm{x_\cD^\star-\shift}^2 \overset{\eqref{eq:mu_f-convex-iii_appendix}}{\leq}   \tf(x_\cD^\star) \leq \tf(x^\star)  \overset{\eqref{eq:L_f-smooth-iii_appendix}}{\leq}   f(x^\star) + \frac{(L_{\mathcal{D}}-1)L_f}{2}\norm{x^\star-\shift}^2.
\end{split}
\end{equation*}
\end{proof}

\begin{theorem}\label{thm:approx_noncvx_appendix}
     Let Assumption~\ref{ass:l_f_smooth} hold, and let $x_\cD^\star\in \cX^\star_{\cD}$ and $x^\star \in \cX^\star$. Then
\begin{equation*}
    f(x^\star) \leq \tf(x_\cD^\star) \leq f(x^\star)  + \frac{(L_{\mathcal{D}}-1)L_f}{2}\norm{x^\star-\shift}^2.
\end{equation*}
\end{theorem}
\begin{proof}
    To obtain the result, use inequality \eqref{eq:L_f-smooth-iii_appendix}. Also, note that, since for every $\mS\sim \mathcal{D},$ we have $f_{\mS}(x) = f\left(\shift + \mS\left(x-\shift\right)\right)\geq f(x^\star),$ for all $x\in\mathbb{R}^d,$ we can conclude that $\tf(x_\cD^\star) = \Exp{f_{\mS}(x_\cD^\star)}\geq \Exp{f(x^\star)}= f(x^{\star}).$
\end{proof}

\subsection{Consequences of Lipschitz continuity of the gradient}
The gradient of $f(x)$ is $L_f$-Lipschitz if, for all $x,y\in\mathbb{R}^d$ we have that $\left\|\nabla f(x) - \nabla f(y)\right\|\leq L_f\left\|x-y\right\|.$
\begin{lemma}  If $\nabla f$ is $L_f$-Lipschitz, then $\nabla \tf$ is $L_{\tf}$-Lipschitz with
	$$L_{\tf} \leq L_f \Exp{ \norm{\mS^\top} \norm{\mS}}. $$
\end{lemma}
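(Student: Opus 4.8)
The plan is to differentiate under the expectation and then push the norm inside, exploiting submultiplicativity of the operator norm and the Lipschitz continuity of $\nabla f$. First I would recall that, as already used in the Bregman divergence lemma, continuous differentiability of $f$ lets us interchange expectation and gradient, so that
\[
\nabla \tf(x) = \Exp{\mS^\top \nabla f(\shift + \mS(x-\shift))}
\]
for every $x \in \R^d$ (alternatively this follows from dominated convergence, using that $\nabla f$ is $L_f$-Lipschitz hence of at most linear growth together with finiteness of $\Exp{\mS^\top \mS}$).

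Next, for arbitrary $x, y \in \R^d$, I would subtract the two gradient expressions under a single expectation,
\[
\nabla \tf(x) - \nabla \tf(y) = \Exp{\mS^\top \big(\nabla f(\shift + \mS(x-\shift)) - \nabla f(\shift + \mS(y-\shift))\big)},
\]
take norms, and apply Jensen's inequality (the norm of an expectation is at most the expectation of the norm). Inside the expectation I would bound $\norm{\mS^\top w} \le \norm{\mS^\top}\norm{w}$ with $w$ the gradient difference, then invoke $L_f$-Lipschitzness of $\nabla f$ to get $\norm{w} \le L_f \norm{\mS(x-\shift) - \mS(y-\shift)} = L_f \norm{\mS(x-y)}$, and finally $\norm{\mS(x-y)} \le \norm{\mS}\norm{x-y}$. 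Collecting these gives
\[
\norm{\nabla \tf(x) - \nabla \tf(y)} \le L_f \Exp{\norm{\mS^\top}\norm{\mS}}\,\norm{x-y},
\]
which is exactly the claimed bound.

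I do not expect any serious obstacle here; the only technical point worth spelling out is the justification for differentiating under the expectation sign, which is why I would state it explicitly (continuous differentiability, or dominated convergence via the linear-growth bound on $\nabla f$ and finiteness of $\Exp{\mS^\top\mS}$, noting $\Exp{\norm{\mS^\top}\norm{\mS}} \le \Exp{\norm{\mS}^2} = \Exp{\lambda_{\max}(\mS^\top\mS)}$ is controlled under the standing assumptions). Everything else is a direct chain of Jensen's inequality, operator-norm submultiplicativity, and the Lipschitz hypothesis.
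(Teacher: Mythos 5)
Your proposal is correct and follows essentially the same route as the paper's proof: interchange gradient and expectation, pull the difference under a single expectation, apply Jensen, operator-norm submultiplicativity, and the Lipschitz bound on $\nabla f$. Your explicit justification of the differentiation under the expectation is a welcome addition that the paper leaves implicit.
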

\begin{proof}
	We have that
	\begin{eqnarray*} \norm{\nabla \tf(x) - \nabla \tf(y)}  &=& \norm{ \nabla \Exp{  f( \shift +\mS (x-\shift) )} - \nabla \Exp{ f(\shift +\mS (y - \shift))}} \\
		&=& \norm{\Exp{ \mS^\top \nabla f(\shift + \mS (x - \shift))} - \Exp{ \mS^\top \nabla f(\shift + \mS (y-\shift))}} \\
		&= &\norm{\Exp{ \mS^\top \nabla f(\shift +\mS (x-\shift)) -  \mS^\top \nabla f(\shift +\mS (y - \shift)) }} \\
		&\leq &\Exp{ \norm{  \mS^\top \nabla f(\shift +\mS (x-\shift)) -  \mS^\top \nabla f(\shift +\mS (y - \shift)) } } \\
		&\leq &\Exp{ \norm{\mS^\top} \norm{   \nabla f(\shift +\mS (x-\shift)) -  \nabla f(\shift +\mS (y - \shift)) } } \\
		&\leq &\Exp{ \norm{\mS^\top} L_f \norm{  \mS x - \mS y } } \\
		&\leq &L_f \Exp{ \norm{\mS^\top} \norm{\mS}}\norm{x-y}.
	\end{eqnarray*}
\end{proof}
\clearpage
\section{Double sketched $\mathsf{GD}$}
Recall that $L_{\mS}^{\max} = \sup_{\mS}\left\lbrace\lambda_{\max}\left(\mS^{\top}\mS\right)\right\rbrace=\sup_{\mS}\left\lbrace\lambda_{\max}\left(\mS\mS^{\top}\right)\right\rbrace$ (we used Proposition~\ref{proposition:eigenvalues}).
\subsection{Nonconvex analysis: proof of theorem \ref{thm:sgd_nonconvex}}
The following lemma is a restated Lemma~\ref{lemma:ac} from the main part of the paper.
\begin{lemma}\label{lemma:ac_appendix}
	For all $x\in\mathbb{R}^d,$ we have that
	\begin{equation*}
		\begin{split}
			\Exp{\left\|\nabla f_{\mS}(x)\right\|^2} & \leq 2L_fL_{\mS}^{\max} \left(\tf(x) - f^{\inf}\right).\\
		\end{split}
	\end{equation*}
	where the expectation is taken with respect to $\mS.$
\end{lemma}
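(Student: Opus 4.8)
The plan is to establish the bound pointwise, i.e.\ for a fixed realization of the sketch $\mS$, and then take expectation over $\mS\sim\cD$. The only ingredients needed are the explicit form of the sketched gradient \eqref{eq:grad_est}, an elementary spectral estimate, and the smoothness-to-Bregman inequality (Lemma~\ref{lemma:smoothness_bregman}).

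First, I would write $\nabla f_{\mS}(x) = \mS^\top \nabla f(y)$ with $y \eqdef \shift + \mS(x-\shift)$, so that
$$\norm{\nabla f_{\mS}(x)}^2 = \nabla f(y)^\top \mS\mS^\top \nabla f(y) \leq \lambda_{\max}(\mS\mS^\top)\,\norm{\nabla f(y)}^2.$$
By Proposition~\ref{proposition:eigenvalues}, the nonzero eigenvalues of $\mS\mS^\top$ and $\mS^\top\mS$ coincide (and both matrices vanish simultaneously), so $\lambda_{\max}(\mS\mS^\top) = \lambda_{\max}(\mS^\top\mS) = L_{\mS} \leq L_{\mS}^{\max}$ almost surely.

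Next, I would apply Lemma~\ref{lemma:smoothness_bregman} to $f$ at the point $y\in\R^d$: since $f$ is $L_f$-smooth and lower bounded by $f^{\inf}$, we have $\norm{\nabla f(y)}^2 \leq 2L_f\left(f(y) - f^{\inf}\right)$. Substituting this into the previous display and using $f(y) = f(\shift+\mS(x-\shift)) = f_{\mS}(x)$ gives the pointwise inequality
$$\norm{\nabla f_{\mS}(x)}^2 \leq 2L_f L_{\mS}^{\max}\left(f_{\mS}(x) - f^{\inf}\right),$$
valid for every realization of $\mS$.

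Finally, I would take expectation over $\mS$ on both sides. As $2L_f L_{\mS}^{\max}$ is deterministic and $\Exp{f_{\mS}(x)} = \tf(x)$ by \eqref{eq:pretrained_compressed_problem}, linearity of expectation yields $\Exp{\norm{\nabla f_{\mS}(x)}^2} \leq 2L_f L_{\mS}^{\max}\left(\tf(x) - f^{\inf}\right)$, which is the claim. The argument is essentially mechanical; the one point requiring a moment of care is that the spectral bound must be applied to $\mS\mS^\top$ rather than $\mS^\top\mS$ (because $\mS^\top$ multiplies on the left), and that $L_{\mS}^{\max}$ dominates $\lambda_{\max}(\mS\mS^\top)$ almost surely — which is precisely why Proposition~\ref{proposition:eigenvalues} is invoked instead of directly using the definition $L_{\mS} = \lambda_{\max}(\mS^\top\mS)$.
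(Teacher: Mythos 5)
Your proof is correct and follows essentially the same route as the paper's: the spectral bound $\norm{\mS^\top \nabla f(y)}^2 \leq \lambda_{\max}(\mS\mS^\top)\norm{\nabla f(y)}^2$ combined with Proposition~\ref{proposition:eigenvalues}, then Lemma~\ref{lemma:smoothness_bregman} at $y = \shift + \mS(x-\shift)$, and finally expectation over $\mS$ using $\Exp{f_{\mS}(x)} = \tf(x)$. No gaps.
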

\begin{proof}
	Due to $L_f$-smoothness of $f,$ we have that
	\begin{equation*}
	\begin{split}
	\Exp{\left\|\nabla f_{\mS}(x)\right\|^2}
    & = \Exp{\left\|\mS^\top \nabla f(y)|_{y=\shift+\mS(x - \shift)}\right\|^2} \\
    & = \Exp{\lin{\mS^\top \nabla f(y), \mS^\top \nabla f(y)}|_{y=\shift+\mS(x - \shift)}} \\
    & = \Exp{\lin{\mS \mS^{\top} \nabla f(y), \nabla f(y)}|_{y=\shift+\mS(x - \shift)}} \\
    & \leq  \Exp{\lambda_{\max}\left(\mS \mS^{\top}\right) \left\|\nabla f(y)|_{y=\shift+\mS(x - \shift)}\right\|^2} \\
    & \leq L_{\mS}^{\max} \Exp{\left\|\nabla f(y)|_{y=\shift+\mS(x - \shift)}\right\|^2} \\
	& \stackrel{\eqref{eq:smoothness_bregman}}{\leq} 2L_fL_{\mS}^{\max} \Exp{f\left(\shift+\mS(x-\shift)\right) - f^{\inf}} \\
	& = 2L_fL_{\mS}^{\max} \left(\tf(x) - f^{\inf}\right). \\
	\end{split}
	\end{equation*}
\end{proof}
All convergence results in the nonconvex scenarios rely on the following key lemma:
\begin{lemma} \label{lemma:noncvx_weighted}
 The iterates $\{x^t\}_{t\geq 0}$ of $\mathsf{SGD}$ satisfy
	\begin{equation}\label{eq:initial_recursion}
		\begin{split}
			\gamma r^t \leq \left(1 + \gamma^2 M_1\right)\delta^t - \delta^{t+1} + \gamma^2 M_2,
		\end{split}
	\end{equation}
	where $M_1$ and $M_2$ are non-negative constants, $\delta^t\eqdef\mathbb{E}\left[\tf(x^t)-\tf^{\inf}\right]$ and $r^t\eqdef\mathbb{E}\left[\left\|\nabla \tf(x^t)\right\|^2\right].$ Fix $w_{-1}>0$ and, for all $t\geq 0,$ define $w_t=\frac{w_{t-1}}{1+\gamma^2M_1}.$ Then, for any $T\geq 1,$ the iterates $\{x^t\}_{t\geq 0}$ satisfy
	\begin{equation*}
		\sum_{t=0}^{T-1}w_tr^t \leq \frac{w_1}{\gamma}\delta^0 - \frac{w_{T-1}}{\gamma}\delta^T + \gamma M_2\sum_{t=0}^{T-1}w_t.
	\end{equation*}
\end{lemma}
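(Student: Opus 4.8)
The statement bundles two essentially independent pieces, and I would prove them in that order. \textbf{Step~1:} establish the one-step recursion \eqref{eq:initial_recursion} for the $\mathsf{SGD}$ iterates --- this is the only place where the algorithm and the sketch structure enter. \textbf{Step~2:} a purely algebraic weighted telescoping that converts \eqref{eq:initial_recursion} into the claimed summation bound; this part is method-agnostic and is precisely the machinery reused (with different $M_1,M_2$) for the inexact-gradient and distributed results.

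For Step~1 I would start from the fact that $\tf$ is $L_{\tf}$-smooth with $L_{\tf}\le L_{\mathcal{D}}L_f$ (Lemma~\ref{lemma:smoothness}(ii)) and apply the descent inequality to the update $x^{t+1}=x^t-\gamma\nabla f_{\mS^t}(x^t)$:
\[
\tf(x^{t+1})\le \tf(x^t)-\gamma\ve{\nabla\tf(x^t)}{\nabla f_{\mS^t}(x^t)}+\frac{L_{\tf}\gamma^2}{2}\norm{\nabla f_{\mS^t}(x^t)}^2 .
\]
Taking expectation conditionally on $x^t$ (over $\mS^t$ only), unbiasedness $\Exp{\nabla f_{\mS^t}(x^t)}=\nabla\tf(x^t)$ collapses the inner-product term to $-\gamma\norm{\nabla\tf(x^t)}^2$, while Lemma~\ref{lemma:ac} gives $\Exp{\norm{\nabla f_{\mS^t}(x^t)}^2}\le 2L_fL_{\mS}^{\max}\bigl(\tf(x^t)-f^{\inf}\bigr)$. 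I would then split $\tf(x^t)-f^{\inf}=\bigl(\tf(x^t)-\tf^{\inf}\bigr)+\bigl(\tf^{\inf}-f^{\inf}\bigr)$, take total expectation via the tower rule, and rearrange; this yields \eqref{eq:initial_recursion} with $M_1=L_{\tf}L_fL_{\mS}^{\max}$ and $M_2=M_1\bigl(\tf^{\inf}-f^{\inf}\bigr)$. Both are non-negative: $M_1\ge0$ is immediate, and $\tf^{\inf}\ge f^{\inf}$ because $\tf(x)=\Exp{f(\shift+\mS(x-\shift))}\ge f^{\inf}$ pointwise, hence $M_2\ge0$.

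For Step~2 I would multiply \eqref{eq:initial_recursion} by $w_t>0$ and invoke the defining relation $w_t(1+\gamma^2M_1)=w_{t-1}$, which rewrites the $\delta^t$-coefficient as $w_{t-1}$, giving $\gamma w_t r^t\le w_{t-1}\delta^t-w_t\delta^{t+1}+\gamma^2 M_2 w_t$. Summing over $t=0,\dots,T-1$, the first two terms on the right telescope to $w_{-1}\delta^0-w_{T-1}\delta^T$, and dividing by $\gamma$ gives the asserted bound (keeping the $-\tfrac{w_{T-1}}{\gamma}\delta^T$ term, or dropping it via $\delta^T\ge0$, as convenient).

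I do not expect a genuine obstacle; the work is bookkeeping, and the step most prone to error is Step~1. The discipline that matters is to invoke the second-moment bound of Lemma~\ref{lemma:ac} in \emph{conditional} form, so that \eqref{eq:initial_recursion} holds before any averaging, after which the tower rule legitimately delivers the unconditional recursion; and one must remember to verify $M_2\ge0$, whose only non-obvious ingredient is the MAST-specific fact $\tf^{\inf}\ge f^{\inf}$. Step~2 is essentially forced: the whole telescoping hinges on the single identity $w_t(1+\gamma^2M_1)=w_{t-1}$, which is exactly why the weights are defined to decay geometrically at rate $1+\gamma^2M_1$, and it then transfers verbatim to the other nonconvex settings.
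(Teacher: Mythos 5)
Your Step 2 is exactly the paper's proof of this lemma: multiply \eqref{eq:initial_recursion} by $w_t/\gamma$, use the defining identity $w_t(1+\gamma^2M_1)=w_{t-1}$, and telescope (note that the $\frac{w_1}{\gamma}\delta^0$ in the lemma statement is a typo for $\frac{w_{-1}}{\gamma}\delta^0$, which is what both you and the paper's own proof actually obtain). Your Step 1 is not part of the paper's proof of this lemma --- the recursion \eqref{eq:initial_recursion} is taken as a hypothesis here and is established separately inside each theorem proof with the specific constants $M_1=D^2$ and $M_2=D^2(\tf^{\inf}-f^{\inf})$ --- but your derivation of it via the $L_{\tf}$-smoothness of $\tf$, conditional unbiasedness, Lemma~\ref{lemma:ac}, and the observation $\tf^{\inf}\geq f^{\inf}$ is correct and equivalent to the paper's route of applying $L_f$-smoothness of $f$ along the sketched iterates and then averaging over $\mS^{t+1}$.
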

\begin{proof}
	Multiplying both sides of~\eqref{eq:initial_recursion} by $\frac{w_t}{\gamma},$ we obtain
	\begin{equation*}
		w_t r^t\leq\frac{w_{t-1}}{\gamma}\delta^t - \frac{w_t}{\gamma}\delta^{t+1} + \gamma w_tM_2.
	\end{equation*}
	For every $0\leq t\leq T-1,$ sum these inequalities. We arrive at
	\begin{equation*}
		\sum_{t=0}^{T-1}w_tr^t\leq \frac{w_{-1}}{\gamma}\delta^0 - \frac{w_{T-1}}{\gamma}\delta^{T} + \gamma M_2\sum_{t=0}^{T-1}w_t.
	\end{equation*}
\end{proof}
Recall that $D \eqdef L_f \sqrt{L_{\mathcal{D}}L_{\mS}^{\max} }.$
\begin{theorem}
	Let Assumptions~\ref{ass:C}~and~\ref{ass:l_f_smooth} hold. For every $t\geq 0,$ put $\delta^t\eqdef\mathbb{E}\left[\tf(x^t)-\tf^{\inf}\right]$ and $r^t\eqdef\mathbb{E}\left[\left\|\nabla \tf(x^t)\right\|^2\right].$ Then, for any $T\geq 1,$ the iterates $\{x^t\}_{t=0}^{T-1}$ of Algorithm~\ref{alg:SGD} satisfy
	\begin{equation}\label{eq:main_noncvx_convergence_basicsgd}
		\min_{0 \leq t < T}r^t\leq \frac{\left(1+D^2\gamma^2\right)^T}{\gamma T}\delta^0 + D^2  \gamma \left(\tf^{\inf} - f^{\inf}\right).
	\end{equation}
\end{theorem}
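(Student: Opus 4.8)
The plan is to run the standard ``descent lemma plus unbiased estimator'' argument, but with $\tf$ (not $f$) as the potential, and to feed the resulting one‑step inequality into Lemma~\ref{lemma:noncvx_weighted}. First I would invoke $L_{\tf}$‑smoothness of $\tf$ (Lemma~\ref{lemma:smoothness_appendix}(ii), which gives $L_{\tf}\le L_{\mathcal D}L_f$) applied to the update $x^{t+1}=x^t-\gamma\nabla f_{\mS^t}(x^t)$:
\[
\tf(x^{t+1})\le \tf(x^t)-\gamma\lin{\nabla\tf(x^t),\nabla f_{\mS^t}(x^t)}+\tfrac{L_{\mathcal D}L_f}{2}\gamma^2\sqn{\nabla f_{\mS^t}(x^t)}.
\]
Taking expectation over $\mS^t$ conditionally on $x^t$, the cross term collapses to $-\gamma\sqn{\nabla\tf(x^t)}$ because $\Exp{\nabla f_{\mS}(x)}=\nabla\tf(x)$ (differentiation under the expectation, as in the Bregman‑divergence lemma), and the last term is controlled by Lemma~\ref{lemma:ac_appendix}: $\Exp{\sqn{\nabla f_{\mS}(x)}}\le 2L_fL_{\mS}^{\max}\br{\tf(x)-f^{\inf}}$. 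Recognizing $L_{\mathcal D}L_f\cdot L_fL_{\mS}^{\max}=D^2$, this yields $\Exp{\tf(x^{t+1})\mid x^t}\le \tf(x^t)-\gamma\sqn{\nabla\tf(x^t)}+\gamma^2 D^2\br{\tf(x^t)-f^{\inf}}$.

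Next I would split the gap as $\tf(x^t)-f^{\inf}=\br{\tf(x^t)-\tf^{\inf}}+\br{\tf^{\inf}-f^{\inf}}$, subtract $\tf^{\inf}$ from both sides, and take total expectation. With $\delta^t\eqdef\Exp{\tf(x^t)-\tf^{\inf}}$ and $r^t\eqdef\Exp{\sqn{\nabla\tf(x^t)}}$ this gives exactly the recursion required by Lemma~\ref{lemma:noncvx_weighted},
\[
\gamma r^t\le \br{1+\gamma^2 M_1}\delta^t-\delta^{t+1}+\gamma^2 M_2,
\]
with $M_1=D^2$ and $M_2=D^2\br{\tf^{\inf}-f^{\inf}}\ge 0$. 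Applying that lemma with weights $w_t=w_{-1}/(1+\gamma^2D^2)^{t+1}$ produces $\sum_{t=0}^{T-1}w_tr^t\le \tfrac{w_{-1}}{\gamma}\delta^0-\tfrac{w_{T-1}}{\gamma}\delta^T+\gamma M_2\sum_{t=0}^{T-1}w_t$.

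To finish, I would drop the nonpositive term $-\tfrac{w_{T-1}}{\gamma}\delta^T$ (here $\delta^T\ge 0$ since $\tf\ge\tf^{\inf}$), bound $\min_{0\le t<T}r^t\cdot\sum_{t=0}^{T-1}w_t\le\sum_{t=0}^{T-1}w_tr^t$, and divide through; this leaves $\min_{0\le t<T}r^t\le \tfrac{w_{-1}}{\gamma\sum_{t=0}^{T-1}w_t}\delta^0+\gamma M_2$. Since the $w_t$ are decreasing, $\sum_{t=0}^{T-1}w_t\ge T w_{T-1}=T w_{-1}/(1+\gamma^2D^2)^T$, so the first coefficient is at most $(1+\gamma^2D^2)^T/(\gamma T)$; substituting $M_2$ gives precisely~\eqref{eq:main_noncvx_convergence_basicsgd}. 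Nothing in this chain needs a stepsize restriction, which matches the statement. The only mildly delicate points — and the place I would be most careful — are (a) correctly pushing the ``$-f^{\inf}$ versus $-\tf^{\inf}$'' bookkeeping so that the spurious constant lands as $M_2=D^2(\tf^{\inf}-f^{\inf})$ rather than being absorbed incorrectly, and (b) the weighted‑average manoeuvre, i.e. lower‑bounding $\sum_t w_t$ by $Tw_{T-1}$ to convert the geometric weights into the explicit $(1+\gamma^2D^2)^T$ blow‑up; the rest is the routine descent‑lemma computation.
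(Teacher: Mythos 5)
Your proposal is correct and follows essentially the same route as the paper: the paper's proof simply re-derives the $L_{\mathcal D}L_f$-smoothness inequality for $\tf$ inline (by applying $L_f$-smoothness of $f$ at the sketched points with a fresh independent sketch and averaging) rather than citing Lemma~\ref{lemma:smoothness_appendix}(ii) as you do, and then proceeds identically — conditional expectation over $\mS^t$, Lemma~\ref{lemma:ac_appendix}, the split $\tf(x^t)-f^{\inf}=(\tf(x^t)-\tf^{\inf})+(\tf^{\inf}-f^{\inf})$, and Lemma~\ref{lemma:noncvx_weighted} with $M_1=D^2$, $M_2=D^2(\tf^{\inf}-f^{\inf})$ and the bound $\sum_t w_t\geq Tw_{T-1}$. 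Your bookkeeping is in fact slightly cleaner than the paper's (which contains a stray $D$ in place of $D^2$ in one intermediate line).
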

\begin{proof}
	Due to $L_f$-smoothness of $f,$ we have that
	\begin{equation*}
		\begin{split}
			f(s+\mS^{t+1}(x^{t+1} - s)) &\leq f(s+\mS^{t+1}(x^{t} - s))\\
			& + \bigg{<} \nabla f\left(y\right)|_{y=s+\mS^{t+1}(x^t-s)}, \mS^{t+1}\left(x^{t+1} - s\right) - \mS^{t+1}\left(x^t-s\right) \bigg{>}\\
			& + \frac{L_f}{2}\left\|\mS^{t+1}\left(x^{t+1} - s\right) - \mS^{t+1}\left(x^t-s\right) \right\|^2\\
			& = f(s+\mS^{t+1}(x^{t} - s)) -\gamma \bigg{<} \nabla f\left(y\right)|_{y=s+\mS^{t+1}(x^t-s)}, \mS^{t+1}\nabla f_{\mS^{t}}(x^t) \bigg{>}\\
			& + \frac{L_f\gamma^2}{2}\left\|\mS^{t+1}\nabla f_{\mS^{t}}(x^t) \right\|^2\\
			& \leq f(s+\mS^{t+1}(x^{t} - s))-\gamma \bigg{<} \nabla f_{\mS^{t+1}}\left(x^t\right), \nabla f_{\mS^{t}}(x^t) \bigg{>}\\
			& + \frac{L_f\gamma^2}{2}\bigg{<} \left(\mS^{t+1}\right)^{\top}\mS^{t+1}\nabla f_{\mS^{t}}(x^t), \nabla f_{\mS^{t}}(x^t) \bigg{>}.\\
		\end{split}
	\end{equation*}
	Taking the expectation with respect to $\mS^{t+1}$ yields
	\begin{equation*}
		\begin{split}
			\tf\left(x^{t+1}\right) &\leq \tf\left(x^{t}\right) - \gamma\bigg{<} \nabla \tf\left(x^t\right), \nabla f_{\mS^{t}}(x^t) \bigg{>}\\
            & + \frac{L_f\gamma^2}{2}\bigg{<} \Exp{\left(\mS^{t+1}\right)^{\top}\mS^{t+1}}\nabla f_{\mS^{t}}(x^t), \nabla f_{\mS^{t}}(x^t) \bigg{>}\\
			&\leq \tf\left(x^{t}\right) - \gamma\bigg{<} \nabla \tf\left(x^t\right), \nabla f_{\mS^{t}}(x^t) \bigg{>} + \frac{L_fL_{\mathcal{D}}\gamma^2}{2}\left\|\nabla f_{\mS^{t}}(x^t)\right\|^2.
		\end{split}
	\end{equation*}
	Conditioned on $x^t,$ take expectation with respect to $\mS^{t}:$
	\begin{equation*}
		\begin{split}
			\Exp{\tf\left(x^{t+1}\right)|x^t} \leq \tf\left(x^{t}\right) - \gamma \left\|\nabla \tf\left(x^t\right)\right\|^2 + \frac{L_fL_{\mathcal{D}}\gamma^2}{2}\Exp{\left\|\nabla f_{\mS^{t}}(x^t) \right\|^2}.\\
		\end{split}
	\end{equation*}
	From Lemma~\ref{lemma:ac}, we obtain that
	\begin{equation*}
		\begin{split}
			\Exp{\tf\left(x^{t+1}\right)|x^t} & \leq \tf\left(x^{t}\right) - \gamma \left\|\nabla \tf\left(x^t\right)\right\|^2 \\
            & \qquad + \frac{L_fL_{\mathcal{D}}\gamma^2}{2} \left(2L_fL_{\mS}^{\max} \left(\tf(x) - \tf^{\inf}\right)\right.
            + \left. 2L_fL_{\mS}^{\max} \left(\tf^{\inf} - f^{\inf}\right)\right).
		\end{split}
	\end{equation*}
	Subtract $\tf^{\inf}$ from both sides, take expectations on both sides, and use the tower property:
	\begin{equation*}
		\begin{split}
			\Exp{\tf\left(x^{t+1}\right) - \tf^{\inf}} & \leq \left(1+D^2\gamma^2\right)\Exp{\tf\left(x^{t}\right) - \tf^{\inf}} - \gamma \Exp{\left\|\nabla \tf\left(x^t\right)\right\|^2}\\
			&+ D^2\gamma^2\left(\tf^{\inf} - f^{\inf}\right).\\
		\end{split}
	\end{equation*}
	We obtain that
	\begin{equation*}
		\begin{split}
			\gamma r^t\leq \left(1+D^2\gamma^2\right)\delta^t- \delta^{t+1} + D\gamma^2\left(\tf^{\inf} - f^{\inf}\right).
		\end{split}
	\end{equation*}
	Notice that the iterates $\{x^t\}_{t\geq 0}$ of Algorithm~\ref{alg:SGD} satisfy condition~\eqref{eq:initial_recursion} of Lemma~\ref{lemma:noncvx_weighted} with $M_1=D^2,$
	and
	$M_2=D^2\left(\tf^{\inf} - f^{\inf}\right).$ Therefore, we can conclude that, for any $T\geq 1,$ the iterates $\{x^t\}_{t=0}^{T-1}$ of Algorithm~\ref{alg:SGD} satisfy
	\begin{equation*}
		\sum_{t=0}^{T-1}w_tr^t\leq \frac{w_{-1}}{\gamma}\delta^0 - \frac{w_{T-1}}{\gamma}\delta^{T} + D^2\gamma \left(\tf^{\inf} - f^{\inf}\right)\sum_{t=0}^{T-1}w_t.
	\end{equation*}
	Divide both sides by $\sum_{t=0}^{T-1}w_t.$ From $\sum_{t=0}^{T-1}w_t\geq Tw_{T-1} = \frac{Tw_{-1}}{1+D^2\gamma^2}$, we can conclude that
	\begin{equation*}
		\min_{0 \leq t < T}r^t\leq \frac{\left(1+D^2\gamma^2\right)^T}{\gamma T}\delta^0 + D^2\gamma \left(\tf^{\inf} - f^{\inf}\right).
	\end{equation*}
\end{proof}
\begin{corollary}
	Fix $\varepsilon>0.$ Choose the stepsize $\gamma>0$ as
	\begin{equation*}
		\gamma = \min\left\lbrace \frac{1}{D\sqrt{ T}}, \frac{\varepsilon^2}{2D^2\left(\tf^{\inf} - f^{\inf}\right)} \right\rbrace.
	\end{equation*}
	Then, provided that
	\begin{equation*}
		T\geq \frac{12\delta^0D^2}{\varepsilon^4}\max\left\lbrace 3\delta^0, \tf^{\inf} - f^{\inf}\right\rbrace,
	\end{equation*}
	we have
	\begin{equation*}
		\min_{0 \leq t < T}\Exp{\left\|\nabla \tf(x^t)\right\|^2}\leq\varepsilon^2.
	\end{equation*}
\end{corollary}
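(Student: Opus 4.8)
The plan is to substitute the prescribed stepsize and horizon into the bound~\eqref{eq:main_noncvx_convergence_basicsgd} and argue that each of its two terms is at most $\varepsilon^2/2$. For the multiplicative factor, note that since $\gamma \le 1/(D\sqrt{T})$ we have $D^2\gamma^2 \le 1/T$, hence $(1+D^2\gamma^2)^T \le (1+1/T)^T \le e < 3$; consequently the first term of~\eqref{eq:main_noncvx_convergence_basicsgd} is at most $3\delta^0/(\gamma T)$, and it will suffice to show $\gamma T \ge 6\delta^0/\varepsilon^2$. For the second term, the choice $\gamma \le \varepsilon^2/\bigl(2D^2(\tf^{\inf}-f^{\inf})\bigr)$ gives $D^2\gamma\,(\tf^{\inf}-f^{\inf}) \le \varepsilon^2/2$ directly.

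It remains to establish $\gamma T \ge 6\delta^0/\varepsilon^2$, which I would do by a short case analysis on which of the two candidates attains the minimum defining $\gamma$. If $\gamma = 1/(D\sqrt{T})$, then $\gamma T = \sqrt{T}/D$, and the hypothesis $T \ge \frac{12\delta^0 D^2}{\varepsilon^4}\cdot 3\delta^0 = \frac{36(\delta^0)^2 D^2}{\varepsilon^4}$ (valid because the maximum is always at least $3\delta^0$) yields $\sqrt{T}/D \ge 6\delta^0/\varepsilon^2$. If instead $\gamma = \varepsilon^2/\bigl(2D^2(\tf^{\inf}-f^{\inf})\bigr)$, then $\gamma T = \varepsilon^2 T/\bigl(2D^2(\tf^{\inf}-f^{\inf})\bigr)$, and the hypothesis $T \ge \frac{12\delta^0 D^2}{\varepsilon^4}(\tf^{\inf}-f^{\inf})$ gives $\gamma T \ge 6\delta^0/\varepsilon^2$. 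In either case, combining the two halves gives $\min_{0\le t<T} r^t \le \varepsilon^2/2 + \varepsilon^2/2 = \varepsilon^2$.

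There is no genuine difficulty here; the only points requiring a little care are the harmless constant slack (using $e<3$ so that the factor $(1+D^2\gamma^2)^T$ is absorbed into the ``$3$'' already present in~\eqref{eq:main_noncvx_convergence_basicsgd}), and the degenerate case $\tf^{\inf}=f^{\inf}$, in which the second term of~\eqref{eq:main_noncvx_convergence_basicsgd} vanishes identically and one simply uses $\gamma = 1/(D\sqrt{T})$ together with the first-term estimate above.
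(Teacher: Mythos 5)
Your proposal is correct and follows essentially the same route as the paper's proof: bound $(1+D^2\gamma^2)^T\le\exp(TD^2\gamma^2)\le e<3$, control the second term via $\gamma\le\varepsilon^2/(2D^2(\tf^{\inf}-f^{\inf}))$, and handle the first term by the same two-case analysis on which candidate attains the minimum defining $\gamma$. The only (harmless) extra touches are your explicit treatment of the degenerate case $\tf^{\inf}=f^{\inf}$ and a slight misattribution of where the constant $3$ lives (it comes from bounding the factor $(1+D^2\gamma^2)^T$, not from the displayed bound itself), neither of which affects correctness.
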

\begin{proof}
	Since $\gamma \leq \frac{\varepsilon^2}{2D^2\left(\tf^{\inf} - f^{\inf}\right)},$ we obtain
	\begin{equation*}
		D^{2} \gamma \left(\tf^{\inf} - f^{\inf}\right) \leq \frac{\varepsilon^2}{2}.
	\end{equation*}

	Since $\gamma\leq \frac{1}{D\sqrt{T}},$
	\begin{equation*}
		\left(1+D^2\gamma^2\right)^T\leq \exp\left(TD^2\gamma^2\right)\leq\exp(1)\leq 3.
	\end{equation*}
	If $\gamma = \frac{1}{D\sqrt{T}},$ then, since
	\begin{equation*}
		T\geq \frac{36\left(\delta^0\right)^2D^2}{\varepsilon^4},
	\end{equation*}
	we have $\frac{3\delta^0}{\gamma T}\leq \frac{\varepsilon^2}{2}.$ Further, if $\gamma= \frac{\varepsilon^2}{2D^2\left(\tf^{\inf} - f^{\inf}\right)},$ then, since
	\begin{equation*}
		T\geq \frac{12\delta^0D^2\left(\tf^{\inf} - f^{\inf}\right)}{\varepsilon^4},
	\end{equation*}
	we have $\frac{3\delta^0}{\gamma T}\leq \frac{\varepsilon^2}{2}.$
	Combining it with $\eqref{eq:main_noncvx_convergence_basicsgd},$ we arrive at $\min_{0 \leq t < T}\Exp{\left\|\nabla \tf(x^t)\right\|^2}\leq\varepsilon^2.$
\end{proof}

\subsection{Strongly convex analysis: proof of theorem \ref{thm:str_conv}}
\begin{theorem}
	Let Assumptions~\ref{ass:C},~\ref{ass:l_f_smooth},~and~\ref{ass:mu_f-convex} hold. Let $r^t\eqdef x^t - x_\cD^\star,$ $t\geq 0.$ Choose a stepsize $0<\gamma \leq \frac{1}{L_fL_{\mS}^{\max}}.$ Then the iterates $\{x^t\}_{t\geq 0}$ of Algorithm~\ref{alg:SGD} satisfy
    \begin{equation}\label{eq:main_strcvx_basicsgd}
	\begin{split}
		\Exp{\left\|r^{t+1}\right\|^2} & \leq \left(1-\gamma\mu_{\mathcal{D}}\mu_f\right)\Exp{\left\|r^{t}\right\|^2} + 2\gamma^2L_fL_{\mS}^{\max}\left(\tf^{\inf} - f^{\inf}\right).\\
	\end{split}
    \end{equation}
\end{theorem}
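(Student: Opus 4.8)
The plan is to track the squared distance $\|r^t\|^2 = \|x^t - x_\cD^\star\|^2$ and to derive a one-step contraction. First I would expand, using the update rule $x^{t+1} = x^t - \gamma \nabla f_{\mS^t}(x^t)$,
\[
\|r^{t+1}\|^2 = \|r^t\|^2 - 2\gamma\ve{\nabla f_{\mS^t}(x^t)}{r^t} + \gamma^2\|\nabla f_{\mS^t}(x^t)\|^2,
\]
and take the expectation conditioned on $x^t$ (i.e.\ with respect to $\mS^t$). By Assumption~\ref{ass:C} the gradient estimator is unbiased, $\Exp{\nabla f_{\mS^t}(x^t)} = \nabla\tf(x^t)$, so the cross term becomes $-2\gamma\ve{\nabla\tf(x^t)}{r^t}$.

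Next I would bound the two surviving terms separately. For the cross term, strong convexity of $\tf$ (Lemma~\ref{lemma:consequences_strongly_convex}(ii), with modulus $\mu_{\tf}\ge\mu_{\mathcal{D}}\mu_f$) applied between $x^t$ and the minimizer $x_\cD^\star$ gives
\[
\ve{\nabla\tf(x^t)}{r^t} \ge \tf(x^t) - \tf(x_\cD^\star) + \tfrac{\mu_{\mathcal{D}}\mu_f}{2}\|r^t\|^2,
\]
and since $x_\cD^\star$ minimizes $\tf$ we may replace $\tf(x_\cD^\star)$ by $\tf^{\inf}$. For the second-moment term, Lemma~\ref{lemma:ac} yields $\Exp{\|\nabla f_{\mS^t}(x^t)\|^2}\le 2L_fL_{\mS}^{\max}(\tf(x^t)-f^{\inf})$. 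Substituting both bounds gives
\[
\Exp{\|r^{t+1}\|^2\mid x^t} \le (1-\gamma\mu_{\mathcal{D}}\mu_f)\|r^t\|^2 - 2\gamma\bigl(\tf(x^t)-\tf^{\inf}\bigr) + 2\gamma^2 L_f L_{\mS}^{\max}\bigl(\tf(x^t)-f^{\inf}\bigr).
\]

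The key step is to dispose of the remaining functional-suboptimality terms. Writing $\tf(x^t)-f^{\inf} = (\tf(x^t)-\tf^{\inf}) + (\tf^{\inf}-f^{\inf})$, the coefficient of $\tf(x^t)-\tf^{\inf}$ is $-2\gamma(1-\gamma L_f L_{\mS}^{\max})$, which is nonpositive precisely under the stepsize restriction $\gamma\le 1/(L_fL_{\mS}^{\max})$; since $\tf(x^t)-\tf^{\inf}\ge 0$ this whole contribution can be dropped, leaving only $+2\gamma^2 L_f L_{\mS}^{\max}(\tf^{\inf}-f^{\inf})$, which is genuinely nonnegative because $\tf\ge f$ pointwise (Lemma~\ref{lem:convex-only}) forces $\tf^{\inf}\ge f^{\inf}$. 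Taking total expectation and invoking the tower property then yields the claimed recursion; iterating it and summing the resulting geometric series recovers the bound stated in Theorem~\ref{thm:str_conv}. I expect this balancing of the $\Theta(\gamma)$ descent term against the $\Theta(\gamma^2)$ variance term --- which is exactly what pins down the admissible stepsize and makes the spectral constant $L_{\mS}^{\max}$ (rather than $L_{\mathcal{D}}$) appear in the neighborhood size --- to be the only delicate point; everything else is bookkeeping.
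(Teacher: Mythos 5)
Your proposal is correct and follows essentially the same route as the paper's proof: expand $\|r^{t+1}\|^2$, use unbiasedness for the cross term, bound it via $\mu_{\mathcal{D}}\mu_f$-strong convexity of $\tf$, control the second moment with Lemma~\ref{lemma:ac}, split $\tf(x^t)-f^{\inf}$ into $(\tf(x^t)-\tf^{\inf})+(\tf^{\inf}-f^{\inf})$, and drop the nonpositive suboptimality term using $\gamma \leq 1/(L_fL_{\mS}^{\max})$ before applying the tower property and unrolling. The only cosmetic difference is that you correctly read the recursion as an inequality (the ``$=$'' in the statement is a typo), and your remark that $\tf^{\inf}\geq f^{\inf}$ is a sanity check the paper leaves implicit.
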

\begin{proof}
Let $r^t\eqdef x^t - x_\cD^\star.$ We get
\begin{equation*}
	\begin{split}
		\left\|r^{t+1}\right\|^2 & = \left\|\left(x^t-\gamma \nabla f_{\mS^{t}}(x^t) \right) - x_\cD^\star\right\|^2  = \left\|x^t-x_\cD^\star - \gamma \nabla f_{\mS^{t}}(x^t)\right\|^2 \\
		& = \left\|r^{t}\right\|^2 - 2\gamma\big{<}r^t, \nabla f_{\mS^{t}}(x^t)\big{>} + \gamma^2\left\|\nabla f_{\mS^{t}}(x^t)\right\|^2.
	\end{split}
\end{equation*}
Now we compute expectation of both sides of the inequality, conditional on $x^t:$
\begin{equation*}
	\mathbb{E}\left[\left\|r^{t+1}\right\|^2|x^t\right] = \left\|r^{t}\right\|^2 - 2\gamma\big{<}r^t, \mathbb{E}[\nabla f_{\mS^{t}}(x^t)|x^t]\big{>} + \gamma^2\mathbb{E}\left[\left\|\nabla f_{\mS^{t}}(x^t)\right\|^2|x^t\right].
\end{equation*}
Taking the expectation with respect to $\mS_t,$ using the fact that $f$ is continuously differentiable and using Lemma~\ref{lemma:ac}, we obtain that
\begin{equation*}
	\begin{split}
		\mathbb{E}\left[\left\|r^{t+1}\right\|^2\right] & \leq \left\|r^{t}\right\|^2 - 2\gamma\big{<}r^t, \nabla \tf(x^t)\big{>} + 2\gamma^2L_fL_{\mS}^{\max}\left(\left(\tf(x^t) - \tf^{\inf}\right)+  \left(\tf^{\inf} - f^{\inf}\right)\right).\\
	\end{split}
\end{equation*}
Since $\tf$ is $\mu_{\mathcal{D}}\mu_f$-convex, we conclude that $\big{<}r^t, \nabla \tf(x^t)\big{>}\geq \tf(x^t) - \tf^{\inf} + \frac{\mu_{\mathcal{D}}\mu_f}{2}\left\|r^t\right\|^2.$ Therefore,
\begin{equation*}
	\begin{split}
		\Exp{\left\|r^{t+1}\right\|^2} & \leq \left(1-\gamma\mu_{\mathcal{D}}\mu_f\right)\left\|r^{t}\right\|^2 - 2\gamma\left(\tf(x^t) - \tf^{\inf}\right)\left(1-\gamma L_fL_{\mS}^{\max} \right)\\
		&+ 2\gamma^2L_fL_{\mS}^{\max}\left(\tf^{\inf} - f^{\inf}\right).\\
	\end{split}
\end{equation*}
Since $\gamma \leq \frac{1}{L_fL_{\mS}^{\max}},$ taking expectation and using the tower property we get
\begin{equation*}
	\begin{split}
		\Exp{\left\|r^{t+1}\right\|^2} & \leq \left(1-\gamma\mu_{\mathcal{D}}\mu_f\right)\Exp{\left\|r^{t}\right\|^2} + 2\gamma^2L_fL_{\mS}^{\max}\left(\tf^{\inf} - f^{\inf}\right).\\
	\end{split}
\end{equation*}
Unrolling the recurrence, we get
\begin{equation*}
	\Exp{\left\|r^{t}\right\|^2} \leq \left(1-\gamma\mu_{\mathcal{D}}\mu_f\right)^t\left\|r^0\right\|^2 + \frac{2\gamma L_fL_{\mS}^{\max}\left(\tf^{\inf} - f^{\inf}\right)}{\mu_{\mathcal{D}}\mu_f}.
\end{equation*}
\end{proof}
\begin{corollary}
	Fix $\delta > 0.$ Choose the stepsize $\gamma>0$ as
	\begin{equation*}
		\gamma = \min\left\lbrace \frac{1}{L_fL_{\mS}^{\max}}, \frac{\mu_{\mathcal{D}}\mu_f\delta\left\|r^0\right\|^2}{2L_fL_{\mS}^{\max}\left(\tf^{\inf} - f^{\inf}\right)}\right\rbrace.
	\end{equation*}
	Then, provided that
	\begin{equation*}
		t \geq \frac{L_fL_{\mS}^{\max}}{\mu_{\mathcal{D}}\mu_f}\left\lbrace 1, \frac{2\left(\tf^{\inf} - f^{\inf}\right)}{\mu_{\mathcal{D}}\mu_f\delta\left\|r^0\right\|^2}\right\rbrace \log\frac{1}{\delta},
	\end{equation*}
	we have $\Exp{\left\|r^{t}\right\|^2} \leq 2\delta\left\|r^0\right\|^2.$
\end{corollary}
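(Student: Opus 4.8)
The plan is the standard ``split the error budget in two'' argument for recursions that contract toward a noise floor. Unrolling the one-step recursion of Theorem~\ref{thm:str_conv} gives
\[
\Exp{\norm{r^t}^2} \;\leq\; \left(1-\gamma\mu_{\mathcal{D}}\mu_f\right)^t\norm{r^0}^2 \;+\; \frac{2\gamma L_f L_{\mS}^{\max}\left(\tf^{\inf}-f^{\inf}\right)}{\mu_{\mathcal{D}}\mu_f},
\]
so I would choose $\gamma$ so that the second (neighborhood) term is at most $\delta\norm{r^0}^2$, and then pick $t$ large enough that the first (contraction) term is also at most $\delta\norm{r^0}^2$; adding these yields $2\delta\norm{r^0}^2$. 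Note the prescribed $\gamma$ is $\le 1/(L_f L_{\mS}^{\max})$, so the hypothesis of Theorem~\ref{thm:str_conv} holds and $\gamma\mu_{\mathcal{D}}\mu_f\in[0,1]$, keeping the contraction factor in $[0,1]$.

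For the neighborhood term, I would simply invoke the second branch of the $\min$ defining $\gamma$: since $\gamma\le\frac{\mu_{\mathcal{D}}\mu_f\delta\norm{r^0}^2}{2L_f L_{\mS}^{\max}(\tf^{\inf}-f^{\inf})}$, substituting into $\frac{2\gamma L_f L_{\mS}^{\max}(\tf^{\inf}-f^{\inf})}{\mu_{\mathcal{D}}\mu_f}$ bounds it by $\delta\norm{r^0}^2$ directly.

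For the contraction term, I would use $1-z\le e^{-z}$ to get $(1-\gamma\mu_{\mathcal{D}}\mu_f)^t\le e^{-\gamma\mu_{\mathcal{D}}\mu_f t}$, which is $\le\delta$ once $t\ge\frac{1}{\gamma\mu_{\mathcal{D}}\mu_f}\log\frac{1}{\delta}$. Because $\gamma$ is the minimum of the two candidate step sizes, $1/\gamma$ is the maximum of their reciprocals, namely $1/\gamma=L_f L_{\mS}^{\max}\max\{1,\ \tfrac{2(\tf^{\inf}-f^{\inf})}{\mu_{\mathcal{D}}\mu_f\delta\norm{r^0}^2}\}$; dividing by $\mu_{\mathcal{D}}\mu_f$ and multiplying by $\log\frac{1}{\delta}$ reproduces exactly the iteration count in the statement, so the assumed lower bound on $t$ does the job. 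Summing the two estimates finishes the proof.

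I do not expect a genuine obstacle here — every step is a direct substitution into the bound of Theorem~\ref{thm:str_conv}. The one thing to be careful about is the bookkeeping with the $\min$/$\max$: I want to confirm that taking the worse of the two step-size branches yields precisely the stated complexity (with no stray constant factor), rather than merely an $\mathcal{O}(\cdot)$-equivalent expression.
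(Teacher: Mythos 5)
Your proposal is correct and follows essentially the same route as the paper's proof: bound the neighborhood term by $\delta\norm{r^0}^2$ via the second branch of the $\min$, bound the contraction term by $\delta$ via $(1-\gamma\mu_{\mathcal{D}}\mu_f)^t\le e^{-\gamma\mu_{\mathcal{D}}\mu_f t}$, and sum. The only cosmetic difference is that you convert the $\min$ over step sizes into a $\max$ over reciprocals in one step, whereas the paper handles the two branches by explicit case analysis; the resulting iteration count is identical.
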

\begin{proof}
	Since $\gamma\leq \frac{\mu_{\mathcal{D}}\mu_f\delta\left\|r^0\right\|^2}{2L_fL_{\mS}^{\max}\left(\tf^{\inf} - f^{\inf}\right)},$
	we have that
	\begin{equation*}
		\frac{2\gamma L_fL_{\mS}^{\max}\left(\tf^{\inf} - f^{\inf}\right)}{\mu_{\mathcal{D}}\mu_f} \leq\delta\left\|r^0\right\|^2.
	\end{equation*}
	If $\gamma= \frac{\mu_{\mathcal{D}}\mu_f\delta\left\|r^0\right\|^2}{2L_fL_{\mS}^{\max}\left(\tf^{\inf} - f^{\inf}\right)},$ then, since
	\begin{equation*}
		t\geq \frac{2L_fL_{\mS}^{\max}\left(\tf^{\inf} - f^{\inf}\right)}{\mu_{\mathcal{D}}^2\mu_f^2\delta\left\|r^0\right\|^2}\log\frac{1}{\delta},
	\end{equation*}
	we obtain that
	\begin{equation*}
		\left(1-\gamma\mu_{\mathcal{D}}\mu_f\right)^t\leq\exp\left(-\gamma\mu_{\mathcal{D}}\mu_ft\right)\leq\delta.
	\end{equation*}
	Further, if $\gamma=\frac{1}{L_fL_{\mS}^{\max}},$ then, since
	\begin{equation*}
		t\geq \frac{L_fL_{\mS}^{\max}}{\mu_{\mathcal{D}}\mu_f}\log\frac{1}{\delta},
	\end{equation*}
	we obtain that
	\begin{equation*}
		\left(1-\gamma\mu_{\mathcal{D}}\mu_f\right)^t\leq\exp\left(-\gamma\mu_{\mathcal{D}}\mu_ft\right)\leq\delta.
	\end{equation*}
	Thus, combining it with~\eqref{eq:main_strcvx_basicsgd}, we arrive at $\Exp{\left\|r^{t}\right\|^2} \leq 2\delta\left\|r^0\right\|^2.$
\end{proof}
\subsection{Convex analysis}
\begin{assumption}\label{ass:minimizers_exist}
	A set $\tilde{\cX}=\{x_\cD^\star\;|\;\tf\left(x_\cD^\star\right)\leq\tf\left(x\right)\;\forall x\in\mathbb{R}^d\}$ is nonempty.
\end{assumption}
\begin{theorem}
	Let $r^t\eqdef x^t - x_\cD^\star.$ Let Assumptions~\ref{ass:C},~\ref{ass:l_f_smooth}~and~\ref{ass:minimizers_exist} hold. Let $f$ be convex. Choose a stepsize $0<\gamma\leq\frac{1}{2L_fL_{\mS}^{\max}}.$ Fix $T\geq 1$ and let $\bar{x}^T$ be chosen uniformly from the iterates $x^0,\ldots,x^{T-1}$ of Algorithm~\ref{alg:SGD}. Then
	\begin{equation}\label{eq:main_convergence_cvx_basicsgd}
		\begin{split}
			\Exp{\tf(\bar{x}^t) - \tf^{\inf}} &\leq \frac{\norm{r^0}^2}{\gamma T} + 2\gamma L_fL_{\mS}^{\max}\left(\tf^{\inf} - f^{\inf}\right),\\
		\end{split}
	\end{equation}
	where $r^t\eqdef x^t - x_\cD^\star,$ $t\in\{0,\ldots,T-1\}.$
\end{theorem}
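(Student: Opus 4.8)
The plan is to mirror the template of the strongly convex proof of Theorem~\ref{thm:str_conv}, replacing the use of strong convexity by plain convexity of $\tf$ (which holds by Lemma~\ref{lem:convex-only} since $f$ is convex), and then telescoping and averaging over the iterates instead of unrolling a geometric contraction. Existence of $x_\cD^\star$ is guaranteed by Assumption~\ref{ass:minimizers_exist}.

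First I would expand the squared distance to the sketched optimum. With $r^t = x^t - x_\cD^\star$ and the update $x^{t+1} = x^t - \gamma \nabla f_{\mS^t}(x^t)$,
\[
\left\|r^{t+1}\right\|^2 = \left\|r^t\right\|^2 - 2\gamma \langle r^t, \nabla f_{\mS^t}(x^t)\rangle + \gamma^2 \left\|\nabla f_{\mS^t}(x^t)\right\|^2 .
\]
Taking expectation conditional on $x^t$, using unbiasedness $\Exp{\nabla f_{\mS^t}(x^t)\,|\,x^t} = \nabla\tf(x^t)$ (valid since $f$ is continuously differentiable) and Lemma~\ref{lemma:ac} to bound $\Exp{\|\nabla f_{\mS^t}(x^t)\|^2\,|\,x^t} \le 2L_f L_{\mS}^{\max}(\tf(x^t) - f^{\inf})$, gives
\[
\Exp{\left\|r^{t+1}\right\|^2\,|\,x^t} \le \left\|r^t\right\|^2 - 2\gamma\langle r^t, \nabla\tf(x^t)\rangle + 2\gamma^2 L_f L_{\mS}^{\max}\bigl(\tf(x^t) - f^{\inf}\bigr) .
\]

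Next I would split $\tf(x^t) - f^{\inf} = \bigl(\tf(x^t) - \tf^{\inf}\bigr) + \bigl(\tf^{\inf} - f^{\inf}\bigr)$, noting $\tf^{\inf} \ge f^{\inf}$ (again by Lemma~\ref{lem:convex-only}), and invoke convexity of $\tf$ in the form $\langle r^t, \nabla\tf(x^t)\rangle \ge \tf(x^t) - \tf^{\inf}$. This yields
\[
\Exp{\left\|r^{t+1}\right\|^2\,|\,x^t} \le \left\|r^t\right\|^2 - 2\gamma\bigl(\tf(x^t) - \tf^{\inf}\bigr)\bigl(1 - \gamma L_f L_{\mS}^{\max}\bigr) + 2\gamma^2 L_f L_{\mS}^{\max}\bigl(\tf^{\inf} - f^{\inf}\bigr) .
\]
Since $\gamma \le 1/(2L_f L_{\mS}^{\max})$ we have $1 - \gamma L_f L_{\mS}^{\max} \ge \tfrac12$, so after taking full expectations (tower property) and rearranging,
\[
\gamma\, \Exp{\tf(x^t) - \tf^{\inf}} \le \Exp{\left\|r^t\right\|^2} - \Exp{\left\|r^{t+1}\right\|^2} + 2\gamma^2 L_f L_{\mS}^{\max}\bigl(\tf^{\inf} - f^{\inf}\bigr) .
\]
Summing over $t = 0,\dots,T-1$ telescopes the distance terms; dropping $-\Exp{\|r^T\|^2}\le 0$, using $\Exp{\|r^0\|^2}=\|r^0\|^2$, dividing by $\gamma T$, and identifying $\tfrac1T\sum_{t=0}^{T-1}\Exp{\tf(x^t)-\tf^{\inf}}$ with $\Exp{\tf(\bar x^T) - \tf^{\inf}}$ via the uniform choice of $\bar x^T$ produces exactly~\eqref{eq:main_convergence_cvx_basicsgd}.

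I do not expect a genuine obstacle here: the argument is routine once Lemma~\ref{lemma:ac} is available. The only step needing a little care is the bookkeeping of the $\tf^{\inf}-f^{\inf}$ term — Lemma~\ref{lemma:ac} naturally carries $\tf(x^t)-f^{\inf}$, so one must peel off the constant $2\gamma^2 L_f L_{\mS}^{\max}(\tf^{\inf}-f^{\inf})$ cleanly while absorbing the remaining $\tf(x^t)-\tf^{\inf}$ into the descent term via the stepsize restriction, exactly as in the strongly convex case but yielding an averaged-iterate $\mathcal{O}(1/(\gamma T))$ rate plus a non-vanishing neighborhood of size $\mathcal{O}(\gamma L_f L_{\mS}^{\max}(\tf^{\inf}-f^{\inf}))$ rather than a linear rate.
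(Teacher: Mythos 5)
Your proposal is correct and follows essentially the same route as the paper's proof: the same squared-distance expansion, the same application of Lemma~\ref{lemma:ac} with the split $\tf(x^t)-f^{\inf} = (\tf(x^t)-\tf^{\inf}) + (\tf^{\inf}-f^{\inf})$, convexity of $\tf$ to lower-bound the inner product, the stepsize condition to absorb the $\tf(x^t)-\tf^{\inf}$ term, and telescoping plus Jensen's inequality on the uniform average. No gaps.
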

\begin{proof}
Let us start by analyzing the behavior of $\norm{x^t - x_\cD^\star}^2$. By developing the squares, we obtain
\begin{align*}
	\norm{x^{t+1} - x_\cD^\star}^2 &= \norm{x^t - x_\cD^\star}^2 - 2\gamma\langle \nabla f_{\mS}(x^t), x^t - x_\cD^\star\rangle + \gamma^2\norm{\nabla f_{\mS}(x^t)}^2
\end{align*}
Hence, after taking the expectation with respect to $\mS$ conditioned on $x^t$, we can use the convexity of $f$ and Lemma~\ref{lemma:ac} to obtain:
\begin{equation*}
	\begin{split}
		\Exp{\norm{x^{t+1} - x_\cD^\star}^2|x^t} & = \norm{x^t - x_\cD^\star}^2 + 2\gamma\langle \nabla \tf(x^t), x_\cD^\star - x^t\rangle + \gamma^2\Exp{\norm{\nabla f_{\mS}(x^t)}^2}\\
		&\leq \norm{x^t - x_\cD^\star}^2 + 2\gamma\left(\gamma L_fL_{\mS}^{\max}  - 1\right)\left(\tf(x^t) - \tf^{\inf}\right)\\
		& + 2\gamma^2L_fL_{\mS}^{\max}\left(\tf^{\inf} - f^{\inf}\right).
	\end{split}
\end{equation*}
Rearranging, taking expectation and taking into account the condition on the stepsize, we have
\begin{equation*}
	\begin{split}
		\gamma\Exp{\tf(x^t) - \tf^{\inf}} &\leq \Exp{\norm{r^t}^2} - \Exp{\norm{r^{t+1}}^2}
		+ 2\gamma^2L_fL_{\mS}^{\max}\left(\tf^{\inf} - f^{\inf}\right).\\
	\end{split}
\end{equation*}
Summing over $t = 0,\ldots, T-1$ and using telescopic cancellation gives:
\begin{equation*}
	\begin{split}
		\gamma\sum_{t=0}^{T-1}\left(\Exp{\tf(x^t) - \tf^{\inf}}\right) &\leq \norm{r^0}^2 - \Exp{\norm{r^{T}}^2}
		+ 2T\gamma^2L_fL_{\mS}^{\max}\left(\tf^{\inf} - f^{\inf}\right).\\
	\end{split}
\end{equation*}
Since $\Exp{\norm{r^{T}}^2} \geq 0$, dividing both sides by $\gamma T$ gives:
\begin{equation*}
	\begin{split}
		\frac{1}{T}\sum_{t=0}^{T-1}\left(\Exp{\tf(x^t) - \tf^{\inf}}\right) &\leq \frac{\norm{r^0}^2}{\gamma T} + 2\gamma L_fL_{\mS}^{\max}\left(\tf^{\inf} - f^{\inf}\right).\\
	\end{split}
\end{equation*}
We treat the $\left(\frac{1}{T},\ldots,\frac{1}{T}\right)$ as if it is a probability vector.
Indeed, using that $\tf$ is convex together with Jensen's inequality gives
\begin{equation*}
	\begin{split}
		\Exp{\tf(\bar{x}^t) - \tf^{\inf}} &\leq \frac{\norm{r^0}^2}{\gamma T} + 2\gamma L_fL_{\mS}^{\max}\left(\tf^{\inf} - f^{\inf}\right).\\
	\end{split}
\end{equation*}
\end{proof}
\begin{corollary}
	Fix $\delta>0.$ Choose the stepsize $\gamma>0$ as
	\begin{equation*}
		\gamma = \min\left\lbrace \frac{1}{2L_f\lambda_m^{\mS}}, \frac{\delta\norm{r^0}^2}{2L_f\lambda_m^{\mS}\left(\tf^{\inf} - f^{\inf}\right)} \right\rbrace.
	\end{equation*}
	Then, provided that
	\begin{equation*}
		T\geq \frac{2L_f\lambda_m^{\mS}}{\delta}\max\left\lbrace 1, \frac{\tf^{\inf} - f^{\inf}}{\delta\norm{r^0}^2}\right\rbrace,
	\end{equation*}
	we have $\Exp{\tf(\bar{x}^t) - \tf^{\inf}}\leq 2\delta\norm{r^0}^2.$
\end{corollary}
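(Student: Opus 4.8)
The plan is to substitute the prescribed stepsize into the bound~\eqref{eq:main_convergence_cvx_basicsgd} established in the preceding theorem and verify that, under the stated lower bound on $T$, each of the two terms on its right-hand side is at most $\delta\norm{r^0}^2$, so that their sum is at most $2\delta\norm{r^0}^2$. Write $\gamma_1 \eqdef \frac{1}{2L_f L_{\mS}^{\max}}$ and $\gamma_2 \eqdef \frac{\delta\norm{r^0}^2}{2L_f L_{\mS}^{\max}(\tf^{\inf}-f^{\inf})}$, so that the chosen stepsize is $\gamma=\min\{\gamma_1,\gamma_2\}$. Note first that $\gamma\le\gamma_1$ already ensures that the stepsize constraint $0<\gamma\le\frac{1}{2L_f L_{\mS}^{\max}}$ of the theorem is met, so~\eqref{eq:main_convergence_cvx_basicsgd} is applicable. (If $\tf^{\inf}=f^{\inf}$, the second term of~\eqref{eq:main_convergence_cvx_basicsgd} vanishes identically and $\gamma=\gamma_1$, so the argument below simplifies; hence assume $\tf^{\inf}>f^{\inf}$.)

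For the neighborhood term, since $\gamma\le\gamma_2$ we get directly
\[ 2\gamma L_f L_{\mS}^{\max}\bigl(\tf^{\inf}-f^{\inf}\bigr) \;\le\; 2\gamma_2 L_f L_{\mS}^{\max}\bigl(\tf^{\inf}-f^{\inf}\bigr) \;=\; \delta\norm{r^0}^2 . \]
For the optimization term $\frac{\norm{r^0}^2}{\gamma T}$ it suffices to show $\gamma T\ge 1/\delta$, which I would do by splitting into the two cases determined by which of $\gamma_1,\gamma_2$ attains the minimum, matched against the two branches of the $\max$ in the hypothesis on $T$. If $\gamma=\gamma_1$, then $T\ge \frac{2L_f L_{\mS}^{\max}}{\delta}$ gives $\gamma T=\frac{T}{2L_f L_{\mS}^{\max}}\ge\frac1\delta$. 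If $\gamma=\gamma_2$, then $T\ge \frac{2L_f L_{\mS}^{\max}(\tf^{\inf}-f^{\inf})}{\delta^2\norm{r^0}^2}$ gives $\gamma T=\frac{\delta\norm{r^0}^2 T}{2L_f L_{\mS}^{\max}(\tf^{\inf}-f^{\inf})}\ge\frac1\delta$. Either way $\frac{\norm{r^0}^2}{\gamma T}\le\delta\norm{r^0}^2$, and adding the two estimates into~\eqref{eq:main_convergence_cvx_basicsgd} yields $\Exp{\tf(\bar{x}^T)-\tf^{\inf}}\le 2\delta\norm{r^0}^2$.

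There is no genuine obstacle here: the corollary is a routine consequence of the theorem, of exactly the same shape as the non-convex and strongly convex corollaries proved earlier (with $\norm{r^0}^2$ playing the role the $\delta^0$-type quantities played there). The only thing to be careful about is the bookkeeping — correctly pairing each branch of the $\max$ defining the required iteration count with the corresponding branch of the $\min$ defining $\gamma$, and keeping track of the constant (denoted $\lambda_m^{\mS}$ in the statement, i.e.\ $L_{\mS}^{\max}$) consistently throughout.
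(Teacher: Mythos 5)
Your proof is correct and follows exactly the same route as the paper's: bound the neighborhood term via $\gamma\le\gamma_2$, then verify $\frac{\norm{r^0}^2}{\gamma T}\le\delta\norm{r^0}^2$ by pairing each branch of the $\min$ defining $\gamma$ with the corresponding branch of the $\max$ in the iteration-count hypothesis. Your explicit handling of the degenerate case $\tf^{\inf}=f^{\inf}$ is a small tidiness improvement over the paper's version but does not change the argument.
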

\begin{proof}
	Since $\gamma\leq\frac{\delta\norm{r^0}^2}{2L_f\lambda_m^{\mS}\left(\tf^{\inf} - f^{\inf}\right)},$ we have that
	\begin{equation*}
		2\gamma L_fL_{\mS}^{\max}\left(\tf^{\inf} - f^{\inf}\right)\leq\delta\norm{r^0}^2.
	\end{equation*}
	If $\gamma = \frac{\delta\norm{r^0}^2}{2L_f\lambda_m^{\mS}\left(\tf^{\inf} - f^{\inf}\right)},$ then, since
	\begin{equation*}
		T \geq \frac{2L_f\lambda_m^{\mS}\left(\tf^{\inf} - f^{\inf}\right)}{\delta^2\norm{r^0}^2},
	\end{equation*}
	we obtain that $\frac{\norm{r^0}^2}{\gamma T}\leq \delta\norm{r^0}^2.$ Further, if $\gamma = \frac{1}{2L_f\lambda_m^{\mS}},$ then, since $T\geq \frac{2L_f\lambda_m^{\mS}}{\delta},$ we have $\frac{\norm{r^0}^2}{\gamma T}\leq\delta\norm{r^0}^2.$ Thus, combining it with~\eqref{eq:main_convergence_cvx_basicsgd}, we arrive at $\Exp{\tf(\bar{x}^t) - \tf^{\inf}}\leq 2\delta\left\|r^0\right\|^2.$
\end{proof}
\clearpage
\section{(Stochastic) inexact gradient}
\subsection{Nonconvex analysis: proof of theorem \ref{thm:abc_ncvx}}
We solve the problem~\eqref{eq:pretrained_compressed_problem} with the method
\begin{equation}\label{eq:abc_sgd_appendix}
	x^{t+1} = x^t - \gamma g^t,
\end{equation}
where $g^t := g(x^t)$ is the gradient estimator that satisfies
\begin{equation}\label{eq:unbiased_abc_appendix}
	\Exp{g(x)} = \nabla f_{\mS}(x), \quad\forall x \in\mathbb{R}^d,
\end{equation}
\begin{equation}\label{eq:abc_appendix}
	\Exp{\left\|g(x)\right\|^2}\leq 2A\left(f_{\mS}(x) - f^{\inf}_{\mS}\right) + B\left\|\nabla f_{\mS}(x)\right\|^2 + C, \quad\forall x \in\mathbb{R}^d.
\end{equation}
Recall that $D_{A,B}=A+BL_fL_{\mS}^{\max}.$
\begin{theorem}
	Let Assumptions~\ref{ass:C},~\ref{ass:l_f_smooth},~\ref{eq:unbiased_abc}~and~\ref{eq:abc} hold. For every $t\geq 0,$ put $\delta^t\eqdef\mathbb{E}\left[\tf(x^t)-\tf^{\inf}\right]$ and $r^t\eqdef\mathbb{E}\left[\left\|\nabla \tf(x^t)\right\|^2\right].$ Then, for any $T\geq 1,$ the iterates $\{x^t\}_{t=0}^{T-1}$ of Algorithm~\ref{eq:abc_sgd} satisfy
	\begin{equation}\label{eq:main_noncvx_abc}
		\begin{split}
			\min_{0 \leq t < T}r^t &\leq \frac{\left(1+D_{A,B}L_fL_{\mathcal{D}}\gamma^2\right)^T}{\gamma T}\delta^0
			+ \frac{L_fL_{\mathcal{D}}\gamma}{2}\left(2\left(\tf^{\inf} - 	f^{\inf}\right)D_{A,B} + C\right).
		\end{split}
	\end{equation}
\end{theorem}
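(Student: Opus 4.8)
The argument mirrors the proof of Theorem~\ref{thm:sgd_nonconvex}, adapted to the inexact estimator $g^t = g_{\mS^t}(x^t)$, whose internal randomness must be disentangled from the two independent sketches $\mS^t$ and $\mS^{t+1}$.

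I would first run the update through $L_f$-smoothness of $f$. Since $\shift + \mS^{t+1}(x^{t+1}-\shift) = \big(\shift + \mS^{t+1}(x^t-\shift)\big) - \gamma\,\mS^{t+1}g^t$, applying Assumption~\ref{ass:l_f_smooth} at the point $\shift+\mS^{t+1}(x^t-\shift)$ with increment $-\gamma\mS^{t+1}g^t$ and moving transposes across the inner products (as in the proof of Lemma~\ref{lemma:smoothness}) gives
\[
f_{\mS^{t+1}}(x^{t+1}) \le f_{\mS^{t+1}}(x^t) - \gamma\,\big\langle \nabla f_{\mS^{t+1}}(x^t),\, g^t\big\rangle + \frac{L_f\gamma^2}{2}\,\big\langle (\mS^{t+1})^\top\mS^{t+1}g^t,\, g^t\big\rangle .
\]
Taking expectation over $\mS^{t+1}$ (fresh and independent of $x^t,\mS^t,g^t$), the linear term becomes $-\gamma\langle\nabla\tf(x^t),g^t\rangle$ by the interchange $\Exp{\nabla f_{\mS}(x)}=\nabla\tf(x)$, and the quadratic term is bounded by $\tfrac{L_fL_{\mathcal{D}}\gamma^2}{2}\|g^t\|^2$ because $\Exp{\mS^\top\mS}\preceq L_{\mathcal{D}}\mI$.

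Then I would remove the estimator noise and the sketch $\mS^t$ in turn. Conditioning on $x^t,\mS^t$ and averaging over the estimator noise, unbiasedness~\eqref{eq:unbiased_abc} turns the linear term into $-\gamma\langle\nabla\tf(x^t),\nabla f_{\mS^t}(x^t)\rangle$, and the ABC inequality~\eqref{eq:abc} bounds $\Exp{\|g^t\|^2}$ by $2A(f_{\mS^t}(x^t)-f^{\inf}_{\mS^t})+B\|\nabla f_{\mS^t}(x^t)\|^2+C$. Next, averaging over $\mS^t$ conditioned on $x^t$: the linear term collapses to $-\gamma\|\nabla\tf(x^t)\|^2$; $\Exp{f_{\mS^t}(x^t)}=\tf(x^t)$ while $\Exp{f^{\inf}_{\mS^t}}\ge f^{\inf}$ (because $f_{\mS}$ only samples $f$ on the affine set $\shift+\range(\mS)$, so $f^{\inf}_{\mS}\ge f^{\inf}$), giving $\Exp{2A(f_{\mS^t}(x^t)-f^{\inf}_{\mS^t})}\le 2A(\tf(x^t)-f^{\inf})$; and Lemma~\ref{lemma:ac} bounds $\Exp{\|\nabla f_{\mS^t}(x^t)\|^2}\le 2L_fL_{\mS}^{\max}(\tf(x^t)-f^{\inf})$. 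Combining the coefficients of $(\tf(x^t)-f^{\inf})$ into $L_fL_{\mathcal{D}}D_{A,B}\gamma^2$, subtracting $\tf^{\inf}$, writing $\tf(x^t)-f^{\inf}=(\tf(x^t)-\tf^{\inf})+(\tf^{\inf}-f^{\inf})$, and taking full expectation with the tower rule produces the one-step recursion
\[
\gamma r^t \le \big(1+L_fL_{\mathcal{D}}D_{A,B}\gamma^2\big)\delta^t - \delta^{t+1} + \gamma^2\,\frac{L_fL_{\mathcal{D}}}{2}\big(C+2D_{A,B}(\tf^{\inf}-f^{\inf})\big),
\]
with $\delta^t=\Exp{\tf(x^t)-\tf^{\inf}}$ and $r^t=\Exp{\|\nabla\tf(x^t)\|^2}$.

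This is precisely hypothesis~\eqref{eq:initial_recursion} of Lemma~\ref{lemma:noncvx_weighted} with $M_1=L_fL_{\mathcal{D}}D_{A,B}$ and $M_2=\tfrac{L_fL_{\mathcal{D}}}{2}(C+2D_{A,B}(\tf^{\inf}-f^{\inf}))$. Invoking that lemma with its geometric weights $w_t=w_{t-1}/(1+\gamma^2M_1)$, dropping the negative $-\tfrac{w_{T-1}}{\gamma}\delta^T$ term, dividing by $\sum_{t=0}^{T-1}w_t\ge Tw_{T-1}$, and using $w_{-1}/w_{T-1}=(1+\gamma^2M_1)^T$ yields exactly $\min_{0\le t<T}r^t \le \tfrac{(1+\gamma^2M_1)^T}{\gamma T}\delta^0+\gamma M_2$, which is~\eqref{eq:main_noncvx_abc}; the companion corollary then fixes $\gamma\le 1/\sqrt{L_fL_{\mathcal{D}}D_{A,B}T}$ to bound $(1+\gamma^2M_1)^T\le\exp(1)\le 3$ and recovers the $\cO(\varepsilon^{-4})$ complexity as in Corollary~\ref{cor:sgd_nonconvex}. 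The one place requiring care is the ordering of the three expectations — the descent step must be taken with respect to the independent copy $\mS^{t+1}$ before touching $g^t$ and $\mS^t$ — together with the small but essential observation $\Exp{f^{\inf}_{\mS}}\ge f^{\inf}$; the remainder is the bookkeeping already carried out for Theorem~\ref{thm:sgd_nonconvex}.
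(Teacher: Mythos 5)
Your proposal is correct and follows essentially the same route as the paper's proof: apply $L_f$-smoothness at the freshly sketched point with increment $-\gamma\mS^{t+1}g^t$, average over $\mS^{t+1}$, then over the estimator noise via \eqref{eq:unbiased_abc}--\eqref{eq:abc} and over $\mS^t$ via Lemma~\ref{lemma:ac} together with $\Exp{f^{\inf}_{\mS}}\geq f^{\inf}$, and finish with Lemma~\ref{lemma:noncvx_weighted} using $M_1=D_{A,B}L_fL_{\mathcal{D}}$ and $M_2=\tfrac{L_fL_{\mathcal{D}}}{2}\bigl(2(\tf^{\inf}-f^{\inf})D_{A,B}+C\bigr)$. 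The constants and the ordering of the conditional expectations all match the paper's argument.
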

\begin{proof}
	Due to $L_f$-smoothness of $f,$ we have that
	\begin{equation*}
		\begin{split}
			f(s+\mS^{t+1}(x^{t+1} - s)) &\leq f(s+\mS^{t+1}(x^{t} - s))\\
			& + \bigg{<} \nabla f\left(y\right)|_{y=s+\mS^{t+1}(x^t-s)}, \mS^{t+1}\left(x^{t+1} - s\right) - \mS^{t+1}\left(x^t-s\right) \bigg{>}\\
			& + \frac{L_f}{2}\left\|\mS^{t+1}\left(x^{t+1} - s\right) - \mS^{t+1}\left(x^t-s\right) \right\|^2\\
			& = f(s+\mS^{t+1}(x^{t} - s)) -\gamma \bigg{<} \nabla f\left(y\right)|_{y=s+\mS^{t+1}(x^t-s)}, \mS^{t+1}g^t \bigg{>}\\
                & + \frac{L_f\gamma^2}{2}\left\|\mS^{t+1}g^t \right\|^2\\
			& \leq f(s+\mS^{t+1}(x^{t} - s))-\gamma \bigg{<} \nabla f_{\mS^{t+1}}\left(x^t\right), g^t \bigg{>}\\
                & + \frac{L_f\gamma^2}{2}\bigg{<} \left(\mS^{t+1}\right)^{\top}\mS^{t+1}g^t, g^t \bigg{>}.\\
		\end{split}
	\end{equation*}
	Taking the expectation with respect to $\mS^{t+1},$ we obtain that
	\begin{equation*}
		\begin{split}
			\tf(x^{t+1}) & \leq \tf(x^t)-\gamma  \bigg{<} \nabla\tf(x^t), g^t \bigg{>} + \frac{L_f\gamma^2}{2}\bigg{<} \Exp{\left(\mS^{t+1}\right)^{\top}\mS^{t+1}}g^t, g^t \bigg{>}\\
			& \leq \tf(x^t)-\gamma  \bigg{<} \nabla\tf(x^t), g^t \bigg{>} + \frac{L_fL_{\mathcal{D}}\gamma^2}{2}\left\|g^t\right\|^2.\\
		\end{split}
	\end{equation*}
	Taking the expectation with respect to $\mS^{t},$ conditional on $x^t,$ using Lemma~\ref{lemma:ac} and \eqref{eq:abc} we have that
	\begin{equation*}
		\begin{split}
			\Exp{\tf(x^{t+1})|x^t} & \leq \tf(x^t)-\gamma  \bigg{<} \nabla\tf(x^t), \Exp{g^t|x^t} \bigg{>} + \frac{L_fL\gamma^2}{2}\Exp{\left\|g^t\right\|^2|x^t}\\
			& = \tf(x^t)-\gamma  \left\|\nabla\tf(x^t)\right\|^2\\
                & + \frac{L_fL_{\mathcal{D}}\gamma^2}{2}\Exp{2A\left(f_{\mS_t}(x^t) - f^{\inf}_{\mS_t}\right) + B\left\|\nabla f_{\mS_t}(x^t)\right\|^2 + C}\\
			& \leq \tf(x^t)-\gamma  \left\|\nabla\tf(x^t)\right\|^2 + AL_fL_{\mathcal{D}}\gamma^2\left(\tf(x^t) - \tf^{\inf}\right)\\
			& + \frac{L_fL_{\mathcal{D}}B\gamma^2}{2}\left(2L_fL_{\mS}^{\max} \left(\tf(x^t) - \tf^{\inf}\right)+ 2L_fL_{\mS}^{\max} \left(\tf^{\inf} - f^{\inf}\right)\right)\\
			& + \frac{L_fL_{\mathcal{D}}\gamma^2C}{2} + AL_fL_{\mathcal{D}}\gamma^2\left(\tf^{\inf} - \Exp{f_{\mS^{t}}^{\inf}}\right).\\
		\end{split}
	\end{equation*}
	Substitute $\tf^{\inf}$ from both sides, take expectation on both sides and use the tower property:
	\begin{equation*}
		\begin{split}
			\Exp{\tf(x^{t+1}) - \tf^{\inf}} & \leq \Exp{\tf(x^{t}) - \tf^{\inf}}\left(1+D_{A,B}L_fL_{\mathcal{D}}\gamma^2\right) - \gamma  \Exp{\left\|\nabla\tf(x^t)\right\|^2} \\
			& + \frac{L_fL_{\mathcal{D}}\gamma^2}{2}\left(2\left(\tf^{\inf} - f^{\inf}\right)D_{A,B} + C\right).
		\end{split}
	\end{equation*}
	We obtain that
	\begin{equation*}
		\begin{split}
			\gamma r^t &\leq \left(1+D_{A,B}L_fL_{\mathcal{D}}\gamma^2\right)\delta^t- \delta^{t+1}
			+ \frac{L_fL_{\mathcal{D}}\gamma^2}{2}\left(2\left(\tf^{\inf} - f^{\inf}\right)D_{A,B} + C\right).
		\end{split}
	\end{equation*}
	Notice that the iterates $\{x^t\}_{t\geq 0}$ of Algorithm~\ref{eq:abc_sgd} satisfy condition~\eqref{eq:initial_recursion} of Lemma~\ref{lemma:noncvx_weighted} with $M_1=D_{A,B}L_fL_{\mathcal{D}},$
	$$
	M_2=\frac{L_fL_{\mathcal{D}}}{2}\left(2\left(\tf^{\inf} - f^{\inf}\right)D_{A,B} + C\right)
	.
	$$
	Therefore, for any $T\geq 1,$ the iterates $\{x^t\}_{t=0}^{T-1}$ of Algorithm~\ref{eq:abc_sgd} satisfy
	\begin{equation*}
		\begin{split}
			\sum_{t=0}^{T-1}w_tr^t & \leq \frac{w_{-1}}{\gamma}\delta^0 - \frac{w_{T-1}}{\gamma}\delta^{T}\\
			& + \frac{L_fL_{\mathcal{D}}\gamma}{2}\left(2A\left(\tf^{\inf} - \Exp{f_{\mS^{t}}^{\inf}}\right)+2BL_fL_{\mS}^{\max} \left(\tf^{\inf} - f^{\inf}\right) + C\right)\sum_{t=0}^{T-1}w_t.
		\end{split}
	\end{equation*}
	Divide both sides by $\sum_{t=0}^{T-1}w_t.$ From $\sum_{t=0}^{T-1}w_t\geq Tw_{T-1} = \frac{Tw_{-1}}{1+D_{A,B}L_fL_{\mathcal{D}}\gamma^2}$ we can conclude that
	\begin{equation*}
		\begin{split}
			\min_{0 \leq t < T}r^t &\leq \frac{\left(1+D_{A,B}L_fL_{\mathcal{D}}\gamma^2\right)^T}{\gamma T}\delta^0
			+ \frac{L_fL_{\mathcal{D}}\gamma}{2}\left(2\left(\tf^{\inf} - 	f^{\inf}\right)D_{A,B} + C\right).
		\end{split}
	\end{equation*}
\end{proof}
\begin{corollary}
	Fix $\varepsilon>0.$ Choose the stepsize $\gamma>0$ as
	\begin{equation*}
		\begin{split}
			\gamma &= \min\left\lbrace \frac{1}{\sqrt{L_fL_{\mathcal{D}}D_{A,B} T}}, \frac{\varepsilon^2}{L_fL_{\mathcal{D}}\left(2\left(\tf^{\inf} - 	f^{\inf}\right)D_{A,B} + C\right)} \right\rbrace.
		\end{split}
	\end{equation*}
	Then, provided that
	\begin{equation*}
		\begin{split}
			T & \geq \frac{6\delta^0L_fL_{\mathcal{D}}}{\varepsilon^4}\max\left\lbrace 6\delta^0D_{A,B},\;\;2\left(\tf^{\inf} - 	f^{\inf}\right)D_{A,B} + C\right\rbrace,
		\end{split}
	\end{equation*}
	we have
	\begin{equation*}
		\min_{0 \leq t < T}\Exp{\left\|\nabla \tf(x^t)\right\|^2}\leq\varepsilon^2.
	\end{equation*}
\end{corollary}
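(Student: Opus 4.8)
The plan is to derive the corollary directly from the convergence estimate~\eqref{eq:main_noncvx_abc}, namely
\[
\min_{0 \leq t < T} r^t \leq \frac{\left(1+D_{A,B}L_fL_{\mathcal{D}}\gamma^2\right)^T}{\gamma T}\,\delta^0 + \frac{L_fL_{\mathcal{D}}\gamma}{2}\left(2\left(\tf^{\inf} - f^{\inf}\right)D_{A,B} + C\right),
\]
and to show that the prescribed stepsize together with the lower bound on $T$ force each of the two summands on the right to be at most $\varepsilon^2/2$; adding them then gives $\min_{0\leq t<T}r^t\leq\varepsilon^2$, which is the claim since $r^t=\Exp{\norm{\nabla\tf(x^t)}^2}$.

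First I would bound the second (``variance'') term. Since the stepsize obeys $\gamma \leq \varepsilon^2/\left(L_fL_{\mathcal{D}}\left(2(\tf^{\inf}-f^{\inf})D_{A,B}+C\right)\right)$, multiplying through immediately yields $\frac{L_fL_{\mathcal{D}}\gamma}{2}\left(2(\tf^{\inf}-f^{\inf})D_{A,B}+C\right)\leq\frac{\varepsilon^2}{2}$. This is the exact analogue of the corresponding step in the corollaries accompanying Theorem~\ref{thm:str_conv} and Theorem~\ref{thm:sgd_nonconvex}.

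Next I would control the first term. Because $\gamma\leq 1/\sqrt{L_fL_{\mathcal{D}}D_{A,B}T}$, we get $T D_{A,B}L_fL_{\mathcal{D}}\gamma^2\leq 1$, hence $\left(1+D_{A,B}L_fL_{\mathcal{D}}\gamma^2\right)^T\leq\exp\!\left(T D_{A,B}L_fL_{\mathcal{D}}\gamma^2\right)\leq e\leq 3$, so the first term is at most $3\delta^0/(\gamma T)$ and it suffices to ensure $\gamma T\geq 6\delta^0/\varepsilon^2$. I would then distinguish the two cases according to which argument attains the minimum in the definition of $\gamma$. If $\gamma=1/\sqrt{L_fL_{\mathcal{D}}D_{A,B}T}$, then $\gamma T=\sqrt{T}/\sqrt{L_fL_{\mathcal{D}}D_{A,B}}$ and $\gamma T\geq 6\delta^0/\varepsilon^2$ is equivalent to $T\geq 36(\delta^0)^2 L_fL_{\mathcal{D}}D_{A,B}/\varepsilon^4$. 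If instead $\gamma=\varepsilon^2/\left(L_fL_{\mathcal{D}}\left(2(\tf^{\inf}-f^{\inf})D_{A,B}+C\right)\right)$, then $\gamma T\geq 6\delta^0/\varepsilon^2$ is equivalent to $T\geq 6\delta^0 L_fL_{\mathcal{D}}\left(2(\tf^{\inf}-f^{\inf})D_{A,B}+C\right)/\varepsilon^4$. Taking the larger of these two thresholds reproduces exactly the stated hypothesis $T\geq\frac{6\delta^0 L_fL_{\mathcal{D}}}{\varepsilon^4}\max\!\left\{6\delta^0 D_{A,B},\ 2(\tf^{\inf}-f^{\inf})D_{A,B}+C\right\}$, the first entry of the $\max$ coming from $36(\delta^0)^2$ after factoring out $6\delta^0$.

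Combining the two bounds gives $\min_{0\leq t<T}r^t\leq\varepsilon^2/2+\varepsilon^2/2=\varepsilon^2$, which is the desired $\cO(\varepsilon^{-4})$ iteration complexity. I do not anticipate any real obstacle: this is a routine tuning of a two-term $\cO(1/\sqrt{T})$ rate, strictly parallel to Corollary~\ref{cor:sgd_nonconvex}. The only point demanding a little care is the bookkeeping across the two branches of the $\min$ defining $\gamma$ and checking that the resulting pair of $T$-thresholds collapses precisely into the single $\max$ expression in the statement, together with the harmless slack $e\leq 3$ used to absorb the geometric factor $\left(1+D_{A,B}L_fL_{\mathcal{D}}\gamma^2\right)^T$.
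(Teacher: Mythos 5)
Your proposal is correct and follows essentially the same route as the paper's proof: bound the variance term by $\varepsilon^2/2$ via the second branch of the stepsize, absorb the geometric factor via $\left(1+D_{A,B}L_fL_{\mathcal{D}}\gamma^2\right)^T\leq e\leq 3$, and split into the two cases of the $\min$ to obtain the two $T$-thresholds that form the stated $\max$. No gaps.
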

\begin{proof}
	Since $\gamma \leq \frac{\varepsilon^2}{L_fL_{\mathcal{D}}\left(2\left(\tf^{\inf} - 	f^{\inf}\right)D_{A,B} + C\right)},$ we obtain
	\begin{equation*}
		\frac{L_fL_{\mathcal{D}}\gamma}{2}\left(2\left(\tf^{\inf} - 	f^{\inf}\right)D_{A,B} + C\right) \leq \frac{\varepsilon^2}{2}.
	\end{equation*}

	Since $\gamma\leq \frac{1}{\sqrt{L_fL_{\mathcal{D}}D_{A,B} T}},$ we deduce that
	\begin{equation*}
		\begin{split}
			\left(1+D_{A,B}L_fL_{\mathcal{D}}\gamma^2\right)^T &\leq \exp\left(TL_fL_{\mathcal{D}}\gamma^2D_{A,B}\right)\\
			&\leq\exp(1)\leq 3.
		\end{split}
	\end{equation*}
	If $\gamma = \frac{1}{\sqrt{L_fL_{\mathcal{D}}D_{A,B} T}},$ then, since
	\begin{equation*}
		T\geq \frac{36\left(\delta^0\right)^2L_fL_{\mathcal{D}}D_{A,B}}{\varepsilon^4},
	\end{equation*}
	we have $\frac{3\delta^0}{\gamma T}\leq \frac{\varepsilon^2}{2}.$ Further, if $\gamma= \frac{\varepsilon^2}{L_fL_{\mathcal{D}}\left(2\left(\tf^{\inf} - 	f^{\inf}\right)D_{A,B} + C\right)},$ then, since
	\begin{equation*}
		T\geq \frac{6\delta^0L_fL_{\mathcal{D}}\left(2\left(\tf^{\inf} - 	f^{\inf}\right)D_{A,B} + C\right)}{\varepsilon^4},
	\end{equation*}
	we have $\frac{3\delta^0}{\gamma T}\leq \frac{\varepsilon^2}{2}.$
	Combining it with $\eqref{eq:main_noncvx_abc},$ we arrive at $\min_{0 \leq t < T}\Exp{\left\|\nabla \tf(x^t)\right\|^2}\leq\varepsilon^2.$
\end{proof}

\subsection{Strongly convex analysis}
\begin{theorem} \label{eq:ABC_strcvx_res}
Let Assumptions~\ref{ass:C},~\ref{ass:l_f_smooth},~\ref{ass:mu_f-convex},~\ref{eq:unbiased_abc}~and~\ref{eq:abc} hold. Let $r^t\eqdef x^t - x_\cD^\star,$ $t\geq 0.$ Choose a stepsize $0<\gamma \leq \frac{1}{D_{A,B}}.$ Then the iterates $\{x^t\}_{t\geq 0}$ of Algorithm~\ref{eq:abc_sgd} satisfy
	\begin{equation}\label{eq:main_stroncvx_abc}
		\Exp{\left\|r^{t}\right\|^2} \leq \left(1-\gamma\mu_{\mathcal{D}}\mu_f\right)^t\left\|r^0\right\|^2 + \frac{\gamma\left(2\left(\tf^{\inf} - f^{\inf}\right)D_{A,B}+C\right)}{\mu_{\mathcal{D}}\mu_f}.
	\end{equation}
\end{theorem}
\begin{proof}
We get
\begin{equation*}
	\begin{split}
		\left\|r^{t+1}\right\|^2 & = \left\|\left(x^t-\gamma g^t \right) - x_\cD^\star\right\|^2  = \left\|x^t-x_\cD^\star - \gamma g^t\right\|^2 \\
		& = \left\|r^{t}\right\|^2 - 2\gamma\big{<}r^t, g^t\big{>} + \gamma^2\left\|g^t\right\|^2.
	\end{split}
\end{equation*}
Now we compute expectation of both sides of the inequality with respect to $\mS^{t}$, conditioned on $x^t,$ use the fact that $f$ is continuously differentiable and use \eqref{eq:abc}:
\begin{equation*}
	\begin{split}
		\mathbb{E}\left[\left\|r^{t+1}\right\|^2|x^t\right] &= \left\|r^{t}\right\|^2 - 2\gamma\big{<}r^t, \Exp{g^t|x^t}\big{>} + \gamma^2\Exp{\left\|g^t\right\|^2|x^t}\\
		& \leq \left\|r^{t}\right\|^2 - 2\gamma\big{<}r^t, \Exp{\nabla f_{\mS^{t}}(x^t)}\big{>}\\
  &+\gamma^2\Exp{\left(2A\left(f_{\mS_t}(x^t) - f^{\inf}_{\mS_t}\right) + B\left\|\nabla f_{\mS_t}(x^t)\right\|^2 + C\right)}\\
		& = \left\|r^{t}\right\|^2 - 2\gamma\big{<}r^t, \nabla \tf(x^t)\big{>} + 2A\gamma^2\left(\tf(x^t) - \tf^{\inf}\right)+\gamma^2B\Exp{\left\|\nabla f_{\mS_t}(x^t)\right\|^2}\\
		& +\gamma^2\left(C + 2A\left(\tf^{\inf} - \Exp{f_{\mS_t}^{\inf}}\right)\right).\\
	\end{split}
\end{equation*}
Since $\tf$ is $\mu_{\mathcal{D}}\mu_f$-convex, we conclude that $\big{<}r^t, \nabla \tf(x^t)\big{>}\geq \tf(x^t) - \tf^{\inf} + \frac{\mu_{\mathcal{D}}\mu_f}{2}\left\|r^t\right\|^2.$ Therefore, taking the expectation and using the tower property, we obtain
\begin{equation*}
	\begin{split}
		\Exp{\left\|r^{t+1}\right\|^2} & \leq \left(1-\gamma\mu_{\mathcal{D}}\mu_f\right)\left\|r^{t}\right\|^2 - 2\gamma\left(\tf(x) - \tf^{\inf}\right)\left(1 - \gamma D_{A,B}\right)\\ & + \gamma^2\left(2\left(\tf^{\inf} - f^{\inf}\right)D_{A,B}+C\right).\\
	\end{split}
\end{equation*}
Since $\gamma \leq \frac{1}{D_{A,B}},$ we get
\begin{equation*}
	\begin{split}
		\Exp{\left\|r^{t+1}\right\|^2} & \leq \left(1-\gamma\mu_{\mathcal{D}}\mu_f\right)\Exp{\left\|r^{t}\right\|^2}
		+ \gamma^2\left(2\left(\tf^{\inf} - f^{\inf}\right)D_{A,B}+C\right).\\
	\end{split}
\end{equation*}
Unrolling the recurrence, we get
\begin{equation*}
	\Exp{\left\|r^{t}\right\|^2} \leq \left(1-\gamma\mu_{\mathcal{D}}\mu_f\right)^t\left\|r^0\right\|^2 + \frac{\gamma\left(2\left(\tf^{\inf} - f^{\inf}\right)D_{A,B}+C\right)}{\mu_{\mathcal{D}}\mu_f}.
\end{equation*}
\end{proof}
\begin{corollary}
	Fix $\delta > 0.$ Choose the stepsize $\gamma>0$ as
	\begin{equation*}
		\gamma = \min\left\lbrace \frac{1}{D_{A,B}}, \frac{\mu_{\mathcal{D}}\mu_f\delta\left\|r^0\right\|^2}{2\left(\tf^{\inf} - f^{\inf}\right)D_{A,B}+C}\right\rbrace.
	\end{equation*}
	Then, provided that
	\begin{equation*}
		t \geq \frac{1}{\mu_{\mathcal{D}}\mu_f}\left\lbrace D_{A,B}, \frac{2\left(\tf^{\inf} - f^{\inf}\right)D_{A,B}+C}{\mu_{\mathcal{D}}\mu_f\delta\left\|r^0\right\|^2}\right\rbrace \log\frac{1}{\delta},
	\end{equation*}
	we have $\Exp{\left\|r^{t}\right\|^2} \leq 2\delta\left\|r^0\right\|^2.$
\end{corollary}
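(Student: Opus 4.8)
The plan is to reuse, essentially verbatim, the argument structure of the strongly convex corollary that follows Theorem~\ref{thm:str_conv}. Everything descends from the unrolled recursion~\eqref{eq:main_stroncvx_abc}, which already splits $\Exp{\left\|r^{t}\right\|^2}$ into a geometric \say{bias} term $(1-\gamma\mu_{\mathcal{D}}\mu_f)^t\|r^0\|^2$ and a stationary \say{variance} term $\gamma\bigl(2(\tf^{\inf}-f^{\inf})D_{A,B}+C\bigr)/(\mu_{\mathcal{D}}\mu_f)$. It suffices to force each of these two quantities to be at most $\delta\|r^0\|^2$; adding them then gives the claimed bound $2\delta\|r^0\|^2$.

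First I would dispatch the variance term. The definition of $\gamma$ guarantees $\gamma\le \mu_{\mathcal{D}}\mu_f\delta\|r^0\|^2/\bigl(2(\tf^{\inf}-f^{\inf})D_{A,B}+C\bigr)$, and rearranging this inequality is precisely the statement that the variance term is $\le \delta\|r^0\|^2$. No condition on $t$ is used at this step.

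Next I would handle the bias term. Using $1-u\le e^{-u}$ for $u\ge 0$ gives $(1-\gamma\mu_{\mathcal{D}}\mu_f)^t\le \exp(-\gamma\mu_{\mathcal{D}}\mu_f t)$, which is $\le\delta$ exactly when $t\ge \log(1/\delta)/(\gamma\mu_{\mathcal{D}}\mu_f)$. Because $\gamma$ is the minimum of $1/D_{A,B}$ and the other quantity, its reciprocal is the corresponding maximum, so $\log(1/\delta)/(\gamma\mu_{\mathcal{D}}\mu_f)$ equals $\tfrac{1}{\mu_{\mathcal{D}}\mu_f}\max\bigl\{D_{A,B},\,(2(\tf^{\inf}-f^{\inf})D_{A,B}+C)/(\mu_{\mathcal{D}}\mu_f\delta\|r^0\|^2)\bigr\}\log(1/\delta)$, which matches the iteration count displayed in the statement (the braces there should be read as a $\max$). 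Equivalently, one may split into the case $\gamma=1/D_{A,B}$ and the case where $\gamma$ equals the second expression, verifying $\exp(-\gamma\mu_{\mathcal{D}}\mu_f t)\le\delta$ in each, exactly as done after Theorem~\ref{thm:str_conv}.

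I do not anticipate any genuine obstacle: this is a routine two-term, two-case bookkeeping argument. The only points worth double-checking are that the chosen $\gamma$ indeed satisfies $\gamma\le 1/D_{A,B}$, so that the recursion~\eqref{eq:main_stroncvx_abc} is legitimately invoked (immediate from the $\min$), and that the braces in the stated lower bound on $t$ denote a maximum, which is what makes the bias bound hold for whichever of the two branches defines $\gamma$.
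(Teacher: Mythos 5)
Your proposal is correct and follows essentially the same route as the paper's proof: the stepsize upper bound kills the stationary term, and the two-case (equivalently, $\max$-of-reciprocals) analysis of $\exp(-\gamma\mu_{\mathcal{D}}\mu_f t)\leq\delta$ handles the geometric term, with the two contributions summing to $2\delta\|r^0\|^2$. Your reading of the braces in the iteration bound as a $\max$ (an omission in the statement) matches the paper's intent.
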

\begin{proof}
	Since $\gamma\leq \frac{\mu_{\mathcal{D}}\mu_f\delta\left\|r^0\right\|^2}{2\left(\tf^{\inf} - f^{\inf}\right)D_{A,B}+C},$
	we have that
	\begin{equation*}
		\frac{\gamma\left(2\left(\tf^{\inf} - f^{\inf}\right)D_{A,B}+C\right)}{\mu_{\mathcal{D}}\mu_f} \leq\delta\left\|r^0\right\|^2.
	\end{equation*}
	If $\gamma= \frac{\mu_{\mathcal{D}}\mu_f\delta\left\|r^0\right\|^2}{2\left(\tf^{\inf} - f^{\inf}\right)D_{A,B}+C},$ then, since
	\begin{equation*}
		t\geq \frac{2\left(\tf^{\inf} - f^{\inf}\right)D_{A,B}+C}{\mu_{\mathcal{D}}^2\mu_f^2\delta\left\|r^0\right\|^2}\log\frac{1}{\delta},
	\end{equation*}
	we obtain that
	\begin{equation*}
		\left(1-\gamma\mu_{\mathcal{D}}\mu_f\right)^t\leq\exp\left(-\gamma\mu_{\mathcal{D}}\mu_ft\right)\leq\delta.
	\end{equation*}
	Further, if $\gamma=\frac{1}{D_{A,B}},$ then, since
	\begin{equation*}
		t\geq \frac{D_{A,B}}{\mu_{\mathcal{D}}\mu_f}\log\frac{1}{\delta},
	\end{equation*}
	we obtain that
	\begin{equation*}
		\left(1-\gamma\mu_{\mathcal{D}}\mu_f\right)^t\leq\exp\left(-\gamma\mu_{\mathcal{D}}\mu_ft\right)\leq\delta.
	\end{equation*}
	Thus, combining it with~\eqref{eq:main_stroncvx_abc}, we arrive at $\Exp{\left\|r^{t}\right\|^2} \leq 2\delta\left\|r^0\right\|^2.$
\end{proof}
\subsection{Convex analysis}
\begin{theorem}
	Let Assumptions~\ref{ass:C},~\ref{ass:l_f_smooth},~\ref{ass:minimizers_exist},~\ref{eq:unbiased_abc}~and~\ref{eq:abc} hold. Let $f$ be convex. Choose a stepsize $0<\gamma\leq\frac{1}{2D_{A,B}}.$ Fix $T\geq 1$ and let $\bar{x}^T$ be chosen uniformly from the iterates $x^0,\ldots,x^{T-1}$ of Algorithm~\ref{eq:abc_sgd}. Then
	\begin{equation}\label{eq:main_cvx_abc}
		\begin{split}
			\Exp{\tf(\bar{x}^t) - \tf^{\inf}} \leq \frac{\norm{r^0}^2}{\gamma T} + 2\gamma\left(\tf^{\inf} - f^{\inf}\right)D_{A,B}+\gamma C,
		\end{split}
	\end{equation}
	where $r^t\eqdef x^t - x_\cD^\star,$ $t\in\{0,\ldots,T-1\}.$
\end{theorem}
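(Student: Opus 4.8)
The plan is to transcribe the convex analysis of Double Sketched $\mathsf{GD}$ given above (the $A=C=0$, $B=1$ case) while carrying the $\mathsf{ABC}$ constants and, crucially, the minimizers $f_{\mS}^{\inf}$ of the sketched losses through every estimate. First I would expand the one-step recursion for $r^t \eqdef x^t - x_\cD^\star$,
\[
    \norm{r^{t+1}}^2 = \norm{r^t}^2 - 2\gamma \ve{g^t}{x^t - x_\cD^\star} + \gamma^2 \norm{g^t}^2,
\]
and take expectation conditional on $x^t$ with respect to both the sketch $\mS^t$ and the estimator randomness. By unbiasedness \eqref{eq:unbiased_abc} the cross term becomes $-2\gamma \ve{\nabla \tf(x^t)}{x^t - x_\cD^\star}$, and convexity of $\tf$ (Lemma~\ref{lem:convex-only}, which also guarantees $\tf^{\inf}\ge f^{\inf}$; existence of $x_\cD^\star$ is Assumption~\ref{ass:minimizers_exist}) bounds this by $-2\gamma(\tf(x^t) - \tf^{\inf})$.

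Next I would bound $\Exp{\norm{g^t}^2 \mid x^t}$: conditioning first on $\mS^t$, apply \eqref{eq:abc} to get $2A(f_{\mS^t}(x^t) - f_{\mS^t}^{\inf}) + B\norm{\nabla f_{\mS^t}(x^t)}^2 + C$, then take expectation over $\mS^t$. Here Lemma~\ref{lemma:ac} controls $\Exp{\norm{\nabla f_{\mS^t}(x^t)}^2} \le 2 L_f L_{\mS}^{\max}(\tf(x^t) - f^{\inf})$, and since $f_{\mS}(x)\ge f^{\inf}$ pointwise in $\mS$ we have $\Exp{f_{\mS^t}^{\inf}} \ge f^{\inf}$, hence $\Exp{f_{\mS^t}(x^t) - f_{\mS^t}^{\inf}} \le \tf(x^t) - f^{\inf}$. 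Splitting $\tf(x^t) - f^{\inf} = (\tf(x^t) - \tf^{\inf}) + (\tf^{\inf} - f^{\inf})$ and collecting $A + B L_f L_{\mS}^{\max} = D_{A,B}$ yields
\[
    \Exp{\norm{g^t}^2 \mid x^t} \le 2 D_{A,B}\bigl(\tf(x^t) - \tf^{\inf}\bigr) + 2 D_{A,B}\bigl(\tf^{\inf} - f^{\inf}\bigr) + C .
\]

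Combining the two estimates gives
\[
    \Exp{\norm{r^{t+1}}^2 \mid x^t} \le \norm{r^t}^2 - 2\gamma(1 - \gamma D_{A,B})\bigl(\tf(x^t) - \tf^{\inf}\bigr) + \gamma^2\bigl(2 D_{A,B}(\tf^{\inf} - f^{\inf}) + C\bigr),
\]
and the stepsize restriction $\gamma \le 1/(2 D_{A,B})$ ensures $1 - \gamma D_{A,B} \ge \tfrac12$, so the descent term dominates $-\gamma(\tf(x^t) - \tf^{\inf})$. I would then rearrange to isolate $\gamma \Exp{\tf(x^t) - \tf^{\inf}}$, take total expectation, sum over $t = 0,\dots,T-1$ with telescoping of the $\norm{r^t}^2$ terms, drop the nonnegative $\Exp{\norm{r^T}^2}$, divide by $\gamma T$, and finally pass from the average of $\Exp{\tf(x^t) - \tf^{\inf}}$ to $\Exp{\tf(\bar x^T) - \tf^{\inf}}$ using Jensen's inequality and convexity of $\tf$ (or simply the definition of the uniformly sampled iterate). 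This produces exactly \eqref{eq:main_cvx_abc}.

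The only delicate point—more bookkeeping than a genuine obstacle—is the treatment of the $f_{\mS}^{\inf}$ terms in \eqref{eq:abc}: one must keep $\Exp{f_{\mS}^{\inf}}$ distinct from both $f^{\inf}$ and $\tf^{\inf}$, use $f^{\inf} \le \Exp{f_{\mS}^{\inf}}$ to absorb the $A$-contribution cleanly into $D_{A,B}(\tf^{\inf} - f^{\inf})$, and verify that the $B$-contribution routed through Lemma~\ref{lemma:ac} combines with the $A$-contribution to give precisely the coefficient $D_{A,B} = A + B L_f L_{\mS}^{\max}$ in front of both $\tf(x^t) - \tf^{\inf}$ and $\tf^{\inf} - f^{\inf}$.
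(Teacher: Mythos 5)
Your proposal is correct and follows essentially the same route as the paper's proof: the same one-step expansion of $\norm{r^{t+1}}^2$, unbiasedness for the cross term, convexity of $\tf$, the ABC inequality combined with Lemma~\ref{lemma:ac} and the bound $\Exp{f_{\mS}^{\inf}} \geq f^{\inf}$ to collect the coefficient $D_{A,B}$ in front of both $\tf(x^t)-\tf^{\inf}$ and $\tf^{\inf}-f^{\inf}$, then the stepsize condition, telescoping, and Jensen's inequality. No gaps.
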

\begin{proof}
 We get
\begin{equation*}
	\begin{split}
		\left\|r^{t+1}\right\|^2 & = \left\|\left(x^t-\gamma g^t \right) - x_\cD^\star\right\|^2  = \left\|x^t-x_\cD^\star - \gamma g^t\right\|^2 \\
		& = \left\|r^{t}\right\|^2 - 2\gamma\big{<}r^t, g^t\big{>} + \gamma^2\left\|g^t\right\|^2.
	\end{split}
\end{equation*}
Now we compute expectation of both sides of the inequality with respect to $\mS^{t}$, conditioned on $x^t,$ use the fact that $f$ is continuously differentiable and use~\eqref{eq:abc}:
\begin{equation*}
	\begin{split}
		\mathbb{E}\left[\left\|r^{t+1}\right\|^2|x^t\right] &= \left\|r^{t}\right\|^2 - 2\gamma\big{<}r^t, \Exp{g^t|x^t}\big{>} + \gamma^2\Exp{\left\|g^t\right\|^2|x^t}\\
		& \leq \left\|r^{t}\right\|^2 - 2\gamma\big{<}r^t, \Exp{\nabla f_{\mS^{t}}(x^t)}\big{>}\\ &+\gamma^2\Exp{\left(2A\left(f_{\mS_t}(x^t) - f^{\inf}_{\mS_t}\right) + B\left\|\nabla f_{\mS_t}(x^t)\right\|^2 + C\right)}\\
		& = \left\|r^{t}\right\|^2 - 2\gamma\big{<}r^t, \nabla \tf(x^t)\big{>} + 2A\gamma^2\left(\tf(x^t) - \tf^{\inf}\right)+\gamma^2B\Exp{\left\|\nabla f_{\mS_t}(x^t)\right\|^2}\\
		& +\gamma^2\left(C + 2A\left(\tf^{\inf} - \Exp{f_{\mS_t}^{\inf}}\right)\right).\\
	\end{split}
\end{equation*}
We can use the convexity of $f$ and Lemma~\ref{lemma:ac} to obtain:
\begin{equation*}
	\begin{split}
		\Exp{\norm{r^{t+1}}^2} & = \norm{r^t}^2 - 2\gamma\left(\tf(x^t) - \tf^{\inf}\right)\left(1-A\gamma\right)+ 2\gamma^2BL_fL_{\mS}^{\max} \left(\tf(x^t) - \tf^{\inf}\right)\\
		& + 2\gamma^2BL_fL_{\mS}^{\max} \left(\tf^{\inf} - f^{\inf}\right) + \gamma^2\left(C + 2A\left(\tf^{\inf} - \Exp{f_{\mS_t}^{\inf}}\right)\right)\\
		& \leq \norm{r^t}^2 - 2\gamma\left(\tf(x^t) - \tf^{\inf}\right)\left(1 - \gamma D_{A,B} \right)\\
		& + 2\gamma^2BL_fL_{\mS}^{\max} \left(\tf^{\inf} - f^{\inf}\right) + \gamma^2\left(C + 2A\left(\tf^{\inf} - f^{\inf}\right)\right).\\
	\end{split}
\end{equation*}
Rearranging and taking expectation, taking into account the condition on the stepsize, we have
\begin{equation*}
	\begin{split}
		\gamma\left(\tf(x^t) - \tf^{\inf}\right) &\leq \Exp{\norm{r^t}^2} - \Exp{\norm{r^{t+1}}^2}
		+ 2\gamma^2\left(\tf^{\inf} - f^{\inf}\right)D_{A,B} + \gamma^2C.\\
	\end{split}
\end{equation*}
Summing over $t = 0,\ldots, T-1$ and using telescopic cancellation gives
\begin{equation*}
	\begin{split}
		\gamma\sum_{t=0}^{T-1}\left(\Exp{\tf(x^t) - \tf^{\inf}}\right) &\leq \norm{r^0}^2 - \Exp{\norm{r^{T}}^2}
		+ 2\gamma^2T\left(\tf^{\inf} - f^{\inf}\right)D_{A,B}+\gamma^2T C.\\
	\end{split}
\end{equation*}
Since $\Exp{\norm{r^{T}}^2} \geq 0$, dividing both sides by $\gamma T$ gives:
\begin{equation*}
	\begin{split}
		\frac{1}{T}\sum_{t=0}^{T-1}\left(\Exp{\tf(x^t) - \tf^{\inf}}\right) &\leq \frac{\norm{r^0}^2}{\gamma T} + 2\gamma\left(\tf^{\inf} - f^{\inf}\right)D_{A,B}+\gamma C.\\
	\end{split}
\end{equation*}
We treat the $\left(\frac{1}{T},\ldots,\frac{1}{T}\right)$ as if it is a probability vector.
Indeed, using that $\tf$ is convex together with Jensen's inequality gives
\begin{equation*}
	\begin{split}
		\Exp{\tf(\bar{x}^t) - \tf^{\inf}} \leq \frac{\norm{r^0}^2}{\gamma T} + \gamma\left(2\left(\tf^{\inf} - f^{\inf}\right)D_{A,B}+ C\right).\\
	\end{split}
\end{equation*}
\end{proof}
\begin{corollary}
	Fix $\delta>0.$ Choose the stepsize $\gamma>0$ as
	\begin{equation*}
		\gamma = \min\left\lbrace \frac{1}{2D_{A,B}}, \frac{\delta\norm{r^0}^2}{2\left(\tf^{\inf} - f^{\inf}\right)D_{A,B}+ C} \right\rbrace.
	\end{equation*}
	Then, provided that
	\begin{equation*}
		T\geq \frac{2L_f\lambda_m^{\mS}}{\delta}\max\left\lbrace 1, \frac{\tf^{\inf} - f^{\inf}}{\delta\norm{r^0}^2}\right\rbrace,
	\end{equation*}
	we have $\Exp{\tf(\bar{x}^t) - \tf^{\inf}}\leq 2\delta\norm{r^0}^2.$
\end{corollary}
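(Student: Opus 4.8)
The plan is to specialize the bound \eqref{eq:main_cvx_abc} from the preceding theorem,
\[
\Exp{\tf(\bar{x}^T) - \tf^{\inf}} \leq \frac{\norm{r^0}^2}{\gamma T} + \gamma\left(2\left(\tf^{\inf} - f^{\inf}\right)D_{A,B}+ C\right),
\]
to the prescribed constant stepsize and to force each of the two summands to be at most $\delta\norm{r^0}^2$; summing the two bounds then yields the claimed $2\delta\norm{r^0}^2$.

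First I would dispatch the second (variance-type) term: by the very definition of $\gamma$ as a minimum, $\gamma \leq \frac{\delta\norm{r^0}^2}{2(\tf^{\inf} - f^{\inf})D_{A,B}+ C}$, so $\gamma\left(2(\tf^{\inf} - f^{\inf})D_{A,B}+ C\right) \leq \delta\norm{r^0}^2$ with no extra work.

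Next I would handle the optimization term $\frac{\norm{r^0}^2}{\gamma T}$, which is at most $\delta\norm{r^0}^2$ precisely when $T \geq \frac{1}{\gamma\delta}$. I would split on which argument realizes the minimum defining $\gamma$: if $\gamma = \frac{1}{2D_{A,B}}$ then $\tfrac{1}{\gamma\delta} = \tfrac{2D_{A,B}}{\delta}$, while if $\gamma = \frac{\delta\norm{r^0}^2}{2(\tf^{\inf} - f^{\inf})D_{A,B}+ C}$ then $\tfrac{1}{\gamma\delta} = \tfrac{2(\tf^{\inf} - f^{\inf})D_{A,B}+ C}{\delta^2\norm{r^0}^2}$. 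Taking $T$ above the maximum of these two quantities — which is what the displayed lower bound on $T$ encodes (reading $L_f\lambda_m^{\mS}$ as $D_{A,B}$ and absorbing $C$ into the heterogeneity term $\tf^{\inf}-f^{\inf}$, as in the finite-sum instantiation after Theorem~\ref{thm:abc_ncvx}) — guarantees $\frac{\norm{r^0}^2}{\gamma T} \leq \delta\norm{r^0}^2$ in both cases. Adding the two $\delta\norm{r^0}^2$ bounds finishes the proof.

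This argument is structurally identical to the stepsize-tuning corollaries that follow Theorems~\ref{thm:sgd_nonconvex} and~\ref{thm:str_conv}, as well as the analogous convex corollary in the Double Sketched GD section, so I expect no genuine obstacle; the only mild care needed is the two-case book-keeping for $\gamma$ and tracking the factor of $2$ that arises from splitting the target accuracy $2\delta\norm{r^0}^2$ evenly between the optimization and variance terms.
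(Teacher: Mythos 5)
Your proposal is correct and follows the paper's own argument exactly: bound the variance term via the second branch of the $\gamma$-minimum, then case-split on which branch attains the minimum to show $\frac{\norm{r^0}^2}{\gamma T}\leq\delta\norm{r^0}^2$, and add. You also correctly identified that the displayed lower bound on $T$ in the statement carries over stale notation ($L_f\lambda_m^{\mS}$ instead of $D_{A,B}$, with $C$ omitted) from the preceding corollary; the paper's proof silently uses the corrected thresholds $\frac{2D_{A,B}}{\delta}$ and $\frac{2(\tf^{\inf}-f^{\inf})D_{A,B}+C}{\delta^2\norm{r^0}^2}$, just as you do.
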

\begin{proof}
	Since $\gamma\leq\frac{\delta\norm{r^0}^2}{2\left(\tf^{\inf} - f^{\inf}\right)D_{A,B}+ C},$ we have that
	\begin{equation*}
		\gamma\left(2\left(\tf^{\inf} - f^{\inf}\right)D_{A,B}+ C\right)\leq\delta\norm{r^0}^2.
	\end{equation*}
	If $\gamma = \frac{\delta\norm{r^0}^2}{2\left(\tf^{\inf} - f^{\inf}\right)D_{A,B}+ C},$ then, since
	\begin{equation*}
		T \geq \frac{2\left(\tf^{\inf} - f^{\inf}\right)D_{A,B}+ C}{\delta^2\norm{r^0}^2},
	\end{equation*}
	we obtain that $\frac{\norm{r^0}^2}{\gamma T}\leq \delta\norm{r^0}^2.$ Further, if $\gamma = \frac{1}{2D_{A,B}},$ then, since $T\geq \frac{2D_{A,B}}{\delta},$ we have $\frac{\norm{r^0}^2}{\gamma T}\leq\delta\norm{r^0}^2.$ Thus, combining it with~\eqref{eq:main_cvx_abc}, we arrive at $\Exp{\tf(\bar{x}^t) - \tf^{\inf}}\leq 2\delta\left\|r^0\right\|^2.$
\end{proof}
\clearpage
\section{Distributed setting}
Consider $f$ being a finite sum over a number of machines, i.e., we consider the distributed setup:
\[ \min_{x\in \R^d} \left[f_{\cD}(x) =  \frac{1}{n}\sum_{i=1}^n f_{i, \cD_i} (x)\right],\]
where $f_{i, \cD_i} \eqdef \Exp{f_{i, \mS_i}(x)} = \Exp{f_i(\shift + \mS_i (x-\shift))}$.

Recall that $D_{\max} = \max_{i}\left\lbrace L_{f_i}^2L_{D_i}L_{\mS_i}^{\max}\right\rbrace.$
\subsection{Nonconvex analysis: proof of theorem \ref{thm:distributed_ncvx}}
We solve the problem~\eqref{eq:pretrained_compressed_problem} with the method
\begin{equation}\label{alg:finsum_noncvx_appendix}
	x^{t+1} = x^t - \frac{\gamma}{n} \sum_{i=1}^{n}\left(\mS^{t}_i\right)^{\top} \nabla f_i(y^t)|_{y^t=\shift + \mS^{t}_i (x^t - \shift)}.
\end{equation}

\begin{theorem}
	Assume that each $f_i,$ $i\in[n],$ is differentiable, $L_{f_i}$-smooth and bounded from below by $f_i^{\inf}.$ For every $t\geq 0,$ put $\delta^t\eqdef\mathbb{E}\left[\tf(x^t)-\tf^{\inf}\right]$ and $r^t\eqdef\mathbb{E}\left[\left\|\nabla \tf(x^t)\right\|^2\right].$ Fix $T\geq 1.$ Then the iterates $\{x^t\}_{t=0}^{T-1}$ of Algorithm~\ref{alg:finsum_noncvx} satisfy
	\begin{equation}\label{eq:main_noncvx_finite_sum}
		\begin{split}
			\min_{0 \leq t < T}r^t &\leq \frac{\left(1+\gamma^2D_{\max}\right)^T}{\gamma T}\delta^0 + \gamma D_{\max}\left(\tf^{\inf} - \frac{1}{n}\sum_{i=1}^{n}f^{\inf}_i\right).
		\end{split}
	\end{equation}
\end{theorem}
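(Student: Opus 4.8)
The plan is to reproduce, in the finite-sum-over-nodes setting, the mechanism behind the single-node non-convex result (Theorem~\ref{thm:sgd_nonconvex}): derive a one-step descent inequality in exactly the form \eqref{eq:initial_recursion} required by Lemma~\ref{lemma:noncvx_weighted}, with constants $M_1 = D_{\max}$ and $M_2 = D_{\max}\bigl(\tf^{\inf} - \frac1n\sum_{i=1}^n f_i^{\inf}\bigr)$, and then invoke that lemma verbatim to obtain \eqref{eq:main_noncvx_finite_sum}; the stated stepsize $\gamma \le 1/\sqrt{D_{\max}T}$ makes $(1+D_{\max}\gamma^2)^T \le \exp(D_{\max}\gamma^2 T) \le \exp(1) \le 3$, which turns \eqref{eq:main_noncvx_finite_sum} into the displayed bound of Theorem~\ref{thm:distributed_ncvx}.

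First I would fix $t$, condition on $x^t$, and write $g^t = \frac1n\sum_{i=1}^n (\mS_i^t)^\top \nabla f_i(y_i^t)$, so that $x^{t+1} = x^t - \gamma g^t$. To evaluate $\tf(x^{t+1}) = \frac1n\sum_{j=1}^n \Exp{f_j(\shift+\mS_j(x^{t+1}-\shift))}$, I unfold, for each node $j$, the expectation over a fresh independent copy $\mS_j$ of node $j$'s sketch (independent of $g^t$) and apply $L_{f_j}$-smoothness of $f_j$ at the base point $\shift + \mS_j(x^t - \shift)$ with increment $-\gamma \mS_j g^t$ --- precisely as in the proof of Theorem~\ref{thm:sgd_nonconvex}. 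Integrating over $\mS_j$ converts the linear term into $-\gamma\langle\nabla f_{j,\cD_j}(x^t), g^t\rangle$ (since $\Exp{\mS_j^\top\nabla f_j(\shift+\mS_j(x^t-\shift))} = \nabla f_{j,\cD_j}(x^t)$ by the chain rule) and the quadratic term into at most $\tfrac{L_{f_j}L_{\cD_j}\gamma^2}{2}\|g^t\|^2$ (using $\Exp{\mS_j^\top\mS_j} \preceq L_{\cD_j}\mI$). Averaging over $j$ yields
\[
  \tf(x^{t+1}) \le \tf(x^t) - \gamma\langle\nabla\tf(x^t), g^t\rangle + \frac{\gamma^2}{2}\Bigl(\frac1n\sum_{j=1}^n L_{f_j}L_{\cD_j}\Bigr)\|g^t\|^2 .
\]

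Next I would take expectation over the step-$t$ sketches $\{\mS_i^t\}_{i=1}^n$, conditioned on $x^t$. Unbiasedness of the per-node sketched gradients, $\Exp{g^t\mid x^t} = \frac1n\sum_i \nabla f_{i,\cD_i}(x^t) = \nabla\tf(x^t)$, collapses the cross term to $-\gamma\|\nabla\tf(x^t)\|^2$. For the second moment, convexity of $\|\cdot\|^2$ gives $\|g^t\|^2 \le \frac1n\sum_i \|\nabla f_{i,\mS_i^t}(x^t)\|^2$, and Lemma~\ref{lemma:ac} applied to each $f_i$ gives $\Exp{\|g^t\|^2\mid x^t} \le \frac2n\sum_i L_{f_i}L_{\mS_i}^{\max}\bigl(f_{i,\cD_i}(x^t) - f_i^{\inf}\bigr) \le 2\max_i\!\bigl(L_{f_i}L_{\mS_i}^{\max}\bigr)\bigl(\tf(x^t) - \frac1n\sum_i f_i^{\inf}\bigr)$, where the final step uses $f_{i,\cD_i}(x^t) \ge f_i^{\inf}$ termwise together with $\frac1n\sum_i f_{i,\cD_i} = \tf$. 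Substituting into the display, bounding $\frac1n\sum_j L_{f_j}L_{\cD_j} \le \max_j L_{f_j}L_{\cD_j}$, absorbing the resulting product of maxima into $D_{\max}$, writing $\tf(x^t) - \frac1n\sum_i f_i^{\inf} = \delta^t + \bigl(\tf^{\inf} - \frac1n\sum_i f_i^{\inf}\bigr)$, and then taking full expectation and applying the tower property, gives
\[
  \gamma r^t \le (1 + D_{\max}\gamma^2)\,\delta^t - \delta^{t+1} + D_{\max}\gamma^2\Bigl(\tf^{\inf} - \frac1n\sum_{i=1}^n f_i^{\inf}\Bigr).
\]
This is exactly \eqref{eq:initial_recursion}; feeding $M_1 = D_{\max}$, $M_2 = D_{\max}\bigl(\tf^{\inf} - \frac1n\sum_i f_i^{\inf}\bigr)$ into Lemma~\ref{lemma:noncvx_weighted}, dropping the nonpositive $-\frac{w_{T-1}}{\gamma}\delta^T$ term, and using $\sum_{t=0}^{T-1} w_t \ge T w_{T-1} = T w_{-1}/(1+D_{\max}\gamma^2)^T$, delivers \eqref{eq:main_noncvx_finite_sum}.

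The main obstacle is the two-level expectation bookkeeping: the "fresh" step-$(t+1)$ sketches are introduced only to express $\tf(x^{t+1})$, must be kept independent of $g^t$, and must be integrated out before the step-$t$ randomness is touched --- mishandling the conditioning is the easiest way to break the argument. A second, milder subtlety is reconciling the per-node constants into the single scalar $D_{\max}$: the smoothness pass produces the factor $\frac1n\sum_j L_{f_j}L_{\cD_j}$ while the gradient-norm pass produces $\max_i L_{f_i}L_{\mS_i}^{\max}$ (times the averaged suboptimality), and one wants their product bounded by $\max_i L_{f_i}^2 L_{\cD_i}L_{\mS_i}^{\max}$; this is clean when $L_{\mS_i}^{\max}$ is a monotone function of $L_{\cD_i}$ --- e.g.\ $L_{\mS_i}^{\max}=L_{\cD_i}^2$ for \rnd{$K$} and for Bernoulli sparsifiers --- or when the $L_{f_i}$ are common across nodes. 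The remaining steps --- bounding $(1+D_{\max}\gamma^2)^T$ under $\gamma\le 1/\sqrt{D_{\max}T}$ and the elementary rearrangement --- are routine, as in Corollary~\ref{cor:sgd_nonconvex}.
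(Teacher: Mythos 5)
Your overall architecture matches the paper's: derive a one-step recursion of the form \eqref{eq:initial_recursion} with $M_1=D_{\max}$ and $M_2=D_{\max}\bigl(\tf^{\inf}-\frac1n\sum_i f_i^{\inf}\bigr)$, then invoke Lemma~\ref{lemma:noncvx_weighted}. The difference, and the problem, lies in the smoothness step. You apply $L_{f_j}$-smoothness of each $f_j$ with the \emph{true} increment $-\gamma\mS_j g^t$, which produces the quadratic coefficient $\frac1n\sum_j L_{f_j}L_{\cD_j}$ multiplying $\|g^t\|^2$, and you then bound $\Exp{\|g^t\|^2}$ by $2\max_i\bigl(L_{f_i}L_{\mS_i}^{\max}\bigr)\bigl(\tf(x^t)-\frac1n\sum_i f_i^{\inf}\bigr)$. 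The resulting constant is $\bigl(\max_j L_{f_j}L_{\cD_j}\bigr)\bigl(\max_i L_{f_i}L_{\mS_i}^{\max}\bigr)$, which in general strictly exceeds $D_{\max}=\max_i L_{f_i}^2L_{\cD_i}L_{\mS_i}^{\max}$ (take $n=2$ with $L_{f_1}L_{\cD_1}$ large and $L_{f_1}L_{\mS_1}^{\max}$ small, and the reverse for node $2$). You flag this yourself, but the conditions under which the mismatch disappears (identical $L_{f_i}$, or $L_{\mS_i}^{\max}$ a monotone function of $L_{\cD_i}$) are not hypotheses of the theorem, so as written your argument establishes the bound only with this larger constant, not with $D_{\max}$. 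That is a genuine gap relative to the statement.

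For comparison, the paper obtains $D_{\max}$ by a per-node decoupling: it applies $L_{f_i}$-smoothness to $f_i(\shift+\mS_i^{t+1}(x^{t+1}-\shift))$ with the increment written as $-\gamma\mS_i^{t+1}\nabla f_{i,\mS_i^t}(x^t)$, i.e.\ node $i$'s own sketched gradient only, so that $L_{f_i}$, $L_{\cD_i}$ and $L_{\mS_i}^{\max}$ stay paired under a single index before the maximum is taken; this keeps the product inside the max and never forms a product of two separate maxima. Be aware, however, that the actual increment in recursion \eqref{alg:finsum_noncvx} is $-\gamma g^t=-\frac{\gamma}{n}\sum_j(\mS_j^t)^\top\nabla f_j(y^t_j)$ and not $-\gamma\nabla f_{i,\mS_i^t}(x^t)$; your version handles the increment faithfully, and the price of that fidelity is exactly the constant mismatch above. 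To recover the stated constant you would either have to reproduce and justify the paper's per-node substitution of the update direction, or restate the theorem with $D_{\max}$ replaced by $\bigl(\max_j L_{f_j}L_{\cD_j}\bigr)\bigl(\max_i L_{f_i}L_{\mS_i}^{\max}\bigr)$. The remaining ingredients of your plan (unbiasedness of $g^t$, Jensen for $\|g^t\|^2$, Lemma~\ref{lemma:ac} per node, the telescoping via Lemma~\ref{lemma:noncvx_weighted}, and the handling of $(1+D_{\max}\gamma^2)^T$) are all correct and coincide with the paper's.
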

\begin{proof}
	For $i\in[n],$ due to $L_{f_i}$-smoothness of $f_i,$ we have that
	\begin{equation*}
		\begin{split}
			f_i(s+\mS^{t+1}_i(x^{t+1} - s)) & \leq f_i(s+\mS^{t+1}_i(x^{t} - s))-\gamma \bigg{<} \nabla f_{i,\mS^{t+1}_i}\left(x^t\right), \nabla f_{i,\mS^{t}_i}(x^t) \bigg{>}\\
			& + \frac{L_{f_i}\gamma^2}{2}\bigg{<} \left(\mS^{t+1}_i\right)^{\top}\mS_i^{t+1}\nabla f_{i,\mS^{t}_i}(x^t), \nabla f_{i,\mS^{t}_i}(x^t) \bigg{>}.\\
		\end{split}
	\end{equation*}
	Taking the expectation with respect to $\mS_i^{t+1}$ yields
	\begin{equation*}
		\begin{split}
			f_{i, \cD_i}\left(x^{t+1}\right) &\leq f_{i, \cD_i}\left(x^{t}\right) - \gamma\bigg{<} \nabla f_{i, \cD_i}\left(x^t\right), \nabla f_{\mS_i^{t}}(x^t) \bigg{>} + \frac{L_{f_i}L_{\cD_i}\gamma^2}{2}\left\|\nabla f_{i,\mS_i^{t}}(x^t)\right\|^2.
		\end{split}
	\end{equation*}
	Conditioned on $x^t,$ take expectation with respect to $\mS^{t}_i:$
	\begin{equation*}
		\begin{split}
			\Exp{f_{i, \cD_i}\left(x^{t+1}\right)|x^t} \leq f_{i, \cD_i}\left(x^{t}\right) - \gamma \left\|\nabla f_{i, \cD_i}\left(x^t\right)\right\|^2 + \frac{L_{f_i}L_{\cD_i}\gamma^2}{2}\Exp{\left\|\nabla f_{i,\mS^{t}_i}(x^t) \right\|^2}.\\
		\end{split}
	\end{equation*}
	From Lemma~\ref{lemma:ac} we obtain that
	\begin{equation*}
		\begin{split}
			\Exp{f_{i, \cD_i}\left(x^{t+1}\right)|x^t} & \leq f_{i, \cD_i}\left(x^{t}\right) - \gamma \left\|\nabla f_{i, \cD_i}\left(x^t\right)\right\|^2 + L_{f_i}^2L_{\cD_i}L^{\max}_{\mS_i}\gamma^2 \left(f_{i, \cD_i}(x) - f_i^{\inf}\right).\\
		\end{split}
	\end{equation*}
	For every $i\in[n],$ sum these inequalities, divide by $n:$
	\begin{equation*}
		\begin{split}
			\Exp{\tf\left(x^{t+1}\right)|x^t} & \leq \tf\left(x^t\right) - \frac{\gamma}{n}\sum_{i=1}^{n}\left\|\nabla f_{i, \cD_i}\left(x^t\right)\right\|^2 + \frac{\gamma^2D_{\max}}{n}\sum_{i=1}^{n}\left(f_{i, \cD_i}(x^t) - f_i^{\inf}\right)\\
			& = \tf\left(x^t\right) - \frac{\gamma}{n}\sum_{i=1}^{n}\left\|\nabla f_{i, \cD_i}\left(x^t\right)\right\|^2
			+\gamma^2D_{\max}\left(\tf(x^t) - \frac{1}{n}\sum_{i=1}^{n}f_i^{\inf}\right)\\
			& = \tf\left(x^t\right) - \frac{\gamma}{n}\sum_{i=1}^{n}\left\|\nabla f_{i, \cD_i}\left(x^t\right)\right\|^2
			+\gamma^2D_{\max}\left(\tf(x^t) - \tf^{\inf}\right)\\
			& + \gamma^2D_{\max}\left(\tf^{\inf} - \frac{1}{n}\sum_{i=1}^{n}f^{\inf}_i\right).\\
		\end{split}
	\end{equation*}
	Notice that by Jensen's inequality
	\begin{equation*}
		\frac{1}{n}\sum_{i=1}^{n}\left\|\nabla f_{i, \cD_i}\left(x^t\right)\right\|^2\geq \left\|\frac{1}{n}\sum_{i=1}^{n}\nabla f_{i, \cD_i}\left(x^t\right)\right\|^2=\left\|\nabla \tf(x^t)\right\|^2.
	\end{equation*}
	Subtract $\tf^{\inf}$ from both sides, take expectation on both sides and use the tower property:
	\begin{equation*}
		\begin{split}
			\Exp{\tf\left(x^{t+1}\right) - \tf^{\inf}} & \leq \left(1 + \gamma^2D_{\max}\right)\Exp{\tf\left(x^{t}\right) - \tf^{\inf}} - \gamma \Exp{\left\|\nabla \tf\left(x^t\right)\right\|^2}\\
            &+ \gamma^2D_{\max}\left(\tf^{\inf} - \frac{1}{n}\sum_{i=1}^{n}f^{\inf}_i\right).\\
		\end{split}
	\end{equation*}
	We obtain that
	\begin{equation*}
		\begin{split}
			\gamma r^t & \leq \left(1 + \gamma^2D_{\max}\right)\delta^t- \delta^{t+1}
			+ \gamma^2D_{\max}\left(\tf^{\inf} - \frac{1}{n}\sum_{i=1}^{n}f^{\inf}_i\right).
		\end{split}
	\end{equation*}
	Notice that the iterates $\{x^t\}_{t\geq 0}$ of Algorithm~\ref{eq:abc_sgd} satisfy condition~\eqref{eq:initial_recursion} of Lemma~\ref{lemma:noncvx_weighted} with $M_1=D_{\max},$
	$$
	M_2=D_{\max}\left(\tf^{\inf} - \frac{1}{n}\sum_{i=1}^{n}f^{\inf}_i\right).
	$$
	Divide both sides by $\sum_{t=0}^{T-1}w_t.$ From $\sum_{t=0}^{T-1}w_t\geq Tw_{T-1} = \frac{Tw_{-1}}{1+\gamma^2D_{\max}}$ we can conclude that
	\begin{equation*}
		\begin{split}
			\min_{0 \leq t < T}r^t &\leq \frac{\left(1+\gamma^2 D_{\max}\right)^T}{\gamma T}\delta^0
			+ \gamma D_{\max}\left(\tf^{\inf} - \frac{1}{n}\sum_{i=1}^{n}f^{\inf}_i\right).
		\end{split}
	\end{equation*}
\end{proof}
\begin{corollary}
	Fix $\varepsilon>0.$ Choose the stepsize $\gamma>0$ as
	\begin{equation*}
		\begin{split}
			\gamma &= \min\left\lbrace \frac{1}{\sqrt{D_{\max} T}},\frac{\varepsilon^2}{2D_{\max}\left(\tf^{\inf} - \frac{1}{n}\sum_{i=1}^{n}f^{\inf}_i\right)} \right\rbrace.
		\end{split}
	\end{equation*}
	Then, provided that
	\begin{equation*}
		T\geq \frac{12\delta^0D_{\max}}{\varepsilon^4}\max\left\lbrace 3\delta^0, \tf^{\inf} - \frac{1}{n}\sum_{i=1}^{n}f^{\inf}_i\right\rbrace,
	\end{equation*}
	we have
	\begin{equation*}
		\min_{0 \leq t < T}\Exp{\left\|\nabla \tf(x^t)\right\|^2}\leq\varepsilon^2.
	\end{equation*}
\end{corollary}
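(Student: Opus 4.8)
The plan is to feed the stepsize and horizon choices into the convergence estimate~\eqref{eq:main_noncvx_finite_sum} proved for Algorithm~\ref{alg:distributed_GD} in Theorem~\ref{thm:distributed_ncvx}, namely
\[ \min_{0\le t<T} r^t \;\le\; \frac{(1+\gamma^2 D_{\max})^T}{\gamma T}\,\delta^0 \;+\; \gamma D_{\max}\Bigl(\tf^{\inf}-\tfrac1n\textstyle\sum_{i=1}^n f_i^{\inf}\Bigr), \]
and then to verify that each of the two summands on the right is at most $\varepsilon^2/2$. Write $H \eqdef \tf^{\inf}-\frac1n\sum_{i=1}^n f_i^{\inf}\ge 0$ for brevity; if $H=0$ the second summand vanishes and the entry $\varepsilon^2/(2D_{\max}H)$ in the definition of $\gamma$ is read as $+\infty$, so $\gamma=1/\sqrt{D_{\max}T}$ and only the first case below is needed.

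For the second summand, the definition of $\gamma$ gives $\gamma\le \varepsilon^2/(2D_{\max}H)$ immediately, hence $\gamma D_{\max}H\le \varepsilon^2/2$. For the first summand I would first tame the prefactor: since $\gamma\le 1/\sqrt{D_{\max}T}$ we have $\gamma^2 D_{\max}T\le 1$, so $(1+\gamma^2 D_{\max})^T\le \exp(\gamma^2 D_{\max}T)\le e\le 3$, reducing the first summand to at most $3\delta^0/(\gamma T)$. I would then split on which argument attains the minimum defining $\gamma$. If $\gamma=1/\sqrt{D_{\max}T}$, then $3\delta^0/(\gamma T)=3\delta^0\sqrt{D_{\max}/T}\le \varepsilon^2/2$ exactly when $T\ge 36(\delta^0)^2 D_{\max}/\varepsilon^4$, which follows from the hypothesis on $T$ because $\max\{3\delta^0,H\}\ge 3\delta^0$. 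If $\gamma=\varepsilon^2/(2D_{\max}H)$, then $3\delta^0/(\gamma T)=6\delta^0 D_{\max}H/(\varepsilon^2 T)\le \varepsilon^2/2$ exactly when $T\ge 12\delta^0 D_{\max}H/\varepsilon^4$, which again follows from the hypothesis because $\max\{3\delta^0,H\}\ge H$. In both cases the first summand is $\le\varepsilon^2/2$, so adding the two bounds yields $\min_{0\le t<T} r^t\le\varepsilon^2$, as claimed.

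I do not expect a genuine obstacle: the argument is a direct transcription of the single-node Corollary following Theorem~\ref{thm:sgd_nonconvex}, with $D^2$ replaced by $D_{\max}$ and $\tf^{\inf}-f^{\inf}$ by $H$. The only two places demanding a little care are (i) using $(1+x)^T\le e^{xT}$ together with the cap $\gamma^2 D_{\max}T\le 1$ to keep the seemingly exponential prefactor bounded by $3$, and (ii) checking that the single $\max\{3\delta^0,H\}$ in the lower bound on $T$ simultaneously covers both branches of the $\min$ defining $\gamma$, which it does as shown above. This yields the claimed $\cO(\varepsilon^{-4})$ complexity, matching the lower bounds cited in Corollary~\ref{cor:sgd_nonconvex}.
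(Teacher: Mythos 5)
Your proposal is correct and follows essentially the same route as the paper's proof: plug the stepsize into the bound of Theorem~\ref{thm:distributed_ncvx}, control the prefactor by $(1+\gamma^2 D_{\max})^T\le e\le 3$ via $\gamma^2 D_{\max}T\le 1$, and case-split on which branch of the $\min$ defines $\gamma$ to show each term is at most $\varepsilon^2/2$. Your explicit treatment of the degenerate case $H=0$ is a small, harmless addition the paper leaves implicit.
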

\begin{proof}
	Since $\gamma \leq \frac{\varepsilon^2}{2D_{\max}\left(\tf^{\inf} - \frac{1}{n}\sum_{i=1}^{n}f^{\inf}_i\right)},$ we obtain
	\begin{equation*}
		\gamma D_{\max}\left(\tf^{\inf} - \frac{1}{n}\sum_{i=1}^{n}f^{\inf}_i\right) \leq \frac{\varepsilon^2}{2}.
	\end{equation*}

	Since $\gamma\leq \frac{1}{\sqrt{D_{\max} T}},$ we deduce that
	\begin{equation*}
		\begin{split}
			\left(1+\gamma^2D_{\max}\right)^T&\leq \exp\left(T\gamma^2D_{\max}\right)\\
			&\leq\exp(1)\leq 3.
		\end{split}
	\end{equation*}

	If $\gamma = \frac{1}{\sqrt{D_{\max} T}},$ then, since
	\begin{equation*}
		T\geq \frac{36\left(\delta^0\right)^2D_{\max}}{\varepsilon^4},
	\end{equation*}
	we have $\frac{3\delta^0}{\gamma T}\leq \frac{\varepsilon^2}{2}.$ Further, if $\gamma= \frac{\varepsilon^2}{2D_{\max}\left(\tf^{\inf} - \frac{1}{n}\sum_{i=1}^{n}f^{\inf}_i\right)},$ then, since
	\begin{equation*}
		T\geq \frac{12\delta^0D_{\max}\left(\tf^{\inf} - \frac{1}{n}\sum_{i=1}^{n}f^{\inf}_i\right)}{\varepsilon^4},
	\end{equation*}
	we have $\frac{3\delta^0}{\gamma T}\leq \frac{\varepsilon^2}{2}.$
	Combining it with $\eqref{eq:main_noncvx_finite_sum},$ we arrive at $\min_{0 \leq t < T}\Exp{\left\|\nabla \tf(x^t)\right\|^2}\leq\varepsilon^2.$
\end{proof}
\subsection{Strongly convex analysis}
\begin{theorem}
	Assume that each $f_i,$ $i\in[n],$ is differentiable, $L_{f_i}$-smooth and bounded from below by $f_i^{\inf},$ $f$ is $\mu_f$-convex (Assumption~\ref{ass:mu_f-convex} holds). Choose a stepsize $0<\gamma \leq \frac{1}{\max_i\left\lbrace L_{f_i}L_{\mS_i}^{\max} \right\rbrace}.$ Then the iterations $\{x^t\}_{t\geq 0}$ of Algorithm~\ref{alg:finsum_noncvx} satisfy
		\begin{equation*}
		\Exp{\left\|r^{t}\right\|^2} \leq \left(1-\gamma\mu_{\mathcal{D}}\mu_f\right)^t\left\|r^0\right\|^2 + \frac{2\gamma\max_i\left\lbrace L_{f_i}L_{\mS_i}^{\max} \right\rbrace \left(\tf^{\inf} - \frac{1}{n}\sum_{i=1}^{n}f_i^{\inf}\right)}{\mu\mu_f},
	\end{equation*}
	where $r^t\eqdef x^t - x_\cD^\star.$
\end{theorem}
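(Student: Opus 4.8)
The plan is to mirror the single-node strongly convex argument (proof of Theorem~\ref{thm:str_conv}) while tracking the per-machine terms. Write the update as $x^{t+1}=x^t-\gamma g^t$ with $g^t=\frac{1}{n}\sum_{i=1}^{n} g_i^t$, $g_i^t\eqdef(\mS_i^t)^\top\nabla f_i(\shift+\mS_i^t(x^t-\shift))=\nabla f_{i,\mS_i^t}(x^t)$, put $r^t\eqdef x^t-x_\cD^\star$, and expand $\norm{r^{t+1}}^2=\norm{r^t}^2-2\gamma\ve{r^t}{g^t}+\gamma^2\norm{g^t}^2$.

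First I would take the conditional expectation given $x^t$ over the independent sketches $\mS_1^t,\dots,\mS_n^t$. Assumption~\ref{ass:C} applied to each $\cD_i$ gives $\Exp{g_i^t\mid x^t}=\nabla f_{i,\cD_i}(x^t)$, hence $\Exp{g^t\mid x^t}=\nabla\tf(x^t)$, which disposes of the cross term. For the second moment I would use convexity of $\norm{\cdot}^2$ (Jensen across the $n$ nodes, rather than exploiting cross-machine independence, which would only improve the constant): $\norm{g^t}^2\le\frac{1}{n}\sum_{i=1}^{n}\norm{g_i^t}^2$, then apply Lemma~\ref{lemma:ac} to each $f_i$ to get $\Exp{\norm{g_i^t}^2\mid x^t}\le 2L_{f_i}L_{\mS_i}^{\max}\br{f_{i,\cD_i}(x^t)-f_i^{\inf}}$. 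Averaging and bounding by $\Lambda\eqdef\max_i\{L_{f_i}L_{\mS_i}^{\max}\}$ yields $\Exp{\norm{g^t}^2\mid x^t}\le 2\Lambda\br{\tf(x^t)-\frac{1}{n}\sum_{i=1}^{n} f_i^{\inf}}=2\Lambda\bigl[\br{\tf(x^t)-\tf^{\inf}}+\br{\tf^{\inf}-\tfrac{1}{n}\sum_{i=1}^{n} f_i^{\inf}}\bigr]$.

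Next I would invoke strong convexity of $\tf$: by Lemma~\ref{lemma:consequences_strongly_convex} (applied to each $f_{i,\cD_i}$) $\tf$ is $\mu_{\mathcal{D}}\mu_f$-strongly convex, so with $\nabla\tf(x_\cD^\star)=0$ and $\tf(x_\cD^\star)=\tf^{\inf}$ we have $\ve{r^t}{\nabla\tf(x^t)}\ge\br{\tf(x^t)-\tf^{\inf}}+\frac{\mu_{\mathcal{D}}\mu_f}{2}\norm{r^t}^2$. Substituting both estimates gives
\[\Exp{\norm{r^{t+1}}^2\mid x^t}\le(1-\gamma\mu_{\mathcal{D}}\mu_f)\norm{r^t}^2-2\gamma(1-\gamma\Lambda)\br{\tf(x^t)-\tf^{\inf}}+2\gamma^2\Lambda\Bigl(\tf^{\inf}-\tfrac{1}{n}\sum_{i=1}^{n} f_i^{\inf}\Bigr).\]
Because $\gamma\le 1/\Lambda=1/\max_i\{L_{f_i}L_{\mS_i}^{\max}\}$ the middle term is nonpositive and is dropped; taking full expectation and using the tower property yields the one-step recursion $\Exp{\norm{r^{t+1}}^2}\le(1-\gamma\mu_{\mathcal{D}}\mu_f)\Exp{\norm{r^t}^2}+2\gamma^2\Lambda\br{\tf^{\inf}-\tfrac{1}{n}\sum_{i=1}^{n} f_i^{\inf}}$. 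Unrolling and summing the resulting geometric series for the constant term gives the claimed bound, with irreducible radius $\tfrac{2\gamma\Lambda}{\mu_{\mathcal{D}}\mu_f}\br{\tf^{\inf}-\tfrac{1}{n}\sum_{i=1}^{n} f_i^{\inf}}$.

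The step I expect to be the main obstacle is the second-moment estimate for $g^t$: one must apply Jensen across the $n$ nodes and then Lemma~\ref{lemma:ac} to each node so that the constant collapses exactly to $\max_i\{L_{f_i}L_{\mS_i}^{\max}\}$ and the residual term emerges as the heterogeneity-type gap $\tf^{\inf}-\tfrac{1}{n}\sum_i f_i^{\inf}$ rather than a gradient-variance-at-optimum quantity. Everything else is a verbatim transcription of the single-node proof of Theorem~\ref{thm:str_conv}.
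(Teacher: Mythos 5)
Your proposal is correct and follows essentially the same route as the paper's own proof: expand the squared distance, use unbiasedness for the cross term, apply Jensen across the $n$ nodes followed by Lemma~\ref{lemma:ac} per node to bound the second moment by $2\max_i\{L_{f_i}L_{\mS_i}^{\max}\}\bigl(\tf(x^t)-\tfrac{1}{n}\sum_i f_i^{\inf}\bigr)$, invoke $\mu_{\mathcal{D}}\mu_f$-strong convexity of $\tf$, drop the nonpositive middle term via the stepsize condition, and unroll. The step you flagged as the main obstacle is handled exactly as you describe in the paper.
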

\begin{proof}
	We get that
	\begin{equation*}
		\begin{split}
			\left\|r^{t+1}\right\|^2 & = \left\|\left(x^t-\frac{\gamma}{n}\sum_{i=1}^{n}\nabla f_{i,\mS_i^{t}}(x^t) \right) - x_\cD^\star\right\|^2  = \left\|x^t-x_\cD^\star - \frac{\gamma}{n}\sum_{i=1}^{n}\nabla f_{i,\mS_i^{t}}(x^t)\right\|^2 \\
			& = \left\|r^{t}\right\|^2 - 2\gamma\bigg{<}r^t, \frac{1}{n}\sum_{i=1}^{n}\nabla f_{i,\mS_i^{t}}(x^t)\bigg{>} + \gamma^2\left\|\frac{1}{n}\sum_{i=1}^{n}\nabla f_{i,\mS_i^{t}}(x^t)\right\|^2\\
			& \leq \left\|r^{t}\right\|^2 - 2\gamma\bigg{<}r^t, \frac{1}{n}\sum_{i=1}^{n}\nabla f_{i,\mS_i^{t}}(x^t)\bigg{>} + \gamma^2\frac{1}{n}\sum_{i=1}^{n}\left\|\nabla f_{i,\mS_i^{t}}(x^t)\right\|^2.\\
		\end{split}
	\end{equation*}
	Conditioned on $x^t,$ take expectation with respect to $\mS^{t}_i,$ $i\in[n]:$
	\begin{equation*}
		\begin{split}
			\Exp{\left\|r^{t+1}\right\|^2|x^t} & \leq \left\|r^{t}\right\|^2 - 2\gamma\bigg{<}r^t, \nabla \tf(x^t)\bigg{>} + \gamma^2\frac{1}{n}\sum_{i=1}^{n}\Exp{\left\|\nabla f_{i,\mS_i^{t}}(x^t)\right\|^2|x^t}.\\
		\end{split}
	\end{equation*}
	From Lemma~\ref{lemma:ac} we obtain that
	\begin{equation*}
		\begin{split}
			\Exp{\left\|r^{t+1}\right\|^2|x^t} & \leq \left\|r^{t}\right\|^2 - 2\gamma\bigg{<}r^t, \nabla \tf(x^t)\bigg{>}\\
                & + \gamma^2\frac{2}{n}\sum_{i=1}^{n}L_{f_i}L_{\mS_i}^{\max} \Exp{\left(f_i\left(s+\mS_{i}^t(x^t-s)\right) - f_i^{\inf}\right)}\\
			& = \left\|r^{t}\right\|^2 - 2\gamma\bigg{<}r^t, \nabla \tf(x^t)\bigg{>} + \gamma^2\frac{2}{n}\sum_{i=1}^{n}L_{f_i}L_{\mS_i}^{\max}\left(f_{\mathcal{D}_i}(x^t)-f_i^{\inf}\right)\\
			& \leq \left\|r^{t}\right\|^2 - 2\gamma\bigg{<}r^t, \nabla \tf(x^t)\bigg{>} + \frac{2\gamma^2\max_i\left\lbrace L_{f_i}L_{\mS_i}^{\max} \right\rbrace }{n}\sum_{i=1}^{n}\left(f_{\mathcal{D}_i}(x^t)-f_i^{\inf}\right)\\
			& = \left\|r^{t}\right\|^2 - 2\gamma\bigg{<}r^t, \nabla \tf(x^t)\bigg{>} + 2\gamma^2\max_i\left\lbrace L_{f_i}L_{\mS_i}^{\max} \right\rbrace\left(\tf(x^t) - \tf^{\inf}\right)\\
			& + 2\gamma^2\max_i\left\lbrace L_{f_i}L_{\mS_i}^{\max} \right\rbrace \left(\tf^{\inf} - \frac{1}{n}\sum_{i=1}^{n}f_i^{\inf}\right).\\
		\end{split}
	\end{equation*}
	Since $\tf$ is $\mu_{\mathcal{D}}\mu_f$-convex, we conclude that $\big{<}r^t, \nabla \tf(x^t)\big{>}\geq \tf(x^t) - \tf^{\inf} + \frac{\mu_{\mathcal{D}}\mu_f}{2}\left\|r^t\right\|^2.$ Therefore,
	\begin{equation*}
		\begin{split}
			\Exp{\left\|r^{t+1}\right\|^2|x^t} & \leq \left(1-\gamma\mu_{\mathcal{D}}\mu_f\right)\left\|r^t\right\|^2 - 2\gamma\left(1-\gamma\max_i\left\lbrace L_{f_i}L_{\mS_i}^{\max} \right\rbrace\right)\left(\tf(x^t) - \tf^{\inf}\right)\\
			& + 2\gamma^2\max_i\left\lbrace L_{f_i}L_{\mS_i}^{\max} \right\rbrace \left(\tf^{\inf} - \frac{1}{n}\sum_{i=1}^{n}f_i^{\inf}\right).\\
		\end{split}
	\end{equation*}
	Since $\gamma \leq \frac{1}{\max_i\left\lbrace L_{f_i}L_{\mS_i}^{\max} \right\rbrace},$ taking expectation and using the tower property we get
	\begin{equation*}
		\begin{split}
			\Exp{\left\|r^{t+1}\right\|^2} & \leq \left(1-\gamma\mu_{\mathcal{D}}\mu_f\right)\Exp{\left\|r^t\right\|^2} + 2\gamma^2\max_i\left\lbrace L_{f_i}L_{\mS_i}^{\max} \right\rbrace \left(\tf^{\inf} - \frac{1}{n}\sum_{i=1}^{n}f_i^{\inf}\right).\\
		\end{split}
	\end{equation*}
	Unrolling the recurrence, we get
	\begin{equation*}
		\Exp{\left\|r^{t}\right\|^2} \leq \left(1-\gamma\mu_{\mathcal{D}}\mu_f\right)^t\left\|r^0\right\|^2 + \frac{2\gamma\max_i\left\lbrace L_{f_i}L_{\mS_i}^{\max} \right\rbrace \left(\tf^{\inf} - \frac{1}{n}\sum_{i=1}^{n}f_i^{\inf}\right)}{\mu_{\mathcal{D}}\mu_f}.
	\end{equation*}
\end{proof}
\begin{corollary}
	Fix $\delta > 0.$ Choose the stepsize $\gamma>0$ as
	\begin{equation*}
		\gamma = \min\left\lbrace \frac{1}{\max_i\left\lbrace L_{f_i}L_{\mS_i}^{\max} \right\rbrace}, \frac{\mu_{\mathcal{D}}\mu_f\delta\left\|r^0\right\|^2}{2\max_i\left\lbrace L_{f_i}L_{\mS_i}^{\max} \right\rbrace \left(\tf^{\inf} - \frac{1}{n}\sum_{i=1}^{n}f_i^{\inf}\right)}\right\rbrace.
	\end{equation*}
	Then, provided that
	\begin{equation*}
		t \geq \frac{L_fL_{\mS}^{\max}}{\mu_{\mathcal{D}}\mu_f}\left\lbrace 1, \frac{2\left(\tf^{\inf} - f^{\inf}\right)}{\mu_{\mathcal{D}}\mu_f\delta\left\|r^0\right\|^2}\right\rbrace \log\frac{1}{\delta},
	\end{equation*}
	we have $\Exp{\left\|r^{t}\right\|^2} \leq 2\delta\left\|r^0\right\|^2.$
\end{corollary}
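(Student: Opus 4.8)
The plan is to transcribe the argument already used for the single-node strongly convex corollary (the one following Theorem~\ref{thm:str_conv}), now feeding in the recurrence established in the preceding theorem, namely
\[
\Exp{\left\|r^{t}\right\|^2} \leq \left(1-\gamma\mu_{\mathcal{D}}\mu_f\right)^t\left\|r^0\right\|^2 + \frac{2\gamma\max_i\left\lbrace L_{f_i}L_{\mS_i}^{\max} \right\rbrace \left(\tf^{\inf} - \frac{1}{n}\sum_{i=1}^{n}f_i^{\inf}\right)}{\mu_{\mathcal{D}}\mu_f}.
\]
Abbreviate $M \eqdef \max_i\{L_{f_i}L_{\mS_i}^{\max}\}$ and $V \eqdef \tf^{\inf} - \frac{1}{n}\sum_{i=1}^n f_i^{\inf}$, so the prescribed stepsize is $\gamma = \min\{1/M,\ \mu_{\mathcal{D}}\mu_f\delta\|r^0\|^2/(2MV)\}$. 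I will bound each of the two summands on the right by $\delta\|r^0\|^2$ separately, and then add them.

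For the neighbourhood (second) term: by definition of the minimum, $\gamma \leq \mu_{\mathcal{D}}\mu_f\delta\|r^0\|^2/(2MV)$, and multiplying both sides by $2MV/(\mu_{\mathcal{D}}\mu_f)$ gives $\frac{2\gamma M V}{\mu_{\mathcal{D}}\mu_f} \leq \delta\|r^0\|^2$ — the desired bound, with no case split required.

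For the contraction (first) term: I would use $1-\gamma\mu_{\mathcal{D}}\mu_f \leq \exp(-\gamma\mu_{\mathcal{D}}\mu_f)$, so that $(1-\gamma\mu_{\mathcal{D}}\mu_f)^t \leq \exp(-\gamma\mu_{\mathcal{D}}\mu_f t) \leq \delta$ whenever $t \geq \frac{1}{\gamma\mu_{\mathcal{D}}\mu_f}\log\frac1\delta$. The two branches of the minimum defining $\gamma$ then produce the two arguments of the maximum in the stated iteration count: $\gamma = 1/M$ gives $\frac{1}{\gamma\mu_{\mathcal{D}}\mu_f} = \frac{M}{\mu_{\mathcal{D}}\mu_f}$, whereas $\gamma = \mu_{\mathcal{D}}\mu_f\delta\|r^0\|^2/(2MV)$ gives $\frac{1}{\gamma\mu_{\mathcal{D}}\mu_f} = \frac{2MV}{\mu_{\mathcal{D}}^2\mu_f^2\delta\|r^0\|^2}$; choosing $t$ to be at least the larger of these two (times $\log\frac1\delta$) covers both possibilities. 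Summing the two bounds gives $\Exp{\|r^t\|^2} \leq \delta\|r^0\|^2 + \delta\|r^0\|^2 = 2\delta\|r^0\|^2$.

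There is no genuine obstacle here: the computation is a verbatim copy of the single-node strongly convex corollary with $L_fL_{\mS}^{\max}$ replaced by $M$ and $\tf^{\inf}-f^{\inf}$ replaced by $V$. The only subtlety is that the statement as printed appears to inherit a copy-paste artefact from that single-node version — the iteration bound ought to read $\frac{M}{\mu_{\mathcal{D}}\mu_f}\max\left\{1,\ \frac{2V}{\mu_{\mathcal{D}}\mu_f\delta\|r^0\|^2}\right\}\log\frac1\delta$, with the $\max$ written explicitly and with $M,V$ in place of $L_fL_{\mS}^{\max}$ and $\tf^{\inf}-f^{\inf}$ — and the argument above yields precisely this corrected bound.
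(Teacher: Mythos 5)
Your proof is correct and follows essentially the same route as the paper's: bound the neighbourhood term by $\delta\|r^0\|^2$ directly from the stepsize choice, and bound the contraction term by $\delta\|r^0\|^2$ via $1-x\leq e^{-x}$ in each of the two branches of the minimum. Your observation that the stated iteration bound is a copy-paste artefact from the single-node corollary is also right — the paper's own proof works with $\max_i\{L_{f_i}L_{\mS_i}^{\max}\}$ and $\tf^{\inf}-\frac{1}{n}\sum_i f_i^{\inf}$ (and an implicit $\max$), exactly the corrected form you give.
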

\begin{proof}
	Since $\gamma\leq\frac{\mu_{\mathcal{D}}\mu_f\delta\left\|r^0\right\|^2}{2\max_i\left\lbrace L_{f_i}L_{\mS_i}^{\max} \right\rbrace \left(\tf^{\inf} - \frac{1}{n}\sum_{i=1}^{n}f_i^{\inf}\right)},$
	we have that
	\begin{equation*}
		\frac{2\gamma\max_i\left\lbrace L_{f_i}L_{\mS_i}^{\max} \right\rbrace \left(\tf^{\inf} - \frac{1}{n}\sum_{i=1}^{n}f_i^{\inf}\right)}{\mu_{\mathcal{D}}\mu_f} \leq\delta\left\|r^0\right\|^2.
	\end{equation*}
	If $\gamma=\frac{\mu_{\mathcal{D}}\mu_f\delta\left\|r^0\right\|^2}{2\max_i\left\lbrace L_{f_i}L_{\mS_i}^{\max} \right\rbrace \left(\tf^{\inf} - \frac{1}{n}\sum_{i=1}^{n}f_i^{\inf}\right)},$ then, since
	\begin{equation*}
		t\geq \frac{2\max_i\left\lbrace L_{f_i}L_{\mS_i}^{\max} \right\rbrace \left(\tf^{\inf} - \frac{1}{n}\sum_{i=1}^{n}f_i^{\inf}\right)}{\mu_{\mathcal{D}}^2\mu_f^2\delta\left\|r^0\right\|^2}\log\frac{1}{\delta},
	\end{equation*}
	we obtain that
	\begin{equation*}
		\left(1-\gamma\mu_{\mathcal{D}}\mu_f\right)^t\leq\exp\left(-\gamma\mu_{\mathcal{D}}\mu_ft\right)\leq\delta.
	\end{equation*}
	Further, if $\gamma=\frac{1}{\max_i\left\lbrace L_{f_i}L_{\mS_i}^{\max} \right\rbrace},$ then, since
	\begin{equation*}
		t\geq \frac{\max_i\left\lbrace L_{f_i}L_{\mS_i}^{\max} \right\rbrace}{\mu_{\mathcal{D}}\mu_f}\log\frac{1}{\delta},
	\end{equation*}
	we obtain that
	\begin{equation*}
		\left(1-\gamma\mu_{\mathcal{D}}\mu_f\right)^t\leq\exp\left(-\gamma\mu_{\mathcal{D}}\mu_ft\right)\leq\delta.
	\end{equation*}
	Thus, we arrive at $\Exp{\left\|r^{t}\right\|^2} \leq 2\delta\left\|r^0\right\|^2.$
\end{proof}
\subsection{Convex analysis}
\begin{theorem}
	Assume that each $f_i,$ $i\in[n],$ is differentiable, $L_{f_i}$-smooth and bounded from below by $f_i^{\inf},$ $f$ is convex. Let Assumptions~\ref{ass:C}~and~\ref{ass:minimizers_exist} hold. Let $f$ be convex. Choose a stepsize $0<\gamma\leq\frac{1}{2\max_i\left\lbrace L_{f_i}L_{\mS_i}^{\max} \right\rbrace}.$ Fix $T\geq 1$ and let $\bar{x}^T$ be chosen uniformly from the iterates $x^0,\ldots,x^{T-1}.$ Then
	\begin{equation*}
		\begin{split}
			\Exp{\tf(\bar{x}^T) - \tf^{\inf}}&\leq \frac{\norm{r^0}^2}{\gamma T} + 2\gamma\max_i\left\lbrace L_{f_i}L_{\mS_i}^{\max} \right\rbrace,\\
		\end{split}
	\end{equation*}
	where $r^t\eqdef x^t - x_\cD^\star,$ $t=\{0,\ldots,T-1\}.$
\end{theorem}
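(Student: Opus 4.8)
The plan is to adapt the single-node convex argument (the proof of the convex version of Theorem~\ref{thm:str_conv}) to the heterogeneous distributed update~\eqref{alg:finsum_noncvx_appendix}, reusing the averaging bookkeeping already present in the non-convex distributed proof of Theorem~\ref{thm:distributed_ncvx}. Writing $r^t \eqdef x^t - x_\cD^\star$ and expanding the squared distance along the update, I would get
\[
\norm{r^{t+1}}^2 = \norm{r^t}^2 - \frac{2\gamma}{n}\sum_{i=1}^n \ve{r^t}{\nabla f_{i,\mS_i^t}(x^t)} + \gamma^2\norm{\frac1n\sum_{i=1}^n \nabla f_{i,\mS_i^t}(x^t)}^2 ,
\]
and then bound the last term by $\frac{\gamma^2}{n}\sum_{i=1}^n \norm{\nabla f_{i,\mS_i^t}(x^t)}^2$ via Jensen's inequality.

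Next I would take expectation conditional on $x^t$ with respect to the independent sketches $\mS_1^t,\dots,\mS_n^t$. Unbiasedness (Assumption~\ref{ass:C}) collapses the cross term to $-2\gamma\ve{r^t}{\nabla\tf(x^t)}$, and the per-node instantiation of Lemma~\ref{lemma:ac}, applied to each $L_{f_i}$-smooth $f_i$ bounded below by $f_i^{\inf}$, gives $\Exp{\norm{\nabla f_{i,\mS_i^t}(x^t)}^2}\leq 2L_{f_i}L_{\mS_i}^{\max}\left(f_{i,\cD_i}(x^t)-f_i^{\inf}\right)$. Averaging over $i$, majorizing each $L_{f_i}L_{\mS_i}^{\max}$ by $M\eqdef\max_j\{L_{f_j}L_{\mS_j}^{\max}\}$, and splitting $\frac1n\sum_i\left(f_{i,\cD_i}(x^t)-f_i^{\inf}\right) = \left(\tf(x^t)-\tf^{\inf}\right)+\left(\tf^{\inf}-\frac1n\sum_i f_i^{\inf}\right)$, I arrive at
\[
\Exp{\norm{r^{t+1}}^2\mid x^t}\leq \norm{r^t}^2 - 2\gamma\ve{r^t}{\nabla\tf(x^t)} + 2\gamma^2 M\left(\tf(x^t)-\tf^{\inf}\right) + 2\gamma^2 M\left(\tf^{\inf}-\frac1n\sum_{i=1}^n f_i^{\inf}\right) .
\]

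Then I would invoke convexity of $\tf$: reading the hypothesis so that each $f_i$ is convex, Lemma~\ref{lem:convex-only} applied node-wise makes each $f_{i,\cD_i}$ convex and hence $\tf$ convex, so with $x_\cD^\star$ a minimizer (Assumption~\ref{ass:minimizers_exist}) we have $\ve{r^t}{\nabla\tf(x^t)}\geq\tf(x^t)-\tf^{\inf}$. Substituting, the $\tf(x^t)-\tf^{\inf}$ contributions combine into $-2\gamma(1-\gamma M)\left(\tf(x^t)-\tf^{\inf}\right)$, and $1-\gamma M\geq \tfrac12$ follows from $\gamma\leq 1/(2M)$; hence $\gamma\,\Exp{\tf(x^t)-\tf^{\inf}\mid x^t}\leq \norm{r^t}^2 - \Exp{\norm{r^{t+1}}^2\mid x^t} + 2\gamma^2 M\left(\tf^{\inf}-\frac1n\sum_i f_i^{\inf}\right)$. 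Taking total expectation, telescoping over $t=0,\dots,T-1$, discarding $\Exp{\norm{r^T}^2}\geq 0$, and dividing by $\gamma T$ bounds $\frac1T\sum_{t=0}^{T-1}\Exp{\tf(x^t)-\tf^{\inf}}$ by $\frac{\norm{r^0}^2}{\gamma T}+2\gamma M\left(\tf^{\inf}-\frac1n\sum_i f_i^{\inf}\right)$. A closing application of Jensen's inequality for the convex $\tf$, with $\left(\frac1T,\dots,\frac1T\right)$ viewed as a probability vector over the iterates, passes to $\Exp{\tf(\bar{x}^T)-\tf^{\inf}}$ and yields the claimed bound (the heterogeneity factor $\tf^{\inf}-\frac1n\sum_i f_i^{\inf}$ multiplying $2\gamma M$, exactly as in the analogous single-node and non-convex distributed results).

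The only genuinely delicate step is securing the convexity of $\tf$: one needs each $f_i$ convex (not merely their average) so that the per-node Lemma~\ref{lem:convex-only} applies, which is precisely what the minimizer inequality $\ve{r^t}{\nabla\tf(x^t)}\geq\tf(x^t)-\tf^{\inf}$ requires. Everything else --- the per-node form of Lemma~\ref{lemma:ac}, the $\max_i$ majorization, the telescoping sum, and the final Jensen step --- is routine and essentially a verbatim repeat of arguments already carried out in the Appendix.
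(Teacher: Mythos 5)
Your proposal is correct and follows essentially the same route as the paper's own proof: expand the squared distance, apply Jensen to the averaged sketched gradients, use the per-node version of Lemma~\ref{lemma:ac}, majorize by $\max_i\{L_{f_i}L_{\mS_i}^{\max}\}$, invoke convexity of $\tf$ for the inner-product bound, telescope, and finish with Jensen over the uniformly chosen iterate. Your final bound correctly carries the heterogeneity factor $\tf^{\inf}-\frac{1}{n}\sum_i f_i^{\inf}$, which is exactly what the paper's proof derives (the theorem statement itself drops this factor, evidently a typo).
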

\begin{proof}
	We get that
	\begin{equation*}
		\begin{split}
			\left\|r^{t+1}\right\|^2 & = \left\|\left(x^t-\frac{\gamma}{n}\sum_{i=1}^{n}\nabla f_{i,\mS_i^{t}}(x^t) \right) - x_\cD^\star\right\|^2  = \left\|x^t-x_\cD^\star - \frac{\gamma}{n}\sum_{i=1}^{n}\nabla f_{i,\mS_i^{t}}(x^t)\right\|^2 \\
			& = \left\|r^{t}\right\|^2 - 2\gamma\bigg{<}r^t, \frac{1}{n}\sum_{i=1}^{n}\nabla f_{i,\mS_i^{t}}(x^t)\bigg{>} + \gamma^2\left\|\frac{1}{n}\sum_{i=1}^{n}\nabla f_{i,\mS_i^{t}}(x^t)\right\|^2\\
			& \leq \left\|r^{t}\right\|^2 - 2\gamma\bigg{<}r^t, \frac{1}{n}\sum_{i=1}^{n}\nabla f_{i,\mS_i^{t}}(x^t)\bigg{>} + \gamma^2\frac{1}{n}\sum_{i=1}^{n}\left\|\nabla f_{i,\mS_i^{t}}(x^t)\right\|^2.\\
		\end{split}
	\end{equation*}
	Conditioned on $x^t,$ take expectation with respect to $\mS^{t}_i,$ $i\in[n]:$
	\begin{equation*}
		\begin{split}
			\Exp{\left\|r^{t+1}\right\|^2|x^t} & \leq \left\|r^{t}\right\|^2 - 2\gamma\bigg{<}r^t, \nabla \tf(x^t)\bigg{>} + \gamma^2\frac{1}{n}\sum_{i=1}^{n}\Exp{\left\|\nabla f_{i,\mS_i^{t}}(x^t)\right\|^2|x^t}.\\
		\end{split}
	\end{equation*}
	From Lemma~\ref{lemma:ac} and from convexity of $\tf$ we obtain that
	\begin{equation*}
		\begin{split}
			\Exp{\left\|r^{t+1}\right\|^2|x^t} & \leq \left\|r^{t}\right\|^2 - 2\gamma\left( \tf(x^t) - \tf^{\inf}\right)\\ & + \gamma^2\frac{2}{n}\sum_{i=1}^{n}L_{f_i}L_{\mS_i}^{\max} \Exp{\left(f_i\left(s+\mS_{i,t}(x^t-s)\right) - f_i^{\inf}\right)}\\
			& = \left\|r^{t}\right\|^2 - 2\gamma\left( \tf(x^t) - \tf^{\inf}\right) + \gamma^2\frac{2}{n}\sum_{i=1}^{n}L_{f_i}L_{\mS_i}^{\max}\left(f_{\mathcal{D_i}}(x^t)-f_i^{\inf}\right)\\
			& \leq \left\|r^{t}\right\|^2 - 2\gamma\left( \tf(x^t) - \tf^{\inf}\right) + \frac{2\gamma^2\max_i\left\lbrace L_{f_i}L_{\mS_i}^{\max} \right\rbrace }{n}\sum_{i=1}^{n}\left(f_{\mathcal{D_i}}(x^t)-f_i^{\inf}\right)\\
			& = \left\|r^{t}\right\|^2 - 2\gamma\left( \tf(x^t) - \tf^{\inf}\right) + 2\gamma^2\max_i\left\lbrace L_{f_i}L_{\mS_i}^{\max} \right\rbrace\left(\tf(x^t) - \tf^{\inf}\right)\\
			& + 2\gamma^2\max_i\left\lbrace L_{f_i}L_{\mS_i}^{\max} \right\rbrace \left(\tf^{\inf} - \frac{1}{n}\sum_{i=1}^{n}f_i^{\inf}\right)\\
			& = \left\|r^{t}\right\|^2 - 2\gamma\left( \tf(x^t) - \tf^{\inf}\right)\left(1 - \gamma\max_i\left\lbrace L_{f_i}L_{\mS_i}^{\max} \right\rbrace\right)\\
			& + 2\gamma^2\max_i\left\lbrace L_{f_i}L_{\mS_i}^{\max} \right\rbrace \left(\tf^{\inf} - \frac{1}{n}\sum_{i=1}^{n}f_i^{\inf}\right).\\
		\end{split}
	\end{equation*}

	Rearranging and taking expectation, taking into account the condition on the stepsize, we have
	\begin{equation*}
		\begin{split}
			\gamma\left( \tf(x^t) - \tf^{\inf}\right) &\leq \Exp{\norm{r^t}^2} - \Exp{\norm{r^{t+1}}^2} + 2\gamma^2\max_i\left\lbrace L_{f_i}L_{\mS_i}^{\max} \right\rbrace\left(\tf^{\inf} - \frac{1}{n}\sum_{i=1}^{n}f_i^{\inf}\right).\\
		\end{split}
	\end{equation*}
	Summing over $t = 0,\ldots, T-1$ and using telescopic cancellation gives
	\begin{equation*}
		\begin{split}
			\gamma\sum_{t=0}^{T-1}\left(\Exp{\tf(x^t) - \tf^{\inf}}\right) &\leq \norm{r^0}^2 - \Exp{\norm{r^{T}}^2}\\ & + 2T\gamma^2\max_i\left\lbrace L_{f_i}L_{\mS_i}^{\max} \right\rbrace\left(\tf^{\inf} - \frac{1}{n}\sum_{i=1}^{n}f_i^{\inf}\right).\\
		\end{split}
	\end{equation*}
	Since $\Exp{\norm{r^{T}}^2} \geq 0$, dividing both sides by $\gamma T$ gives:
	\begin{equation*}
		\begin{split}
			\frac{1}{T}\sum_{t=0}^{T-1}\left(\Exp{\tf(x^t) - \tf^{\inf}}\right) &\leq \frac{\norm{r^0}^2}{\gamma T} + 2\gamma\max_i\left\lbrace L_{f_i}L_{\mS_i}^{\max} \right\rbrace\left(\tf^{\inf} - \frac{1}{n}\sum_{i=1}^{n}f_i^{\inf}\right).\\
		\end{split}
	\end{equation*}
	We treat the $\left(\frac{1}{T},\ldots,\frac{1}{T}\right)$ as if it is a probability vector.
	Indeed, using that $\tf$ is convex together with Jensen's inequality gives
	\begin{equation*}
		\begin{split}
			\Exp{\tf(\bar{x}^T) - \tf^{\inf}}&\leq \frac{\norm{r^0}^2}{\gamma T} + 2\gamma\max_i\left\lbrace L_{f_i}L_{\mS_i}^{\max} \right\rbrace\left(\tf^{\inf} - \frac{1}{n}\sum_{i=1}^{n}f_i^{\inf}\right).\\
		\end{split}
	\end{equation*}
\end{proof}
\begin{corollary}
	Fix $\delta>0.$ Choose the stepsize $\gamma>0$ as
	\begin{equation*}
		\gamma = \frac{1}{2\max_i\left\lbrace L_{f_i}L_{\mS_i}^{\max} \right\rbrace}\min\left\lbrace 1, \frac{\delta\norm{r^0}^2}{\tf^{\inf} - \frac{1}{n}\sum_{i=1}^{n}f_i^{\inf}} \right\rbrace.
	\end{equation*}
	Then, provided that
	\begin{equation*}
		T\geq \frac{2L_f\lambda_m^{\mS}}{\delta}\max\left\lbrace 1, \frac{\tf^{\inf} - f^{\inf}}{\delta\norm{r^0}^2}\right\rbrace,
	\end{equation*}
	we have $\Exp{\tf(\bar{x}^t) - \tf^{\inf}}\leq 2\delta\norm{r^0}^2.$
\end{corollary}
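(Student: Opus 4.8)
The plan is to obtain this corollary as an immediate consequence of the convergence bound established in the theorem it follows, namely $\Exp{\tf(\bar{x}^T) - \tf^{\inf}} \leq \frac{\norm{r^0}^2}{\gamma T} + 2\gamma\max_i\{L_{f_i}L_{\mS_i}^{\max}\}\bigl(\tf^{\inf} - \tfrac1n\sum_{i=1}^n f_i^{\inf}\bigr)$, which holds for every admissible stepsize $0<\gamma\leq\tfrac{1}{2\max_i\{L_{f_i}L_{\mS_i}^{\max}\}}$. Following the same recipe used in all the earlier stepsize-tuning corollaries, I would split the right-hand side into the ``optimization'' term $\tfrac{\norm{r^0}^2}{\gamma T}$ and the ``heterogeneity'' term $2\gamma\max_i\{L_{f_i}L_{\mS_i}^{\max}\}\bigl(\tf^{\inf} - \tfrac1n\sum_i f_i^{\inf}\bigr)$, force each of them to be at most $\delta\norm{r^0}^2$, and add.

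First I would check that the prescribed $\gamma$ is admissible: since $\gamma = \tfrac{1}{2\max_i\{L_{f_i}L_{\mS_i}^{\max}\}}\min\{1,\ \cdot\}\leq\tfrac{1}{2\max_i\{L_{f_i}L_{\mS_i}^{\max}\}}$, the theorem applies verbatim. The $\min$ in the definition of $\gamma$ also guarantees $\gamma\leq\tfrac{\delta\norm{r^0}^2}{2\max_i\{L_{f_i}L_{\mS_i}^{\max}\}(\tf^{\inf}-\frac1n\sum_i f_i^{\inf})}$, which rearranges exactly to $2\gamma\max_i\{L_{f_i}L_{\mS_i}^{\max}\}(\tf^{\inf}-\tfrac1n\sum_i f_i^{\inf})\leq\delta\norm{r^0}^2$, taking care of the heterogeneity term.

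For the optimization term I would invert the stepsize, $\tfrac1\gamma = 2\max_i\{L_{f_i}L_{\mS_i}^{\max}\}\max\{1,\ \tfrac{\tf^{\inf}-\frac1n\sum_i f_i^{\inf}}{\delta\norm{r^0}^2}\}$, so that the stated lower bound on $T$ is precisely $T\geq\tfrac{1}{\gamma\delta}$, whence $\tfrac{\norm{r^0}^2}{\gamma T}\leq\delta\norm{r^0}^2$. Summing the two estimates gives $\Exp{\tf(\bar{x}^T)-\tf^{\inf}}\leq 2\delta\norm{r^0}^2$, as claimed. I expect no genuine obstacle here, since this is a routine ``tune the stepsize to balance two terms'' argument; the only care needed is to keep the two branches of the $\min$ (and the matching $\max$ in the iteration count) consistent, and to read the shorthand appearing in the statement, $L_f\lambda_m^{\mS}$ and $f^{\inf}$, as the distributed quantities $\max_i\{L_{f_i}L_{\mS_i}^{\max}\}$ and $\tfrac1n\sum_i f_i^{\inf}$ that actually occur in the theorem it is derived from.
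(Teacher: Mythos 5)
Your proposal is correct and follows essentially the same route as the paper's proof: bound the heterogeneity term by $\delta\norm{r^0}^2$ via the second branch of the $\min$, and verify $\frac{\norm{r^0}^2}{\gamma T}\leq \delta\norm{r^0}^2$ from the lower bound on $T$ (the paper does this by case-splitting on the two branches, while you observe the cleaner identity $T\geq \frac{1}{\gamma\delta}$ directly). You also correctly read the typographical shorthand $L_f\lambda_m^{\mS}$ and $f^{\inf}$ in the statement as the distributed quantities $\max_i\left\lbrace L_{f_i}L_{\mS_i}^{\max}\right\rbrace$ and $\frac{1}{n}\sum_{i=1}^{n}f_i^{\inf}$ actually appearing in the theorem.
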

\begin{proof}
	Since $\gamma\leq\frac{\delta\norm{r^0}^2}{2\max_i\left\lbrace L_{f_i}L_{\mS_i}^{\max} \right\rbrace\left(\tf^{\inf} - \frac{1}{n}\sum_{i=1}^{n}f_i^{\inf}\right)},$ we have that
	\begin{equation*}
		2\gamma\max_i\left\lbrace L_{f_i}L_{\mS_i}^{\max} \right\rbrace\left(\tf^{\inf} - \frac{1}{n}\sum_{i=1}^{n}f_i^{\inf}\right)\leq\delta\norm{r^0}^2.
	\end{equation*}
	If $\gamma = \frac{\delta\norm{r^0}^2}{2\max_i\left\lbrace L_{f_i}L_{\mS_i}^{\max} \right\rbrace\left(\tf^{\inf} - \frac{1}{n}\sum_{i=1}^{n}f_i^{\inf}\right)},$ then, since
	\begin{equation*}
		T \geq \frac{2\max_i\left\lbrace L_{f_i}L_{\mS_i}^{\max} \right\rbrace\left(\tf^{\inf} - \frac{1}{n}\sum_{i=1}^{n}f_i^{\inf}\right)}{\delta^2\norm{r^0}^2},
	\end{equation*}
	we obtain that $\frac{\norm{r^0}^2}{\gamma T}\leq \delta\norm{r^0}^2.$ Further, if $\gamma = \frac{1}{2\max_i\left\lbrace L_{f_i}L_{\mS_i}^{\max} \right\rbrace},$ then, since $T\geq \frac{2\max_i\left\lbrace L_{f_i}L_{\mS_i}^{\max} \right\rbrace}{\delta},$ we have $\frac{\norm{r^0}^2}{\gamma T}\leq\delta\norm{r^0}^2.$ Thus, we arrive at $\Exp{\tf(\bar{x}^t) - \tf^{\inf}}\leq 2\delta\left\|r^0\right\|^2.$
\end{proof}

\clearpage

\section{Variance reduction} \label{sec:variance_reduction}
\begin{algorithm}[t]
\begin{algorithmic}[1]
\caption{Loopless Stochastic Variance Reduced Double Sketched Gradient ($\mathsf{L}$-$\mathsf{SVRDSG}$)} \label{alg:L-SVRSG}
    \STATE \textbf{Parameters:} learning rate $\gamma > 0$; probability $p$; sketches $\mS_1, \ldots, \mS_{N}$; initial model and shift $x^0, \shift \in \R^d$, sketch minibatch size $b$; initial sketch minibatch $\mathcal{B}^0 \subset [N]$.
    \STATE \textbf{Initialization: } $w^0=x^0$, $\hat{h}^0 = \frac{1}{b}\sum \limits_{i \in \mathcal{B}^0}\nabla f_{\mS_i}(w^0)$.
    \FOR{$t = 0, 1, 2 \ldots$}
    \STATE Sample a sketch: $\mS^t$ from $\{\mS_1, \ldots, \mS_{N}\}$
    \STATE Form a gradient estimator: $h^t = \nabla f_{\mS^t}(x^t) - \nabla f_{\mS^t}(w^t) + \hat{h}^t$.
    \STATE Perform a gradient-type step: $x^{t+1} = x^t - \gamma h^t$
    \STATE Sample a Bernoulli random variable $\beta_p$
    \IF{$\beta_p = 1$}
        \STATE Sample $\mathcal{B}^{t}$ uniformly without replacement
       \STATE $w^{t+1} = x^t, \quad \hat{h}^{t+1} = \frac{1}{b}\sum \limits_{i \in \mathcal{B}^{t}}\nabla f_{\mS^t_i}(x^{t}) $
    \ELSE
        \STATE $w^{t+1} = w^t , \quad \hat{h}^{t+1} = \hat{h}^{t}$
    \ENDIF
    \ENDFOR
\end{algorithmic}
\end{algorithm}
In Theorem~\ref{thm:str_conv}, we established linear convergence toward a neighborhood of the solution $x_\cD^\star$ for Algorithm \ref{alg:SGD} (I). To reach the exact solution, the stepsize must decrease to zero, resulting in slower sublinear convergence. The neighborhood's size is linked to the variance of gradient estimator at the solution. Various Variance Reduction (VR) techniques have been proposed to address this issue \citep{gower2020variance}.
Consider the case when distribution $\cD$ is uniform and has finite support, i.e., $\{\mS_1, \dots, \mS_N\}$ leading to a finite-sum modification of the MAST problem \eqref{eq:pretrained_compressed_problem}
\begin{equation} \label{eq:fin_sum}
    \tf(x) = \frac{1}{N} \sum_{i=1}^N \mathbb{E}\left[f(\shift + \mS_i(x - \shift))\right].
\end{equation}
In this situation, VR-methods can eliminate the neighborhood enabling linear convergence to the solution.
We utilize the \lsvrg \citep{kovalev2020don, hofmann2015variance} approach, which requires computing the full gradient with probability $p$.
For our formulation, calculating $\nabla f_{\mS}$ for all possible $\mS$ is rarely feasible. For instance, for \rnd{\text{$K$}}, there are
$N = d!/(K!(d-K)!)$ possible operators $\mS_i$.
Therefore, in our Algorithm \ref{alg:L-SVRSG}, we employ a sketch \textit{minibatch} estimator $\hat{h}^t$ computed for a subset $\mathcal{B} \subset [N]$ (sampled uniformly without replacement) of sketches instead of the full gradient.
Finally, we present the convergence results for the strongly convex case.

\begin{theorem} \label{thm:str_cvx-vr}
Assume that $f$ is $L_f$-smooth (\ref{ass:l_f_smooth}), $\mu_f$-strongly convex \eqref{ass:mu_f-convex}, and $\mS$ is sampled from finite set $\{\mS_1,\ldots, \mS_{N}\}$.
Then, for stepsize $\gamma \leq 1/(20 L_f L_{\mS}^{\max})$ and sketch minibatch size $b \in (0, N]$, the iterates of Algorithm~\ref{alg:L-SVRSG} satisfy
\begin{equation*}
\begin{split}
	\mathbb{E} \left[\Psi^{T}\right]&\leq \left(1 - \rho \right)^T\Psi^0
    + \frac{8\gamma^2L_fL_{\mS}^{\max}\left(N - b\right) }{\rho  \max\left\lbrace  1, N - 1\right\rbrace b} \left(\tf^{\inf}- f^{\inf}\right),
\end{split}
\end{equation*}
where $\rho \eqdef \max\left\lbrace\gamma \mu_{\mathcal{D}} \mu_f, \sfrac{p}{2}\right\rbrace$ and Lyapunov function
$$\Psi^t \eqdef \|x^t - x_\cD^\star\|^2 + \frac{16 \gamma^2}{pN} \sum\limits^{N}_{i = 1}\left\|\nabla f_{\mS_i}(w^t) - \nabla f_{\mS_i}(x_\cD^\star)\right\|^2.$$
\end{theorem}
Note that the achieved result demonstrates linear convergence towards the solution's neighborhood. However, this neighborhood is roughly reduced by a factor of $1/b$ compared to Theorem~\ref{thm:str_conv}, and it scales with $N-b$. Thus, when employing a full gradient for $\hat{h}^t$ with $b=N$, the neighborhood shrinks to zero, resulting in a linear convergence rate to the exact solution.

\subsection{$\mathsf{L}$-$\mathsf{SVRDSG}$: strongly convex analysis}

The proof of Theorem \ref{thm:str_cvx-vr} relies on the following Lemma:
\begin{lemma}
\label{lemma:contr-lsvrg}
	Let Assumptions \ref{ass:C} and \ref{ass:mu_f-convex} hold. Then the following inequality holds:
 \begin{align*}
   \Exp{\left\|x^{t+1} - x_\cD^\star\right\|^2}   = \left(1 - \gamma\mu_{\mathcal{D}}\mu_f\right)\left\|x^t - x_\cD^\star\right\|^2 -2\gamma\left(\tf(x^t) - f(x_\cD^\star)\right) + \gamma^2\Exp{\left\|h^t\right\|^2}.
 \end{align*}
\end{lemma}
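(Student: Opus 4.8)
The plan is to run the standard one-step contraction argument for stochastic (proximal) gradient methods, using only the algebraic expansion of a square, the conditional unbiasedness of the variance-reduced estimator $h^t$, and the $\mu_{\mathcal{D}}\mu_f$-strong convexity of $\tf$ established in Lemma~\ref{lemma:consequences_strongly_convex}. First I would write, from the update $x^{t+1} = x^t - \gamma h^t$, the exact identity
\[ \left\|x^{t+1} - x_\cD^\star\right\|^2 = \left\|x^t - x_\cD^\star\right\|^2 - 2\gamma\ve{h^t}{x^t - x_\cD^\star} + \gamma^2\left\|h^t\right\|^2, \]
and take the expectation conditioned on the past (the iterate $x^t$ together with the current reference pair $w^t,\hat h^t$), so that only the last two terms are affected.

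Next I would establish that $h^t$ is a (conditionally) unbiased estimator of $\nabla\tf(x^t)$, i.e.\ $\Exp{h^t} = \nabla\tf(x^t)$. Here the loopless-SVRG structure of Algorithm~\ref{alg:L-SVRG} enters: $h^t = \nabla f_{\mS^t}(x^t) - \nabla f_{\mS^t}(w^t) + \hat h^t$, and since $\mS^t$ is sampled uniformly from $\{\mS_1,\dots,\mS_N\}$ we have $\Exp{\nabla f_{\mS^t}(x)} = \frac1N\sum_{i=1}^N \nabla f_{\mS_i}(x) = \nabla\tf(x)$ for every fixed $x$ (Assumption~\ref{ass:C} together with the finite-sum form of $\tf$); combined with the fact that the sketch-minibatch average $\hat h^t$ is unbiased for $\nabla\tf(w^t)$ in the sense of Lemma~\ref{lemma:RR}, the control variate $\hat h^t - \nabla f_{\mS^t}(w^t)$ contributes zero net mean, leaving $\Exp{h^t} = \nabla\tf(x^t)$. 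Substituting this into the expansion turns the cross term into $-2\gamma\ve{\nabla\tf(x^t)}{x^t - x_\cD^\star}$.

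Finally, I would invoke $\mu_{\mathcal{D}}\mu_f$-strong convexity of $\tf$ (Lemma~\ref{lemma:consequences_strongly_convex}(ii)) together with $\nabla\tf(x_\cD^\star) = 0$ to obtain $\ve{\nabla\tf(x^t)}{x^t - x_\cD^\star} \ge \tf(x^t) - \tf(x_\cD^\star) + \tfrac{\mu_{\mathcal{D}}\mu_f}{2}\left\|x^t - x_\cD^\star\right\|^2$, and substitute:
\[ \Exp{\left\|x^{t+1} - x_\cD^\star\right\|^2} \le \left(1 - \gamma\mu_{\mathcal{D}}\mu_f\right)\left\|x^t - x_\cD^\star\right\|^2 - 2\gamma\bigl(\tf(x^t) - \tf(x_\cD^\star)\bigr) + \gamma^2\Exp{\left\|h^t\right\|^2}, \]
which is the asserted bound (with $\tf(x_\cD^\star) = \tf^{\inf}$; the relation is an inequality, since strong convexity is used).

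The only genuinely delicate point is the unbiasedness step: because the reference pair $(w^t,\hat h^t)$ is frozen between the rare Bernoulli resets, $\hat h^t$ is not the exact gradient $\nabla\tf(w^t)$, so one must set up the filtration with care and lean on Lemma~\ref{lemma:RR} to ensure the sketch minibatch introduces no bias into $\Exp{h^t}$ — its variance is then carried, untouched, inside the $\gamma^2\Exp{\|h^t\|^2}$ term, to be bounded (and, together with the $\sum_i\|\nabla f_{\mS_i}(w^t)-\nabla f_{\mS_i}(x_\cD^\star)\|^2$ control term of $\Psi^t$) in the subsequent lemmas leading to Theorem~\ref{thm:str_cvx-vr}. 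The expansion of the square and the strong-convexity inequality are routine.
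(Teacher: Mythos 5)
Your proposal matches the paper's proof essentially step for step: expand the square from $x^{t+1}=x^t-\gamma h^t$, use conditional unbiasedness of $h^t$ to replace the cross term with $-2\gamma\ve{\nabla\tf(x^t)}{x^t-x_\cD^\star}$, and apply the $\mu_{\mathcal{D}}\mu_f$-strong convexity of $\tf$ from Lemma~\ref{lemma:consequences_strongly_convex}(ii). You are also right that the final relation is properly a ``$\leq$'' (the paper writes ``$=$'' despite invoking strong convexity), and your remark about the filtration for the minibatch estimator $\hat h^t$ is a point the paper's own proof passes over silently.
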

\begin{proof}
We start from expanding squared norm:
	\begin{equation*}
		\begin{split}
			\Exp{\left\|x^{t+1} - x_\cD^\star\right\|^2} & = \Exp{\left\|x^t - \gamma h^t - x_\cD^\star\right\|^2}\\
			& = \Exp{\left\|x^t - x_\cD^\star\right\|^2} - 2\gamma \langle h^t, x^t - x_\cD^\star\rangle + \gamma^2\Exp{\left\|h^t\right\|^2}\\
			& = \Exp{\left\|x^t - x_\cD^\star\right\|^2} - 2\gamma \langle \nabla\tf(x^t) , x^t - x_\cD^\star\rangle + \gamma^2\Exp{\left\|h^t\right\|^2}\\
			& = \left(1 - \gamma\mu\mu_f\right)\left\|x^t - x_\cD^\star\right\|^2 -2\gamma\left(\tf(x^t) - f(x_\cD^\star)\right) + \gamma^2\Exp{\left\|h^t\right\|^2}.
		\end{split}
	\end{equation*}
\end{proof}

\begin{lemma}
\label{lemma:moment-lsvrg}
	Let Assumptions \ref{ass:C} and \ref{ass:l_f_smooth} hold. Then the following inequality holds:
 \begin{equation*}
 \begin{split}
     \Exp{\left\|h^t\right\|^2} &\leq 8L_fL_{\mS^t}^{\max}\left(\tf(x^t) - \tf(x_\cD^\star)\right) + 8 \frac{1}{N}\sum_{i = 1}^{N}\left\|\nabla f_{\mS^t_i}(w^t) - \nabla f_{\mS^t_i}(x_\cD^\star)\right\|^2\\
     &+\frac{8(N-b)}{(N-1)b} L_fL_{\mS^t}^{\max}\left(\tf^{\inf} - f^{\inf}\right).
    \end{split}
 \end{equation*}
\end{lemma}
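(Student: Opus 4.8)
Write the estimator as $h^t = g^t + e^t$, where $g^t \eqdef \nabla f_{\mS^t}(x^t) - \nabla f_{\mS^t}(w^t) + \nabla\tf(w^t)$ is the idealized SVRG-type direction built from the \emph{exact} reference gradient $\nabla\tf(w^t)$, and $e^t \eqdef \hat{h}^t - \nabla\tf(w^t)$ is the error from replacing $\nabla\tf(w^t)$ by the sketch-minibatch average $\hat{h}^t = \tfrac{1}{b}\sum_{i\in\mathcal{B}}\nabla f_{\mS_i}(w^t)$. Throughout, $\Exp{\cdot}$ denotes expectation over the fresh sketch $\mS^t$ and the minibatch $\mathcal{B}$ forming $\hat{h}^t$, conditioned on $x^t$ and $w^t$; since $\mS^t$ is drawn independently of $\mathcal{B}$ and $\ExpSub{\mathcal{B}}{\hat{h}^t} = \nabla\tf(w^t)$, we have $\Exp{e^t}=0$. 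By Young's inequality $\Exp{\sqn{h^t}} \le 2\Exp{\sqn{g^t}} + 2\Exp{\sqn{e^t}}$, and I bound the two pieces separately. Besides Assumptions~\ref{ass:C} and~\ref{ass:l_f_smooth}, I use convexity of $f$, which is available under the hypotheses of Theorem~\ref{thm:str_cvx-vr}.

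\textbf{Bounding $\Exp{\sqn{g^t}}$.} Using $\nabla\tf(x_\cD^\star)=0$, rewrite $g^t = \br{\nabla f_{\mS^t}(x^t) - \nabla f_{\mS^t}(x_\cD^\star)} - \br{\nabla f_{\mS^t}(w^t) - \nabla f_{\mS^t}(x_\cD^\star) - \nabla\tf(w^t)}$, where the second group is exactly the deviation of $\nabla f_{\mS^t}(w^t) - \nabla f_{\mS^t}(x_\cD^\star)$ from its $\mS^t$-mean. Young's inequality plus ``variance $\le$ second moment'' gives $\Exp{\sqn{g^t}} \le 2\Exp{\sqn{\nabla f_{\mS^t}(x^t) - \nabla f_{\mS^t}(x_\cD^\star)}} + 2\Exp{\sqn{\nabla f_{\mS^t}(w^t) - \nabla f_{\mS^t}(x_\cD^\star)}}$. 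For the first term, $f_{\mS}$ is convex (as $f$ is) and $L_{\mS}^{\max}L_f$-smooth (Lemma~\ref{lemma:smoothness_appendix}(i), using $L_{\mS}\le L_{\mS}^{\max}$), so applying Lemma~\ref{lemma:smoothness_bregman} to the shifted function $\phi_{\mS}(x) = f_{\mS}(x) - \ve{\nabla f_{\mS}(x_\cD^\star)}{x}$ — convex, hence bounded below, with minimizer $x_\cD^\star$ — yields $\sqn{\nabla f_{\mS}(x^t) - \nabla f_{\mS}(x_\cD^\star)} \le 2L_{\mS}^{\max}L_f\,D_{f_{\mS}}(x^t,x_\cD^\star)$; averaging over $\mS$, invoking the identity $D_{\tf}(x,y) = \Exp{D_{f_{\mS}}(x,y)}$ and $\nabla\tf(x_\cD^\star)=0$, the first term is at most $4L_fL_{\mS}^{\max}\br{\tf(x^t)-\tf(x_\cD^\star)}$. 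The second term equals $\tfrac{1}{N}\sum_{i=1}^{N}\sqn{\nabla f_{\mS_i}(w^t) - \nabla f_{\mS_i}(x_\cD^\star)}$. Hence $\Exp{\sqn{g^t}} \le 4L_fL_{\mS}^{\max}\br{\tf(x^t)-\tf(x_\cD^\star)} + \tfrac{2}{N}\sum_{i=1}^{N}\sqn{\nabla f_{\mS_i}(w^t) - \nabla f_{\mS_i}(x_\cD^\star)}$.

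\textbf{Bounding $\Exp{\sqn{e^t}}$.} Apply Lemma~\ref{lemma:RR} to the vectors $X_i = \nabla f_{\mS_i}(w^t)$, whose average is $\tfrac1N\sum_i X_i = \nabla\tf(w^t)$: this gives $\Exp{\sqn{e^t}} = \tfrac{N-b}{b\max\{1,N-1\}}\cdot\tfrac{1}{N}\sum_{i=1}^{N}\sqn{\nabla f_{\mS_i}(w^t) - \nabla\tf(w^t)}$. Recentering each summand around $\nabla f_{\mS_i}(x_\cD^\star)$ and using $\tfrac1N\sum_i\nabla f_{\mS_i}(x_\cD^\star) = \nabla\tf(x_\cD^\star) = 0$ together with ``variance $\le$ second moment'' bounds $\tfrac1N\sum_i\sqn{\nabla f_{\mS_i}(w^t) - \nabla\tf(w^t)}$ by $\tfrac2N\sum_i\sqn{\nabla f_{\mS_i}(w^t) - \nabla f_{\mS_i}(x_\cD^\star)} + \tfrac2N\sum_i\sqn{\nabla f_{\mS_i}(x_\cD^\star)}$; the last sum is $\Exp{\sqn{\nabla f_{\mS}(x_\cD^\star)}}$, which Lemma~\ref{lemma:ac} at $x=x_\cD^\star$ (where $\tf(x_\cD^\star)=\tf^{\inf}$) bounds by $2L_fL_{\mS}^{\max}\br{\tf^{\inf}-f^{\inf}}$. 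So $\Exp{\sqn{e^t}} \le \tfrac{N-b}{b\max\{1,N-1\}}\br{\tfrac{2}{N}\sum_{i=1}^{N}\sqn{\nabla f_{\mS_i}(w^t) - \nabla f_{\mS_i}(x_\cD^\star)} + 4L_fL_{\mS}^{\max}\br{\tf^{\inf}-f^{\inf}}}$.

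\textbf{Combining, and the hard part.} Plugging both estimates into $\Exp{\sqn{h^t}} \le 2\Exp{\sqn{g^t}} + 2\Exp{\sqn{e^t}}$ and using $\tfrac{N-b}{b\max\{1,N-1\}}\le 1$ (valid for all $b\in[1,N]$) to collapse the coefficient $4 + 4\cdot\tfrac{N-b}{b\max\{1,N-1\}}$ of $\tfrac1N\sum_i\sqn{\nabla f_{\mS_i}(w^t) - \nabla f_{\mS_i}(x_\cD^\star)}$ down to $8$ yields exactly the claimed inequality. The main obstacle is the expectation bookkeeping: one must keep the freshly sampled $\mS^t$ and the minibatch $\mathcal{B}$ underlying $\hat{h}^t$ independent (so that $e^t$ is mean-zero and the $g^t$/$e^t$ split is legitimate) and route the minibatch-sampling variance through Lemma~\ref{lemma:RR} and the recentering step without inflating constants; the co-coercivity step — which genuinely needs both smoothness and convexity of $f_{\mS}$ — is precisely what forces the $\br{\tf^{\inf}-f^{\inf}}$ contribution to carry the factor $\tfrac{N-b}{b\max\{1,N-1\}}$ and hence vanish at $b=N$.
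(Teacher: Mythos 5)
Your proposal is correct and follows essentially the same route as the paper's proof: recentering all gradient terms around $\nabla f_{\mS_i}(x_\cD^\star)$, Young's inequality, the co-coercivity bound $\sqn{\nabla f_{\mS}(x)-\nabla f_{\mS}(x_\cD^\star)}\le 2L_fL_{\mS}^{\max}D_{f_{\mS}}(x,x_\cD^\star)$, Lemma~\ref{lemma:RR} for the without-replacement minibatch variance, and Lemma~\ref{lemma:ac} at $x_\cD^\star$; the paper simply does a single four-term split with factor $4$ where you do a nested $2+2$ split, and the constants come out identically. Your version is in fact slightly more careful, since you explicitly flag that the co-coercivity step needs convexity of $f$ (available in the setting of Theorem~\ref{thm:str_cvx-vr} but not listed among the lemma's stated assumptions), a point the paper's proof glosses over by citing only \eqref{eq:smoothness_bregman}.
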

\begin{proof}
We start the proof from the definition of $h^t$:
	\begin{equation*}
		\begin{split}
			\Exp{\left\|h^t\right\|^2} &=\Exp{\left\|\nabla f_{\mS^t}(x^t) - \nabla f_{\mS^t}(w^t) + \frac{1}{b}\sum_{i\in\mathcal{B}^t}\nabla f_{\mS^t_i}(w^t)\right\|^2}\\
			& = \mathbb{E}\left[\left\|\nabla f_{\mS^t}(x^t) - \nabla f_{\mS^t}(w^t) + \frac{1}{b}\sum_{i\in\mathcal{B}^t}\nabla f_{\mS^t_i}(w^t) - \nabla f_{\mS^t}(x_\cD^\star) + \nabla f_{\mS^t}(x_\cD^\star)\right.\right.\\
			&\left.\left. - \frac{1}{b}\sum_{i\in\mathcal{B}^t}\nabla f_{\mS^t_i}(x_\cD^\star) + \frac{1}{b}\sum_{i\in\mathcal{B}^t}\nabla f_{\mS^t_i}(x_\cD^\star)\right\|^2\right]\\
			& \leq 4\Exp{\left\|\nabla f_{\mS^t}(x^t) - \nabla f_{\mS^t}(x_\cD^\star)\right\|^2} + 4\Exp{\left\|\nabla f_{\mS^t}(w^t) - \nabla f_{\mS^t}(x_\cD^\star)\right\|^2}\\
			& + 4\Exp{\left\|\frac{1}{b}\sum_{i\in\mathcal{B}^t}\left(\nabla f_{\mS^t_i}(w^t) - \nabla f_{\mS^t_i}(x_\cD^\star)\right)\right\|^2} + 4\Exp{\left\|\frac{1}{b}\sum_{i\in\mathcal{B}^t}\nabla f_{\mS^t_i}(x_\cD^\star)\right\|^2}\\
			& \stackrel{\eqref{eq:smoothness_bregman}}{\leq} 8L_fL_{\mS^t}^{\max}\left(\tf(x^t) - \tf(x_\cD^\star)\right) + 8 \frac{1}{N}\sum_{i = 1}^{N}\left\|\nabla f_{\mS^t_i}(w^t) - \nabla f_{\mS^t_i}(x_\cD^\star)\right\|^2\\ & + 4\Exp{\left\|\frac{1}{b}\sum_{i\in\mathcal{B}^t}\nabla f_{\mS^t_i}(x_\cD^\star)\right\|^2}.
		\end{split}
	\end{equation*}
Using Lemma \ref{lemma:RR} we obtain
\begin{align*}
    \Exp{\left\|h^t\right\|^2} &\leq 8L_fL_{\mS^t}^{\max}\left(\tf(x^t) - \tf(x_\cD^\star)\right) + 8 \frac{1}{N}\sum_{i = 1}^{N}\left\|\nabla f_{\mS^t_i}(w^t) - \nabla f_{\mS^t_i}(x_\cD^\star)\right\|^2\\
    &+ 4\frac{N-b}{\max\{1,N-1\}b} \frac{1}{N} \sum_{i\in\mathcal{B}^t}\left\|\nabla f_{\mS^t_i}(x_\cD^\star)\right\|^2\\
    &\leq 8L_fL_{\mS^t}^{\max}\left(\tf(x^t) - \tf(x_\cD^\star)\right) + 8 \frac{1}{N}\sum_{i = 1}^{N}\left\|\nabla f_{\mS^t_i}(w^t) - \nabla f_{\mS^t_i}(x_\cD^\star)\right\|^2\\
    &+ \frac{8(N-b)}{\max\{1,N-1\}b} L_fL_{\mS^t}^{\max}\left(\tf^{\inf} - f^{\inf}\right).
\end{align*}
\end{proof}

\begin{lemma}
\label{lemma:prob-lsvrg}
	Let Assumptions \ref{ass:C} and \ref{ass:l_f_smooth} hold.     Let $D^t = \frac{16\gamma^2}{pN}\sum \limits_{i=1}^N\left\|\nabla f_{\mS^t_i}(x_\cD^\star) - \nabla f_{\mS^t_i}(w^t)\right\|^2.$ Then the following inequality holds:
    \begin{equation*}
	\begin{split}
		\Exp{D^{t+1}} \leq (1-p)D^t + 32\gamma^2L_fL_{\mS^t}^{\max}\left(\tf(w^t) - \tf(x_\cD^\star)\right).
	\end{split}
\end{equation*}
\end{lemma}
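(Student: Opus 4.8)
The plan is to compute $\Exp{D^{t+1}}$ by conditioning on $\mathcal{F}^t$ and averaging over the fresh randomness of step $t$ — the sketch $\mS^t$, the resampled minibatch $\mathcal{B}^t$ together with its attached sketches $\{\mS^t_i\}$, and above all the Bernoulli coin $\beta_p$ — exploiting the loopless structure in which the snapshot and its sketch collection are frozen with probability $1-p$ and refreshed with probability $p$.

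First I would split according to $\beta_p$. On the event $\{\beta_p=0\}$ the algorithm keeps $w^{t+1}=w^t$, $\hat h^{t+1}=\hat h^t$, and the sketch collection feeding $D$ is left untouched, so $D^{t+1}=D^t$ holds verbatim and this event contributes exactly $(1-p)D^t$. On the event $\{\beta_p=1\}$ the reference is advanced and both $\mathcal{B}^t$ and the attached sketches $\{\mS^t_i\}$ are resampled, so the summands of $D^{t+1}$ become squared gradient differences $\|\nabla f_{\mS_i}(w^{t})-\nabla f_{\mS_i}(x_\cD^\star)\|^2$ evaluated at the refreshed snapshot $w^{t}$. Crucially, this event carries probability $p$, which cancels the factor $1/p$ in the prefactor of $D^{t+1}$ and leaves precisely $\tfrac{16\gamma^2}{N}\sum_{i=1}^N\|\nabla f_{\mS_i}(w^{t})-\nabla f_{\mS_i}(x_\cD^\star)\|^2$.

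The decisive step is converting this sum of squared gradient differences into a function-value gap. Each $f_{\mS_i}$ is convex by Lemma~\ref{lem:convex-only} and $L_{\mS_i}L_f$-smooth by Lemma~\ref{lemma:smoothness}(i); applying the descent inequality~\eqref{eq:smoothness_bregman} to the shifted potential $z\mapsto f_{\mS_i}(z)-\langle\nabla f_{\mS_i}(x_\cD^\star),z\rangle$, which is convex, $L_{\mS_i}L_f$-smooth and minimized at $x_\cD^\star$, yields the cocoercivity bound $\|\nabla f_{\mS_i}(w^{t})-\nabla f_{\mS_i}(x_\cD^\star)\|^2\le 2L_{\mS_i}L_f\,D_{f_{\mS_i}}(w^{t},x_\cD^\star)$. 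Bounding $L_{\mS_i}\le L_{\mS^t}^{\max}$ almost surely, summing over $i$, and using $\tf=\tfrac1N\sum_i f_{\mS_i}$ together with the first-order optimality $\nabla\tf(x_\cD^\star)=0$ collapses the averaged Bregman divergence to $\tf(w^{t})-\tf(x_\cD^\star)$. Carrying the prefactor through produces the drift $32\gamma^2 L_f L_{\mS^t}^{\max}\bigl(\tf(w^{t})-\tf(x_\cD^\star)\bigr)$, and adding the $(1-p)D^t$ term from the frozen branch gives the claimed recursion.

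The main obstacle I anticipate is the bookkeeping of the refresh: one must verify that the $1/p$ normalization built into the definition of $D^t$ is exactly what turns the probability-$p$ refresh event into an unweighted drift term, and that the resampled sketches $\{\mS^t_i\}$ entering $D^{t+1}$ are conditionally identically distributed, so that the per-sketch cocoercivity estimate can be aggregated uniformly through the almost-sure bound $L_{\mS^t}^{\max}$. A secondary subtlety is that the without-replacement minibatch sampling controlled by Lemma~\ref{lemma:RR} plays no role in \emph{this} recursion — it enters only the second-moment estimate of Lemma~\ref{lemma:moment-lsvrg} — so here I need solely the unbiasedness of the snapshot refresh, not its variance.
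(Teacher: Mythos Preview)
Your approach is essentially identical to the paper's: split according to the Bernoulli coin $\beta_p$, retain $(1-p)D^t$ on the frozen branch, and on the refresh branch bound each $\|\nabla f_{\mS_i}(w^t)-\nabla f_{\mS_i}(x_\cD^\star)\|^2$ by $2L_fL_{\mS}^{\max}$ times the corresponding Bregman divergence, then average and use $\nabla\tf(x_\cD^\star)=0$. The paper compresses the second step into a single line (``Using smoothness property'') citing~\eqref{eq:smoothness_bregman}, whereas you spell out the shifted-function cocoercivity argument; both are the same inequality, and your remark that convexity of $f$ is implicitly needed here---absent from the lemma's stated hypotheses but present in Theorem~\ref{thm:str_cvx-vr} where the lemma is used---is a point the paper glosses over.
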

\begin{proof}
Using the smoothness property, we obtain
    \begin{equation*}
	\begin{split}
		\Exp{D^{t+1}} & = (1-p)D^t + p\frac{16\gamma^2}{pN}\sum_{i=1}^{N}\Exp{\left\|\nabla f_{\mS^t_i}(x_\cD^\star) - \nabla f_{\mS^t_i}(w^t)\right\|^2}\\
		& \leq (1-p)D^t + 32\gamma^2L_fL_{\mS^t}^{\max}\left(\tf(w^t) - \tf(x_\cD^\star)\right).
	\end{split}
\end{equation*}
\end{proof}

\begin{theorem}
\label{thm:strong-lsvrg}
Assume that $f$ is $L_f$-smooth (\ref{ass:l_f_smooth}), $\mu_f$-strongly convex \eqref{ass:mu_f-convex}, and $\mS$ is sampled from finite set $\{\mS_1,\ldots, \mS_{N}\}$.
Then, for stepsize $\gamma \leq 1/(20 L_f L_{\mS}^{\max})$ and sketch minibatch size $b \in (0, N]$, the iterates of Algorithm~\ref{alg:L-SVRSG} satisfy
  	\begin{equation*}
	\begin{split}
		\mathbb{E} \left[\Psi^{T}\right]&\leq \left(1 - \rho \right)^T\Psi^0  + \frac{8\gamma^2L_fL_{\mS}^{\max}\left(N - b\right) }{\rho  \max\left\lbrace  1, N - 1\right\rbrace b} \left(\tf^{\inf}- f^{\inf}\right),
	\end{split}
    \end{equation*}
    where $\rho \eqdef \max\left\lbrace\gamma \mu_{\mathcal{D}} \mu_f, \sfrac{p}{2}\right\rbrace$ and Lyapunov function $\Psi^t \eqdef \|x^t - x_\cD^\star\|^2 + \frac{16 \gamma^2}{pN} \sum\limits^{N}_{i = 1}\left\|\nabla f_{\mS_i}(w^t) - \nabla f_{\mS_i}(x_\cD^\star)\right\|^2.$
\end{theorem}
\begin{proof} We combine three previous lemmas \ref{lemma:contr-lsvrg},\ref{lemma:moment-lsvrg}, \ref{lemma:prob-lsvrg}:
	\begin{equation*}
		\begin{split}
			\Exp{\left\|x^{t+1} - x_\cD^\star\right\|^2 + D^{t+1}} & \leq \left(1-\gamma\mu_{\mathcal{D}}\mu_f\right)\left\|x^{t}- x_\cD^\star\right\|^2 - 2\gamma\left(\tf(x^t) - f(x_\cD^\star)\right) + \gamma^2\Exp{\left\|g^t\right\|^2}\\
			& + (1-p)D^t + 32\gamma^2L_fL_{\mS}^{\max} \left(\tf(w^t) - \tf(x_\cD^\star)\right)\\
			& \leq \left(1-\gamma\mu_{\mathcal{D}}\mu_f\right)\left\|x^{t}- x_\cD^\star\right\|^2 - 2\gamma\left(\tf(x^t) - f(x_\cD^\star)\right) + (1-p)D^t\\
			& + 32\gamma^2L_fL_{\mS}^{\max} \left(\tf(w^t) - \tf(x_\cD^\star)\right)\\
			& + \gamma^2\left(8L_fL_{\mS}^{\max} \left(\tf(w^t) - \tf(x_\cD^\star)\right) + \frac{p}{2\gamma^2}D^t\right.\\
                & \left.+  \frac{8(N-b)}{\max\{1,N-1\}b} L_fL_{\mS^t}^{\max}\left(\tf^{\inf} - f^{\inf}\right)\right)\\
			& = \left(1-\gamma\mu_{\mathcal{D}}\mu_f\right)\left\|x^{t}- x_\cD^\star\right\|^2\\
                &- 2\gamma\left(\tf(x^t) - f(x_\cD^\star)\right)\left(1 - 20\gamma L_fL_{\mS}^{\max}\right)\\
			& + \left(1-\frac{p}{2}\right)D^t +  \frac{8(N-b)}{\max\{1,N-1\}b} \gamma^2 L_fL_{\mS}^{\max}\left(\tf^{\inf} - f^{\inf}\right).
		\end{split}
	\end{equation*}
	Since $\gamma\leq \frac{1}{20L_fL_{\mS}^{\max}},$ we get that
	\begin{equation*}
		\begin{split}
			\Psi^{t+1} \leq \left(1 - \max\left\lbrace\gamma\mu_{\mathcal{D}}\mu_f, \frac{p}{2}\right\rbrace\right)\Psi^t + \frac{8(N-b)}{\max\{1,N-1\}b} \gamma^2 L_fL_{\mS}^{\max}\left(\tf^{\inf} - f^{\inf}\right).
		\end{split}
	\end{equation*}

Unrolling the recursion and using $\rho = \max\left\lbrace\gamma\mu_{\mathcal{D}}\mu_f, \frac{p}{2}\right\rbrace$, we obtain
  	\begin{equation*}
	\begin{split}
		\mathbb{E} \left[\Psi^{T}\right]&\leq \left(1 - \rho \right)^T\Psi^0  + \frac{8\gamma^2L_fL_{\mS}^{\max}\left(N - b\right) }{\rho  \max\left\lbrace  1, N - 1\right\rbrace b} \left(\tf^{\inf}- f^{\inf}\right),
	\end{split}
    \end{equation*}

\end{proof}

\subsubsection{Convex analysis}

Now we formulate and prove theorem for the general (non-strongly) convex regime:
\begin{theorem}
Assume that $f$ is $L_f$-smooth (\ref{ass:l_f_smooth}), convex, and $\mS$ is sampled from finite set $\{\mS_1,\ldots, \mS_{N}\}$.
Then, for stepsize $\gamma \leq 1/(40 L_f L_{\mS}^{\max})$ and sketch minibatch size $b \in (0, N]$ the iterates of Algorithm~\ref{alg:L-SVRSG} satisfy
    $$  \Exp{\tf(\bar{x}^t)} - f(x_\cD^\star) \leq \frac{\Psi^0}{\gamma T} + \frac{8(N-b)}{\max\{1,N-1\}b} \gamma L_fL_{\mS}^{\max}\left(\tf^{\inf} - f^{\inf}\right).$$
    where $\bar{x}^T = \frac{1}{T}\sum^{T-1}_{t=0}x^t$  and Lyapunov function $\Psi^t \eqdef \|x^t - x_\cD^\star\|^2 + \frac{16 \gamma^2}{pN} \sum\limits^{N}_{i = 1}\left\|\nabla f_{\mS_i}(w^t) - \nabla f_{\mS_i}(x_\cD^\star)\right\|^2.$
\end{theorem}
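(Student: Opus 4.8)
The plan is to run the loopless-SVRG template already used for Theorem~\ref{thm:strong-lsvrg}, but — since strong convexity is absent — to \emph{keep} the function-value descent term rather than discard it, as it is now the only quantity that telescopes. First I would recall the three one-step estimates established in Lemmas~\ref{lemma:contr-lsvrg}, \ref{lemma:moment-lsvrg}, \ref{lemma:prob-lsvrg}, none of which uses strong convexity. In particular, in Lemma~\ref{lemma:contr-lsvrg} the step $\langle\nabla\tf(x^t),x^t-x_\cD^\star\rangle\ge\tf(x^t)-\tf(x_\cD^\star)+\frac{\mu_{\mathcal{D}}\mu_f}{2}\|x^t-x_\cD^\star\|^2$ is replaced by its $\mu_f=0$ form (plain convexity of $\tf$, valid by Lemma~\ref{lem:convex-only_appendix}), giving
\[
\Exp{\|x^{t+1}-x_\cD^\star\|^2}=\|x^t-x_\cD^\star\|^2-2\gamma\big(\tf(x^t)-\tf(x_\cD^\star)\big)+\gamma^2\Exp{\|h^t\|^2}.
\]

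The core step is the combination. Adding the control-variate recursion for $D^t=\frac{16\gamma^2}{pN}\sum_i\|\nabla f_{\mS_i}(w^t)-\nabla f_{\mS_i}(x_\cD^\star)\|^2$ to the identity above and substituting Lemma~\ref{lemma:moment-lsvrg} for $\gamma^2\Exp{\|h^t\|^2}$ (bounding the sample-dependent $L_{\mS^t}$ by the almost-sure constant $L_{\mS}^{\max}$ throughout), I would recognize the term $8\gamma^2\frac1N\sum_i\|\nabla f_{\mS_i}(w^t)-\nabla f_{\mS_i}(x_\cD^\star)\|^2$ appearing there as exactly $\tfrac p2 D^t$, so the control-variate pieces collapse to $(1-p)D^t+\tfrac p2 D^t=(1-\tfrac p2)D^t\le D^t$. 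Collecting the three suboptimality contributions — the $-2\gamma$ term, the $8\gamma^2 L_fL_{\mS}^{\max}$ piece of the moment bound, and the $32\gamma^2 L_fL_{\mS}^{\max}$ piece coming from the $\beta_p=1$ branch (where $w^{t+1}=x^t$, which is what aligns this term with the descent term) — their net coefficient on $\tf(x^t)-\tf(x_\cD^\star)$ is $-2\gamma+40\gamma^2 L_fL_{\mS}^{\max}$. Here the stepsize cap enters: $\gamma\le 1/(40L_fL_{\mS}^{\max})$ forces $-2\gamma+40\gamma^2L_fL_{\mS}^{\max}\le-\gamma$, so, unlike the strongly convex proof that discards a nonpositive term, I retain
\[
\Exp{\Psi^{t+1}}\le\Psi^t-\gamma\big(\tf(x^t)-\tf(x_\cD^\star)\big)+\frac{8(N-b)}{\max\{1,N-1\}b}\,\gamma^2 L_fL_{\mS}^{\max}\big(\tf^{\inf}-f^{\inf}\big).
\]

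From here the argument is routine: rearrange to isolate $\gamma(\tf(x^t)-\tf(x_\cD^\star))$, take total expectation, sum over $t=0,\dots,T-1$ with telescoping cancellation of $\Psi^t$, drop $\Exp{\Psi^T}\ge 0$, and divide by $\gamma T$ to bound $\frac1T\sum_{t=0}^{T-1}\Exp{\tf(x^t)-\tf(x_\cD^\star)}$ by $\frac{\Psi^0}{\gamma T}+\frac{8(N-b)}{\max\{1,N-1\}b}\gamma L_fL_{\mS}^{\max}(\tf^{\inf}-f^{\inf})$. A final application of Jensen's inequality, using convexity of $\tf$ and $\bar x^T=\frac1T\sum_{t=0}^{T-1}x^t$, pushes the average inside the function, and identifying $\tf(x_\cD^\star)$ with $f(x_\cD^\star)$ (as elsewhere in the paper) gives the claimed estimate. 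The main obstacle is bookkeeping rather than conceptual: one must carefully track which suboptimality terms are evaluated at $x^t$ versus $w^t$, and the constant $40$ in the stepsize cap is exactly what is needed for $-2\gamma+40\gamma^2 L_fL_{\mS}^{\max}\le-\gamma$, so there is no slack to spend elsewhere.
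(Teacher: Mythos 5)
Your proposal matches the paper's proof essentially step for step: both set $\mu_f=0$ in the contraction lemma, combine the same three one-step lemmas so that the suboptimality coefficient becomes $-2\gamma\bigl(1-20\gamma L_fL_{\mS}^{\max}\bigr)\le-\gamma$ under the cap $\gamma\le 1/(40L_fL_{\mS}^{\max})$, bound $(1-\tfrac p2)D^t\le D^t$, telescope, and finish with Jensen. The $x^t$-versus-$w^t$ bookkeeping you flag is handled in the paper exactly as loosely as in your sketch, so there is no substantive difference.
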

\begin{proof}
    We start from the recursion in Theorem \ref{thm:strong-lsvrg}:
    	\begin{equation*}
		\begin{split}
			\Exp{\left\|x^{t+1} - x_\cD^\star\right\|^2 + D^{t+1}} & \leq
			 \left(1-\gamma\mu_{\mathcal{D}}\mu_f\right)\left\|x^{t}- x_\cD^\star\right\|^2\\& - 2\gamma\left(\tf(x^t) - f(x_\cD^\star)\right)\left(1 - 20\gamma L_fL_{\mS}^{\max}\right)\\
			& + \left(1-\frac{p}{2}\right)D^t +  \frac{8(N-b)}{\max\{1,N-1\}b} \gamma^2 L_fL_{\mS}^{\max}\left(\tf^{\inf} - f^{\inf}\right).
		\end{split}
	\end{equation*}
 Using $\mu_f = 0$ and $\left( 1- \frac{p}{2} \right) \leq 1$ we have
     	\begin{equation*}
		\begin{split}
			\Exp{\left\|x^{t+1} - x_\cD^\star\right\|^2 + D^{t+1}} & \leq
			 \left\|x^{t}- x_\cD^\star\right\|^2 - 2\gamma\left(\tf(x^t) - f(x_\cD^\star)\right)\left(1 - 20\gamma L_fL_{\mS}^{\max}\right)\\
			& + D^t +  \frac{8(N-b)}{\max\{1,N-1\}b} \gamma^2 L_fL_{\mS}^{\max}\left(\tf^{\inf} - f^{\inf}\right).
		\end{split}
	\end{equation*}
 Since $\gamma\leq \frac{1}{40L_fL_{\mS}^{\max}},$ we have $\left(1 - 20\gamma L_fL_{\mS}^{\max}\right) \geq \frac{1}{2}$ and it leads to
      	\begin{equation*}
		\begin{split}
			\Exp{\left\|x^{t+1} - x_\cD^\star\right\|^2 + D^{t+1}} & \leq
			 \left\|x^{t}- x_\cD^\star\right\|^2 - \gamma\left(\tf(x^t) - f(x_\cD^\star)\right)\\
			& + D^t +  \frac{8(N-b)}{\max\{1,N-1\}b} \gamma^2 L_fL_{\mS}^{\max}\left(\tf^{\inf} - f^{\inf}\right).
		\end{split}
	\end{equation*}
 Using the tower property, we have
       	\begin{equation*}
		\begin{split}
			\Exp{\Psi^{t+1}} & \leq
			 \Exp{\Psi^t}-\gamma\Exp{\left(\tf(x^t) - f(x_\cD^\star)\right)}\\ & + \frac{8(N-b)}{\max\{1,N-1\}b} \gamma^2 L_fL_{\mS}^{\max}\left(\tf^{\inf} - f^{\inf}\right)\\
    \gamma\Exp{\left(\tf(x^t) - f(x_\cD^\star)\right)} &\leq \Exp{\Psi^t} - \Exp{\Psi^{t+1}}\\&+ \frac{8(N-b)}{\max\{1,N-1\}b} \gamma^2 L_fL_{\mS}^{\max}\left(\tf^{\inf} - f^{\inf}\right)\\
        \gamma\frac{1}{T}\sum^{T-1}_{t=0}\Exp{\left(\tf(x^t) - f(x_\cD^\star)\right)} &\leq \frac{1}{T}\sum^{T-1}_{t=0}\left( \Exp{\Psi^t} - \Exp{\Psi^{t+1}}\right)\\&+ \frac{8(N-b)}{\max\{1,N-1\}b} \gamma^2 L_fL_{\mS}^{\max}\left(\tf^{\inf} - f^{\inf}\right)\\
        \Exp{\tf(\bar{x}^T)} - f(x_\cD^\star) &\leq \frac{\Psi^0}{\gamma T} + \frac{8(N-b)}{\max\{1,N-1\}b} \gamma L_fL_{\mS}^{\max}\left(\tf^{\inf} - f^{\inf}\right).
		\end{split}
	\end{equation*}
\end{proof}

\clearpage
\subsection{$\mathsf{S}$-$\mathsf{PAGE}$: nonconvex analysis}
In this section, we introduce a variant of the Probabilistic Gradient Estimator ($\mathsf{PAGE}$) algorithm applied to the MAST formulation as defined in Equation \ref{eq:pretrained_compressed_problem} for non-convex setting. Li et al.~\citep{PAGE2021} showed that this method is optimal in the non-convex regime. We refer to this method as the Sketched Probabilistic Gradient Estimator ($\mathsf{S}$-$\mathsf{PAGE}$). Calculating the full gradient is not efficient, as the number of possible sketches when considering \rnd{$K$} is given by $\frac{n!}{(n-k)!k!}$. Consequently, we employ a minibatch estimator to achieve partial variance reduction, using a large minibatch size where $b > b'$. For the purpose of analysis, we assume that the variance of the sketch gradient is bounded.
\begin{algorithm}[t]
\begin{algorithmic}[1]
\caption{Sketched ProbAbilistic Gradient Estimator ($\mathsf{S}$-$\mathsf{PAGE}$)} \label{alg:S-PAGE}
    \STATE \textbf{Parameters:} learning rate $\gamma > 0$; probability $p$; sketches $\mS_1, \ldots, \mS_{N}$; initial model and shift $x^0, \shift \in \R^d$, sketch minibatch sizes $b$ and $b^\prime < b $; initial sketch minibatch $\mathcal{B}^0 \subset [N]$.
    \STATE \textbf{Initialization: }  $h^0 = \frac{1}{b}\sum \limits_{i \in \mathcal{B}^0}\nabla f_{\mS_i}(x^0)$.
    \FOR{$t = 0, 1, 2 \ldots$}
    \STATE  Perform a gradient-type step:
    $x^{t+1}=x^t-\gamma h^t$
    \STATE Sample a Bernoulli random variable $\beta_p$
    \IF{$\beta_p = 1$}
        \STATE Sample minibatch $\mathcal{B}^{t}$ with size $b$ uniformly without replacement
        \STATE Form a gradient estimator:
       $h^{t+1} = \frac{1}{b}\sum \limits_{i \in \mathcal{B}^{t}}\nabla f_{\mS^t_i}(x^{t+1}) $
    \ELSE
    \STATE Sample minibatch $\left(\mathcal{B}^{t}\right)^\prime$ with size $b^\prime$ uniformly without replacement
    \STATE Form a gradient estimator:
        $h^{t+1} = h^t+\frac{1}{b^{\prime}} \sum \limits_{i \in \left(\mathcal{B}^{t}\right)^\prime}\left( \nabla f_{\mS^t_i}(x^{t+1})-\nabla f_{\mS^t_i}(x^{t+1})\right)$
    \ENDIF
    \ENDFOR
\end{algorithmic}
\end{algorithm}

\begin{assumption}[Bounded sketch variance]
\label{ass:bounded-variance}
    The sketched gradient has bounded variance if exists $\sigma_{\mathcal{D}} > 0$, such that
   $$ \mathbb{E}\left[\left\|\nabla f_{\mathbf{S}}\left(x\right)-\nabla f_{\mathcal{D}}\left(x\right)\right\|^2\right] \leq \sigma_{\mathcal{D}}^2 \quad \forall x \in \mathbb{R}^d.$$
\end{assumption}

\begin{lemma}[Lemma 2 from \citep{PAGE2021}]
Suppose that function $f$ is  $L$-smooth and let $x^{t+1}:=x^t-\gamma g^t.$ Then for any $g^t \in \mathbb{R}^d$ and $\gamma>0$, we have
   \begin{equation}
       f\left(x^{t+1}\right) \leq f\left(x^t\right)-\frac{\gamma}{2}\left\|\nabla f(x^t)\right\|^2-\left(\frac{1}{2 \gamma}-\frac{L}{2}\right)\left\|x^{t+1}-x^t\right\|^2+\frac{\gamma}{2}\left\|h^t-\nabla f (x^t)\right\|^2
   \end{equation}
\end{lemma}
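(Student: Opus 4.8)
The statement is the familiar ``descent lemma with an inexact search direction'': it assumes nothing about $g^t$ beyond the update rule $x^{t+1}=x^t-\gamma g^t$, so it is a purely deterministic consequence of $L$-smoothness. (I read the $h^t$ on the right-hand side as the same vector $g^t$ that defines the step.) The plan is to combine the smoothness inequality at the pair $(x^t,x^{t+1})$ with the polarization identity \eqref{eq:quadratic} to turn the cross term $\ve{\nabla f(x^t)}{x^{t+1}-x^t}$ into squared norms, and then to rewrite $\norm{x^{t+1}-x^t}^2$ in terms of $\norm{g^t}^2$.

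First I would apply $L$-smoothness with increment $h=x^{t+1}-x^t$:
\[ f(x^{t+1}) \le f(x^t) + \ve{\nabla f(x^t)}{x^{t+1}-x^t} + \tfrac{L}{2}\norm{x^{t+1}-x^t}^2. \]
Next, using $x^{t+1}-x^t=-\gamma g^t$ and the identity $2\ve{a}{b}=\norm{a}^2+\norm{b}^2-\norm{a-b}^2$ from \eqref{eq:quadratic} with $a=\nabla f(x^t)$, $b=g^t$, the cross term becomes
\[ \ve{\nabla f(x^t)}{x^{t+1}-x^t} = -\gamma\ve{\nabla f(x^t)}{g^t} = -\tfrac{\gamma}{2}\norm{\nabla f(x^t)}^2 -\tfrac{\gamma}{2}\norm{g^t}^2 +\tfrac{\gamma}{2}\norm{g^t-\nabla f(x^t)}^2. \]
Finally, since $\norm{x^{t+1}-x^t}^2=\gamma^2\norm{g^t}^2$, I would replace $\tfrac{\gamma}{2}\norm{g^t}^2$ by $\tfrac{1}{2\gamma}\norm{x^{t+1}-x^t}^2$ and merge it with the $\tfrac{L}{2}\norm{x^{t+1}-x^t}^2$ term from smoothness into $-\bigl(\tfrac{1}{2\gamma}-\tfrac{L}{2}\bigr)\norm{x^{t+1}-x^t}^2$, which yields the claimed inequality verbatim.

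There is essentially no obstacle: the argument is three lines of algebra. The only point deserving a moment's attention is the bookkeeping in the polarization step — one must keep the $-\tfrac{\gamma}{2}\norm{g^t}^2$ piece rather than dropping it, precisely so that it cancels part of the smoothness term and leaves the signed coefficient $\tfrac{1}{2\gamma}-\tfrac{L}{2}$, which is what makes the bound informative for $\gamma\le 1/L$. No unbiasedness or ABC-type hypothesis on $g^t$ is invoked here; those assumptions (and the bounded-sketch-variance Assumption~\ref{ass:bounded-variance}) enter only later, when expectations are taken to bound $\Exp{\norm{g^t-\nabla f(x^t)}^2}$ in the $\mathsf{S}$-$\mathsf{PAGE}$ convergence analysis.
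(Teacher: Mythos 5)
Your proof is correct and is exactly the standard argument: the paper itself states this as Lemma 2 of the cited PAGE paper without reproving it, and the cited proof proceeds precisely as you do (smoothness plus the polarization identity $2\langle a,b\rangle=\|a\|^2+\|b\|^2-\|a-b\|^2$ applied to $\langle\nabla f(x^t),g^t\rangle$, then substituting $\|x^{t+1}-x^t\|^2=\gamma^2\|g^t\|^2$). You are also right that the $h^t$ in the displayed bound is a typo for $g^t$ and that no stochastic assumptions are needed at this stage.
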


\begin{lemma}
\label{lemma:Page}
Suppose that function $f$ is $L_f$-smooth and $\mS$ satisfies Assumption~\ref{ass:C} and let $x^{t+1} = x^t - \gamma h^t$. Then for any $h^t \in \mathbb{R}^d$ and $\gamma>0$, we have
       \begin{equation}
       \tf\left(x^{t+1}\right) \leq  \tf\left(x^t\right)-\frac{\gamma}{2}\left\|\nabla  \tf(x^t)\right\|^2-\left(\frac{1}{2 \gamma}-\frac{L_{f}L_{\mathcal{D}}}{2}\right)\left\|x^{t+1}-x^t\right\|^2+\frac{\gamma}{2}\left\|h^t-\nabla  \tf (x^t)\right\|^2.
   \end{equation}
\end{lemma}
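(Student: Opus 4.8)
The plan is to deduce this from the $L_{\mathcal D}L_f$-smoothness of $\tf$ established in Lemma~\ref{lemma:smoothness_appendix}(ii), combined with an exact algebraic identity for the inner product $\ve{\nabla\tf(x^t)}{x^{t+1}-x^t}$. In effect this is Lemma~2 of \cite{PAGE} applied to the function $\tf$ in place of $f$, with the smoothness constant $L$ replaced by the (possibly larger) upper bound $L_{\mathcal D}L_f$; so one could also simply quote that lemma verbatim for the $L_{\mathcal D}L_f$-smooth function $\tf$.

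First I would invoke Lemma~\ref{lemma:smoothness_appendix}(ii) with $x=x^t$ and $h=x^{t+1}-x^t$, which gives
\[
\tf(x^{t+1}) \;\le\; \tf(x^t) + \ve{\nabla\tf(x^t)}{x^{t+1}-x^t} + \frac{L_{\mathcal D}L_f}{2}\|x^{t+1}-x^t\|^2 .
\]
Next I would rewrite the linear term. Since $x^{t+1}-x^t=-\gamma h^t$, we have $\|x^{t+1}-x^t\|^2=\gamma^2\|h^t\|^2$, hence $\tfrac{1}{2\gamma}\|x^{t+1}-x^t\|^2=\tfrac{\gamma}{2}\|h^t\|^2$, and the polarization identity~\eqref{eq:quadratic} applied to $\langle h^t,\nabla\tf(x^t)\rangle$ (then multiplied by $-\gamma$) yields the exact equality
\[
\ve{\nabla\tf(x^t)}{x^{t+1}-x^t} = -\frac{\gamma}{2}\|\nabla\tf(x^t)\|^2 - \frac{1}{2\gamma}\|x^{t+1}-x^t\|^2 + \frac{\gamma}{2}\|h^t-\nabla\tf(x^t)\|^2 .
\]
Substituting this into the previous display and collecting the two $\|x^{t+1}-x^t\|^2$ terms produces exactly the claimed bound.

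There is essentially no real obstacle here; the only point requiring a moment's care is the direction of the inequality when passing from the true smoothness constant $L_{\tf}\le L_{\mathcal D}L_f$ (Lemma~\ref{lemma:smoothness_appendix}(ii)) to $L_{\mathcal D}L_f$ itself: enlarging the constant in $\tfrac{L}{2}\|x^{t+1}-x^t\|^2$ only enlarges the right-hand side, equivalently the coefficient $\tfrac{1}{2\gamma}-\tfrac{L_{\mathcal D}L_f}{2}$ multiplying $-\|x^{t+1}-x^t\|^2$ is no larger than $\tfrac{1}{2\gamma}-\tfrac{L_{\tf}}{2}$, so the inequality is preserved. I would also note that the inner-product identity is exact for an arbitrary vector $h^t$, so no additional assumption on $h^t$ is needed beyond $x^{t+1}=x^t-\gamma h^t$, and Assumption~\ref{ass:C} enters the argument only indirectly, through Lemma~\ref{lemma:smoothness_appendix}(ii) (which relies on $\Exp{\mS^\top\mS}\preceq L_{\mathcal D}\mI$).
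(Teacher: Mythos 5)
Your proof is correct and follows essentially the same route as the paper: the paper applies Lemma~2 of \cite{PAGE} to the $L_{\tf}$-smooth function $\tf$ and then uses $L_{\tf}\le L_{\mathcal D}L_f$ from Lemma~\ref{lemma:smoothness}(ii) to enlarge the constant, which is exactly your argument (you merely unfold the PAGE descent lemma via the polarization identity instead of quoting it). Your remark about the direction of the inequality when replacing $L_{\tf}$ by the larger $L_{\mathcal D}L_f$ is the same observation the paper makes implicitly.
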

\begin{proof}
    Since $f$ is $L_f$-smooth and $\mS$ satisfies Assumption~\ref{ass:C} the function $f_\mathcal{D}$ is $L_{\tf}$. Then using Lemma~\ref{lemma:Page}, we obtain
           \begin{equation*}
       \tf\left(x^{t+1}\right) \leq  \tf\left(x^t\right)-\frac{\gamma}{2}\left\|\nabla  \tf(x^t)\right\|^2-\left(\frac{1}{2 \gamma}-\frac{L_{\tf}}{2}\right)\left\|x^{t+1}-x^t\right\|^2+\frac{\gamma}{2}\left\|h^t-\nabla  \tf (x^t)\right\|^2.
   \end{equation*}
   Using Lemma~\ref{lemma:smoothness} we have $L_{\tf} \leq L_{\mathcal{D}} L_f $. Plugging this into the inequality, we obtain
          \begin{equation}
       \tf\left(x^{t+1}\right) \leq  \tf\left(x^t\right)-\frac{\gamma}{2}\left\|\nabla  \tf(x^t)\right\|^2-\left(\frac{1}{2 \gamma}-\frac{L_{f}L_{\mathcal{D}}}{2}\right)\left\|x^{t+1}-x^t\right\|^2+\frac{\gamma}{2}\left\|h^t-\nabla  \tf (x^t)\right\|^2.
   \end{equation}
\end{proof}
\begin{lemma}
Suppose that Assumptions \ref{ass:C}, \ref{ass:l_f_smooth} and \ref{ass:bounded-variance} hold. If the gradient estimator $h^{t+1}$ is defined according to Algorithm \ref{alg:S-PAGE}, then we have
  \begin{align*}
\mathbb{E}\left[\left\|h^{t+1}-\nabla \tf(x^{t+1})\right\|^2\right] &\leq\left(1-p\right)\left\|h^t-\nabla \tf(x^t)\right\|^2\\&+\frac{N-b^{\prime}}{(N-1)b^{\prime}}\left(1-p\right) L_f^2 L^{\max}_{\mS} L_{\mathcal{D}}\left\|x^{t+1}-x^t\right\|^2\\
&+ p\frac{N-b}{(N-1)b} \sigma^2_{\mathcal{D}}
  \end{align*}

\begin{proof}
We start by considering two events:
   \begin{align*}
         H &=   \mathbb{E}\left[\left\|h^{t+1}-\nabla \tf(x^{t+1})\right\|^2\right] =  p \mathbb{E}\left[\left\|\frac{1}{b}\sum \limits_{i \in \mathcal{B}^{t}}\nabla f_{\mS^t_i}(x^{t+1})-\nabla \tf(x^{t+1})\right\|^2\right]\\
           &+\left(1-p\right) \mathbb{E}\left[\left\|h^t+\frac{1}{b^{\prime}} \sum \limits_{i \in \left(\mathcal{B}^{t}\right)^\prime}\left( \nabla f_{\mS^t_i}(x^{t+1})-\nabla f_{\mS^t_i}(x^{t})\right)-\nabla \tf(x^{t+1})\right\|^2\right].
   \end{align*}
   Using Assumption~\ref{ass:bounded-variance} and Lemma \ref{lemma:RR}, we obtain
   \begin{equation*}
   \begin{split}
           H &=   \mathbb{E}\left[\left\|h^{t+1}-\nabla \tf(x^{t+1})\right\|^2\right]\\
           &\leq   p\frac{N-b}{(N-1)b} \sigma^2_{\mathcal{D}} +\left(1-p\right) \mathbb{E}\left[\left\|h^t+\frac{1}{b^{\prime}} \sum_{i \in \left(\mathcal{B}^{t}\right)^\prime}\left(\nabla f_{\mS^t_i}(x^{t+1})-\nabla f_{\mS^t_i}(x^{t})\right)-\nabla f_{\mS^t_i}(x^{t+1})\right\|^2\right]\\
              &\leq p\frac{N-b}{(N-1)b} \sigma^2_{\mathcal{D}}\\
              &+ \left(1-p\right) \\
              & \times\mathbb{E}\left[\left\|h^t-\nabla f_{\mathcal{D}}\left(x^t\right)+\frac{1}{b^{\prime}} \sum_{i \in \left(\mathcal{B}^{t}\right)^\prime}\left(\nabla f_{\mS^t_i}(x^{t+1})-\nabla f_{\mS^t_i}(x^{t})\right)-\nabla f_{\mathcal{D}}\left(x^{t+1}\right)+\nabla f_{\mathcal{D}}\left(x^t\right)\right\|^2\right]\\
              &\leq p\frac{N-b}{(N-1)b} \sigma^2_{\mathcal{D}}+\left(1-p\right)\left\|h^t-\nabla f_{\mathcal{D}}\left(x^t\right)\right\|^2\\
              &+\left(1-p\right) \mathbb{E}\left[\left\|\frac{1}{b^{\prime}} \sum_{i \in \left(\mathcal{B}^{t}\right)^\prime}\left(\nabla f_{\mS^t_i}\left(x^{t+1}\right)-\nabla f_{\mS^t_i}\left(x^t\right)\right)-\nabla f_{\mathcal{D}}\left(x^{t+1}\right)+\nabla f_{\mathcal{D}}\left(x^t\right)\right\|^2\right]\\
              & \leq p\frac{N-b}{(N-1)b} \sigma^2_{\mathcal{D}}+\left(1-p\right)\left\|h^t-\nabla f_{\mathcal{D}}\left(x^t\right)\right\|^2\\
              &+p\frac{N-b^{\prime}}{(N-1)b^{\prime}} \mathbb{E}\left[\frac{1}{N}\sum_{i \in \left(\mathcal{B}^{t}\right)^\prime }\left\|\left(\nabla f_{\mS^t_i}\left(x^{t+1}\right)-\nabla f_{\mS^t_i}\left(x^t\right)\right)-\left(\nabla f\left(x^{t+1}\right)-\nabla f\left(x^t\right)\right)\right\|^2\right]\\
             & \leq p\frac{N-b}{(N-1)b} \sigma^2_{\mathcal{D}}+\left(1-p\right)\left\|h^t-\nabla f_{\mathcal{D}}\left(x^t\right)\right\|^2\\
             &+p\frac{N-b^{\prime}}{(N-1)b^{\prime}} \mathbb{E}\left[\frac{1}{N}\sum_{i \in \left(\mathcal{B}^{t}\right)^\prime }\left\|\left(\nabla f_{\mS^t_i}\left(x^{t+1}\right)-\nabla f_{\mS^t_i}\left(x^t\right)\right)\right\|^2\right].
    \end{split}
   \end{equation*}
   Let us consider the last term:
   \begin{align*}
&\mathbb{E}\left[\left\|\nabla f_{\mS^t_i}\left(x^{t+1}\right)-\nabla f_{\mS^t_i}\left(x^t\right)\right\|^2 \right] \\& = \mathbb{E}\left[\left\|\left( (\mS^t_i)^{\top} \nabla f\left( \shift+ \mS^t_i(x^{t+1} - \shift)\right)-(\mS^t_i)^{\top} \nabla f\left( \shift+ \mS^t_i(x^{t} - \shift)\right)\right)\right\|^2 \right] \\
&=\mathbb{E}\left[\left\| (\mS^t_i)^{\top} \left(\nabla f\left( \shift+ \mS^t_i(x^{t+1} - \shift)\right)-\nabla f\left( \shift+ \mS^t_i(x^{t} - \shift)\right)\right)\right\|^2 \right] \\
&\leq\mathbb{E}\left[ \lambda_{\max}\left[(\mS^t_i)(\mS^t_i)^{\top} \right]\left\|  \left(\nabla f\left( \shift+ \mS^t_i(x^{t+1} - \shift)\right)-\nabla f\left( \shift+ \mS^t_i(x^{t} - \shift)\right)\right)\right\|^2 \right] \\
&\leq L^{\max}_{\mS}\mathbb{E} \left[\left\|  \left(\nabla f\left( \shift+ \mS^t_i(x^{t+1} - \shift)\right)-\nabla f\left( \shift+ \mS^t_i(x^{t} - \shift)\right)\right)\right\|^2 \right].
   \end{align*}
Using Lipschitz continuity of gradient of function $f$, we have
\begin{align*}
    \mathbb{E}\left[\left\|\nabla f_{\mS^t_i}\left(x^{t+1}\right)-\nabla f_{\mS^t_i}\left(x^t\right)\right\|^2 \right] & \leq  L^{\max}_{\mS} L_f^2 \mathbb{E} \left[ \left\| \mS^t_i (x^{t+1} - x^t) \right\|^2 \right]\\
    &\leq L^{\max}_{\mS} L_f^2 \lambda_{\max}\left[ \mathbb{E}\left[\mS^t_i (\mS^t_i)^{\top} \right] \right]\left\| x^{t+1} - x^t\right\|^2\\
    &=L^{\max}_{\mS} L_f^2 L_{\mathcal{D}} \left\| x^{t+1} - x^t\right\|^2.
\end{align*}
Plugging this inequality leads us to the final result:

  \begin{align*}
\mathbb{E}\left[\left\|h^{t+1}-\nabla \tf(x^{t+1})\right\|^2\right] &\leq\left(1-p\right)\left\|h^t-\nabla \tf(x^t)\right\|^2\\
&+\frac{N-b^{\prime}}{(N-1)b^{\prime}}\left(1-p\right) L_f^2 L^{\max}_{\mS} L_{\mathcal{D}}\left\|x^{t+1}-x^t\right\|^2\\
&+ p\frac{N-b}{(N-1)b} \sigma^2_{\mathcal{D}}.
  \end{align*}

\end{proof}

\end{lemma}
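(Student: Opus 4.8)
The plan is to condition on the history $\mathcal{F}^t$ generated up to and including $x^{t+1}$ — so that $h^t$, $x^t$, $x^{t+1}$, and in particular the discrepancy $h^t-\nabla\tf(x^t)$ are all deterministic — and then take expectation only over the fresh randomness of iteration $t+1$: the Bernoulli variable $\beta_p$ and the minibatch ($\mathcal{B}^t$ of size $b$ on the rare branch, $(\mathcal{B}^t)'$ of size $b'$ on the frequent branch). By the law of total expectation over $\beta_p$,
$$\mathbb{E}\!\left[\|h^{t+1}-\nabla\tf(x^{t+1})\|^2\mid\mathcal{F}^t\right] \;=\; p\,T_1 \;+\; (1-p)\,T_2,$$
where $T_1$ is the error of the full re-estimation step $h^{t+1}=\tfrac1b\sum_{i\in\mathcal{B}^t}\nabla f_{\mS_i}(x^{t+1})$ and $T_2$ is the error of the variance-reduced correction step. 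I would bound $T_1$ and $T_2$ separately and then reassemble.

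For $T_1$: the vector $\tfrac1b\sum_{i\in\mathcal{B}^t}\nabla f_{\mS_i}(x^{t+1})$ is the average of $b$ items drawn uniformly without replacement from $\{\nabla f_{\mS_1}(x^{t+1}),\dots,\nabla f_{\mS_N}(x^{t+1})\}$, whose full average is exactly $\nabla\tf(x^{t+1})$. Hence Lemma~\ref{lemma:RR} applies directly and gives $T_1=\frac{N-b}{b(N-1)}\cdot\frac1N\sum_{i=1}^N\|\nabla f_{\mS_i}(x^{t+1})-\nabla\tf(x^{t+1})\|^2\le \frac{N-b}{b(N-1)}\sigma_{\mathcal{D}}^2$ by Assumption~\ref{ass:bounded-variance}. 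This yields the last term of the claimed bound (weighted by $p$).

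For $T_2$: abbreviate $\Delta_i:=\nabla f_{\mS_i}(x^{t+1})-\nabla f_{\mS_i}(x^t)$, so that $\tfrac1N\sum_i\Delta_i=\nabla\tf(x^{t+1})-\nabla\tf(x^t)$. Since $h^{t+1}=h^t+\tfrac1{b'}\sum_{i\in(\mathcal{B}^t)'}\Delta_i$, the error splits into the $\mathcal{F}^t$-measurable constant $h^t-\nabla\tf(x^t)$ plus the deviation $\tfrac1{b'}\sum_{i\in(\mathcal{B}^t)'}\Delta_i-\tfrac1N\sum_i\Delta_i$, which has mean zero by the first part of Lemma~\ref{lemma:RR}; hence the cross term vanishes and $T_2=\|h^t-\nabla\tf(x^t)\|^2+\mathbb{E}\big[\|\tfrac1{b'}\sum_{i\in(\mathcal{B}^t)'}\Delta_i-\tfrac1N\sum_i\Delta_i\|^2\big]$. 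The variance term is $\le\frac{N-b'}{b'(N-1)}\cdot\frac1N\sum_i\|\Delta_i-\tfrac1N\sum_j\Delta_j\|^2\le\frac{N-b'}{b'(N-1)}\cdot\frac1N\sum_i\|\Delta_i\|^2$ by Lemma~\ref{lemma:RR} and "variance $\le$ second moment". Finally I bound $\frac1N\sum_i\|\Delta_i\|^2$ using $\nabla f_{\mS_i}(x)=\mS_i^\top\nabla f(\shift+\mS_i(x-\shift))$, the estimate $\|\mS_i^\top v\|^2\le\lambda_{\max}(\mS_i\mS_i^\top)\|v\|^2\le L_{\mS}^{\max}\|v\|^2$ (Proposition~\ref{proposition:eigenvalues}), $L_f$-Lipschitzness of $\nabla f$, and $\tfrac1N\sum_i\|\mS_i(x^{t+1}-x^t)\|^2=(x^{t+1}-x^t)^\top\mathbb{E}[\mS^\top\mS](x^{t+1}-x^t)\le L_{\mathcal{D}}\|x^{t+1}-x^t\|^2$, obtaining $\frac1N\sum_i\|\Delta_i\|^2\le L_f^2 L_{\mS}^{\max}L_{\mathcal{D}}\|x^{t+1}-x^t\|^2$. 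Combining $p\,T_1+(1-p)\,T_2$ and taking total expectation over $\mathcal{F}^t$ gives the stated inequality.

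The main difficulty here is bookkeeping rather than a genuine conceptual obstacle: one must invoke Lemma~\ref{lemma:RR} twice with the correct identification of the "population" (the $N$ fixed sketches) and carefully verify that in the $T_2$ branch the minibatch deviation is genuinely mean-zero and independent of the measurable constant $h^t-\nabla\tf(x^t)$, so that the inner-product term truly drops out. The only step where the problem structure matters is the final chain of inequalities, where the two spectral quantities enter at different scales — the worst-case $L_{\mS}^{\max}=\sup_{\mS}\lambda_{\max}(\mS^\top\mS)$ comes from controlling each individual sketched gradient difference, whereas $L_{\mathcal{D}}=\lambda_{\max}(\mathbb{E}[\mS^\top\mS])$ only appears after averaging over $i$ — and keeping these two factors from collapsing into one another is the subtle point.
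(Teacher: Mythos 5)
Your proposal is correct and follows essentially the same route as the paper: conditioning on the history, splitting by the Bernoulli variable into $p\,T_1+(1-p)\,T_2$, invoking Lemma~\ref{lemma:RR} together with Assumption~\ref{ass:bounded-variance} for the full re-estimation branch, killing the cross term in the correction branch via the mean-zero property of the without-replacement minibatch deviation, and then bounding $\frac1N\sum_i\|\Delta_i\|^2$ through $L_{\mS}^{\max}$, $L_f$-Lipschitzness of $\nabla f$, and $L_{\mathcal{D}}$. If anything, your final spectral step is stated slightly more carefully than the paper's (you correctly average $\mS_i^\top\mS_i$ to produce $L_{\mathcal{D}}$, where the paper's displayed line writes $\mathbb{E}[\mS_i(\mS_i)^\top]$), but the argument is the same.
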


\begin{theorem}
Assume that $f$ is $L_f$-smooth \eqref{ass:l_f_smooth} and $\mS$ satisfy Assumptions \ref{ass:C} and \ref{ass:bounded-variance}.
    Then, for stepsize $\gamma \leq \frac{1}{\sqrt{\frac{1-p}{p b^\prime}}L_f\left( L_{\mathcal{D}}+\sqrt{L^{\max}_{\mS}L_{\mathcal{D}}}\right)}$, the iterates of Algorithm~\ref{alg:S-PAGE} satisfy

       \begin{align*}
       \mathbb{E}\left[\left\|\nabla \tf\left(\widehat{x}_T\right)\right\|^2\right] \leq \frac{2 \mathbb{E}\left[\Psi_0\right]}{\gamma T} +  \frac{N-b}{(N-1)b}\sigma^2_{\mathcal{D}}.
   \end{align*}
\end{theorem}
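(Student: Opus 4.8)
The plan is to run the \citet{PAGE} analysis on the sketched objective $\tf$, organized around a Lyapunov function that fuses the descent step with the estimator-error recursion. I would set
\[
\Psi^t \eqdef \tf(x^t) - \tf^{\inf} + \frac{\gamma}{2p}\norm{h^t - \nabla\tf(x^t)}^2,
\]
first observing that $\tf$ is bounded below, since $\tf(x) = \Exp{f(\shift + \mS(x - \shift))} \geq f^{\inf}$, so $\tf^{\inf}$ is finite and $\Psi^t \geq 0$. The two building blocks are already in place: the sketched descent inequality of Lemma~\ref{lemma:Page}, $\tf(x^{t+1}) \leq \tf(x^t) - \frac{\gamma}{2}\norm{\nabla\tf(x^t)}^2 - (\frac{1}{2\gamma} - \frac{L_f L_{\mathcal{D}}}{2})\norm{x^{t+1}-x^t}^2 + \frac{\gamma}{2}\norm{h^t - \nabla\tf(x^t)}^2$, and the estimator-error recursion proved just above it, $\Exp{\norm{h^{t+1}-\nabla\tf(x^{t+1})}^2} \leq (1-p)\norm{h^t - \nabla\tf(x^t)}^2 + \frac{(1-p)(N-b')}{(N-1)b'} L_f^2 L_{\mS}^{\max} L_{\mathcal{D}}\norm{x^{t+1}-x^t}^2 + \frac{p(N-b)}{(N-1)b}\sigma_{\mathcal{D}}^2$. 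A point to handle carefully when conditioning is that in $\mathsf{S}$-$\mathsf{PAGE}$ the iterate $x^{t+1} = x^t - \gamma h^t$ is measurable with respect to the history up to time $t$, so the descent bound is deterministic given that history and only the formation of $h^{t+1}$ carries fresh randomness.

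Next I would add the descent inequality to $\frac{\gamma}{2p}$ times the error recursion and take conditional, then total, expectations. The weight $\frac{\gamma}{2p}$ is chosen exactly so the $\norm{h^t - \nabla\tf(x^t)}^2$ contributions recombine into the weight already sitting inside $\Psi^t$, namely $\frac{\gamma}{2} + \frac{\gamma}{2p}(1-p) = \frac{\gamma}{2p}$. This produces the one-step inequality
\[
\Exp{\Psi^{t+1}} \leq \Exp{\Psi^t} - \frac{\gamma}{2}\Exp{\norm{\nabla\tf(x^t)}^2} - c_\gamma \Exp{\norm{x^{t+1}-x^t}^2} + \frac{\gamma(N-b)}{2(N-1)b}\sigma_{\mathcal{D}}^2,
\]
with $c_\gamma \eqdef \frac{1}{2\gamma} - \frac{L_f L_{\mathcal{D}}}{2} - \frac{\gamma(1-p)(N-b')}{2p(N-1)b'}L_f^2 L_{\mS}^{\max} L_{\mathcal{D}}$. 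Rewriting $2\gamma c_\gamma \geq 0$ as $a\gamma^2 + b\gamma \leq 1$ with $a = \frac{(1-p)(N-b')}{p(N-1)b'}L_f^2 L_{\mS}^{\max} L_{\mathcal{D}}$ and $b = L_f L_{\mathcal{D}}$, Lemma~\ref{lemma:peter} makes this hold whenever $\gamma \leq 1/(\sqrt{a} + b)$; bounding $\frac{N-b'}{N-1}\leq 1$ shows the stated stepsize $\gamma \leq 1/(\sqrt{\frac{1-p}{pb'}}\,L_f(L_{\mathcal{D}} + \sqrt{L_{\mS}^{\max} L_{\mathcal{D}}}))$ implies it, so $c_\gamma \geq 0$ and that term may be dropped.

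Finally, summing over $t = 0,\dots,T-1$, telescoping $\Exp{\Psi^t}$, and using $\Exp{\Psi^T} \geq 0$ gives $\frac{\gamma}{2}\sum_{t=0}^{T-1}\Exp{\norm{\nabla\tf(x^t)}^2} \leq \Exp{\Psi^0} + \frac{T\gamma(N-b)}{2(N-1)b}\sigma_{\mathcal{D}}^2$. Dividing by $\gamma T/2$ and recalling that $\widehat{x}_T$ is drawn uniformly from $x^0,\dots,x^{T-1}$, so that $\Exp{\norm{\nabla\tf(\widehat{x}_T)}^2} = \frac{1}{T}\sum_{t=0}^{T-1}\Exp{\norm{\nabla\tf(x^t)}^2}$, yields the claimed bound with $\Psi_0 = \Psi^0$. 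The main obstacle here is bookkeeping rather than a new idea: keeping the conditioning consistent so that the (conditional) error-recursion lemma composes cleanly with the deterministic descent step across both the Bernoulli coin and the minibatch draws, and carrying out the stepsize reduction to the clean $\sqrt{a}+b$ form so that Lemma~\ref{lemma:peter} applies; everything else is routine telescoping.
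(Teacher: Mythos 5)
Your proposal is correct and follows essentially the same route as the paper's own proof: the identical Lyapunov function $\Psi^t = \tf(x^t)-\tf^{\inf}+\frac{\gamma}{2p}\norm{h^t-\nabla\tf(x^t)}^2$, the same combination of the sketched descent lemma with the estimator-error recursion weighted by $\frac{\gamma}{2p}$, the same stepsize verification via Lemma~\ref{lemma:peter}, and the same telescoping argument. If anything, your write-up is slightly more careful than the paper's (explicitly tracking the $\frac{N-b'}{N-1}$ factor and the $T$ multiplying the variance term in the summed bound), but the substance is the same.
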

\begin{proof}
    \begin{align*}
     &\mathbb{E}\left[\Psi^{t+1}\right]\\ &=    \mathbb{E}\left[\tf(x^{t+1})-f^{\inf}_{\mathcal{D}}+\frac{\gamma}{2 p}\left\|g^{t+1}-\nabla \tf(x^{t+1})\right\|^2\right] \\
     &\leq \mathbb{E}\left[\tf(x^t) - \tf^{\inf}-\frac{\gamma}{2}\left\|\nabla  \tf(x^t)\right\|^2-\left(\frac{1}{2 \gamma}-\frac{L_{\tf}}{2}\right)\left\|x^{t+1}-x^t\right\|^2+\frac{\gamma}{2}\left\|g^t-\nabla  \tf (x^t)\right\|^2\right]\\
     &+\mathbb{E}\left[\frac{\gamma}{2p} \left( \left(1-p\right)\left\|g^t-\nabla \tf(x^t)\right\|^2+\frac{\left(1-p\right) L^{\max}_{\mS} L_{\mathcal{D}}}{b^{\prime}}\left\|x^{t+1}-x^t\right\|^2+ p\frac{N-b}{(N-1)b} \sigma^2_{\mathcal{D}}  \right)\right]\\
     & = \mathbb{E} \left[ \tf(x^t) - \tf^{\inf} -\frac{\gamma}{2}\left\|\nabla  \tf(x^t)\right\|^2 + \frac{\gamma}{2p}\left( \left(1-p\right)\left\|g^t-\nabla \tf(x^t)\right\|^2 + p \left\|g^t-\nabla \tf(x^t)\right\|^2 \right)\right]\\
     &+\mathbb{E}\left[  \frac{\gamma}{2}\frac{N-b}{(N-1)b} \sigma^2_{\mathcal{D}} - \left( \frac{1}{2 \gamma}-\frac{L_{\tf}}{2} - \frac{\left(1-p\right) \gamma L^{\max}_{\mS} L_{\mathcal{D}}}{2pb^{\prime}} \right)\left\|x^{t+1}-x^t\right\|^2\right].
    \end{align*}
Using stepsize $\gamma\leq \frac{1}{\sqrt{\frac{1-p}{p b^\prime}}L_f\left( L_{\mathcal{D}}+\sqrt{L^{\max}_{\mS}L_{\mathcal{D}}}\right)}$ and Lemma, we get
    \begin{align}
     \mathbb{E}\left[\Psi^{t+1}\right] & \leq \mathbb{E}\left[ \Psi^t - \frac{\gamma}{2}\left\|\nabla  \tf(x^t)\right\|^2 + \frac{\gamma}{2}\frac{N-b}{(N-1)b} \sigma^2_{\mathcal{D}} \right]\\
     \frac{\gamma}{2}\left\|\nabla  \tf(x^t)\right\|^2 &\leq \mathbb{E}\left[ \Psi^t \right] - \mathbb{E}\left[ \Psi^{t+1} \right] + \frac{\gamma}{2} \frac{N-b}{(N-1)b}\sigma^2_{\mathcal{D}} \\
     \frac{\gamma}{2}
     \sum^{T-1}_{t=0}\left\|\nabla  \tf(x^t)\right\|^2 &\leq \mathbb{E}\left[ \Psi^T \right] - \mathbb{E}\left[ \Psi^{0} \right] + \frac{\gamma}{2} \frac{N-b}{(N-1)b}\sigma^2_{\mathcal{D}}.
     \end{align}
   Let  $\widehat{x}_T$ be randomly chosen from $\left\{x^t\right\}_{t \in[T]},$ we have
   \begin{align*}
       \mathbb{E}\left[\left\|\nabla \tf\left(\widehat{x}_T\right)\right\|^2\right] \leq \frac{2 \mathbb{E}\left[\Psi_0\right]}{\gamma T} +  \frac{N-b}{(N-1)b}\sigma^2_{\mathcal{D}}.
   \end{align*}
\end{proof}

\section{Additional experiments and details} \label{apdx:experiments}

First, we provide additional details on the experimental settings from Section \ref{sec:experiments}.

\subsection{Experimental details}
In Section \ref{sec:experiments} and for all further experiments, the following problem is considered:
\begin{equation} \label{eq:logreg_problem}
    f(x) \eqdef
    \frac{1}{n} \sum_{i=1}^{n} \log \left(1+\exp (-\mathbf{A}_{i}^{\top} x \cdot b_{i})\right) + \frac{\lambda}{2}\|x\|^{2},
\end{equation}
where $\mathbf{A}_{i} \in \mathbb{R}^{d}, b_{i} \in\{-1,1\}$ are the feature and label of $i$-th data point.
The approximate \say{optimal} $f^\star$ solution of optimization problem \eqref{eq:logreg_problem} is obtained by running Accelerated Gradient Descent \citep{nesterov1983method} until $\|\nabla f^\star\|^2 \leq 10^{-30}$. Our implementation is based on the public Github \href{https://github.com/konstmish/opt_methods}{repository} of Konstantin Mishchenko. Simulations were performed on a machine with $24 \operatorname{Intel}(\mathrm{R}) \operatorname{Xeon}(\mathrm{R})$ Gold 6246 $\mathrm{CPU}$ @ 3.30 $\mathrm{GHz}$.

\textbf{Sketches.} Problem \eqref{eq:pretrained_compressed_problem} may not be easily solvable precisely for the most general sketches. Therefore, we consider the scenario when $\cD$ is uniform and has finite support similar to Section \ref{sec:variance_reduction}. We use a special class of diagonal permutation sparsifiers formally introduced in the following example:
\begin{example}
Assume\footnote{This is done for simplicity of presentation and can be easily generalized (see Appendix I.1 from Szlendak et al.~\citep{szlendak2022permutation}).} that $K$ divides $d$, let $q \eqdef d/K $ and $\pi=(\pi_1, \ldots, \pi_d)$ be a random permutation of $[d]$. Then for all $i \in [K]$, we define \textbf{Permutation sparsification} (in short $\texttt{Perm-K}$) operator as
\begin{equation} \label{eq:perm_sketch}
    \mS_i \eqdef K \cdot \sum \limits_{j=q(i-1)+1}^{q i} e_{\pi_j} e_{\pi_j}^\top,
\end{equation}
where $e_1, \dots, e_d \in \R^d$ are standard unit basis vectors.
\end{example}
In simple words $\texttt{Perm-K}$ sparsifiers require random shuffling and then dividing\footnote{We use \href{https://numpy.org/doc/stable/reference/generated/numpy.array_split.html}{array\underline{{ }{ }}split} method from NumPy (version 1.26.2) package \citep{harris2020array}.} the set of indices $[d] = \{1, \dots, d\}$ into $K$ non-overlapping subsets $C_i \subset \mathcal{G}$ of equal size such that $\cup_{C_i \in \mathcal{G}} C_i = [d]$. Then, every sketch $\mS_i$ is formed from $e_l$ vectors based on indexes $l \in C_i$, resulting in $\cD = \{\mS_1, \dots, \mS_K\}$. To ensure unbiasedness, sketches are sampled uniformly with probability $1/K$.
This class of sketches satisfies $L_\cD = \mu_\cD = K$ and $L_\mS^{\max} = K^2$.

\textbf{Details for Figure \ref{fig:test_acc_box} (and \ref{fig:test_acc_box_appendix}, \ref{fig:test_acc_swarm_appendix}).}
Regularization parameter $\lambda$ in \eqref{eq:logreg_problem} is set to guarantee that the condition number of the loss function $\kappa_f$ is equal to $10^2$. The dataset is shuffled and split to train and test in 0.75 to 0.25 proportions. Initial model weights $x^0 \in \R^d$ are generated from a standard Gaussian distribution $\mathcal{N}(0, 1)$. For every sparsity level $\texttt{Perm-K}$ sketches $\cD = \{\mS_1, \dots, \mS_K\}$ are generated leading to different MAST problem formulations. This process is repeated 10 times with various permutations $\pi$ for changing random seeds. After ERM and MAST models $x$ are obtained the accuracy of sparsified model $\mS_i x$ is calculated for every $i \in [K]$ on the test set. This results in distributions of accuracies for every sparsity level.

\begin{figure*}[h]
\centering
\subfigure[\textit{a5a} dataset]{%
\includegraphics[width=0.32\linewidth]{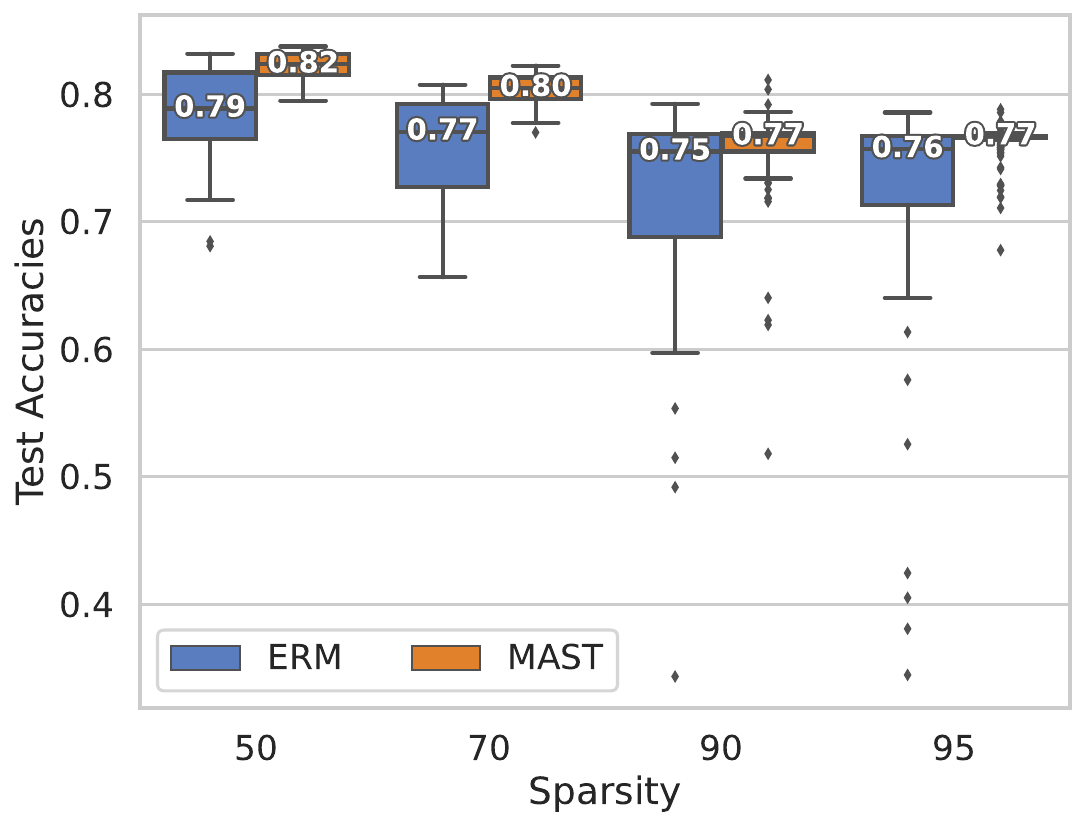}}%
\
\subfigure[\textit{a1a} dataset]{%
\includegraphics[width=0.32\linewidth]{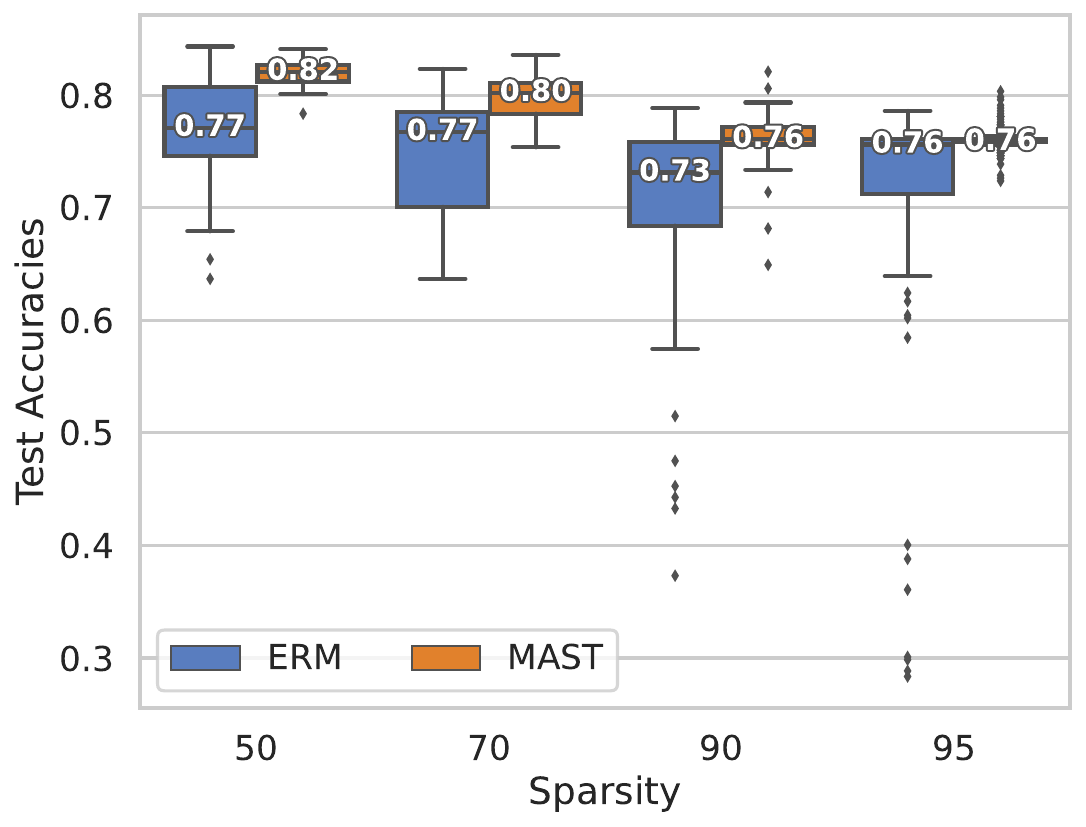}}%
\
\subfigure[\textit{w8a} dataset]{%
\includegraphics[width=0.32\linewidth]{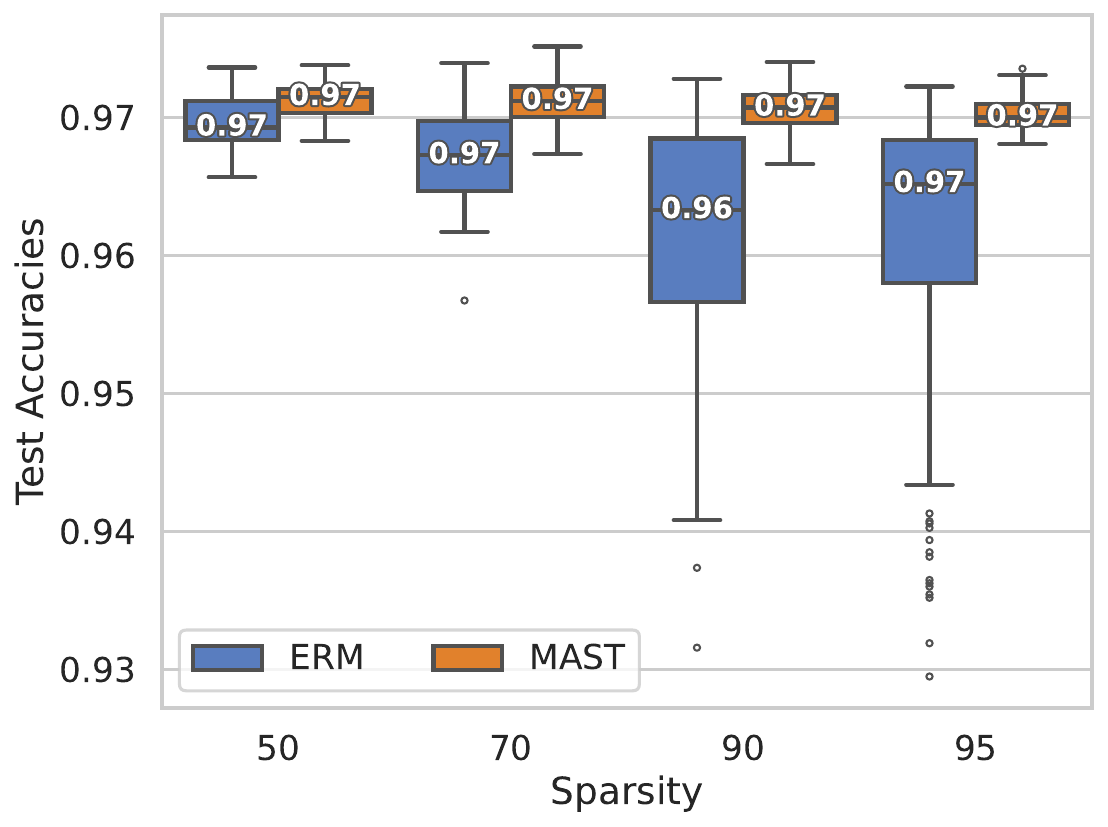}}%
\caption{Accuracies distributions of sparsified solutions for the ERM \eqref{eq:main} and MAST \eqref{eq:pretrained_compressed_problem} formulations.}
\label{fig:test_acc_box_appendix}
\end{figure*}

\begin{figure*}[h]
\centering
\subfigure[\textit{a5a} dataset]{%
\includegraphics[width=0.32\linewidth]{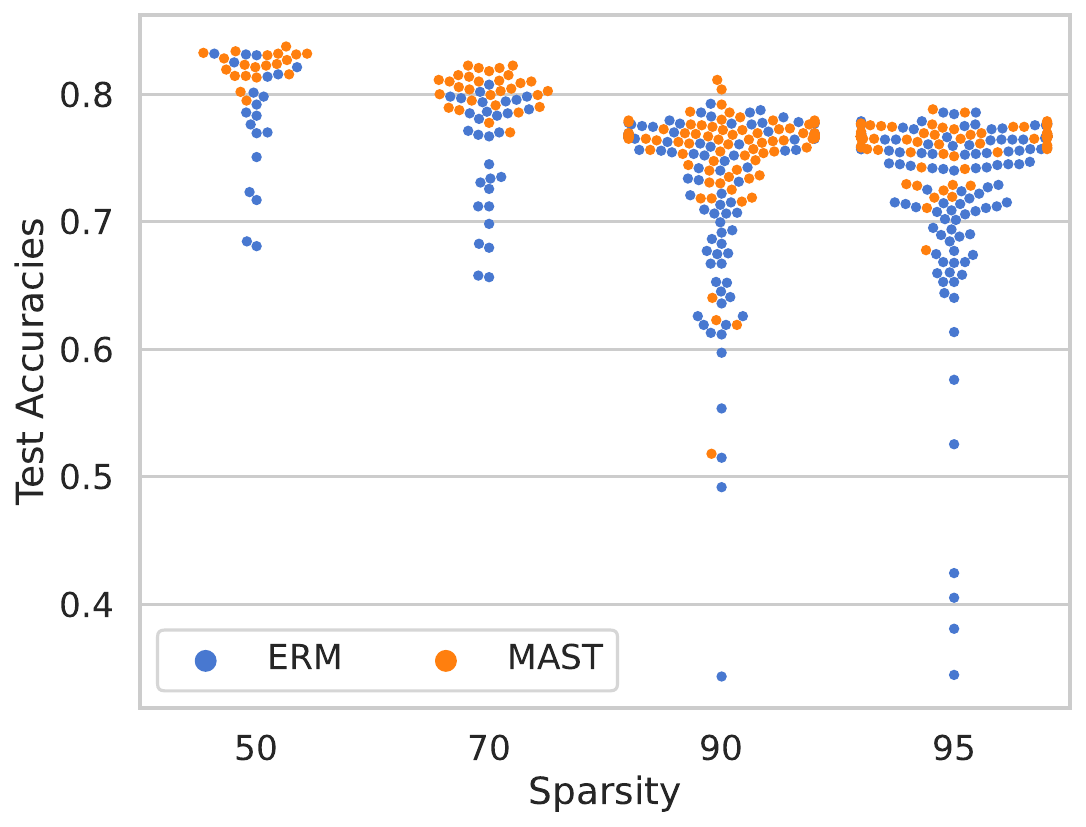}}%
\
\subfigure[\textit{a1a} dataset]{%
\includegraphics[width=0.32\linewidth]{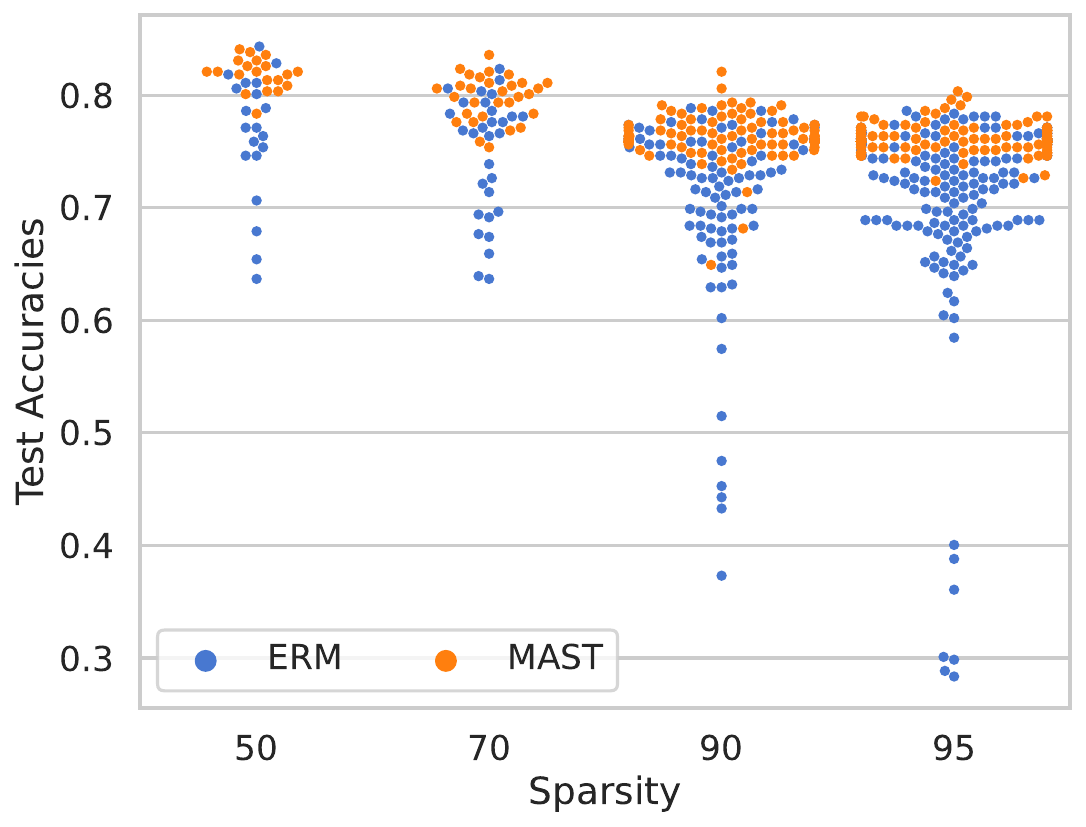}}%
\
\subfigure[\textit{w8a} dataset]{%
\includegraphics[width=0.32\linewidth]{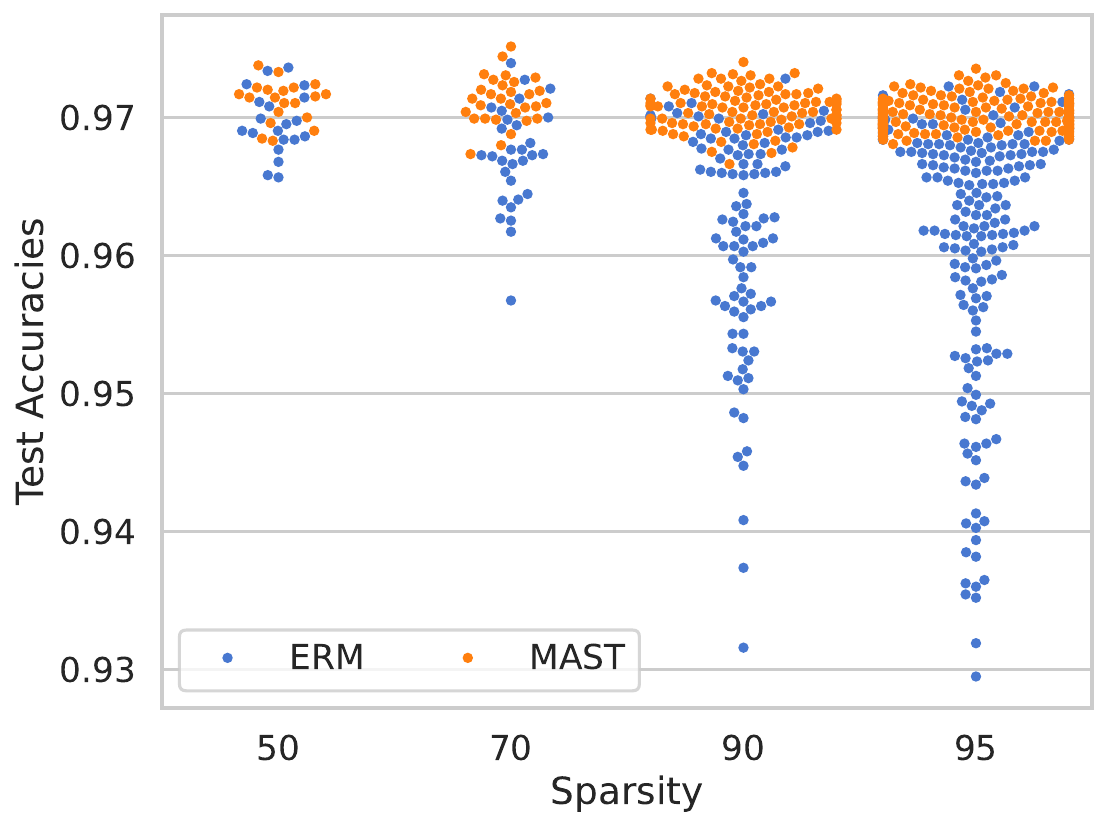}}%
\caption{Test accuracies of sparsified solutions for the ERM formulation \eqref{eq:main} and MAST problem \eqref{eq:pretrained_compressed_problem}.}
\label{fig:test_acc_swarm_appendix}
\end{figure*}

In Figure \ref{fig:test_acc_box_appendix}, we display complete results, including the accuracies lower than 0.5 (unlike in Section \ref{sec:experiments}), which occurred only for the most aggressive sparsification of ERM models. In addition, the median of accuracies (in white) is shown for every (ERM/MAST) approach and sparsification level. Additional results for \textit{a1a} and \textit{w8a} datasets are consistent with those in Section \ref{sec:experiments} as MAST models' performance is higher, less variable, and more resistant to sparsification. Figure \ref{fig:test_acc_swarm_appendix} also shows the swarmplots for the same experiments to represent accuracy values' distributions better.

\subsection{Additional experiments}\label{sec:add_experiments}

\subsubsection{MAST loss trajectory}
In the next experiment, we investigate the optimization efficacy of Double Sketched Gradient Descent (Algorithm \ref{alg:SGD} (II)) with \texttt{Perm-K} sketches \eqref{eq:perm_sketch} for MAST formulation \eqref{eq:pretrained_compressed_problem} with $f$ chosen as \eqref{eq:logreg_problem} for several datasets. The model weights are divided into $K=10$ groups, which allows us to solve the MAST problem, find $\tf^{\inf}$, and evaluate $\tf$, $\nabla \tf$ precisely. Moreover, unlike the previous experiment, the inexactness of the stochastic gradient estimator is introduced via uniform (single element) random subsampling of data $f_i$ as the problem enjoys finite-sum representation \eqref{eq:sketch_fin_sum}.

Figure \ref{fig:mast_loss_perm} shows the trajectory of
a method which averages across all sketches $\mS$ which results in exact (w.r.t. $\cD$) gradient estimator $\nabla f_{i, \cD}$ (right column on legend) and the same algorithm but with uniform sketch subsampling $\nabla f_{i, \mS}$ (left column). The methods are run with 3 different step sizes $\gamma$.
Subsampling of data introduces oscillations preventing the algorithms from converging to the exact optimum. Sketch subsampling leads to additional variance highlighted by the curves corresponding to the same step size. Moreover, our results clearly indicate the necessity for decreasing the learning rate $\gamma$ for sparse/dropout training with SGD. The method with sketch subsampling and standard SGD step size (dotted \textcolor{blue}{blue} curve) fluctuates around initialization, while full averaging across sketches (dotted \textcolor{cyan}{cyan} curve) fixes the issue partially as the error floor is lower however, there is still almost no convergence. Scaling $\gamma$ inversely proportionally to sparsity level $L_\cD = K$ results in clear linear convergence to the neighborhood of the solution for $\nabla f_{i, \cD}$ estimator. However, $\nabla f_{i, \mS}$ requires decreasing the step size by $L_\mS^{\max} = K^2$ which well agrees with conclusions of Theorems \ref{eq:ABC_noncvx_res} and \ref{eq:ABC_strcvx_res}.

\begin{figure*}[!h]
\centering
\subfigure{%
\includegraphics[width=0.81\linewidth]{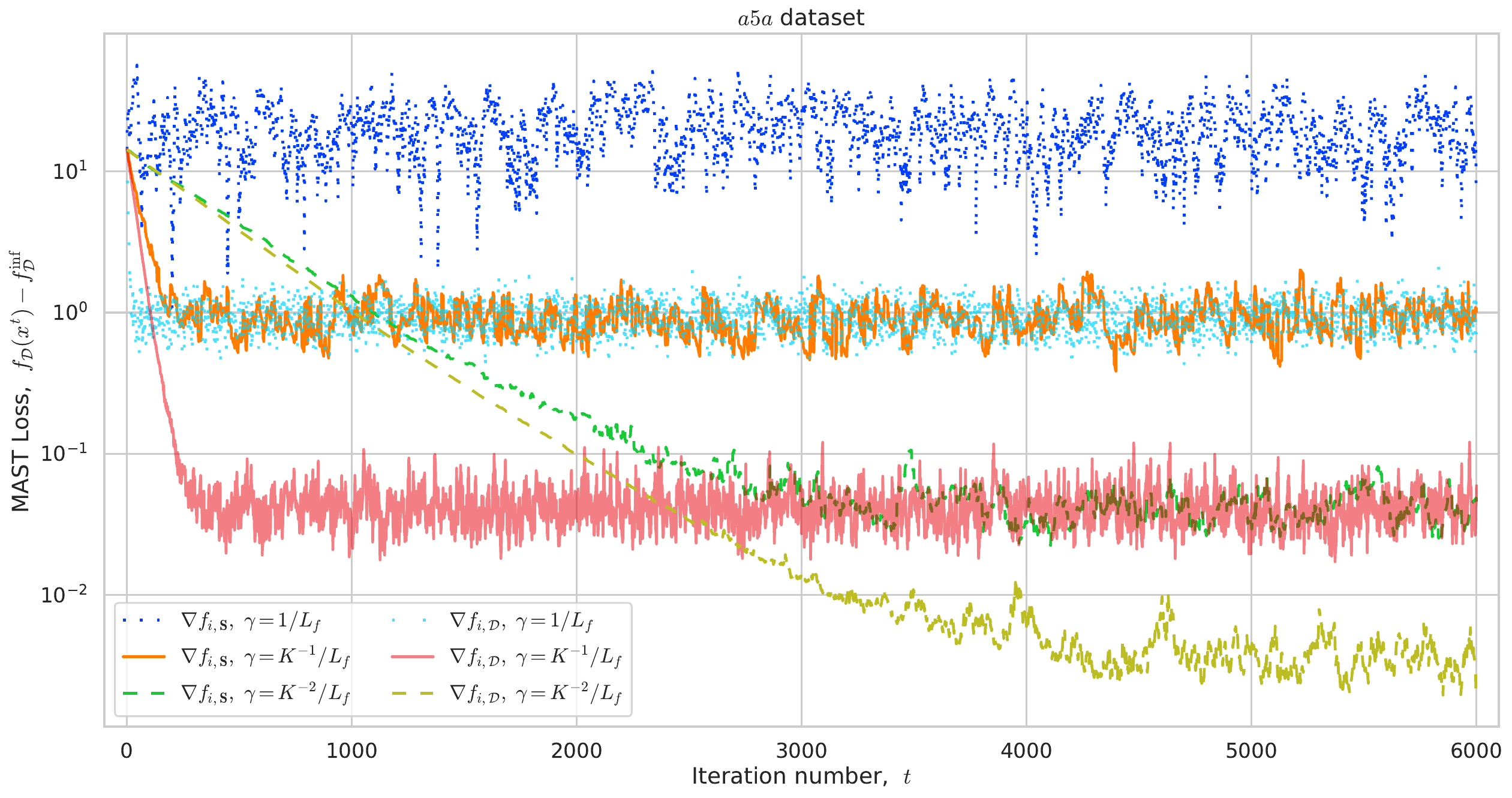}}%
\
\subfigure{%
\includegraphics[width=0.81\linewidth]{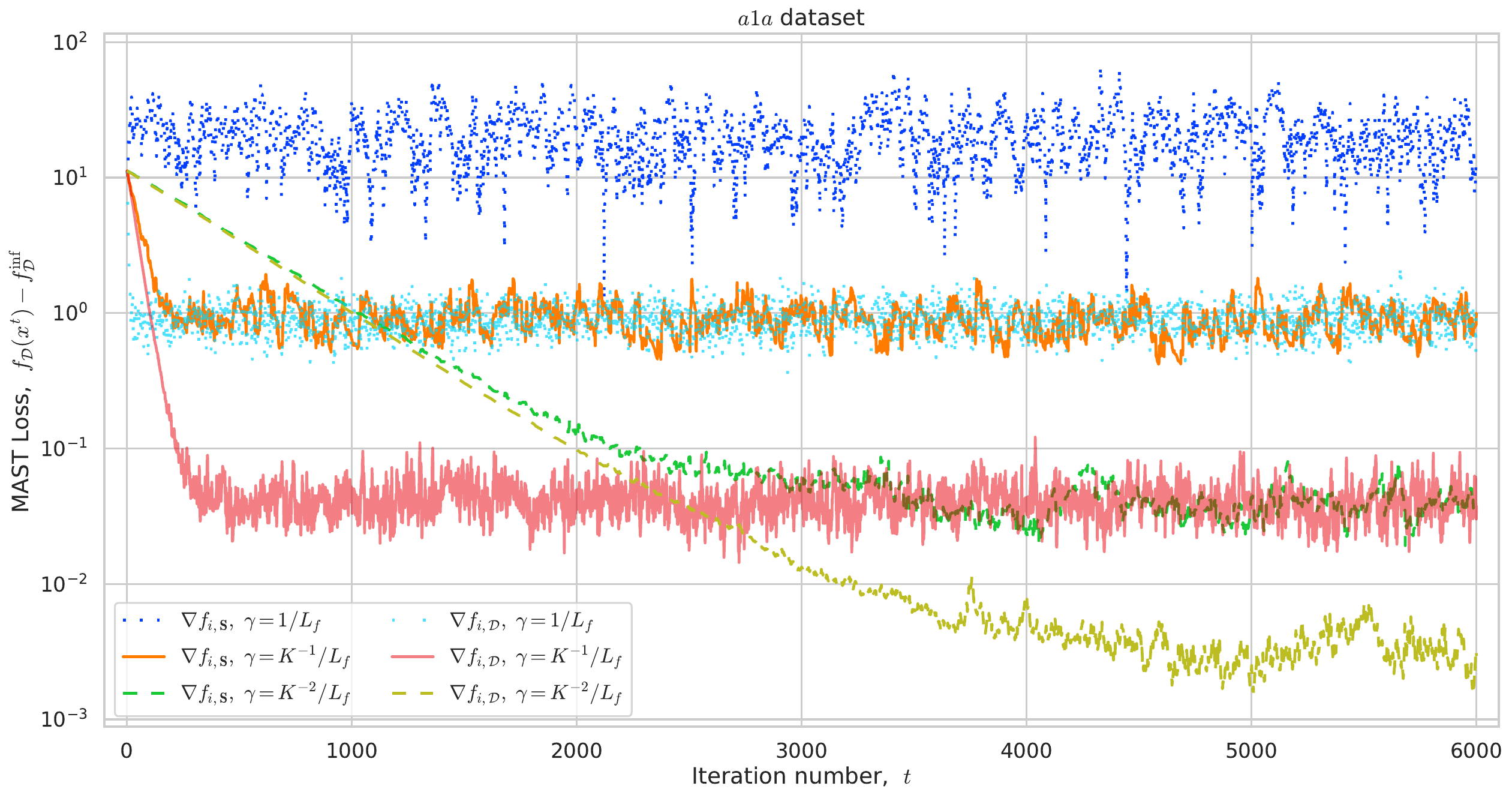}}%
\
\subfigure{%
\includegraphics[width=0.81\linewidth]{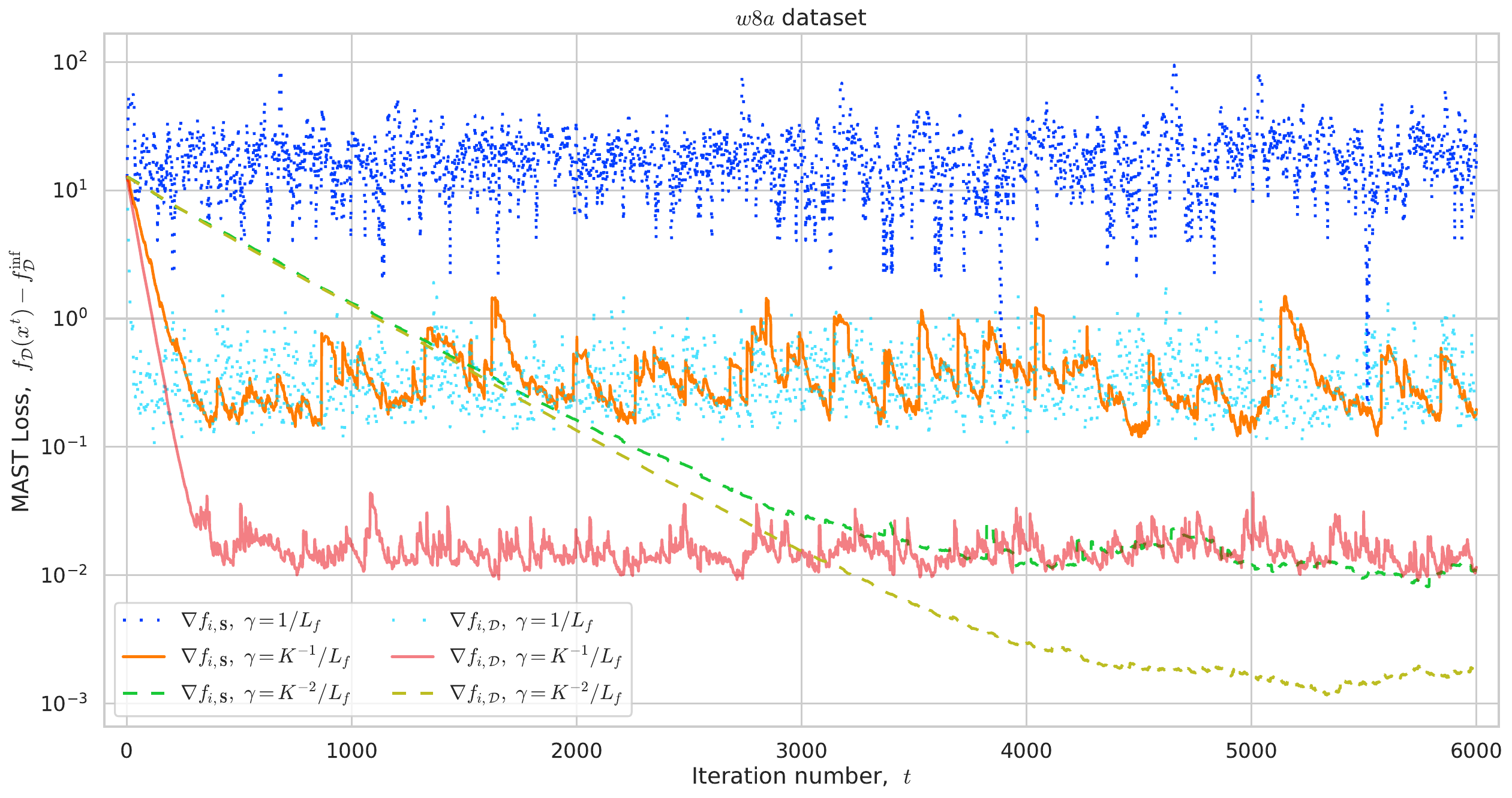}}%
\caption{Finite-sum MAST loss \eqref{eq:sketch_fin_sum} convergence for Algorithm \ref{alg:SGD} (II) with subsampling.}
\label{fig:mast_loss_perm}
\end{figure*}

\subsubsection{\rnd{K} sketches}
\begin{wrapfigure}{r}{0.6\textwidth}
\centering
    \includegraphics[width=\linewidth]{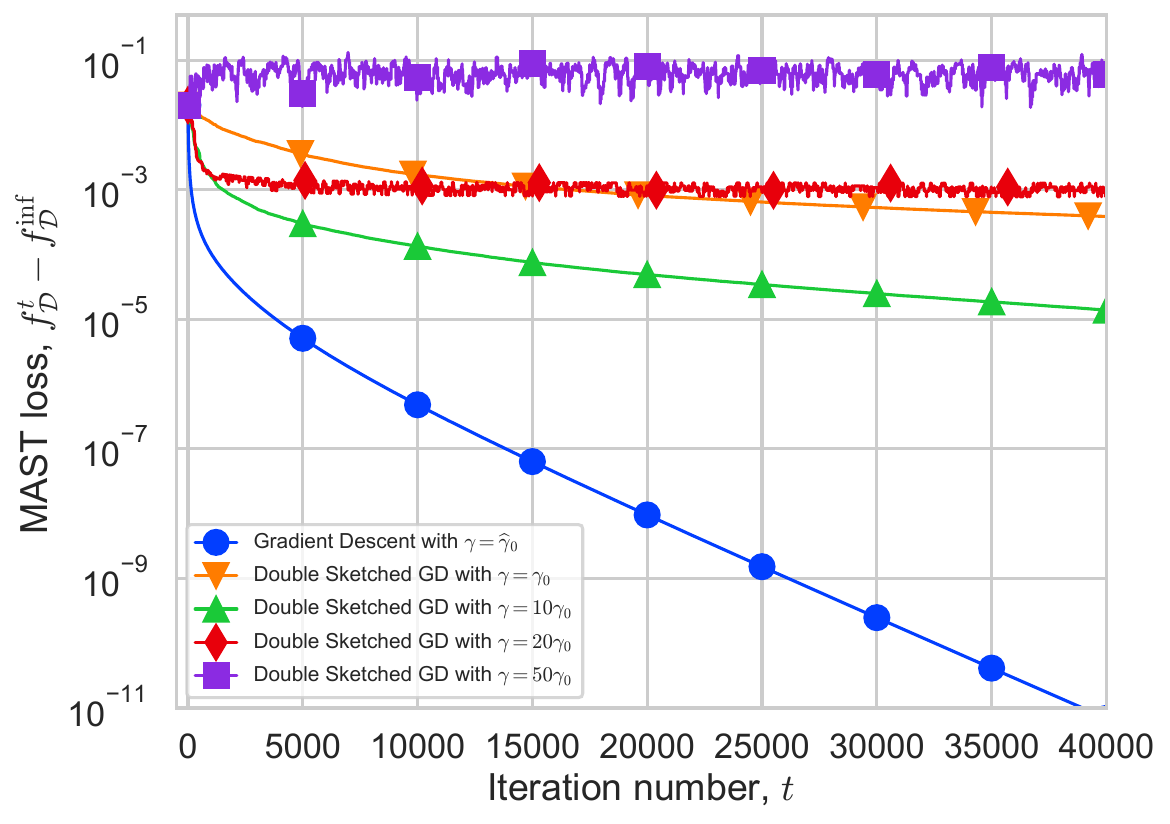}
    \caption{MAST loss \eqref{eq:pretrained_compressed_problem} convergence for Algorithm \ref{alg:SGD} (I) with varying step size and \rnd{$1$} sketches.
    }
\label{fig:mast_loss}
\end{wrapfigure}
In this experiment depicted in Figure \ref{fig:mast_loss}, we validate the claims of Theorem \ref{thm:str_conv}.
We consider the same logistic regression optimization problem \eqref{eq:logreg_problem} with $\lambda$ set to make sure $\kappa_f = 10^3$. We use the whole dataset \textit{a1a} and initialization $x^0 = 0$.
Consider $\cD$ as a uniform distribution over \rnd{$K$} sketches for $K=1$. Then, MAST stochastic optimization formulation \eqref{eq:pretrained_compressed_problem} leads to a finite-sum problem over sketches $\mS_i$, as defined in \eqref{eq:sketch_fin_sum}. This allows us to evaluate the performance of Algorithm \ref{alg:SGD} (I), which converges linearly for the exact MAST loss \eqref{eq:pretrained_compressed_problem}.
Note that applying Gradient Descent requires computing double sketched gradient for all possible ($N=d$) sketches.
We denote step sizes as $\widehat{\gamma}_0 = 1/(L_f L_{\mathcal{D}})$ and $\gamma_0 = 1/(L_f L^{\max}_{\mS})$ according to theory.

Our findings accentuate the pivotal role of the appropriate step size $\gamma$ in steering the trajectory of sparse training. Guided by the proposed theoretical framework, this step size must be adjusted in proportion to $K^2/d^2$ for \rnd{$K$}. Notably, for this particular problem at hand, a larger $\gamma$ (e.g., $\gamma = 10 \gamma_0$) can accelerate convergence. Yet, surpassing a delineated boundary can result in stagnation of the progress (e.g., $\gamma = 20 \gamma_0$) and, in specific scenarios, even derail the convergence altogether (e.g., $\gamma = 50 \gamma_0$). Such observations underscore the imperative of modulating the learning rate, especially within the realms of sparse and Dropout training.

\subsubsection{Neural networks} \label{apdx:nn_exp}

This section presents our deep learning experimental results. Figure \ref{fig:lr_p} illustrates the loss behavior of the \emph{distributed} Algorithm \ref{alg:distributed_GD} (for $M=10$ clients) using Bernoulli sketches \eqref{eq:ind_sparse} for $p_i \equiv p$ on the standard loss \eqref{eq:fin_sketch_sum} (for $\mS_i \equiv \mI$).
Our experimental setup closely follows that of \citet{liao2022on}, and for completeness, we reiterate key details.
We employ a ResNet-50 model \citep{he2016deep} pre-trained on ImageNet as a feature extractor, concatenated with two fully connected layers. This combined model is then fine-tuned on the CIFAR-10 dataset \citep{krizhevsky2009cifar}. The outputs of the re-trained ResNet-50 serve as input embeddings,
while the logit outputs of the combined model are used as labels.

The first column of Figure \ref{fig:lr_p} shows how the method’s performance is affected by the sparsity level ($p$) and step size ($\gamma$). Specifically, for high sparsity $p=0.5$, Figure \ref{fig:1a} illustrates that an excessively large step size ($\gamma=1$) may even lead to divergence of the method for ERM loss. Across all sparsity levels, we observe a \say{sweet spot} for the step size, beyond which increasing $\gamma$ results in slower convergence. Furthermore, training with high sparsity ($p=0.5$) leads to a quick stagnation of the loss in contrast to $p \in\{0.7, 0.9\}$. The second column of Figure \ref{fig:lr_p} displays the subsequent divergence (for $p=0.5$), indicating that high sparsity significantly alters the minimized loss, confirming that Sparse/Dropout training indeed optimizes a different formulation than standard ERM.

In general, larger step sizes and more aggressive sparsification (lower $p$) result in increased loss variance, aligning with our theoretical predictions from Sections \ref{sec:sketches} and \ref{sec:convergence}.
Interestingly, \cref{fig:2b,fig:3b}
reveal that Dropout training can outperform non-sparse optimization for small step sizes ($\gamma=0.01$) or initial iterations (up to $\sim$ 1200 for $\gamma=0.1$). However, the largest step size ($\gamma=0.5$) is the most efficient in terms of minimizing canonical loss.

One of the key practical insights derived from our theoretical analysis is that the step size $\gamma$ (learning rate) must be decreased for sparse optimization and Dropout training. Our neural network training results demonstrate that this insight extends to a broader range of models.

\begin{figure*}[h]
\centering
\setlength{\subfigcapskip}{-3pt}
\subfigure[High sparsity]{%
\label{fig:1a}%
\includegraphics[width=0.5\linewidth]{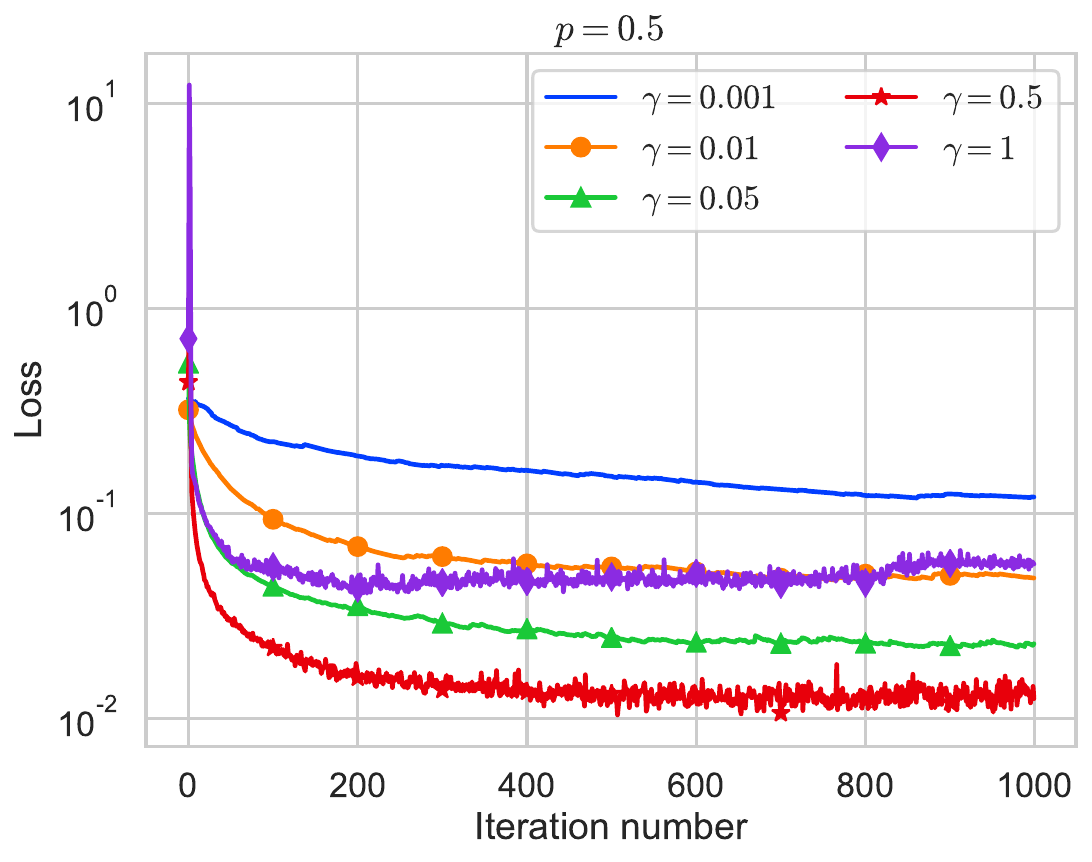}}%
\hfill
\subfigure[Large step size]{%
\label{fig:2a}%
\includegraphics[width=0.5\linewidth]{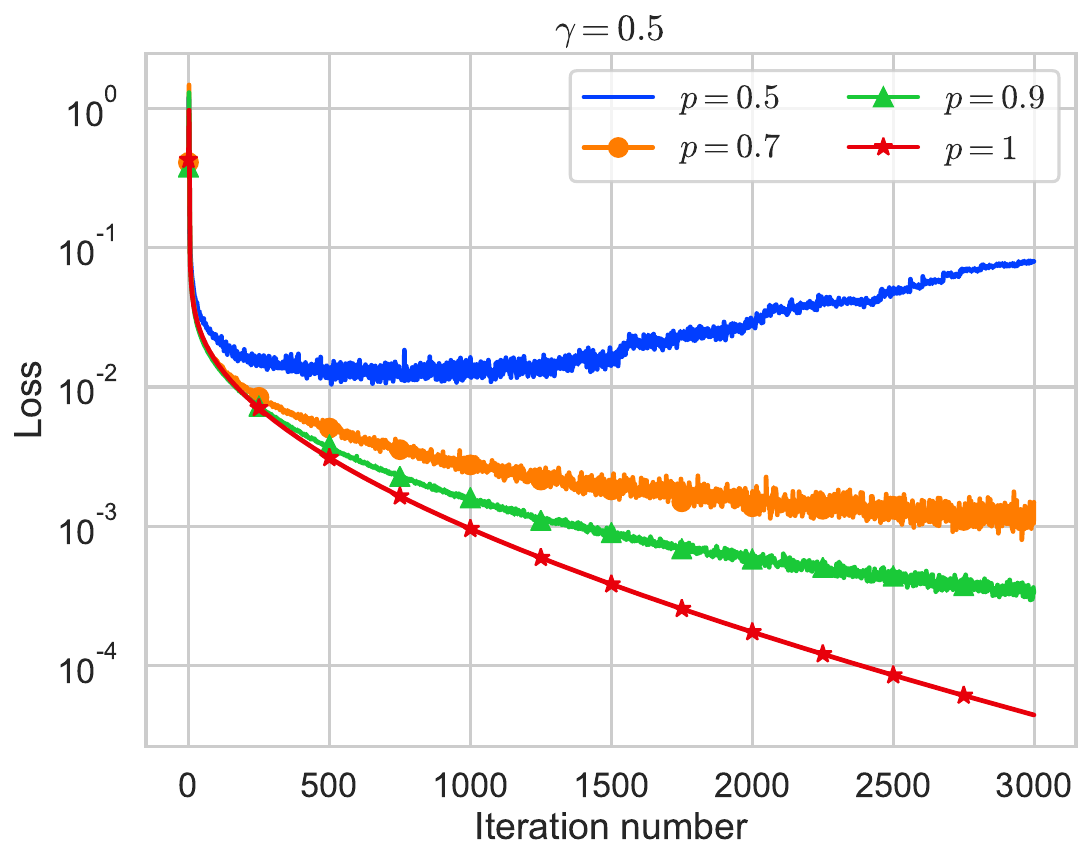}}%
\\[-0.5em]
\subfigure[Mid sparsity]{%
\label{fig:1b}%
\includegraphics[width=0.5\linewidth]{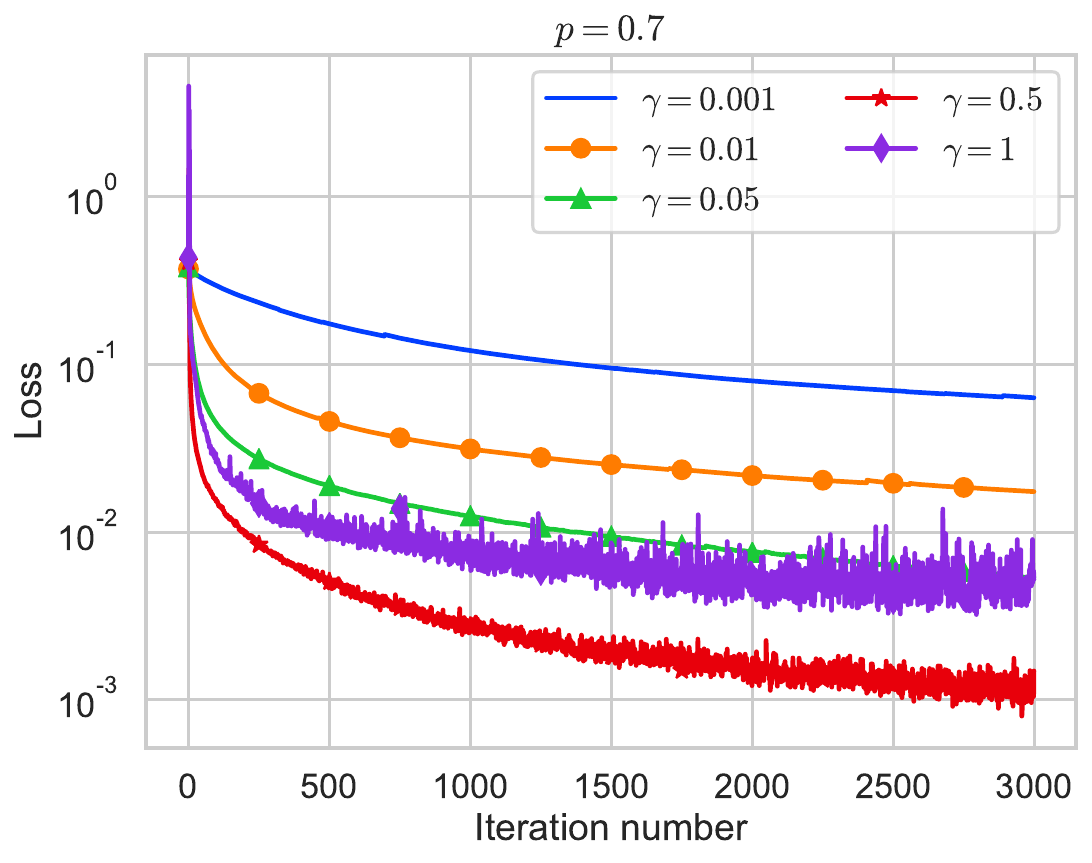}}%
\hfill
\subfigure[Medium step size]{%
\label{fig:2b}%
\includegraphics[width=0.5\linewidth]{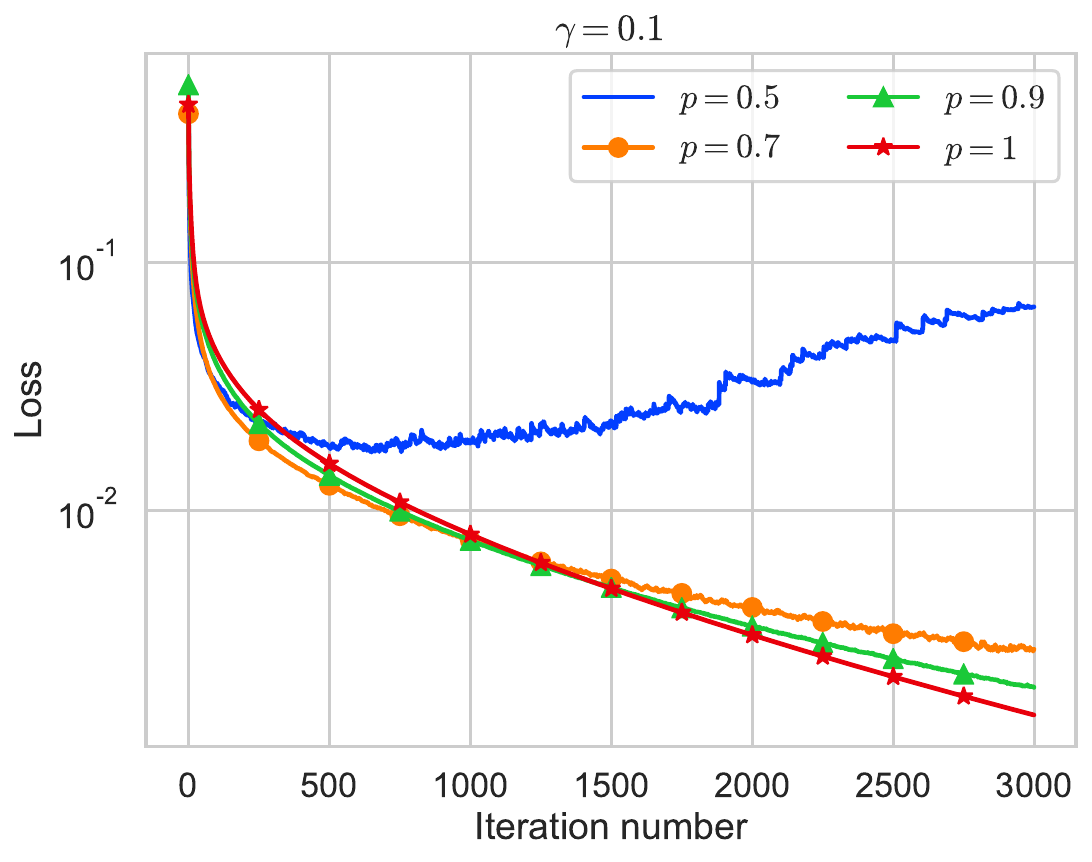}}%
\\[-0.5em]
\subfigure[Low sparsity]{%
\label{fig:3a}%
\includegraphics[width=0.5\linewidth]{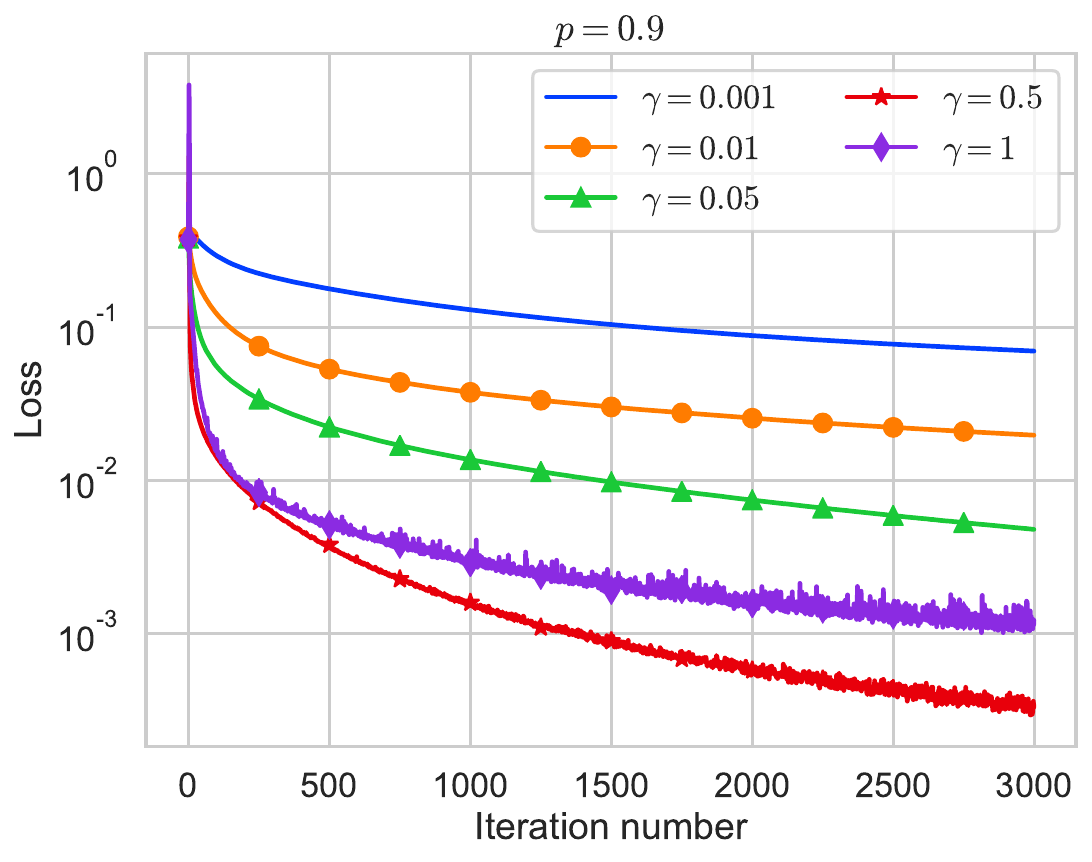}}%
\hfill
\subfigure[Small step size]{%
\label{fig:3b}%
\includegraphics[width=0.5\linewidth]{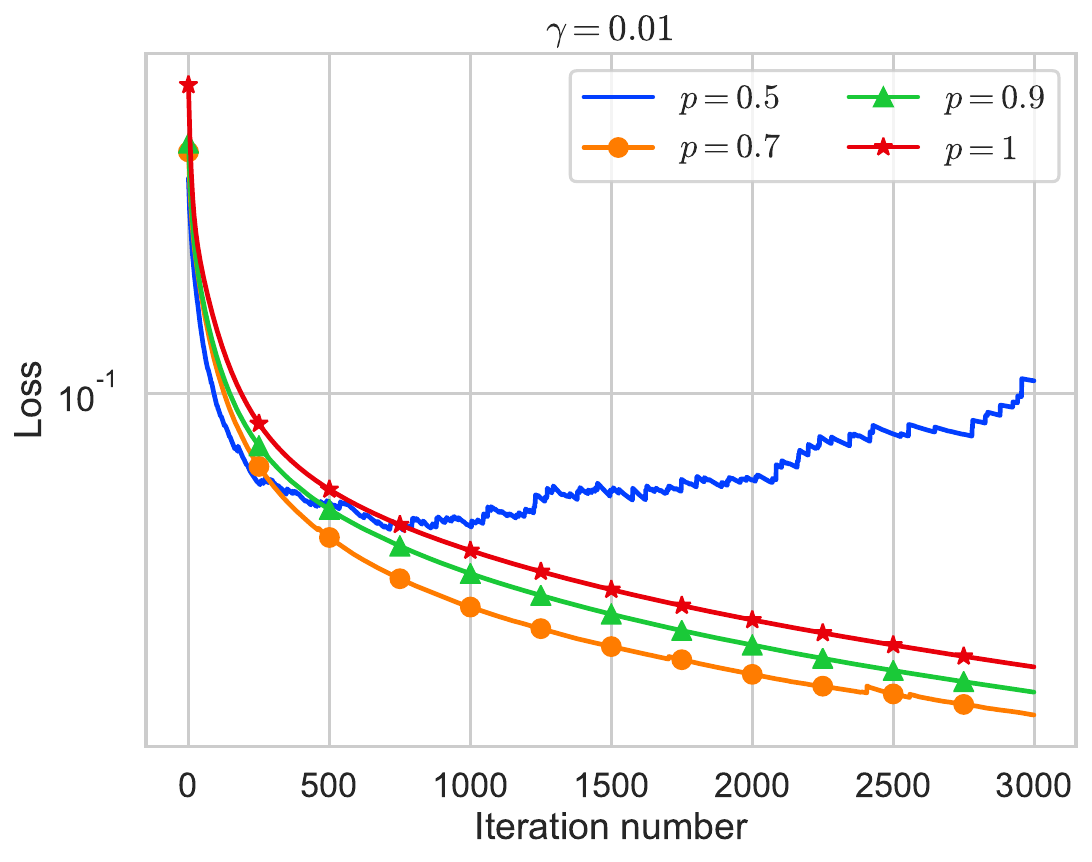}}%

\caption[Caption for LOF]{
 Performance of Algorithm \ref{alg:distributed_GD} with Bernoulli sketches \eqref{eq:ind_sparse} on standard loss \eqref{eq:fin_sketch_sum} (for $\mS_i \equiv \mI$).
}
\label{fig:lr_p}
\end{figure*}

\clearpage
\subsubsection{Standard vs Unbiased Dropout} \label{apdx:dropout_exp}

We supplement the results from Section~\ref{apdx:nn_exp}, by comparing unbiased scaled \eqref{eq:ind_sparse} and biased Dropout sketches in the same setup by running distributed Algorithm \ref{alg:distributed_GD} on standard loss \eqref{eq:fin_sketch_sum} (for $\mS_i \equiv \mI$)). Figure~\ref{fig:dropout_comparison} shows the training loss curves for different sparsity levels ($p = 0.7, 0.9$) and learning rates ($\gamma = 0.05, 0.5, 1.0$). The unbiased estimator includes a $1/p$ scaling factor, while the biased version omits this scaling as in the original Dropout \citep{hinton2012improving}.

The results demonstrate that the unbiased estimator (solid purple lines) consistently achieves lower loss values compared to the biased estimator (red lines with  markers). This effect is particularly pronounced at lower sparsity ($p = 0.7$), while at higher sparsity ($p = 0.9$) the difference becomes less dramatic. Higher learning rates lead to increased variance in both estimators, though the unbiased approach maintains better overall performance, which aligns with our previous observations about the impact of sparsity and learning rates on optimization dynamics.

\begin{figure*}[h]
\centering
\includegraphics[width=\linewidth]{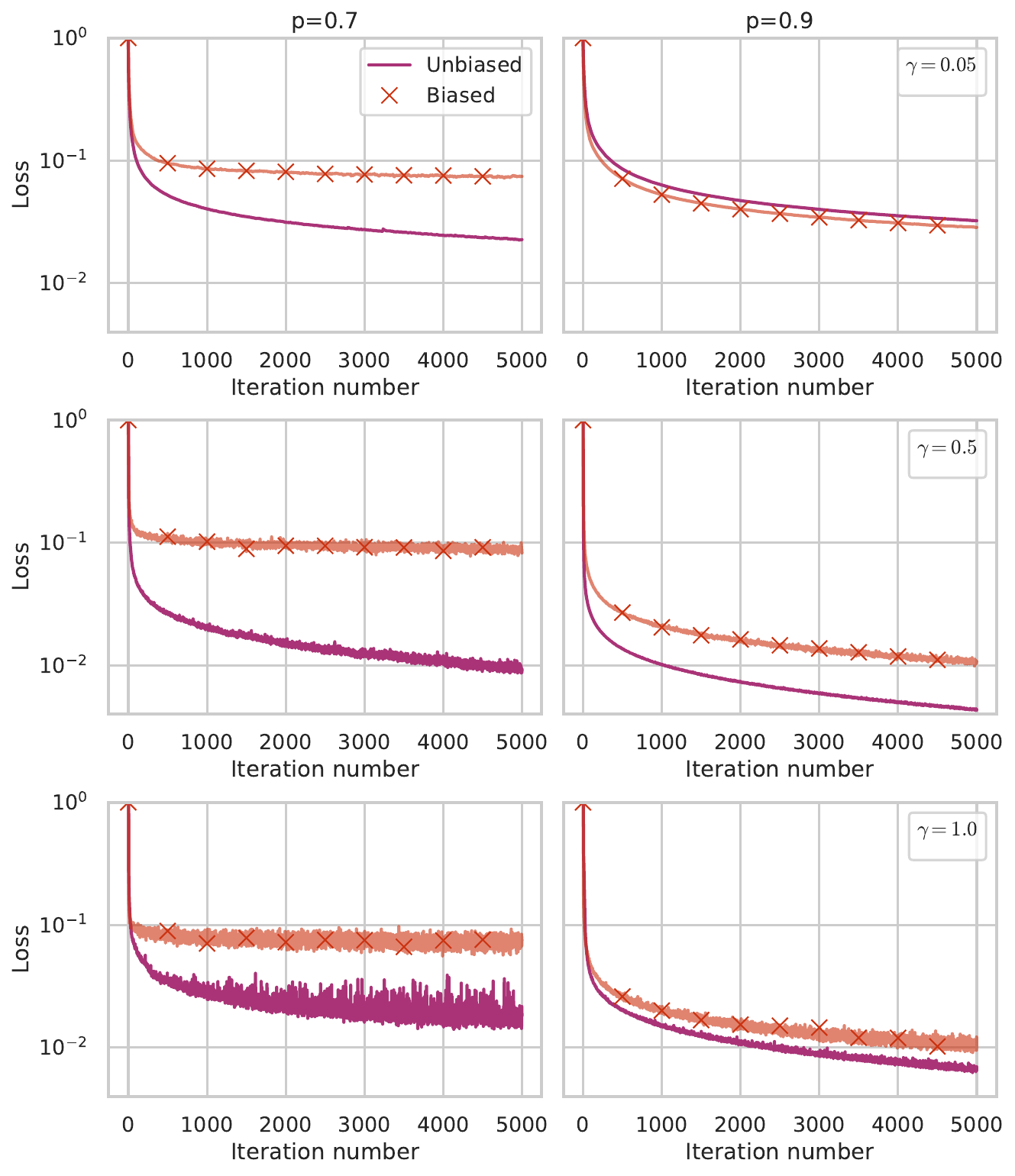}%
\caption{Comparison of the unbiased sketches \eqref{eq:ind_sparse} and original (biased) Dropout.}
\label{fig:dropout_comparison}
\end{figure*}

\end{document}